\tikzset{
	semi/.style={
		semicircle,
		draw,
		minimum size=2em
	}
}
\tikzstyle{FCM}=[
\tikzstyle{fairmap}=[
\tikzstyle{fairmapfull}=[
\tikzstyle{preproc}=[
\definecolor{colone}{RGB}{99,172,229}
\definecolor{coltwo}{RGB}{231, 239,246}
\definecolor{colthree}{RGB}{173,203,227}
\newcolumntype{M}[1]{>{\centering\arraybackslash}m{#1}}
\newcommand{\pa}{\mathrm{pa}} 
\newcommand{\an}{\mathrm{an}} 
\newcommand{\de}{\mathrm{de}} 
\newcommand{\bidir}{\dashleftarrow\dashrightarrow}
\newcommand{\strm}{\text{Str-}\{\text{DE,IE,SE}\}}
\newcommand{\fpcfa}{\text{FPCFA}(\strm, \text{TV}_{x_0, x_1}(y))}
\newcommand{\contrast}{\mathcal{C}(C_0, C_1, E_0, E_1)}
\newcommand{\powarrow}{\;\;\circ\!\!\longrightarrow\;}
\newcommand{\admarrow}{\implies}
\newcommand{\decomparrow}{\dashrightarrow\;\;}
\def\ci{{\perp\!\!\!\perp}}
\newcommand{\pr}{\mathbbm{P}}
\newcommand{\ex}{\mathbbm{E}}
\DeclareMathOperator*{\argmin}{arg\,min}
\title{Causal Fairness Analysis}
\author{\name Drago Ple{\v c}ko \email drago.plecko@stat.math.ethz.ch \\
       \addr Seminar f{\"u}r Statistik\\
       ETH Z{\"u}rich\\
      Z{\"u}rich, 8092, Switzerland \\
			\name Elias Bareinboim \email eb@cs.columbia.edu \\
			\addr Department of Computer Science \\
			Columbia University \\
			New York, 10027, United States}
\begin{document}
\maketitle
\begin{abstract}
Decision-making systems based on AI and machine learning have been used throughout a wide range of real-world scenarios, including healthcare, law enforcement, education, and finance. It is no longer far-fetched to envision a future where autonomous systems will be driving entire business decisions and, more broadly, supporting large-scale decision-making infrastructure to solve society's most challenging problems. Issues of unfairness and discrimination are pervasive when decisions are being made by humans, and remain (or are potentially amplified) when decisions are made using  machines with little transparency, accountability, and fairness. 
In this paper, we introduce a framework for \textit{causal fairness analysis} with the intent of filling in this gap, i.e., understanding, modeling, and possibly solving issues of fairness in decision-making settings. The main insight of our approach will be to link the quantification of the disparities present on the observed data with the underlying, and often unobserved, collection of  causal mechanisms that generate the disparity in the first place, challenge we call the Fundamental Problem of Causal Fairness Analysis (FPCFA). In order to solve the FPCFA, we study the problem of decomposing variations and empirical measures of fairness that attribute such variations to structural mechanisms and different units of the population. Our effort culminates in the Fairness Map, which is the first systematic attempt to organize and explain the relationship between different criteria found in the literature. Finally, we study which causal assumptions are minimally needed for performing causal fairness analysis and propose a Fairness Cookbook, which allows data scientists to assess the existence of disparate impact and disparate treatment. 
\end{abstract}
\begin{keywords}
  Fairness in machine learning, Causal Inference, Graphical models, Counterfactual fairness, Fair predictions. 
\end{keywords}


\section{Introduction}

As society transitions to an AI-based economy, an increasing number of decisions that were once made by humans are now delegated to automated systems, and this trend will likely accelerate in the coming years. Automated systems may exhibit discrimination based on gender, race, religion, or other sensitive attributes, and so considerations about fairness in AI are an emergent discussion across the globe. 
Even though it might seem that the issue of unfairness in AI is a recent development, the origins of the problem can be traced back to long before the advent of AI and the prominence these systems have reached in the last years. Among  others, one prominent example is Martin Luther King Jr., who 
spoke of having a dream that his children ``will one day live in a nation where they will not be judged by the color of their skin, but by the content of their character". So little could he have anticipated that machine algorithms would one day use race for making decisions, and that the issues of unfairness in AI would be legislated under Title VII of the Civil Rights Act of 1964 \citep{act1964civil}, which he advocated and fought for \citep{oppenheimer1994kennedy, kotz2005judgment}.

The critical challenge underlying fairness in  AI systems lies in the fact that biases in decision-making exist in the real world from which various datasets are collected. 
Perhaps not surprisingly, a dataset collected from a biased reality will contain aspects of this biases as an imprint. 
In this context, algorithms are tools that may replicate or potentially even amplify the biases that exist in reality in the first place. 
As automated systems are a priori oblivious to ethical considerations, using them blindly could lead to the perpetuation of unfairness in the future.
More pessimistic analysts take this observation as a prelude to doomsday, which, in their opinion, suggests that we should be extremely wary and defensive against any AI. We believe a degree of caution is necessary, of course, but take a more positive perspective, and consider this transition to a more AI-based society as a unique opportunity to improve the current state of affairs. 

While many human decision-makers are hard to change, even when aware of their own biases, AI systems may be less brittle and more flexible. 
Still, one of the requirements to realize AI potential is a new mathematical framework that allows the description and assessment of legal notions of discrimination in a formal way.
Based on this framework, some of the tasks of fair ML will be to detect and quantify undesired discrimination based on society's current ethical standards, and to then design learning methods capable of removing such unfairness from future predictions and decisions. 
 This situation is somewhat unique in the context of AI because a new definition of ``ground truth" is required. The decision-making system cannot rely purely on learning from the data, which is contaminated with unwanted bias. 
It is currently unclear how to formulate the ideal inferential target\footnote{We believe this explains the vast number of fairness criteria described in the literature, which we will detail later on the paper.}, which would help bring about a fair world when deployed. This degree of flexibility in deciding the new ground truth also emphasizes the importance of normative work in this context. 
\footnote{One way of seeing this point a bit more formally goes as follows. We first consider the current version of the world, say $\pi$, and note that it generates a probability distribution $\mathcal{P}$. Training the machine learning algorithm with data from this distribution  $(\mathcal{D} \sim \mathcal{P})$ is replicating patterns from this reality, $\pi$. What we would want is to have an alternative, counterfactual reality $\pi'$, which induces a different distribution $\mathcal{P}'$ without the past biases. The challenge here is that thinking about and defining $\mathcal{P}'$ relies on going beyond $\mathcal{P}$, or the corresponding dataset, which is non-trivial, and yet one of our main goals.}  

In this paper, we build on two legal systems applied to  large bodies of cases throughout the US and the EU that are known as \textit{disparate treatment} and \textit{disparate impact} \citep{barocas2016big}. 
One of our key goals will be to develop a framework for causal fairness analysis grounded in these systems and translate them into exact mathematical language. The disparate treatment doctrine enforces the equality of treatment of different groups, prohibiting the use of the protected attribute (e.g., race) in the decision process. One of the legal formulations for proving disparate treatment is that ``a similarly situated person who is not a member of the protected class would not have suffered the same fate" \citep{barocas2016big}\footnote{This formulation is related to a condition known as \textit{ceteris paribus}, which represents the effect of the protected attribute on the outcome of interest while keeping everything else constant. From a causal perspective, this suggests that the disparate treatment doctrine is concerned with direct discrimination, a connection we draw formally later on in the manuscript.}. 
On the other hand, the disparate impact doctrine focuses on \textit{outcome fairness}, namely, the equality of outcomes among protected groups. Disparate impact discrimination occurs if a facially neutral practice has an adverse impact on members of the protected group. Under this doctrine most commonly fall the cases in which discrimination is unintended or implicit. 
The analysis can become somewhat intricate when variables are correlated with the protected attribute and may act as a proxy, while the law may not necessarily prohibit their usage due to their relevance to the business itself; this is known as ``business necessity'' or ``job-relatedness". Taking business necessity into account is the essence of disparate impact \citep{barocas2016big}. 
Through causal machinery, our framework will allow the data scientist to explain how much of the observed disparity can be attributed to each underlying causal mechanism. This, in turn, allows the data scientist to quantify the disparity explained by mechanisms that do not fall under business necessity and are considered discriminatory,  thereby providing a formal way of assessing disparate impact and accomodate for business necessity requirements.


\subsection*{Current state of affairs \& challenges}
The behavior of AI/ML-based decision-making systems is an emergent property following a complex combination of past (possibly biased) data and interactions with the environment. Predicting or explaining this behavior and its impact on the real-world can be a difficult task, even for the system designer who has the knowledge of how the system is built. 
Ensuring fairness of such decision-making systems, therefore, critically relies on contributions from two groups, namely:
\begin{enumerate}[label=\alph*.]
\item the AI and ML engineers who develop methods to detect bias and ensure adherence of ML systems to fairness measures, and
\item the domain experts, social scientists, economists, policymakers, and legal experts, who study the origins of these biases and can provide the societal interpretations of fairness measures and their expectations in terms of norms and standards. 
\end{enumerate}
Currently, these groups do not share a common starting point. 
It's indeed extremely difficult for them to understand each other and work together towards developing a fair specification of such complex systems, aligned with the many stakeholders involved. In this work, we argue that the language of structural causality can provide this common starting point and facilitate the discussion and exchange of ideas, goals, and expectations between these groups. In some sense, the connection with causal inference might be seen as natural in this context as the legal frameworks of anti-discrimination laws (for example, Title VII in the US) often require that to establish a \textit{prima facie} case of discrimination, the plaintiff must demonstrate ``a strong causal connection" between the alleged discriminatory practice and the observed statistical disparity (Texas Dept. of Housing and Community Affairs v. Inclusive Communities Project, Inc., 576 U.S. 519 (2015)).
Therefore, as discussed in subsequent sections, one of the requirements of our framework will be the ability to represent causal mechanisms underlying a given decision-making setting as well as to distinguish between notions of discrimination that would otherwise be statistically indistinguishable. 

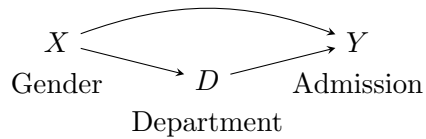
\begin{wrapfigure}{r}{0.30\textwidth}
\centering
\begin{tikzpicture}[>=stealth, rv/.style={thick}, rvc/.style={triangle, draw, thick, minimum size=7mm}, node distance=18mm]
		\pgfsetarrows{latex-latex};
	  \begin{scope}
	  \node[rv, label={below:{Gender}}] (1) at (-2,-0.5) {$X$};
	  \node[rv, label={below:{Department}}] (2) at (0,-1) {$D$};
	  \node[rv, label={below:{Admission}}] (3) at (2,-0.5) {$Y$};
	  \node (4) at (2,-2) {};
	  \draw[->] (1) -- (2);
	  \path[->] (1) edge[bend left = 20] (3);
	  \draw[->] (2) -- (3);
	  \end{scope}
	\end{tikzpicture}
\vspace{-0.40in}
\caption{A partial causal model for the Berkeley Admission example.}\label{fig:firstexample} 
\end{wrapfigure} 

Consider for instance the Berkeley Admission example, in which admission results of students applying to UC Berkeley were collected and analyzed \citep{bickel1975sex}. The analysis showed that male students are 14\% more likely to be admitted than their female counterparts, which raised concerns about the possibility of gender discrimination. The discussion of this example is often less focused on accuracy and appropriateness of the used statistical measures, and more on the plausible justification of disparity based on the mechanism underlying this disparity.
 A visual representation of the dynamics in this settings is shown in Fig.~\ref{fig:firstexample}. 
 In words, each student chooses a department of application. The department choice and student's gender might, in turn, influence the admission decision. In this example, there is a clear need for determining how much of the observed statistical disparity can be attributed to the direct causal path from gender to admission decision vs.~the indirect mechanism\footnote{As discussed later on, even among indirect paths, one may need to distinguish between mediated causal paths and  confounded non-causal paths, or, more generally, among a specific subset of these paths.} going through the department choice variable. Looking directly at gender for determining university admission would certainly be disallowed, whereas using department choice, which may be influenced by gender, might be deemed acceptable. The need to explain an observed statistical disparity, say in this case the 14\% difference in admission rates, through the underlying causal mechanisms -- direct and indirect -- is a recurring theme when assessing discrimination, even though it is sometimes considered only implicitly.

In fact, when AI tools are deployed in the real-world, a similar pattern of questions emerges. Examples include (but are not limited to) the debate over the origins and interpretation of discrimination in criminal justice \citep[COMPAS,][]{ProPublica}, the contribution of data vs. algorithms in the observed bias in face detection~\citep[e.g.,][]{facedetectionarticle, pmlr-v81-buolamwini18a}, and the business necessity vs.~risk of digital redlining in targeted advertising \citep{facebook2019redlining}. 
Intuitively, through these types of questions, society wants to draw a line between what is seen as discriminatory on the one hand, and what is seen as acceptable or justified by economic principles on the other.

\tikzstyle{roadmap}=[
	every node/.style={draw=none, align=center, fill=none, text centered, anchor=center,font=\it},
	every label/.style={circle, draw, fill = yellow},
	cx/.append style={anchor=center,sloped,rotate=90,text=red,font=\bf,pos=0.65},
	f1/.style={draw=,fill=gray!15,thick,inner sep=3pt,minimum width=6em, minimum height=4em, align=center, text centered},
	f2/.style={draw=,fill=white!15,thick,inner sep=3pt,minimum width=2em, minimum height=2em, align=center, text centered},
	f3/.style={draw=,fill=white!15,thick,inner sep=3pt,minimum width=1em, minimum height=2em, align=center, text centered},
	imp/.style={double,->},
]
\newcommand{\inx}{4}
\newcommand{\iny}{2.5}

Considering the above, a practitioner interested in implementing a fair decision-making system based on AI will face two challenges. The first stems from the fact that the current literature is abundant with different fairness measures, some of which are mutually incompatible \citep{corbett2018measure}, and choosing among these measures, even for the system designer, is usually a  non-trivial task. 
This challenge is compounded with the second challenge, which arises from the statistical nature of such fairness measures. As we will show both formally and empirically later on in the text, statistical measures alone cannot distinguish between different causal mechanisms that transmit change and generate disparity in the real world, even if an unlimited amount of data is available.
Despite this apparent shortcoming of purely statistical measures, much of the literature focuses on casting fair prediction as an optimization problem subject to fairness constraints based on such measures \citep{PedreschiKDD08,PedreschiSDM09,LuongKDD11,RuggieriKDD11,HajianICDMws12,Kamiran09,CaldersDMJ10,KamiranICDM10,ZliobaiteICDM11,kamiran2012data, kamiran2012decision,zemel2013learning,KorayKDDws14,romei2014multidisciplinary,dwork2012fairness,Friedler16,chouldechova2017fair,Pleiss17}, to cite a few. In fact, these methods may be insufficient for removing bias and perhaps even lead to unintended consequences and bias amplification, as it will become clear later on. 

The above observations highlight the importance of considering causal aspects when designing fair systems. Obtaining rich enough causal models of unobserved or partially observed reality is not always trivial in practice, yet it is crucial in the context of fair ML. Causal models must be built using inputs from domain experts, social scientists, and policy-makers, and a formal language is needed to express and scrutinize them. In this work, we lay down the foundations of interpreting legal doctrines of discrimination through causal reasoning, which we view as an essential step  towards the development of a new generation of more ethical and transparent AI systems.

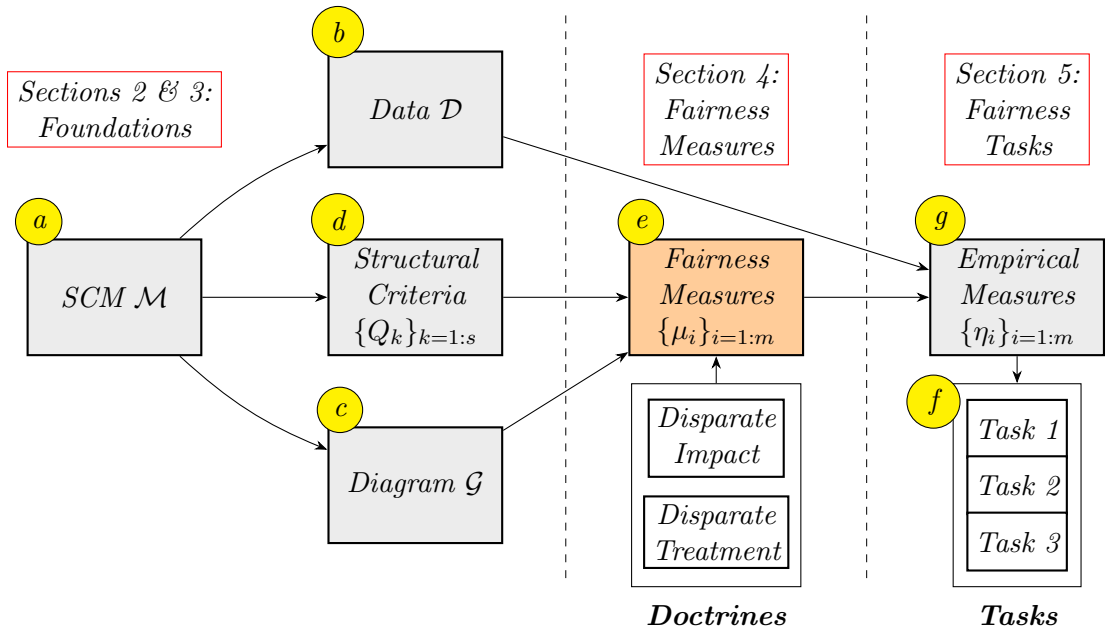
\begin{figure}
    \centering
    \begin{tikzpicture}[roadmap]
		\node [f1, label={above left:{a}}] (scm) at (0*\inx, 0*\iny) {SCM $\mathcal{M}$};
		
		\node [f1, label={above left:{d}}] (smf) at (1*\inx, 0*\iny) {Structural\\ Criteria\\ $\lbrace Q_k \rbrace_{k=1:s}$};
		\node [f1, label={above left:{b}}] (data) at (1*\inx, 1*\iny) {Data $\mathcal{D}$};
		\node [f1, label={above left:{c}}] (graph) at (1*\inx, -1*\iny) {Diagram $\mathcal{G}$};
		

		\draw [very thin, dashed] (2.5*\inx, 1.5*\iny) -- (2.5*\inx, -1.5*\iny);
        
        \node [f1, label={above left:{e}}, fill=orange!40] (measures) at (2*\inx, 0*\iny) {Fairness\\ Measures\\ $\lbrace \mu_i \rbrace_{i=1:m}$};
        
		\node[f2] (DI) at (2*\inx, -0.75*\iny) {Disparate\\ Impact};
		\node[f2] (DT) at (2*\inx, -1.25*\iny) {Disparate\\ Treatment};
		\node (norms) [draw,rectangle,minimum width=2.25cm,minimum height=2.7cm] at (2*\inx, -1*\iny) {};
		\node [below=0.1cm of norms] (normlab) {\textbf{Doctrines}};
		
		\draw [very thin, dashed] (1.5*\inx, 1.5*\iny) -- (1.5*\inx, -1.5*\iny);
        
        \node [f1, label={above left:{g}}] (empirical) at (3*\inx, 0*\iny) {Empirical\\ Measures\\ $\lbrace \eta_i \rbrace_{i=1:m}$};
		\node [f3] (task1) at (3*\inx, -0.7*\iny) {Task 1};
		\node [f3] (task2) at (3*\inx, -1*\iny) {Task 2};
		\node [f3] (task3) at (3*\inx, -1.3*\iny) {Task 3};
		\node (tasks) [draw,rectangle,minimum width=1.7cm,minimum height=2.7cm, label={above left:{f}}] at (3*\inx,-1*\iny) {};
		\node [below=0.1cm of tasks] (tasklab) {\textbf{Tasks}};
		
		\draw [-Stealth] (scm) to [bend left = 10] node[right, rotate=0]{} (data);
		\draw [-Stealth] (scm) to [bend right = 10] node[left, rotate=0]{} (graph);
		\draw [-Stealth] (scm) to [bend left = 0] node[right, rotate=0]{} (smf);
		\draw [-Stealth] (measures) to [bend right = 0] node[right, rotate=30]{} (empirical);
		\draw [-Stealth] (smf) to [bend right = 0] node[above, rotate=30]{} (measures);
		\draw [-Stealth] (data) to [bend left = 0] node[above left, rotate=0]{} (empirical);
		\draw [-Stealth] (graph) to [bend left = 0] (measures);
		\draw [-Stealth] (empirical) to [bend right = 0] (tasks);

		\draw [-Stealth] (norms) to [bend right = 0] (measures);

		\node [draw=red, rotate=0] at (0*\inx, 1*\iny)  {Sections 2 \& 3:\\ Foundations};
		\node [draw=red, rotate=0] at (2*\inx, 1*\iny)  {Section 4:\\Fairness\\ Measures};
  		\node [draw=red, rotate=0] at (3*\inx, 1*\iny)  {Section 5:\\Fairness\\Tasks};

	\end{tikzpicture}
    \caption{A mental map of the Causal Fairness Analysis pipeline.}
    \label{fig:roadmap}
\end{figure}

\subsection{Contributions}
To overcome the challenges described above, we will study fairness analysis through causal lenses and develop a framework for understanding, modeling, and potentially controlling for the biases present in the data. 
Fig.~\ref{fig:roadmap} contains the key elements involved in Causal Fairness Analysis as well as a roadmap of how this paper is organized. Specifically, in Sec.~\ref{priorart}, we cover the basic notions of causal inference, including structural causal models, causal diagrams, and data collection. In Sec.~\ref{foundations}, we introduce the essential elements of our theoretical framework. In particular, we define the notions of structural fairness that will serve as a baseline, ground truth for determining the presence or absence of discrimination under the disparate impact and disparate treatment doctrines.  In Sec.~\ref{Measures}, we introduce causal measures of fairness that can be computed from data in practice. We further draw the connection between such measures and the aforementioned legal doctrines. In Sec.~\ref{Tasks}, we introduce the tasks of Causal Fairness Analysis -- bias detection and quantification, fair prediction, and fair decision-making -- and show how they can be solved by building on the tools developed earlier. 
More specifically, our contributions are as follows: \begin{enumerate}
	\item We develop a general and coherent framework of Causal Fairness Analysis (Fig.~\ref{fig:roadmap}). This framework provides a common language to connect computer scientists and statisticians on the one hand, and legal and ethical experts on the other to tackle challenges of fairness in automated decision-making. 
	Further, this new framework grounds the legal doctrines of disparate impact and disparate treatment through the semantics of structural causal models.
	\item We formulate the Fundamental Problem of Causal Fairness Analysis (FPCFA), which outlines some critical properties that empirical measures of fairness should exhibit. In particular, we discuss which properties allow us to relate fairness measures with the specific causal mechanisms that generate the disparity observed in the data, thereby providing empirical basis for reasoning about structural causality. 
	\item We formalize the problem of decomposing variations between a protected attribute $X$ and an outcome variable $Y$. In particular, we show how the total variation (TV) can be decomposed based on different causal mechanisms and across different groups of units. These developments lead to the construction of the \textit{explainability plane} (Fig.~\ref{fig:Erefined}).  
	\item We introduce the TV family of measures (Table~\ref{table:tv-family}) and construct the first version of the \textit{Fairness Map} (Thm.~\ref{thm:map} and Fig.~\ref{fig:map}). The Map brings well-known fairness measures under the same theoretical umbrella and uncovers the structure that connects them. 
	\item We elicit the assumptions under which different causal fairness criteria can be evaluated. Specifically, we introduce the \textit{Standard Fairness Model} (SFM), which is a generic and simplified way of encoding causal assumptions and constructing the causal diagram. One desirable feature of the SFM is that it strikes a balance between simplicity of construction and informativeness for causal analysis (Def.~\ref{def:sfm} and Thm.~\ref{thm:sfm}).
	\item We develop the Fairness Cookbook that represents a practical solution that allows data scientists to assess the presence of disparate treatment and disparate impact. Furthermore, we provide an R-package for performing this task called \texttt{faircause}. 
	\item We study the implications of Causal Fairness Analysis on the fair prediction problem. In particular, we prove the Fair Prediction Theorem (Thm.~\ref{thm:fpt}) which shows that making TV being equal to zero during the training stage is almost never sufficient to ensure that causal measures of fairness are well-behaved.
\end{enumerate}

Readers familiar with causal inference may want to move straight to Sec.~\ref{foundations}, even though the next section's examples are used to motivate the problem of fairness.

\section{Foundations of Causal Inference} \label{priorart}
In this section, we introduce three fundamental building blocks that will allow us to formalize the challenges of fairness described above through a causal lense. 
First, we will define in Sec.~\ref{sec:scm} a general class of data-generating models known as \textit{structural causal models} (shown in Fig.~\ref{fig:roadmap}a). The key observation here is that the collection of mechanisms underpinning any decision-making scenario are causal, and therefore should be modeled through proper and formal causal semantics. 
Second, we will discuss in Sec.~\ref{sec:data} qualitatively different probability distributions that are induced by the causal generative process, and which will lead to the observed data and counterfactuals (Fig.~\ref{fig:roadmap}b). 
Third, we will introduce  in Sec.~\ref{sec:diagram} an object known as a \textit{causal diagram} (Fig.~\ref{fig:roadmap}c), which will allow the data scientist to articulate non-parametric assumptions over the space of generative models. These assumptions can be shown as necessary for the analysis, in a broader sense. Finally, we will define the \textit{standard fairness model} (SFM), which is a special class of diagrams that act as a template, allowing one to generically express entire classes of structural models. The SFM class, in particular, requires fewer modelling assumptions than the more commonly used causal diagrams.


\subsection{Structural Causal Models} \label{sec:scm}
The basic semantical framework of our analysis rests on the notion of structural causal model (SCM, for short), which is one of the most flexible class of generative models known to date \citep{pearl:2k}. The section will follow the presentation in \citep{bareinboim2020on}, which contains more detailed discussions and  proofs. First, we introduce and exemplify SCMs through the following definition:

\begin{definition}[Structural Causal Model (SCM) \citep{pearl:2k}] \label{def:SCM}
	A structural causal model (SCM) is a 4-tuple $\langle V, U, \mathcal{F}, P(u)\rangle$, where
  \begin{enumerate}
    \item $U$ is a set of exogeneous variables, also called background variables, that are determined by factors outside the model;
    \item $V = \lbrace V_1, ..., V_n \rbrace$ is a set of endogeneous (observed) variables, that are determined by variables in the model (i.e. by the variables in $U \cup V$);
    \item $\mathcal{F} = \lbrace f_1, ..., f_n \rbrace$ is the set of structural functions determining $V$, $v_i \gets f_i(\pa(v_i), u_i)$, where $\pa(V_i) \subseteq V \setminus V_i$ and $U_i \subseteq U$ are the functional arguments of $f_i$;
    \item $P(u)$ is a distribution over the exogeneous variables $U$.
  \end{enumerate}
\end{definition}
In words, each structural causal model can be seen as partitioning the variables involved in the
phenomenon into sets of exogenous (unobserved) and endogenous (observed) variables,
respectively, $U$ and $V$. The exogenous variables are determined ``outside'' of the model and their
associated probability distribution, $P(U)$, represents a summary of the world
external to the phenomenon that is under investigation. In our setting, these variables will represent the units
involved in the phenomenon, which correspond to elements of the population under study,
for instance, patients, students, customers. Naturally, their randomness (encoded in $P(U)$)
induces variations in the endogenous set $V$.

Inside the model, the value of each endogenous variable $V_i$ is determined by a causal process, $V_i \gets f_i(\pa(v_i), u_i)$, that maps the exogenous factors $U_i$ and a set of endogenous variables
$Pa_i$ (so called parents) to $V_i$.  These causal processes – or mechanisms – are assumed to be invariant unless explicitly intervened on (as defined later in the section). Together with the background factors, they represent the data-generating process according to which the values of the endogenous variables are determined. For concreteness and grounding of the definition, we revisit the Berkeley admission example through the lens of SCMs.

\begin{example}[Berkeley Admission \citep{bickel1975sex}]
    \label{ex:berkeley}
	During the application process for admissions to UC Berkeley, potential students choose a department to which they apply, which is labelled as $D$ (binary with $D = 0$ for arts \& humanities, $D=1$ for sciences). The admission decision is labelled as $Y$ ($y_1$ accepted, $y_0$ rejected) and the student's gender is labelled as $X$ ($x_0$ female, $x_1$ male)\footnote{In the manuscript, gender is discussed as a binary variable, which is a simplification of reality, used to keep the presentation of the concepts simple. In general, one might be interested in analyses of gender discrimination with gender taking non-binary values.}. 
	
	The SCM $\mathcal{M}$ is the 4-tuple $\langle V=\lbrace X, D, Y\rbrace, U=\lbrace U_X, U_D, U_Y\rbrace, \mathcal{F}, P(U) \rangle$, where $U_X, U_Y, U_D$ represent the exogenous variables, outside of the model, that affect $X, Y, D$, respectively. Also, the causal mechanisms $\mathcal{F}$ are given as follows \footnote{The given SCM can also be written as
	\begin{align} \label{eq:berkeley-samp-1}
		X &\gets \text{Bernoulli}(0.5) \\ 
		\label{eq:berkeley-samp-2}
		D &\gets \text{Bernoulli}(0.5 + \lambda X) \\
		\label{eq:berkeley-samp-3}
		Y &\gets \text{Bernoulli}(0.1 + \alpha X + \beta D).
	\end{align} }: 
	\begin{align} \label{eq:berkeley-cts-1}
		X &\gets \mathbb{1}(U_X < 0.5) \\ 
		\label{eq:berkeley-cts-2}
		D &\gets \mathbb{1}(U_D < 0.5 + \lambda X)  \\
		\label{eq:berkeley-cts-3}
		Y &\gets \mathbb{1}(U_Y < 0.1 + \alpha X + \beta D),
 	\end{align} and $P(U_X, U_D, U_Y)$ is such that $U_X, U_D, U_Y$ are independent $\text{Unif}(0, 1)$ random variables.
	
	In words, the population is partitioned into males and females, with equal probability (the exogenous  $U_X$ represents the population's biological randomness). Each applicant chooses a department $D$, and this decision depends on $U_D$ and gender $X$. The exogenous variable $U_D$ represents the individual's natural inclination towards studying science. Whenever $\lambda > 0$ in Eq.~\ref{eq:berkeley-cts-2}, the threshold for applying to a science department is higher for female individuals, which is a result of various societal pressures. Finally, the admission decision $Y$ possibly depends on gender (if $\alpha \neq 0$ in Eq.~\ref{eq:berkeley-cts-3}) and/or department of choice (if $\beta \neq 0$ in Eq.~\ref{eq:berkeley-cts-3}). The exogenous variable $U_Y$ in this case represents the impression the applicant left during an admission interview. Notice that female students and arts \& humanities students may need to leave a better interview impression in order to be admitted (depending on Eq.~\ref{eq:berkeley-cts-3}). $\hfill \square$
\end{example}
Another important notion for our discussion is that of a submodel, which is defined next:
\begin{definition}[Submodel \citep{pearl:2k}] \label{def:submodel}
    Let $\mathcal{M}$ be a structural causal model, $X$ a set of variables in $V$, and $x$ a particular value of $X$. A submodel $\mathcal{M}_{x}$ (of $\mathcal{M}$) is a 4-tuple:
    \begin{equation}
        \mathcal{M}_{x} = \langle V, U, \mathcal{F}_{x}, P(u)\rangle
    \end{equation}
    where 
    \begin{equation}
        \mathcal{F}_{x} = \lbrace f_i : V_i \notin X \rbrace \cup \lbrace X \gets x\rbrace,
    \end{equation}
    and all other components are preserved from $\mathcal{M}$. 
\end{definition}
In words, the SCM $\mathcal{M}_{x}$ is obtained from $\mathcal{M}$ by replacing all equations in $\mathcal{F}$ related to variables $X$ by equations that set $X$ to a specific value $x$. In the context of Causal Fairness Analysis, we might be interested in submodels in which the protected attribute $X$ is set to a fixed value $x$. Building on submodels, we introduce next the notion of potential response:
\begin{definition}[Potential Response \citep{pearl:2k}]\label{def:potentialresponse}
    Let $X$ and $Y$ be two sets of variables in $Y$ and $u \in \mathcal{U}$ be a unit. The potential response $Y_x(u)$ is defined as the solution for $Y$ of the set of equations $\mathcal{F}_x$ with respect to SCM $\mathcal{M}$. That is, $Y_x(u)$ denotes the solution of $Y$ in the submodel $\mathcal{M}_x$ of $\mathcal{M}$.
\end{definition}
In words, $Y_x(u)$ is the value variable $Y$ would take if (possibly contrary to observed facts) $X$ is set to $x$, for a specific unit $u$. In the Admission example, $Y_x(u)$ would denote the admission outcome for the specific unit $u$, had their gender $X$ been set to value $x$ by intervention (e.g., possibly contrary to their actual gender).  Potential responses are also called potential outcomes in the literature. 

\subsection{Observational \& Counterfactual Distributions}\label{sec:data}

Each SCM $\mathcal{M}$ induces different types of probability distributions, which represent different data collection modes and will play a key role in fairness analysis. 
We start with the observational distribution that represents a state of the underlying decision-making system in which the fairness analysts just collect data, without interfering in the decision-making process, as defined next. 
\begin{definition}[Observational Distribution \citep{bareinboim2020on}] \label{def:obs-dist}
An SCM $\mathcal{M} = \langle V, U, \mathcal{F}, P(u) \rangle$ induces a joint probability distribution $P(V)$ such that for each $Y \subseteq V$,
\begin{align}
    P^{\mathcal{M}}(y) = \sum_{u} \mathbb{1}\Big(Y(u) = y \Big) P(u),
\end{align} 
where $Y(u)$ is the solution for $Y$ after evaluating $\mathcal{F}$ with $U = u$.
\end{definition}
In words, the procedure can be described as follows:
\begin{enumerate}
    \item for each unit $U = u$, the structural functions $\mathcal{F}$ are evaluated following a valid topological order, and
    \item the probability mass P(U = u) is accumulated for each instantiation $U = u$ consistent with the event $Y = y$.
 \end{enumerate}
 
Throughout this manuscript, all the sums should be replaced by the corresponding integrals whenever suitable. 
To ground the discussion about this definition, we continue with the example above and see how the corresponding observational distribution is induced. 
 
\begin{example}[College Admission's Observational Distribution] 
Consider the SCM $\mathcal{M}$ in Eq.~\ref{eq:berkeley-cts-1}-\ref{eq:berkeley-cts-3}. The total variation (TV for short; also called demographic parity) generated by $\mathcal{M}$ depends on the structural mechanisms $\mathcal{F}$ and the distribution of exogenous variables $P(U_X, U_D, U_Y)$. The total variation can be written as:
\begin{align} \label{eq:btvbayes}
    P(y \mid x_1) - P(y \mid x_0) = \frac{P(y, x_1)}{P(x_1)} - \frac{P(y, x_0)}{P(x_0)}. 
\end{align}
Therefore, we compute the terms $P(y, x_1), P(x_1), P(y, x_0), P(x_0)$ based on the true, underlying SCM. Using Def.~\ref{def:obs-dist} and Eq.~\ref{eq:berkeley-cts-1}, we can see that:
\begin{align}
    P(x_1) = P(U_X < 0.5) = \frac{1}{2} = P(U_X > 0.5) = P(x_0).
\end{align}
Using the fact that $U_X$, $U_D$, and $U_Y$ are independent in the SCM, $P(y, x_1)$ can be computed in the following way (Def.~\ref{def:obs-dist}):
\begin{align}
    P(y, x_1) &= \sum_u \mathbb{1}(Y(u) = 1, X(u) = 1) P(u)  \\
              &= P(U_X < 0.5)\big[P(U_D > 0.5 + \lambda)P(U_Y < 0.1 + \alpha) + \\
              &\qquad\qquad\qquad\;\;\;\; P(U_D < 0.5 + \lambda)P(U_Y < 0.1 + \alpha + \beta)\big] \nonumber\\
              &= \frac{1}{2}[(\frac{1}{2}-\lambda)(0.1+\alpha) + (\frac{1}{2}+\lambda)(1+\alpha + \beta)] = \frac{1}{2}(0.1 + \alpha + (\frac{1}{2}+\lambda)\beta).
\end{align}
The computation above can be described as follows. Firstly, $X(u) = 1$ is equivalent with $U_X < 0.5$ (Eq.~\ref{eq:berkeley-cts-1}). Secondly, when $X(u) = 1$, there are two possibilities for the variable $D$ based on $U_D$ (see Eq.~\ref{eq:berkeley-cts-2}). Whenever $U_D > 0.5 + \lambda$, then $D(u) = 0$, and to have $Y(u) = 1$, we need $U_Y < 0.1 + \alpha$ (see Eq.~\ref{eq:berkeley-cts-3}). If $U_D < 0.5 + \lambda$, then $D(u) = 1$, and to have $Y(u) = 1$, we need $U_Y < 0.1 + \alpha + \beta$ (see Eq.~\ref{eq:berkeley-cts-3}). An analogous computation yields that:
\begin{align}
    P(y, x_0) &= \sum_u \mathbb{1}(Y(u) = 1, X(u) = 0) P(u)  \\
              &= \frac{1}{2}\big[\frac{1}{2}*0.1 + \frac{1}{2}*(0.1 + \beta)\big] = \frac{1}{2}(0.1 + \frac{\beta}{2}).
\end{align}
Putting the results together in Eq.~\ref{eq:btvbayes}, the TV equals
\begin{align}
    P(y\mid x_1) - P(y\mid x_0) &= \frac{\frac{1}{2}(0.1 + \alpha + (\frac{1}{2}+\lambda)\beta)}{\frac{1}{2}} - \frac{\frac{1}{2}(0.1 + \frac{\beta}{2})}{\frac{1}{2}}\\ &= \alpha + \lambda\beta.
\end{align}
In fact, after analyzing the admission dataset from UC Berkeley, a data scientist computes the observed disparity to be\footnote{The number below was actually evaluated from the actual real dataset, which is compatible with structural coefficients $\alpha = 0, \beta = \frac{7}{10}$, and $\lambda = \frac{2}{10}$.}
\begin{equation}
	   P(y \mid x_1) - P(y \mid x_0) = 14\%.
\end{equation}
In words, male candidates are 14\% more likely to be admitted than female candidates. The data scientist (who does not have access to the SCM $\mathcal{M}$ described above) might wonder if this disparity (14\%) means that female applicants are discriminated against. Also, she/he might wonder how the observed disparity relates to the SCM $\mathcal{M}$ given in Eq.~\ref{eq:berkeley-cts-1}-\ref{eq:berkeley-cts-3}. Our goal in this manuscript is to address these questions from first principles. $\hfill \square$ 
\end{example}
Next, we define another important family of distributions over possible counterfactual outcomes, which will be used throughout this manuscript:
\begin{definition}[Counterfactual Distributions \citep{bareinboim2020on}] \label{def:ctf-dist}
    An SCM $\mathcal{M} = \langle V, U, \mathcal{F}, P(u) \rangle$ induces a family of joint distributions over counterfactual events $Y_{x}, \dots, Z_{w}$ for any $Y, Z, \dots, X, W \subseteq V$:
    \begin{align} \label{eq:L3def}
        P^{\mathcal{M}}(y_x, \dots, z_w) = \sum_{u} \mathbb{1}\Big(Y_x(u) = y, \dots, Z_w(u) = z\Big) P(u).
    \end{align}
\end{definition}
The LHS in Eq.~\ref{eq:L3def} contains variables with different subscripts, which syntactically represent different potential responses (Def.~\ref{def:potentialresponse}), or counterfactual worlds. In words, the equation can be interpreted as follows:
\begin{enumerate}
    \item For each set of subscripts and variables ($X, \dots, W$ and $Y, \dots, Z$), replace the corresponding mechanism with appropriate constants to generate $\mathcal{F}_x, \dots, \mathcal{F}_w$ and create submodels $\mathcal{M}_{x}, \dots, \mathcal{M}_w$,
    \item For each unit $U = u$, evaluate the modified mechanisms $\mathcal{F}_x, ..., \mathcal{F}_w$ to obtain the potential response of the observables,
    \item The probability mass $P(U = u)$ is accumulated for each instance $U = u$ that is consistent with the events over the counterfactual variables, that is $Y_x = y, \dots, Z_w = z$, that is, $Y = y$ in $\mathcal{M}_x$, \dots, $Z = z$ in $\mathcal{M}_w$.
\end{enumerate}
\begin{example}[College Admission Counterfactual Distribution] 
Consider the SCM in Eq.~\ref{eq:berkeley-cts-1}-\ref{eq:berkeley-cts-3} and the following joint counterfactual distribution:
\begin{align} \label{eq:pnsex}
    P(y_{x_1}, y_{x_0}).
\end{align}
In the submodel $\mathcal{M}_{x_0}$ (where $X = 0$ is set by intervention), we have that $D_{x_0}(u) = 1$ is equivalent with $U_D < 0.5$. When $D_{x_0}(u) = 1$, $Y_{x_0}(u) = 1$ if and only if $U_Y < 0.1 + \beta$. Similarly, when $D_{x_0}(u) = 0$, $Y_{x_0}(u) = 1$ if and only if $U_Y < 0.1$. Therefore, we have that
\begin{align}
    Y_{x_0}(u) = 1 \iff ((U_D < 0.5) \wedge (U_Y < 0.1 + \beta)) \vee ((U_D > 0.5) \wedge (U_Y < 0.1)).
\end{align}
In the submodel $\mathcal{M}_{x_1}$, we have
\begin{align}
    Y_{x_1}(u) = 1 \iff &((U_D < 0.5+\lambda) \wedge (U_Y < 0.1 + \alpha + \beta)) \vee \\
    &((U_D > 0.5+\lambda) \wedge (U_Y < 0.1 + \alpha)) \nonumber.
\end{align}
Based on this, the expression in Eq.~\ref{eq:pnsex} can be evaluated using Def.~\ref{def:ctf-dist}, which leads to
\begin{align}
    P(y_{x_1}, y_{x_0}) =& \sum_{u}  \mathbb{1}(Y_{x_1}(u) = 1, Y_{x_0}(u) = 1) P(u)\\
                        =&P(U_D < 0.5)P(U_Y < 0.1  + \beta) + P(U_D > 0.5)P(U_Y < 0.1) \nonumber\\
                        =& 0.1 + \frac{\beta}{2}.
\end{align}
Interestingly, this distribution is never obtainable from observational data, since it involves both potential responses $Y_{x_0}, Y_{x_1}$, which can never be observed simultaneously.
$\hfill \square$ 
\end{example}
In most fairness analysis settings,  the data scientist will only have data $\mathcal{D}$ in the form of samples collected from the observational distribution. One significant result in this context is known as the \textit{causal hierarchy theorem}  (CHT, for short), which 
says that it is almost never possible (in an information theoretic sense) to recover the counterfactual distribution from the observational distribution alone \citep[Thm.~1]{bareinboim2020on}. Given this impossibility result and the unavailability of the SCM in most settings, the data scientist needs to resort to some sort of assumptions in order to possibly make claims about these underlying mechanisms, which is discussed in the next section.  


\subsection{Encoding Structural assumptions through Causal Diagrams}\label{sec:diagram}
Despite the fact that SCMs are well defined and provide the semantics to different families of probability distributions, and are important for fairness analysis, one critical observation is that they are usually not observable by the data scientist. A common way of encoding assumptions about the underlying SCM is through an object called a causal diagram. We describe below the constructive procedure that allows one to articulate a diagram from a coarse understanding of the SCM. 
\begin{definition}[Causal Diagram \citep{pearl:2k, bareinboim2020on}] \label{def:diagram}
	Let $\mathcal{M} = \langle V, U, \mathcal{F}, P(u)\rangle$ be an SCM. A graph $\mathcal{G}$ is said to be a \textit{causal diagram} (of $\mathcal{M}$) if:
	\begin{enumerate}
		\item there is a vertex for every endogenous variable $V_i \in V$,
		\item there is an edge $V_i \to V_j$ if $V_i$ appears as an argument of $f_j \in \mathcal{F}$,
		\item there is a bidirected edge $V_i\dashleftarrow\dasharrow V_j$ if the corresponding $U_i, U_j \subset U$ are correlated or the corresponding functions $f_i, f_j$ share some $U_{ij} \in U$ as an argument.
	\end{enumerate}
\end{definition}
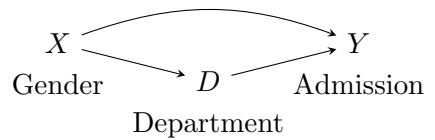
\begin{wrapfigure}{r}{0.30\textwidth}
\centering
\begin{tikzpicture}[>=stealth, rv/.style={thick}, rvc/.style={triangle, draw, thick, minimum size=7mm}, node distance=18mm]
		\pgfsetarrows{latex-latex};
	  \begin{scope}
	  \node[rv, label={below:{Gender}}] (1) at (-2,-0.5) {$X$};
	  \node[rv, label={below:{Department}}] (2) at (0,-1) {$D$};
	  \node[rv, label={below:{Admission}}] (3) at (2,-0.5) {$Y$};
	  \node (4) at (2,-2) {};
	  \draw[->] (1) -- (2);
	  \path[->] (1) edge[bend left = 20] (3);
	  \draw[->] (2) -- (3);
	  \end{scope}
	\end{tikzpicture}
\vspace{-0.40in}
\caption{A partial causal model for the Admissions' example.}\label{fig:firstexample2} 
\end{wrapfigure} 
In words, there is an edge from endogenous variables $V_i$ to $V_j$ whenever $V_j$ ``listens to" $V_i$ for determining its value\footnote{This construction lies at the heart of the type of knowledge causal models represent, as suggested in \citep[pp.~129]{pearl:why19}: ``This listening metaphor encapsulates the entire knowledge that a causal network conveys; the rest can be derived, sometimes by leveraging data.''}. Similarly, the existence of a bidirected edge between $V_i$ and $V_j$ indicates there is some shared, unobserved information affecting how both $V_i$ and $V_j$ obtain their values. Note that while the SCM contains explicit information about all structural mechanisms ($\mathcal{F}$) and exogenous variables ($P(u)$), the causal diagram, on the other hand, encodes information only about which functional arguments were possibly used as inputs to the functions in $\mathcal{F}$. That is, the diagram abstracts out the specifics of the functions $\mathcal{F}$ and retains information about their possible arguments. 

Furthermore, the existence of a directed arrow, e.g., $V_i \rightarrow V_j$, encodes the \textit{possibility} of the mechanism of $V_j$ to listen to variable $V_i$, but not the necessity. In words, the edges are in this sense non-committal; for instance, $f_j$ may decide not to take the value of $V_i$ into account. On the other hand, the assumptions are not really encoded in the arrows present in the diagram, but in the missing arrows; each missing arrow ascertains that one variable is \textit{certainly} not the argument of the other. The data scientist, in general, should try to specify as much knowledge as possible of this type. For concreteness, consider the following example.

\begin{example}[Admission's Causal Diagram] 
Consider again the SCM $\mathcal{M}$ in Ex.~\ref{ex:berkeley}, which is unknown by the data scientist trying to analyze the existence of discrimination in the admission process. To apply the graphical construction dictated by  Def.~\ref{def:diagram}, the data scientist starts the modeling process by examining each of the endogenous variables and the potential arguments of their corresponding mechanisms. For example, the mechanism
\begin{equation}
D \gets f_D(X, U_D)
\end{equation}
suggests that each applicant department's choice ($D$) is, possibly, a function of their gender $X$, 
regardless of the specific form about how this happens in reality. 
If that is the case, so the causal diagram $\mathcal{G}$ will contain the arrow $X \rightarrow D$. 
Again, an arrow in $\mathcal{G}$ does not commit to how the variables $X$ and $D$ interact, which is significantly less informative than the true mechanism given by Eq.~\ref{eq:berkeley-cts-2}. 
Continuing the causal modelling process, the data scientists may think about the admission's process, and consider that 
\begin{equation}
Y \gets f_Y(X, D, U_Y),
\end{equation}
which represents that how admission decisions come about may be influenced by gender and department choice.
If that is the case, the causal diagram $\mathcal{G}$ will also contain the arrows $X \rightarrow Y$ and $D \rightarrow Y$, respectively. Again, this stands in sharp contrast with how detailed the knowledge is presented in the true SCM $\mathcal{M}$, and, for instance, as delineated in Eq.~\ref{eq:berkeley-cts-3}. Interestingly enough, an entirely different functional form than that in Eq.~\ref{eq:berkeley-cts-3}, say 
\begin{equation}
Y \gets \mathbb{1}\big(U_Y < 0.1 + \beta XD\big),
\end{equation}
is also compatible with the causal diagram in Fig.~\ref{fig:firstexample}.

Lastly,  if the coefficient $\alpha$ is equal to $0$ 
in the mechanism described by Eq.~\ref{eq:berkeley-cts-3} (i.e., $Y \gets \mathbb{1}(U_Y < 0.1 + \alpha X + \beta D)$), this would still be compatible with the causal diagram $\mathcal{G}$. Again, the arrow allows for the possibility of functional dependence, but does not necessitate it. 
$\hfill \square$
\end{example}

\subsubsection{Standard Fairness Model} \label{sec:SFM}
Specifying the relationship among all pairs of variables, as required by the definition of a causal diagram, is possibly non-trivial in many practical settings. In this section, we will introduce the \textit{Standard Fairness Model}, which is a template-like model that represents a collection of causal diagrams and aims to alleviate the modeling requirements. 
\begin{definition}[Standard Fairness Model (SFM)] \label{def:sfm}
	The standard fairness model (SFM) is the causal diagram $\mathcal{G}_{\text{SFM}}$ over endogenous variables $\{X, Z, W, Y\}$  and given by
	\begin{center}
		\begin{tikzpicture}
	 [>=stealth, rv/.style={thick}, rvc/.style={triangle, draw, thick, minimum size=7mm}, node distance=18mm]
	 \pgfsetarrows{latex-latex};
	 \begin{scope}
		\node[rv] (0) at (0,1) {$Z$};
	 	\node[rv] (1) at (-1.5,0) {$X$};
	 	\node[rv] (2) at (0,-1) {$W$};
	 	\node[rv] (3) at (1.5,0) {$Y$};
	 	\draw[->] (1) -- (2);
		\draw[->] (0) -- (3);
	 	\path[->] (1) edge[bend left = 0] (3);
		\path[<->] (1) edge[bend left = 30, dashed] (0);
	 	\draw[->] (2) -- (3);
		\draw[->] (0) -- (2);
	 \end{scope}
	 \end{tikzpicture}
	\vspace{-0.2in}
	\end{center}
where the nodes represent:
\begin{itemize}
	\item the \textit{protected attribute}, labelled $X$ (e.g., gender, race, religion),
	\item the set of \textit{confounding} variables $Z$, which are not causally influenced by the attribute $X$ (e.g., demographic information, zip code),
	\item the set of \textit{mediator} variables $W$ that are possibly causally influenced by the attribute (e.g., educational level, or other job related information),
	\item the \textit{outcome} variable $Y$ (e.g., admissions, hiring, salary).
\end{itemize}
Nodes $Z$ and $W$ are possibly multi-dimensional or empty. Furthermore, for a causal diagram $\mathcal{G}$, the projection of $\mathcal{G}$ onto the SFM is defined as the mapping of the endogenous variables $V$ appearing in $\mathcal{G}$ into four groups $X, Z, W, Y$, as described above. The projection is denoted by $\Pi_{\text{SFM}}(\mathcal{G})$ and is constructed by choosing the protected attribute, the outcome of interest, and grouping the confounders $Z$ and mediators $W$.
\end{definition}
For simplicity, we assume $X$ to be binary (whereas $Z, W$, and $Y$ could be either discrete or continuous). 
For instance, by setting $Z = \emptyset$ and $W = \lbrace D \rbrace$, the causal diagram of the Admissions example can be represented by $\mathcal{G}_{\text{SFM}}$. To ground the definition further, consider the following well-known example. 
\begin{example}[COMPAS \citep{larson2016how}] \label{ex:compas}
The courts at Broward County, Florida, use machine learning to predict whether individuals released on parole are at high risk of re-offending within 2 years ($Y$). The algorithm is based on the demographic information $Z$ ($Z_1$ for gender, $Z_2$ for age), race $X$ ($x_0$ denoting White, $x_1$ Non-White), juvenile offense counts $J$, prior offense count $P$, and degree of charge $D$. The causal diagram for this setting is shown in Fig.~\ref{fig:compasdag}. The bidirected arrows between $X$ and $Z_1, Z_2$ indicate that the exogenous variable $U_X$ possibly shares information with exogenous variables $U_{Z_1}, U_{Z_2}$.
    \begin{figure}
        \centering
        \begin{subfigure}[b]{0.45\textwidth}
        \begin{tikzpicture}
	 	[>=stealth, rv/.style={thick}, rvc/.style={triangle, draw, thick, minimum size=8mm}, node distance=7mm]
	 	\pgfsetarrows{latex-latex};
	 	\begin{scope}
	 	\node[rv] (c) at (2.25,1.5) {${Z_1}$};
	 	\node[rv] (z2) at (3.75,1.5) {${Z_2}$};
	 	\node[rv] (a) at (0,0) {$X$};
	 	\node[rv] (m) at (1.5,-1.5) {$J$};
	 	\node[rv] (l) at (3,-1.5) {$P$};
	 	\node[rv] (r) at (4.5,-1.5) {${D}$};
	 	\node[rv] (y) at (6,0) {$Y$};
	 	\draw[->] (c) -- (m);
	 	\draw[->] (c) -- (l);
	 	\draw[->] (c) -- (r);
	 	\draw[->] (c) -- (y);
	 	\draw[->] (a) -- (m);
	 	\draw[->] (m) -- (l);
	 	\draw[->] (l) -- (r);
	 	
	 	\path[->] (a) edge[bend left = 0] (l);
	 	\path[->] (a) edge[bend left = 10] (r);
	 	\path[->] (a) edge[bend left = 0] (y);
	 	\path[->] (m) edge[bend right = 25] (r);
	 	\path[->] (m) edge[bend right = -10] (y);
	 	\path[->] (r) edge[bend right = 0] (y);
	 	
	 	\path[->] (z2) edge[bend right = 0] (y);
	 	\path[->] (z2) edge[bend right = 0] (l);
	 	\path[->] (z2) edge[bend right = 0] (r);
	 	\path[->] (z2) edge[bend right = 0] (m);
	 	
		\path[<->,dashed] (a) edge[bend left = 0](c);
		\path[<->,dashed] (a) edge[bend left = 0](z2);
	 	\end{scope}
	\end{tikzpicture}
	    \caption{Causal diagram of COMPAS dataset.}
	    \label{fig:compasdag}
        \end{subfigure}
        \hfill
        \begin{subfigure}[b]{0.45\textwidth}
        \begin{tikzpicture}
	 	[>=stealth, rv/.style={thick}, rvc/.style={triangle, draw, thick, minimum size=8mm}, node distance=7mm]
	 	\pgfsetarrows{latex-latex};
	 	\begin{scope}
	 	\node[rv] (c) at (2.25,1.5) {${Z_1}$};
	 	\node[rv] (z2) at (3.75,1.5) {${Z_2}$};
	 	\node[rv] (a) at (0,0) {$X$};
	 	\node[rv] (m) at (1.5,-1.5) {$J$};
	 	\node[rv] (l) at (3,-1.5) {$P$};
	 	\node[rv] (r) at (4.5,-1.5) {${D}$};
	 	\node[rv] (y) at (6,0) {$Y$};
	 	
	 	\node (Zset) [draw,rectangle,minimum width=2.8cm,minimum height=1cm,label={[above right]$Z$-set}] at (3,1.5) {};
	 	\node (Wset) [draw,rectangle,minimum width=3.6cm,minimum height=1cm, label={[above right]$W$-set}] at (3,-1.5) {};
	 	
	 	\path[->] (a) edge[bend left = 0] (y);
	 	\path[->] (a) edge[bend left = -20] (Wset);
	 	\path[->] (Zset) edge[bend left = 0] (Wset);
	 	\path[->] (Wset) edge[bend left = -20] (y);
	 	\path[->] (Zset) edge[bend left = 20] (y);
	 	
	 	\path[<->,dashed] (a) edge[bend left = 20](Zset);
	 	\end{scope}
	\end{tikzpicture}.
	\caption{Causal diagram projected onto the SFM.}
	\label{fig:compassfm}
        \end{subfigure}
        \caption{The causal diagram of COMPAS dataset and its projection onto the SFM.}
    \end{figure}
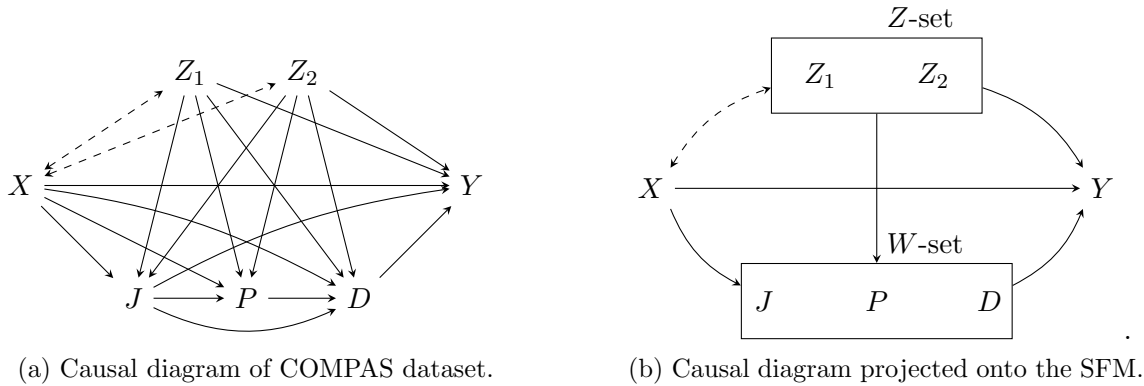
	This diagram can be standardized (projected on the SFM) by grouping the mediators $W = \lbrace J, P, D \rbrace$ and confounders $Z = \lbrace Z_1, Z_2\rbrace$. Formally, the SFM projection can be written as
	\begin{equation}
	    \Pi_{\text{SFM}}(\mathcal{G}) = \langle X =\lbrace X\rbrace, Z =\lbrace Z_1, Z_2\rbrace, W = \lbrace J, P,D\rbrace, Y = \lbrace Y\rbrace\rangle.
	\end{equation}
The projection is shown in Fig.~\ref{fig:compassfm}. Notice that the full diagram $\mathcal{G}$ is not needed for determining the SFM projection. The data scientist only needs to group the confounders and mediators, and determine whether there is latent confounding between any of the groups. 
	
Going back to Florida, after a period of using the algorithm, it is observed that Non-White individuals are 9\% more likely to be classified as high-risk, i.e.,
	\begin{align}
		P(y \mid x_1) -  P(y \mid x_0) = 9\%.
	\end{align}
	The reader might wonder if the disparity of 9\% means that racial minorities are discriminated by the legal justice system in Broward County. An important consideration here is how much of the disparity can be explained by the spurious association of race with age or gender (which potentially influence the recidivism prediction), the effect of race on the prediction mediated by juvenile and prior offense counts, or the direct effect of race on the prediction. $\hfill \square$
\end{example}
\noindent As noted in the example, the SFM does not explicitly assume the causal structure within the possibly multi-dimensional sets $Z$, $W$. In causal language, the SFM can be seen as an equivalence class of causal diagrams\footnote{A more detailed study on the properties of clustered diagrams can be found in \citep{anand:etal21}.}. For instance, under the SFM, if $Z = \lbrace Z_1, Z_2 \rbrace$, the relationship between $Z_1$ and $Z_2$ is not fully specified, and it may be the case that $Z_1 \rightarrow Z_2$, $Z_2 \rightarrow Z_1$, or of another type. Secondly, the SFM encodes assumptions about lack of hidden confounding, which is reflected through the absence of bidirected arrows between variable groups. We discuss  in Appendix \ref{appendix:IDEst}  how the lack of confounding assumptions can be relaxed.

\section{Foundations of Causal Fairness Analysis} \label{foundations}

In this section, we will introduce two main results that will allow us to understand and possibly solve the problem of fairness using causal tools. 
First, we will introduce in Sec.~\ref{sec:fairnesscriteria} a structural definition of fairness, which  leads to a natural way of expressing legal requirements based on the  doctrines of disparate treatment and impact. In particular, we will define the notion of \textit{fairness measure} and two key properties called \textit{admissibility} and \textit{decomposability}. Armed with these new notions, we will then be able to formally state the fundamental problem of causal fairness analysis. In words, these results suggest that reasoning about fairness requires an understanding of how to explain variations, in particular, how the outcome variable $Y$ can be explained in terms of the structural measures following variations of the protected attribute $X$. In Sec.~\ref{sec:explain-variations}, we  formalize the notion of a contrast, which allows us to understand the aforementioned variations from a factual-counterfactual perspective. 
We then prove how to decompose contrasts and re-express them in terms of the structural basis, which lead to the explainability plane and the decomposition of arbitrary types of contrast. The discussion is somewhat theoretical and we will provide examples to ground and make the main points more concrete.

\begin{example}[College's admissions, inspired by \citep{bickel1975sex}] \label{ex:berkeleyfpcfa}
During the process of application to undergraduate studies, prospective students choose a department to which they want to join ($D$), report their gender $X$ ($x_0$ female, $x_1$ male), and after a certain period they receive the admission decisions $Y$ ($y_1$ accepted, $y_0$ rejected). 
	
In reality, how applicants pick their department ($f_D$) and how the university decides on who to admit ($f_Y$) is represented by the SCM $\mathcal{M^*} = \langle V=\{X, D, Y\}, U=\{U_X, U_D, U_Y\}, $ $\mathcal{F^*}, P^*(U) \rangle$, where the pair $\langle \mathcal{F^*} , P^*(U) \rangle$ is such that
\begin{numcases}{\mathcal{F}^*, P^*(U) :}
		        X & $\gets$ \text{Bernoulli} (0.5)\label{eq:fpf-mstar}\\
		        D & $\gets$ \text{Bernoulli} (0.5 + $\frac{2}{10}$ X)  \\ 
		        Y & $\gets$ \text{Bernoulli} (0.1 + 0 * X + $\frac{7}{10}$ D). \label{eq:mstar-y}
\end{numcases}
Based on data that it made available from the previous admissions' cycle, the school is sued by a group of applicants who allege gender discrimination. 
In particular, they share with the court the following statistics: 
\begin{equation}
	    P(y \mid x_1) - P(y \mid x_0) = 14\%,
\end{equation}
which seems a devastating piece of evidence against the university. In words, it seems that male candidates are 14\% more likely to be admitted than their female counterparts. The natural question that arises is what could explain  such a disparity in the observed data? Would this be a textbook case of direct, gender-discrimination? 

Despite the fact that the court does not have access to the true $\mathcal{M}^*$, in reality, there is no direct discrimination at all since $f_Y$ (Eq.~\ref{eq:mstar-y}) does not take gender into account (note the zero coefficient multiplying $X$). In fact, female applicants are more likely to apply to arts \& humanities departments, which have lower admission rates, in turn causing a disparity in the overall admission rates.

The plaintiffs hire a team of (evil) data scientists that conduct their own study. After some time, the team comes back and claims to have understood the university decision-making process after a series of interviews and research, which is given by SCM  $\mathcal{M'} = \langle V=\{X, D, Y\}, U=\{U_X, U_D, U_Y\}, \mathcal{F'}, P'(U) \rangle$, where $\langle \mathcal{F'} , P'(U) \rangle$ are such that 
	
\begin{numcases}{\mathcal{F}', P'(U) :}
X & $\gets$ \text{Bernoulli} (0.5) \label{eq:fpf-mprime}\\
D & $\gets$ \text{Bernoulli}$(0.5 +\frac{2}{10} X)$  \\ 
Y & $\gets$ \text{Bernoulli}$(0.1 + \frac{14}{100} * X + 0 * D)$. \label{eq:mprime-y}
\end{numcases}
The only difference between $\mathcal{M^*}$ (the true set of mechanisms) and  $\mathcal{M}'$ (the hypothesized one) is $f_Y$. Interestingly enough, the hypothesized $f_Y$ (Eq.~\ref{eq:mprime-y}) takes gender ($X$) into account while discarding any information about applicants' department choices ($D$). Clearly, if this was indeed the true decision-making process by which the university selects students, the jury should condemn the university, since that would be a blatant case of direct discrimination. $\hfill \square$
\end{example} 

Interestingly, both SCMs $M^*$ and $M'$ generate the same total variation of 14\%.  Still, $\mathcal{M}^*$, which is the true generating model, doesn't suggest any type of gender discrimination, while $\mathcal{M}'$, which is false, suggests that the university's admissions decisions are purely based on gender. In summary, SCMs $M^*$ and $M'$ are qualitatively different (in the sense that the disparity is transmitted along different causal mechanisms), but they are indistinguishable based on TV. We next formalize this issue in more generality.

\subsection{Structural Fairness Criteria}\label{sec:fairnesscriteria}
To understand the issue discussed in the previous section, we start by noting that qualitative distinctions -- such as differentiating direct and indirect discrimination -- lie at the heart of some of the most important legal doctrines on discrimination. In particular, the doctrine of \textit{disparate treatment} asks the question on whether a different decision would have been reached for an individual, had she/he been of a different race or gender, while keeping all other attributes the same \citep{barocas2016big}. In causal terminology, the question is about disparities transmitted along the \textit{direct causal mechanism} between the attribute $X$ and the outcome $Y$. On the other hand, the doctrine of \textit{disparate impact} considers situations in which a facially neutral policy (that does not use race or gender explicitly) results in very different outcomes for racial or gender groups \citep{rutherglen1987disparate}. In this case, the concern is also with disparities transmitted along \textit{indirect and spurious} causal mechanisms. Motivated by these legal doctrines, we can mathematically define qualitative assessments about discrimination based on an SCM:

\begin{definition}[Structural Fairness Criterion]
Let $\Omega$ be a space of SCMs. A structural criterion $Q$ is a binary operator on the space $\Omega$, that is a map $Q: \Omega \to \{ 0, 1 \}$ that determines whether a set of causal mechanisms between $X$ and $Y$ exist or not, in a given SCM $\mathcal{M} \in \Omega$.
\end{definition}
For most of the manuscript, we wish to focus on structural criteria that capture direct, indirect, and spurious discrimination. We consider these criteria as elementary. A more refined and detailed structural notions are discussed in Sec.~\ref{DIBN}. We now formally define the three elementary structural fairness criteria, based on the functional relationships between $X$ and $Y$ encoded in an SCM:
\begin{definition}[Elementary Structural Fairness Criteria] \label{def:str-fair}
Let $\pa(V_i)$ and $\an$ be the parents and ancestors of $V_i$ in the causal diagram $\mathcal{G}$, respectively. For an SCM $\mathcal{M}$, define the following three structural criteria:
\begin{enumerate}[label=(\roman*)]
    \item Structural direct criterion: $$\text{Str-DE}_X(Y) = \mathbb{1}(X \in \pa(Y)).$$
	\item Structural indirect criterion: $$\text{Str-IE}_X(Y) = \mathbb{1}(X \in \an(\pa(Y))).$$
	\item Structural spurious criterion: $$\text{Str-SE}_X(Y) = \mathbb{1}\Big( (U_X \cap \an(Y) \neq \emptyset) \vee (\an(X) \cap \an(Y) \neq \emptyset) \Big).$$
\end{enumerate}
For $\text{Str-DE}_X(Y) = 0$, $\text{Str-IE}_X(Y) = 0$, and $\text{Str-SE}_X(Y) = 0$, we write DE-fair$_X(Y)$, IE-fair$_X(Y)$, and SE-fair$_X(Y)$, respectively.
\end{definition}

In words, the structural direct criterion verifies whether the attribute $X$ is a function of the mechanism $f_Y$, that is, if $Y$ is a function of $X$. The structural indirect criterion verifies whether there exist mediating variables, which are affected by $X$, that in turn influence $Y$. These two criteria are defined in terms of the functional relationships within $\mathcal{M}$, or $\mathcal{F}$. This means that they convey causal information about the relationship among endogenous variables.  Finally, the structural spurious criterion verifies whether there exist variables that both causally affect the attribute $X$ and the outcome $Y$. Different than the previous ones, this criterion relies on the relationships among the exogenous variables $U$, which relates to the confounding relation among the observables. 

We revisit the Admissions example to ground such notions: 

\begin{example}[Admissions continued]
In the SCM $\mathcal{M}$ defined in Eq.~\ref{eq:berkeley-samp-1}-\ref{eq:berkeley-samp-3}, the structural direct and indirect effects can be analyzed as follows:
\begin{enumerate}[label = (\roman*)]
    \item $Y$ is fair w.r.t. $X$ in terms of direct effect if and only if:
    \begin{align}
        \alpha = 0 \text{ in } \lbrace Y \gets \text{Bernoulli}(0.1 +\alpha X + \beta D) \rbrace.
    \end{align}
    \item $Y$ is fair w.r.t. $X$ in terms of indirect effect if and only if:
    \begin{equation}
        \begin{aligned}
        \lambda = 0 &\text{ in } \lbrace D \gets \text{Bernoulli}(0.5 + \lambda X) \rbrace \text{, or }\\
        \beta = 0 &\text{ in } \lbrace Y \gets \text{Bernoulli}(0.1 + \alpha X + \beta D) \big)\rbrace.
    \end{aligned}
    \end{equation}
\end{enumerate}
For the SCM $\mathcal{M}^*$ in Eq.~\ref{eq:fpf-mstar}-\ref{eq:mstar-y}, we can see that direct discrimination does not exist, since $\alpha = 0$, and therefore $X \notin \pa(Y)$ (see Def.~\ref{def:str-fair}(i)). However, indirect discrimination is present, since $\lambda = \frac{2}{10}$ and $\beta = \frac{7}{10}$, and therefore $X \in \an(\pa(Y))$ (see Def.~\ref{def:str-fair}(ii)). In contrast to this, for the SCM $\mathcal{M}'$ in Eq.~\ref{eq:fpf-mprime}-\ref{eq:mprime-y}, direct discrimination is present, since $\alpha = \frac{1}{7}$ and thus $X \in \pa(Y)$, but indirect discrimination is not, since $\beta = 0$ and thus $X \notin \an(\pa(Y))$. $\hfill \square$
\end{example}
Other meaningful structural fairness criteria could be defined using different logical combinations of these three elementary criteria. For instance, $Y$ can be called \textit{totally fair} with respect to $X$ ($\textit{Fair}_X(Y)$) if and only if direct, indirect, and spurious fairness are simultaneously true (i.e., $\textit{Fair}_X(Y) = \textit{DE-fair}_X(Y) \land \textit{IE-fair}_X(Y) \land \textit{SE-fair}_X(Y)$). Alternatively, causal fairness could be defined as $\textit{Causal-fair}_X(Y) = \textit{DE-fair}_X(Y) \land \textit{IE-fair}_X(Y)$, which encodes the non-existence of active causal influence from $X$ to $Y$ (neither direct nor mediated). 

These definitions of structural fairness represent idealized and intuitive criteria that can be evaluated whenever the true underlying mechanisms are known, i.e., the fully specified SCM $\mathcal{M}$. 
The importance of these measures, encoded through the structural mechanisms (Def.~\ref{def:str-fair}), stems from the fact that they underpin existing legal and societal notions of fairness. Therefore, they will be used as a benchmark to understand under what conditions, and how close other measures, which might be estimable from data, approximate these idealized and intuitive notions.

One central question is whether there exist quantitative measures of discrimination that can help us assess whether a structural criterion is satisfied or not. Firstly, we define a general fairness measure that can be computed from the SCM:
\begin{definition}[Fairness Measure]
Let $\Omega$ be a space of SCMs. A fairness measure $\mu$ is a functional on the space $\Omega$, that is a map $\mu: \Omega \to \mathbb{R}$, which quantifies the association of $X$ and $Y$ through any subset of causal mechanisms, in a given SCM $\mathcal{M} \in \Omega$.    
\end{definition}
Here, the definition of a fairness measure $\mu$ is kept as quite general. In Sec.~\ref{sec:explain-variations}, we will restrict our attention to a specific class of measures $\mu$ and explain their importance in the context of Causal Fairness Analysis. In the sequel, we introduce a notion that represents when a fairness measure $\mu$ is suitable for assessing a structural criterion $Q$:

\begin{definition}[Admissibility]
Let $\Omega$ be a class of SCMs on which a structural criterion $Q$ and a measure $\mu$ are defined. A measure $\mu$ is said to be admissible w.r.t. the structural criterion $Q$ within the class of models $\Omega$, or $(Q, \Omega)$-admissible, if:
\begin{equation} \label{eq:adm}
    \forall \mathcal{M} \in \Omega: Q(\mathcal{M}) = 0 \implies \mu(\mathcal{M}) = 0.
\end{equation}
\end{definition}
For simplicity, we will use admissibility instead of $(Q, \Omega)$-admissibility whenever the context is clear. 
The importance of having an admissible measure $\mu$ stems from the contrapositive of Eq.~\ref{eq:adm}, namely, if $\mu(\mathcal{M})$ can be measured or evaluated and $\mu(\mathcal{M}) \neq 0$, this means that the structural measure must be true, i.e., $Q(\mathcal{M}) = 1$. 
In other words, the measure $\mu$ will act as a link between the well-defined but unobservable structural measure and the observable and estimable world. 
For concreteness, consider the following result that formalizes the issue found in Example \ref{ex:berkeleyfpcfa}:
\begin{lemma}[TV is not admissible w.r.t. Str-{DE, IE, SE}] \label{lem:tvnotadmissible}
Let $\Omega$ be the space of Semi-Markovian SCMs which contain variables $X$ and $Y$. Let $\mu$ be the total variation measure TV$_{x_0, x_1}(y)$. Then $\mu$ is not admissible with respect to structural direct, indirect, or spurious criteria. That is,
\begin{align}
        (\text{Str-DE}(\mathcal{M}) = 0) &\centernot\implies (\text{TV}_{x_0, x_1}(y) = 0), \\
        (\text{Str-IE}(\mathcal{M}) = 0) &\centernot\implies (\text{TV}_{x_0, x_1}(y) = 0), \\
        (\text{Str-SE}(\mathcal{M}) = 0) &\centernot\implies (\text{TV}_{x_0, x_1}(y) = 0).
\end{align}
\end{lemma}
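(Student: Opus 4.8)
The plan is to disprove admissibility by counterexample: for each of the three structural criteria $Q \in \{\text{Str-DE}, \text{Str-IE}, \text{Str-SE}\}$ I would exhibit a single SCM $\mathcal{M} \in \Omega$ for which $Q(\mathcal{M}) = 0$ and yet $\text{TV}_{x_0,x_1}(y) \neq 0$, directly contradicting the implication in Eq.~\ref{eq:adm}. The natural candidates are the Admission-style models already analyzed earlier in the excerpt, since there the total variation was computed in closed form to be $\alpha + \lambda\beta$. This formula is convenient precisely because it separates the direct contribution $\alpha$ from the mediated contribution $\lambda\beta$, so by zeroing out one summand while keeping the other nonzero I can target each criterion independently.

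First, for the direct criterion, I would take the model $\mathcal{M}^*$ of Eq.~\ref{eq:fpf-mstar}--\ref{eq:mstar-y}, where $\alpha = 0$, so that $X \notin \pa(Y)$ and hence $\text{Str-DE}(\mathcal{M}^*) = 0$ by Def.~\ref{def:str-fair}(i); nonetheless $\text{TV} = \lambda\beta = \tfrac{2}{10}\cdot\tfrac{7}{10} \neq 0$, which falsifies DE-admissibility. Second, for the indirect criterion, I would use $\mathcal{M}'$ of Eq.~\ref{eq:fpf-mprime}--\ref{eq:mprime-y}, where $\beta = 0$ severs the $D \to Y$ mechanism, so $X \notin \an(\pa(Y))$ and $\text{Str-IE}(\mathcal{M}') = 0$ by Def.~\ref{def:str-fair}(ii), while $\text{TV} = \alpha = \tfrac{14}{100} \neq 0$ falsifies IE-admissibility. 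Third, for the spurious criterion, either model suffices: because $U_X, U_D, U_Y$ are mutually independent and $X$ has no endogenous ancestors, both disjuncts of Def.~\ref{def:str-fair}(iii) fail, giving $\text{Str-SE} = 0$, while the TV remains nonzero.

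The remaining steps are bookkeeping: I would (i) confirm that each model lies in $\Omega$, noting that each is Markovian and therefore a fortiori Semi-Markovian, and that each contains both $X$ and $Y$; and (ii) re-verify the criterion evaluations against Def.~\ref{def:str-fair} using the parent and ancestor sets read off the diagram $X \to D \to Y$ together with $X \to Y$.

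The one step deserving genuine care rather than mechanical checking is the spurious case. There I must argue that $\text{Str-SE} = 0$ truly holds, i.e.\ that $U_X \cap \an(Y) = \emptyset$ and $\an(X) \cap \an(Y) = \emptyset$. This is exactly where the Markovian (independent-exogenous) structure is indispensable: any latent sharing between $U_X$ and the mechanisms feeding $Y$ would open a spurious path and defeat the example. I would therefore state explicitly that the independence of $P(U_X, U_D, U_Y)$ and the absence of bidirected edges are what force $\text{Str-SE} = 0$, while leaving the directed causal pathways free to generate a nonzero total variation.
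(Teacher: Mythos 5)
Your proposal is correct and follows essentially the same route as the paper, which justifies this lemma precisely via the models $\mathcal{M}^*$ and $\mathcal{M}'$ of Ex.~\ref{ex:berkeleyfpcfa}: $\mathcal{M}^*$ ($\alpha=0$) refutes DE-admissibility, $\mathcal{M}'$ ($\beta=0$) refutes IE-admissibility, and the mutual independence of $U_X, U_D, U_Y$ (no confounding, no endogenous ancestors of $X$) makes either model refute SE-admissibility, all with $\text{TV}_{x_0,x_1}(y) = \alpha + \lambda\beta = 14\% \neq 0$. Your explicit care on the spurious case --- noting that Markovianity is what forces both disjuncts of Def.~\ref{def:str-fair}(iii) to fail while the directed paths keep the TV nonzero --- is exactly the right point to make and is consistent with the paper's reading of the structural spurious criterion as the existence of common causes.
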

In fact, the reason why the TV measure is not admissible with respect to structural direct, indirect, and spurious criteria is because it captures the three types of variations together. 

To formalize this idea, we introduce the notion of \textit{decomposability} of a measure $\mu$, i.e.: 
\begin{definition}[Decomposability] \label{def:decompose}
Let $\Omega$ be a class of SCMs and $\mu$ be a measure defined over it. $\mu$ is said to be $\Omega$-decomposable if there exist measures 
\begin{align}
  \mu_1, \dots, \mu_k \text{ such that } 
  \mu = f(\mu_1, \dots, \mu_k),
\end{align} and where $f$ is a non-trivial function vanishing at the origin, i.e., $f(0, \dots, 0) = 0$.
\end{definition}
In words, decomposability states that a measure $\mu$ can be written as a function of measures $(\mu_i)_{i=1}^k$, and that if all measures $(\mu_i)_{i=1}^k$ are equal to $0$ for an SCM $\mathcal{M}$, then the measure $\mu$ must be $0$ as well. For concreteness, consider the following examples. 

\begin{example}[Covariance decomposition, after \citep{zhang2018non}]
\hphantom{Let} 
Let $\mu$ be the covariance measure between random variables $X$ and $Y$,  
\begin{align}
\text{Cov}(X, Y) = \ex[XY] - \ex[X]\ex[Y], 
\end{align}
which plays a role somewhat analogous to TV (and, more broadly, the observational distribution) whenever the system $F$ and $P(U)$ are linear and Gaussian. 
Further, let the causal covariance be defined as
\begin{align}
    \text{Cov}^c_{x}(X, Y) =  \text{Cov}(X, Y - Y_x).
\end{align}
Furthermore, let the spurious covariance be defined as
\begin{align}
    \text{Cov}^s_{x}(X, Y) =  \text{Cov}(X, Y_x).
\end{align}
Then, we can write
\begin{align}
    \text{Cov}(X, Y) = f\big(\text{Cov}^c_{x}(X, Y), \text{Cov}^s_{x}(X, Y)\big),
\end{align}
with the function $f(a, b) = a + b$, which satisfies $f(0, 0) = 0$. $\hfill \square$
\end{example}
Armed with the definitions of admissibility and decomposability, we are ready to formally define the first version of the problem studied here. 

\begin{definition}[Fundamental Problem of Causal Fairness Analysis (preliminary)] \label{def:fpcfa}
Consider a class of SCMs $\Omega$, and let 
\begin{itemize}
	\item  $Q_1, Q_2, ..., Q_k$ be a collection of structural fairness criteria, and 
	\item $\mu$ be a measure, 
\end{itemize}
both defined over $\Omega$. 
The Fundamental Problem of Causal Fairness Analysis is to find a collection of measures $\mu_1, \dots, \mu_k$
such that the following properties are satisfied: 
\begin{enumerate}[label=(\arabic*)]
	\item $\mu$ is decomposable w.r.t. $\mu_1, \dots, \mu_k$; 
	\item $\mu_1, \dots, \mu_k$ are admissible w.r.t. the structural fairness criteria $Q_1, Q_2, ..., Q_k$. 
\end{enumerate} 
In other words, find measures 
\begin{align}
  \mu_1, \dots, \mu_k \text{ that are admissible w.r.t. } Q_1, \dots, Q_k, 
\end{align}
respectively, and such that 
\begin{align}
  \mu = f(\mu_1, \dots, \mu_k),
\end{align} where $f$ is a non-trivial function vanishing at the origin, i.e., $f(0, \dots, 0) = 0$. $\square$
\end{definition}

\newcommand{\myx}{1.2}
\newcommand{\myy}{-1.6}
\begin{wrapfigure}{r}{0.38\textwidth}
	\centering
	\begin{tikzpicture}[fairmap]
		
		\node [f1] (tv) at (0*\myx, -0.25*\myy) {TV};
       
        \draw [draw=black, fill=gray!8] (-3,-2.2) rectangle (3,-1);
 
        \node [f1] (kse) at (-2*\myx, \myy) {$\mu_{\text{SE}}$};
        \node [f1] (se) at (-2*\myx, 2*\myy) {Str-SE};
        
        \node [f1] (kde) at (0*\myx, \myy) {$\mu_{\text{DE}}$};
        \node [f1] (de) at (0*\myx, 2*\myy) {Str-DE};
        
        \node [f1] (kie) at (2*\myx, \myy) {$\mu_{\text{IE}}$};
        \node [f1] (ie) at (2*\myx, 2*\myy) {Str-IE};

        \node (scm) at (0*\myx, 3*\myy) {SCM $\mathcal{M}^*$};
        
        \node (adm) at (1*\myx, 1.55*\myy) {\small admissible};
        \node (decomp) at (1*\myx, 0.3*\myy) {\small decomposable};
     

        \draw[-Latex] (scm) to [bend right = -10] (se);
        \draw[-Latex] (scm) to (de);
        \draw[-Latex] (scm) to [bend right = 10] (ie);
        \draw[-Latex,dashed] (tv) to [bend right = 10] (kse);
        \draw[-Latex,dashed] (tv) to (kde);
        \draw[-Latex,dashed] (tv) to [bend right = -10] (kie);
        \draw[adm] (se) to [bend right = 0] (kse);
        \draw[adm] (de) to (kde);
        \draw[adm] (ie) to [bend right = 0] (kie);
        
	\end{tikzpicture}
	\caption{Fundamental Problem of Fairness Analysis (TV version).}
	\label{fig:APDmotivation}
	\vspace{-0.2in}
\end{wrapfigure}
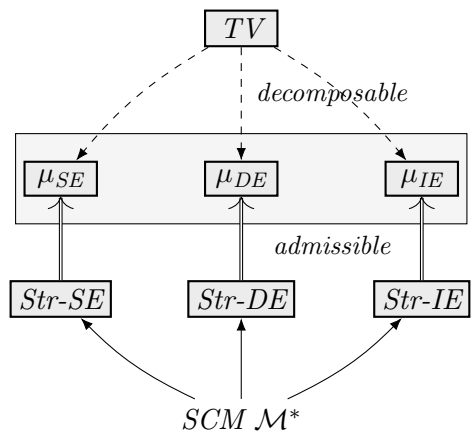

For grounding this discussion, we will consider that the measure $\mu$ is given by the TV\footnote{Naturally, other types of contrasts can be used as measures instead of TV, such as the covariance \citep{zhang2018non} or equality of odds \citep{hardt2016equality,zhang2018equality}.} and the structural measures will be $Str$-\{$DE, IE, SE$\}. We refer to this instance of the problem by $\fpcfa$.
Fig.~\ref{fig:APDmotivation} provides a visual summary of the FPCFA where TV is shown on the top and the structural measures $Str$-\{$DE, IE, SE$\} on the bottom. As you have just seen  in Lem.~\ref{lem:tvnotadmissible}, TV is not admissible relative to each of these structural measures.

The FPCFA asks for the existence of a set of measures $(\mu_{DE}, \mu_{IE}, \mu_{SE})$ that could act as a bridge between $TV$ and the more meaningful, albeit unobservable structural measures $Str$-\{$DE, IE, SE$\}. In fact, the FPCFA is solved whenever TV can be expressed in terms of $(\mu_{DE}, \mu_{IE}, \mu_{SE})$, and each of these measures is admissible w.r.t. to the corresponding structural measures. 
If that is the case, the measures $(\mu_{DE}, \mu_{IE}, \mu_{SE})$ could be seen as explaining  the variations of TV in terms of the most elementary, structural components. Interestingly, this is both a quantitive and a qualitative exercise. From TV's perspective, $(\mu_i)_{i=1}^k$ should account for all its variations, which is naturally a quantitive exercise. From the structural measures perspective, we would like to enforce soundness, namely, discrimination is indeed readable from the corresponding $(\mu_i)_{i=1}^k$, which is a  qualitative exercise. 

\subsection{Explaining Factual \& Counterfactual Variations} \label{sec:explain-variations}
In this section, the main task is studying how the variations in outcome $Y$ can be explained by changes of the protected attribute $X$. The result of this study is what we call the \textit{population-mechanism} plane, which we also refer to as the \textit{explainability plane} (Fig. \ref{fig:Erefined}). The methodology introduced by the plane will allow us to re-express different measures of fairness in an unified manner, which will facilitate their comparison in terms of  admissibility,  decomposability, and possibly other desirable properties. 

We start by introducing a quite general type of measure encoding the idea of contrast.
\begin{definition}[Contrast] \label{def:contrast}
Given a SCM $\mathcal{M}$,  a contrast $\mathcal{C}$ is any quantity of the form 
\begin{equation} \label{eq:contrast}
    \contrast = \ex[y_{C_1} \mid E_1] - \ex[y_{C_0}\mid E_0],
\end{equation}
where $E_0, E_1$ are observed (factual) clauses and $C_0, C_1$ are counterfactual clauses to which the outcome $Y$ responds. Furthermore, whenever 
\begin{enumerate}[label=(\alph*)]
    \item $E_0 = E_1$, the contrast $\mathcal{C}$ is said to be counterfactual; 
    \item $C_0 = C_1$, the contrast  $\mathcal{C}$ is said to be factual.
\end{enumerate}
\end{definition}
For simplicity\footnote{The results in this section hold for any real-valued random variable $Y$.}, we will focus on the binary case, in which a contrast can be written as
\begin{equation}
    P(y_{C_1} \mid E_1) - P(y_{C_0}\mid E_0).
\end{equation}
The purpose of a contrast is to compare the outcome of individuals who coincide with the observed event $E_1$ in the factual world and whose values were intervened on (possibly counterfactually) as defined by $C_1$, against individuals who coincide with the observed event $E_0$ in the factual world and whose values were intervened on (possibly counterfactually) as defined by $C_0$. The definition also distinguishes two special cases of contrasts. A counterfactual contrast captures only the difference in outcome induced by the difference in interventions $C_0, C_1$ (since $E_0 = E_1$). Complementary to this, a factual contrast captures only the difference induced by the observed events $E_0, E_1$ (since $C_0= C_1$). We now show why contrasts are useful for explaining variations:
\begin{theorem}[Contrast's Decomposition \& Structural Basis Expansion] \label{thm:contrasts}
Given a SCM $\mathcal{M}$ and let $\mathcal{C}$ be a contrast $    P(y_{C_1} \mid E_1) - P(y_{C_0}\mid E_0)$. $\mathcal{C}$ can be decomposed into its counterfactual and factual variations, namely:
\begin{align} \label{eq:contrastdecomposition}
    \underbrace{P(y_{C_1} \mid E_1) - P(y_{C_0}\mid E_1)}_{\text{counterfactual contrast}} + \underbrace{P(y_{C_0} \mid E_1) - P(y_{C_0}\mid E_0)}_{\text{factual contrast}}.
\end{align}
Furthermore, the corresponding counterfactual and factual contrasts admit the following structural basis expansions, respectively:
\begin{enumerate}[label=(\alph*)]
    \item Counterfactual contrast ($\mathcal{C}_{\text{ctf}}$), where $E_0 = E_1 = E$, can be expanded as
        \begin{align} \label{eq:down}
        P(y_{C_1} \mid E) - P(y_{C_0} \mid E) = \sum_u \, \big( \underbrace{y_{C_1}(u) - y_{C_0}(u)}_{\text{unit-level difference}} \big) \underbrace{P(u \mid E)}_{\text{posterior}},
        \end{align}
    \item Factual contrast ($\mathcal{C}_{\text{factual}}$), where $C_0 = C_1 = C$, can be expanded as
        \begin{align} \label{eq:up}
             P(y_{C} \mid E_1) - P(y_{C} \mid E_0) = \sum_u \underbrace{y_{C}(u)}_{\text{unit outcome}} \big(\underbrace{P(u \mid E_1) - P(u \mid E_0)}_{\text{posterior difference}}\big).
        \end{align}
\end{enumerate} 
\end{theorem}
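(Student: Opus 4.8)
The plan is to reduce both assertions---the additive decomposition and the two structural-basis expansions---to a single unit-level representation of conditional counterfactual probabilities, after which everything is elementary algebra.

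\textbf{Decomposition.} First I would establish Eq.~\ref{eq:contrastdecomposition} by a pure add-and-subtract step: inserting and removing the term $P(y_{C_0}\mid E_1)$ in $\mathcal{C} = P(y_{C_1}\mid E_1) - P(y_{C_0}\mid E_0)$ yields $[P(y_{C_1}\mid E_1) - P(y_{C_0}\mid E_1)] + [P(y_{C_0}\mid E_1) - P(y_{C_0}\mid E_0)]$. The first bracket shares the conditioning event $E_1$ and differs only in the intervention, so by Def.~\ref{def:contrast}(a) it is a counterfactual contrast; the second shares the intervention $C_0$ and differs only in the observed event, so by Def.~\ref{def:contrast}(b) it is a factual contrast. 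No further work is needed here.

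\textbf{Key representation.} The heart of the argument is a single identity: for any counterfactual clause $C$ and factual clause $E$ with $P(E)>0$,
\[
  P(y_C \mid E) = \sum_u y_C(u)\, P(u \mid E),
\]
where $y_C(u)$ denotes the binary potential response $Y_C(u)$. I would derive this from Def.~\ref{def:ctf-dist}: writing $P(y_C, E) = \sum_u \mathbb{1}(Y_C(u) = 1)\,\mathbb{1}(E\text{ holds at }u)\,P(u)$, dividing through by $P(E)$, and recognizing $\mathbb{1}(E\text{ holds at }u)\,P(u)/P(E) = P(u\mid E)$. In words, this is exactly the abduction--action--prediction recipe: abduce the posterior $P(u\mid E)$ from the factual evidence, act by applying the intervention defining $C$, and predict by averaging $Y_C(u)$.

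\textbf{Expansions.} Given this representation, both basis expansions are immediate. For the counterfactual contrast of Eq.~\ref{eq:down}, with $E_0=E_1=E$, I apply the identity to $P(y_{C_1}\mid E)$ and $P(y_{C_0}\mid E)$ and subtract; the common posterior $P(u\mid E)$ factors out, leaving $\sum_u (y_{C_1}(u) - y_{C_0}(u))\,P(u\mid E)$. For the factual contrast of Eq.~\ref{eq:up}, with $C_0=C_1=C$, I apply it to $P(y_C\mid E_1)$ and $P(y_C\mid E_0)$ and subtract; now the common potential response $y_C(u)$ factors out, leaving $\sum_u y_C(u)\,(P(u\mid E_1) - P(u\mid E_0))$. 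The duality between the two expansions is simply a matter of which factor---the unit-level difference or the posterior difference---is held fixed.

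\textbf{Main obstacle.} The algebra is routine; the one genuine subtlety is justifying the key representation, namely that conditioning on a factual event $E$ and then evaluating a counterfactual quantity $Y_C$ amounts to re-weighting units by $P(u\mid E)$ and averaging $Y_C(u)$. I expect this to require an explicit appeal to Def.~\ref{def:ctf-dist} together with the observation that the factual evidence $E$ and the counterfactual intervention $C$ live over the same exogenous space $U$, so that Bayes' rule may be applied to $P(u)$ directly.
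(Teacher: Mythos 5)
Your proof is correct and takes essentially the same route as the paper: the paper offers no separate proof of Thm.~\ref{thm:contrasts}, treating it as immediate from Def.~\ref{def:ctf-dist}, which is exactly your add-and-subtract step combined with the unit-level representation $P(y_C \mid E) = \sum_u y_C(u)\, P(u \mid E)$ obtained by dividing the joint $P(y_C, E) = \sum_u \mathbb{1}(Y_C(u) = y)\,\mathbb{1}(E \text{ holds at } u)\, P(u)$ by $P(E)$. Your identification of the one real subtlety --- that the factual event $E$ is itself a deterministic function of $u$, so the posterior $P(u \mid E)$ is well defined and evidence and intervention combine over the same exogenous space --- is precisely the abduction step the paper's surrounding discussion of the two-step generative process appeals to.
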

The decomposition and structural basis expansion of contrasts presented in this theorem entail a fundamental connection of causal fairness measures with structural causal models. In particular, the decomposition given in Eq.~\ref{eq:contrastdecomposition} allows us to disentangle factual and counterfactual variations within any contrast.

We note that Eqs.~\ref{eq:down} and \ref{eq:up}  re-expresses the variations within the target quantity in terms of the underlying units and activated mechanisms, as references by the SCM. We would like to understand these qualitatively different types of variations separately.

\begin{wrapfigure}{r}{0.40\textwidth}
\centering
\includegraphics[keepaspectratio,width=0.40\columnwidth]{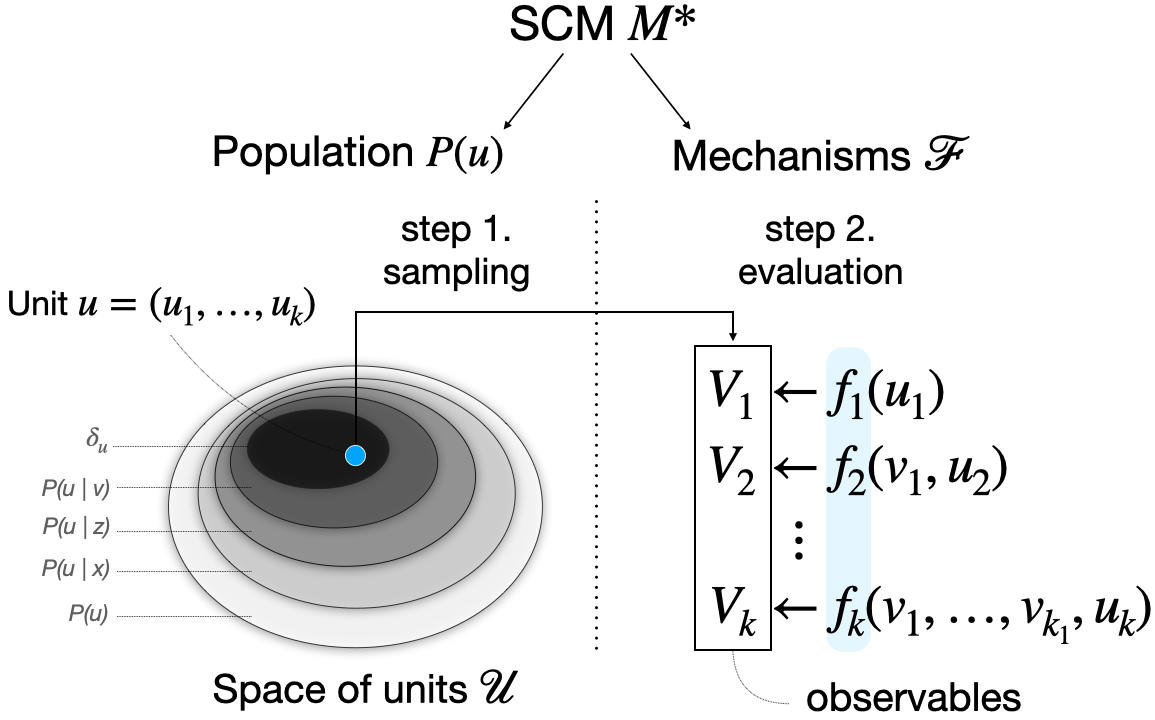}
\caption{Two-step generative process that includes sampling of an unit from the population (left), and evaluating it against the corresponding structural mechanisms (right). }
\label{fig:generative:perspective}
\vspace{-0.3in}
\end{wrapfigure}

First, we will take a generative interpretation over how the targeted variations are realized in terms of the SCM $\mathcal{M} = \langle V, U, \mathcal{F}, P(U)$. Fig.~\ref{fig:generative:perspective} illustrates the two-step generative process that goes as follow: 
\begin{itemize}[leftmargin=*]
    \item[] (1) \textbf{Sampling:} A unit $U=u$ is sampled from the population distributed according to $P(U)$; 
    \item[] (2) \textbf{Evaluation:}  This unit $u$ passes through the sequence of mechanisms $\mathcal{F}$, in causal order, until the values of the endogenous variables $V$ are realized. 
\end{itemize}
The l.h.s. of the figure shows the sampling process while the r.h.s. represents the evaluation process. As discussed in Sec.~\ref{sec:data}, if the system is not submitted to an intervention, this leads to the observational distribution. On the other hand, if the values of certain variables are fixed through intervention, this leads to the corresponding counterfactual distribution. 

Considering this two-step generative process, we re-examine the variations encoded in the structural basis expansion of Thm.~\ref{thm:contrasts}.  For convenience, we reproduce the  equation relative to the counterfactual variations in the sequel (Eq.~\ref{eq:down}):
\begin{align*}
P(y_{C_1} \mid E) - P(y_{C_0} \mid E) = \;
         \sum_u \, \big(\underbrace{y_{C_1}(u) - y_{C_0}(u)}_{\text{unit-level difference}} \big) \underbrace{P(u \mid E)}_{\text{posterior}}
\end{align*}
First, we consider the second factor in the r.h.s. of the expression. Note that  $P(u \mid E=e)$ represents the first step in the generative process in which units who naturally arise to value $E = e$ are drawn from the population. In fact, depending on the granularity of the evidence $E$, a different fraction of the population (or types of individuals) will be selected. 
For instance, if $E = \{\}$, the (posterior) distribution $P(u)$ is somewhat uninformative, and represents an average when units are drawn at random from the underlying population, regardless of their predispositions and characteristics. On the other hand, if $E = \lbrace X = x \rbrace$, the posterior distribution $P(u \mid x)$ would be more informative since it now includes units that naturally would have $X=x$. Naturally, this is less informative compared to more specific events such as $E = \lbrace X = x, Z = z\rbrace$ or $E = \lbrace X = x, Z = z, W = w, Y = y \rbrace$. In fact, the l.h.s. of the figure illustrates this increasingly more refined and informative set of events $E$, i.e., starting from picking individuals at random from the general population, $P(u)$, to a single individual $\delta_u$, where $\delta_u$ is the Dirac delta function. 
Second, we note that once the unit $U=u$ is selected, all randomness is vanished, and the unit will go through the set of mechanisms $\mathcal{F}$.  The first factor of the expression, $y_{C_1}(u) - y_{C_0}(u)$, describes the difference in response $y$ between conditions $C_1$ and $C_0$ for a fixed realization of exogenous variables $u$. As realizations of exogenous variables $U$ are indices for the different identities of units in the population, the quantity $y_{C_1}(u) - y_{C_0}(u)$ will be an unit-level quantity. 

In the context of fairness discussed here, consider the case when $C_1 = x_1$ and $C_0 = x_0$,  which could represent the protected attribute, for instance, males and females, or White and African-American. The quantity $y_{x_1}(u) - y_{x_0}(u)$ measures what the change in outcome $Y$ would be when changing the attribute $X$ from $x_0$ to $x_1$, for a specific unit $u$. For this particular choice of $C_0, C_1$, the quantity captures what is known as the \text{total causal effect} of $X$ on $Y$, that is it includes all the variations from $X$ to $Y$ translated across causal pathways.

In summary,  any counterfactual contrast $\mathcal{C}_{\text{ctf}}$ can be decomposed into two parts:
\begin{enumerate}
    \item A unit-level difference comparing the counterfactual worlds $C_1$ vs. $C_0$ to a specifc unit $U=u$. This quantity is determined by the causal mechanisms $\mathcal{F}$ of the SCM, and does not depend on the distribution $P(u)$. 
    \item A posterior distribution $P(u \mid E=e)$ that indicates the probability mass assigned to unit $u$ whenever the event $E=e$. By changing the granularity of the event $E$, the space of included units is restricted, making the measure more specific to a subpopulation (see Fig.~\ref{fig:generative:perspective} (l.h.s.)).
\end{enumerate}
Given that the selection of units is fixed (second factor), and the only thing that varies is the selection of the mechanisms (first factor) through the choices of the counterfactual conditions $C_1$ and $C_0$, this will generate variations downstream, so they will be inherently ``causal". In fact, the specific instantiation of $C_1$ and $C_0$ and $E=\{\}$ (i.e., $P(U)$) matches to the very definition of average causal effect, $P(y | do(x_1)) - P(y | do(x_0))$. 

We now re-examine the factual variations encoded in the structural basis expansion of Thm.~\ref{thm:contrasts}.  For convenience, we reproduce the  corresponding equation (Eq.~\ref{eq:up}):
\begin{align*}
P(y_{C} \mid E_1) - P(y_{C} \mid E_0) = \sum_u \underbrace{y_{C}(u)}_{\text{unit outcome}} \big(\underbrace{P(u \mid E_1) - P(u \mid E_0)}_{\text{posterior difference}}\big)
\end{align*}
In words, a factual contrast can be expanded as a sum of differences in the posteria $P(u\mid E_1) - P(u\mid E_0)$, weighted by unit-level outcomes $y_C(u)$.
We note that the difference in posteria represents the first step in the generative process in which two sets of units who naturally arise to values $E_1$ and $E_0$ are drawn from the population, respectively. Similarly to the previous discussion, different sub-populations will be selected depending on the granularity of the evidence $E_1$, $E_0$. The scope of these events is the same but their instantiations are different. 

This can be seen as complementary when compared to the counterfactual contrasts. Given that the mechanisms  are fixed (first factor), the component that generates variations is relative to the choice of units based on the factual conditions $E_0$ and $E_1$. We suggest this will generate upstream variations, which will be somewhat ``non-causal'' (also called spurious), which will be described in more details later on in the manuscript.  Still, for instance, we are mostly interested in setting $C_1 = C_0 = x$, so that $X = x$ along all causal pathways. The contrast then will capture the difference in probability mass assigned to $u$ in events $E_1$ and $E_0$. By definition, spurious effects are generated by variations that \textit{causally precede} $X$, so these cannot be captured by intervening on $X$. For this reason, we need to compare events $E_1$ and $E_0$, which have resulted in a different instantiation of the value of $X$. This factorization also suggest mathematically how causal and spurious effects are inherently different from each other. 


\begin{wrapfigure}{r}{0.50\textwidth}
\centering
\includegraphics[keepaspectratio,width=0.50\columnwidth]{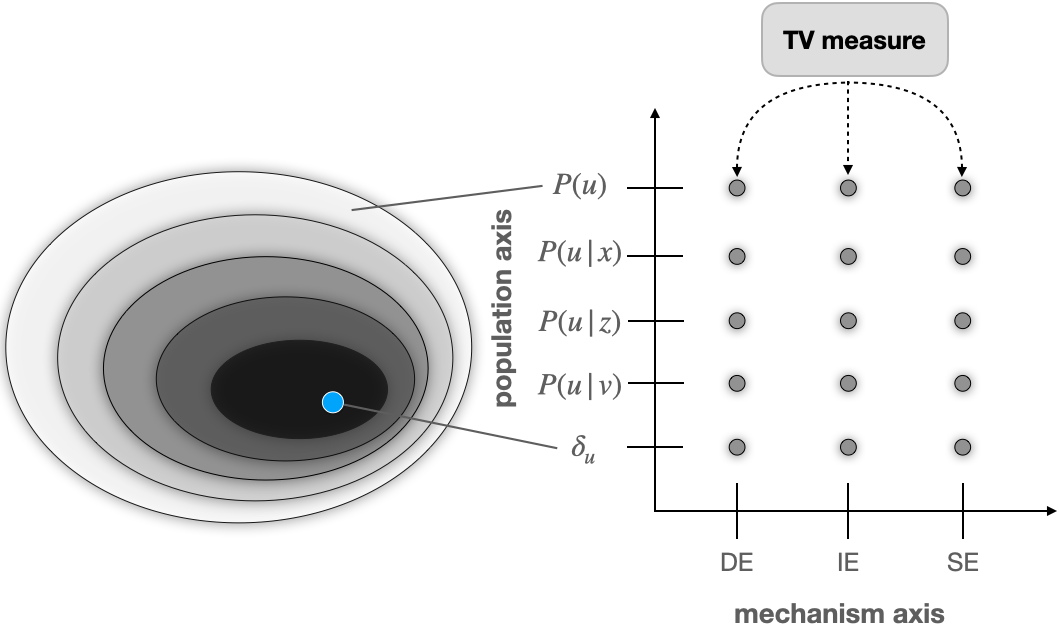}
\caption{In the population axis, contrasts are restricted to smaller subsets of units $u$ in the domain $\mathcal{U}$. At the same time, along the mechanism axis, we distinguish between direct, indirect, and spurious variations.}
\label{fig:Erefined}
\vspace{-0.15in}
\end{wrapfigure}

\paragraph{Explainability plane.} By decomposing variations via factual and counterfactual contrasts, and expanding them using the structural basis, we can give the essential structure of the measures used in Causal Fairness Analysis. The approach used for decomposing the total variation is shown in Fig.~\ref{fig:Erefined}, which we call the explainability plane.
As the figure illustrates, there are two separate axes of the decomposition. On the mechanism axis, we are decomposing the TV into its direct, indirect, and spurious variations. On the population axis, we are considering increasingly precise subsets of the space of units $\mathcal{U}$, which correspond to different posterior distributions. As we will see later, moving along the population axis will correspond to constructing increasingly more powerful fairness measures.

\section{TV family} \label{Measures}
In this section, we introduce a family of measures that populate the explainability plane in Fig.~\ref{fig:Erefined}. Since all the measures describe variations included within the TV measure, we refer to them as the \textit{TV family} (part $e$ of Fig.~\ref{fig:roadmap}). In particular, this section aims to explicitly solve the $\fpcfa$ discussed in Sec.~\ref{foundations}.

\subsection{Solving the Fundamental Problem of Causal Fairness Analysis} \label{MeasuresGap}
The measures in the TV-family are introduced in order. We start with measures that quantify discrimination in the entire population of units $u$ (corresponding to posterior $P(u)$), and reach measures that quantify discrimination for a single unit $u$ (corresponding to the posterior $\delta_u$, where $\delta$ is the Dirac delta function).

\subsubsection{Population level Contrasts - \(P(u)\)}
We first recall that the TV measure itself is not admissible with respect to structural criteria $\strm$, as shown in Lem.~\ref{lem:tvnotadmissible}. Specifically, the reason for this is that the TV captures variations between groups generated by any mechanism of association, both causal and non-causal, and does not distinguish them. Our first step is therefore to disentangle these variations -- the causal and non-causal (or spurious) -- within the TV. 
\begin{definition}[Total and spurious effects] \label{def:teexpse}
    Let the total effect and experimental spurious effect be defined as follows:
\begin{align}
    \text{TE}_{x_0, x_1}(y) &= P(y_{x_1}) - P(y_{x_0}) \\
    \text{Exp-SE}_{x}(y) &= P(y_{x}) - P(y \mid x)
\end{align}
Further, we write TE-fair$_X(Y)$ whenever $\text{TE}_{x_0, x_1}(y) = 0$, or simply TE-fair when $X$ and $Y$ are clear from the context. Exp-SE-fair is defined analogously.
\end{definition}
In words, TE measures the difference in the outcome $Y$ when setting $X = x_1$, compared to setting $X = x_0$. The measure can be visualized graphically as shown in Fig.~\ref{fig:graphTE}. In this case, $Y$ responds to the change in $X$ from $x_0$ to $x_1$ through two mechanisms. In fact, $Y$ variations in response to change in $X$ are realized through (i) the direct link, $X \rightarrow Y$, and (ii) through the indirect link via $W$, $X \rightarrow W \rightarrow Y$.  In the context of the COMPAS dataset in Ex.~\ref{ex:compas}, the total effect would be the average difference in recidivism prediction had an individual's race been White compared to had it been Non-White. Since the covariates $Z$ vary naturally in both counterfactual worlds (both sides of the expression), those are cancelled out and $Y$ variations can be explained in terms of the downstream variations in response to the change purely on $X$\footnote{The TE measure is also called causal effect and sometimes written in do-notation, $P(y \mid do(x_1)) - P(y \mid do(x_0))$. Obviously, this quantity has well-defined semantics given a SCM, despite the fact that no one intends or believes to set any of the protected attributes literally by intervention. Still, through the formal language of causality, one can contemplate these distinct counterfactual realities. In particular, one can disentangle and explain the sources of $Y$ variations in response to changes in $X$, including the ones through the causal pathways versus the non-causal ones, along the spurious paths.}.

In a complementary manner, the experimental spurious effect measures the average difference in outcome $Y$ when $X = x$ by intervention, counterfactually speaking, compared to simply observing that $X = x$. As shown graphically in Fig.~\ref{fig:graphExpSE}, note that since from $Y$'s perspective $X$ has the same value $x$ in both factors, the $Y$ variations can be explained in terms of the upstream effect in response to how $X$ naturally affected $Z$ versus how $Z$ varies free from the influence of $X$. In the COMPAS dataset, this would mean the average difference in recidivism prediction for individuals for whom the race is set to White by intervention, compared to simply observing the race to be White.  

Syntactically, following the discussion in Sec.~\ref{sec:explain-variations}, we can write these quantities in terms of contrasts (Def.~\ref{def:contrast}), namely: 
\begin{align}
\label{eq:te-basis} 
    \text{TE}_{x_0, x_1}(y) &= \mathcal{C}(x_0, x_1, \emptyset, \emptyset) \\
\label{eq:expse-basis} 
    \text{Exp-SE}_{x}(y) &= \mathcal{C}(\emptyset, x, x, \emptyset)
\end{align}

Based on these two notions, the TV can be decomposed into two distinct sources of variation, which correspond precisely to its causal and non-causal mechanisms:
\begin{figure}
     \centering
     \begin{subfigure}[b]{0.45\textwidth}
         \centering
             \begin{tikzpicture}
	 [>=stealth, rv/.style={thick}, rvc/.style={triangle, draw, thick, minimum size=7mm}, node distance=18mm]
	 \pgfsetarrows{latex-latex};
		\node[rv] (z1) at (0,1) {$Z$};
	 	
	 	\node[rv] (x1x) at (-1,0) {$X=x_1$};
	 	
	 	\node[rv] (w1) at (0,-1) {$W$};
	 	\node[rv] (y1) at (1.5,0) {$Y$};
	 	
	 	\draw[->, blue] (x1x) -- (w1);
		\draw[->] (z1) -- (y1);
	 	\path[->, blue] (x1x) edge[bend left = 0] (y1);
	 	\draw[->] (w1) -- (y1);
		\draw[->] (z1) -- (w1);
		
		\node (mns) at (2.25, 0) {\Large $-$};
		\node (a) at (0, -1.5) {$P(y_{x_1})$};
		\node (b) at (4.5, -1.5) {$P(y_{x_0})$};
		
		\node[rv] (z2) at (4.5,1) {$Z$};
	 	\node[rv] (x2x) at (3.5,0) {$X=x_0$};
	 	\node[rv] (w2) at (4.5,-1) {$W$};
	 	\node[rv] (y2) at (6,0) {$Y$};
	 	
	 	\draw[->, red] (x2x) -- (w2);
		\draw[->] (z2) -- (y2);
	 	\path[->, red] (x2x) edge[bend left = 0] (y2);
	 	\draw[->] (w2) -- (y2);
		\draw[->] (z2) -- (w2);
	 \end{tikzpicture}
\caption{Total effect TE$_{x_0, x_1}(y)$.}
\label{fig:graphTE}
     \end{subfigure}
     \hfill
     \begin{subfigure}[b]{0.45\textwidth}
         \centering
             \begin{tikzpicture}
	 [>=stealth, rv/.style={thick}, rvc/.style={triangle, draw, thick, minimum size=7mm}, node distance=18mm]
	 \pgfsetarrows{latex-latex};
		\node[rv] (z1) at (0,1) {$Z$};
	 	
	 	\node[rv] (x1x) at (-1,0) {$X=x$};
	 	
	 	\node[rv] (w1) at (0,-1) {$W$};
	 	\node[rv] (y1) at (1.5,0) {$Y$};
	 	
	 	\draw[->] (x1x) -- (w1);
		\draw[->] (z1) -- (y1);
	 	\path[->] (x1x) edge[bend left = 0] (y1);
	 	\draw[->] (w1) -- (y1);
		\draw[->] (z1) -- (w1);
		
		\node (mns) at (2.25, 0) {\Large $-$};
		\node (a) at (0, -1.5) {$P(y_{x})$};
		\node (b) at (4.5, -1.5) {$P(y \mid x)$};
		
		\node[rv] (z2) at (4.5,1) {$Z$};
	 	\node[rv] (x2) at (3.5,0) {$X=x$};
	 	\node[rv] (w2) at (4.5,-1) {$W$};
	 	\node[rv] (y2) at (6,0) {$Y$};
	 	
	 	\draw[->] (x2) -- (w2);
		\draw[->] (z2) -- (y2);
	 	\path[->] (x2) edge[bend left = 0] (y2);
		\path[<->, red] (x2) edge[bend left = 30, dashed] (z2);
	 	\draw[->] (w2) -- (y2);
		\draw[->] (z2) -- (w2);
	 \end{tikzpicture}
\caption{Experimental spurious effect Exp-SE$_{x}(y)$.}
\label{fig:graphExpSE}
     \end{subfigure}
     \hfill
     \begin{subfigure}[b]{0.45\textwidth}
         \centering
         \begin{tikzpicture}
	 [>=stealth, rv/.style={thick}, rvc/.style={triangle, draw, thick, minimum size=7mm}, node distance=18mm]
	 \pgfsetarrows{latex-latex};
		\node[rv] (z1) at (0,1) {$Z$};
	 	
	 	\node[rv] (x11) at (-1.1,0.3) {$X=x_1$};
	 	\node[rv] (x10) at (-1.1,-0.3) {$X=x_0$};
	 	
	 	\node[rv] (w1) at (0,-1) {$W$};
	 	\node[rv] (y1) at (1.5,0) {$Y$};
	 	
	 	\draw[->] (x10) -- (w1);
		\draw[->] (z1) -- (y1);
	 	\path[->, blue] (x11) edge[bend left = 0] (y1);
	 	\draw[->] (w1) -- (y1);
		\draw[->] (z1) -- (w1);
		
		\node (mns) at (2.25, 0) {\Large $-$};
		\node (a) at (0, -1.5) {$P(y_{x_1, W_{x_0}})$};
		\node (b) at (4.5, -1.5) {$P(y_{x_0})$};
		
		\node[rv] (z2) at (4.5,1) {$Z$};
	 	\node[rv] (x20) at (3.5,0) {$X=x_0$};
	 	\node[rv] (w2) at (4.5,-1) {$W$};
	 	\node[rv] (y2) at (6,0) {$Y$};
	 	
	 	\draw[->] (x20) -- (w2);
		\draw[->] (z2) -- (y2);
	 	\path[->, red] (x20) edge[bend left = 0] (y2);
	 	\draw[->] (w2) -- (y2);
		\draw[->] (z2) -- (w2);
	 \end{tikzpicture}
         \caption{Natural direct effect NDE$_{x_0, x_1}(y)$.}
         \label{fig:graphNDE}
     \end{subfigure}
     \hfill
     \begin{subfigure}[b]{0.45\textwidth}
         \centering
         \begin{tikzpicture}
	 [>=stealth, rv/.style={thick}, rvc/.style={triangle, draw, thick, minimum size=7mm}, node distance=18mm]
	 \pgfsetarrows{latex-latex};
		\node[rv] (z1) at (0,1) {$Z$};
	 	
	 	\node[rv] (x11) at (-1.1,0.3) {$X=x_1$};
	 	\node[rv] (x10) at (-1.1,-0.3) {$X=x_0$};
	 	
	 	\node[rv] (w1) at (0,-1) {$W$};
	 	\node[rv] (y1) at (1.5,0) {$Y$};
	 	
	 	\draw[->, blue] (x10) -- (w1);
		\draw[->] (z1) -- (y1);
	 	\path[->] (x11) edge[bend left = 0] (y1);
	 	\draw[->, blue] (w1) -- (y1);
		\draw[->] (z1) -- (w1);
		
		\node (mns) at (2.25, 0) {\Large $-$};
		\node (a) at (0, -1.5) {$P(y_{x_1, W_{x_0}})$};
		\node (b) at (4.5, -1.5) {$P(y_{x_1})$};
		
		\node[rv] (z2) at (4.5,1) {$Z$};
	 	\node[rv] (x20) at (3.5,0) {$X=x_1$};
	 	\node[rv] (w2) at (4.5,-1) {$W$};
	 	\node[rv] (y2) at (6,0) {$Y$};
	 	
	 	\draw[->, red] (x20) -- (w2);
		\draw[->] (z2) -- (y2);
	 	\path[->] (x20) edge[bend left = 0] (y2);
	 	\draw[->, red] (w2) -- (y2);
		\draw[->] (z2) -- (w2);
	 \end{tikzpicture}
        \caption{Natural indirect effect NIE$_{x_1, x_0}(y)$.}
        \label{fig:graphNIE}
     \end{subfigure}
     \caption{Graphical representations of measures used in TV$_{x_0, x_1}(y)$ decomposition.}
     \label{fig:graphMeasures}
\end{figure}
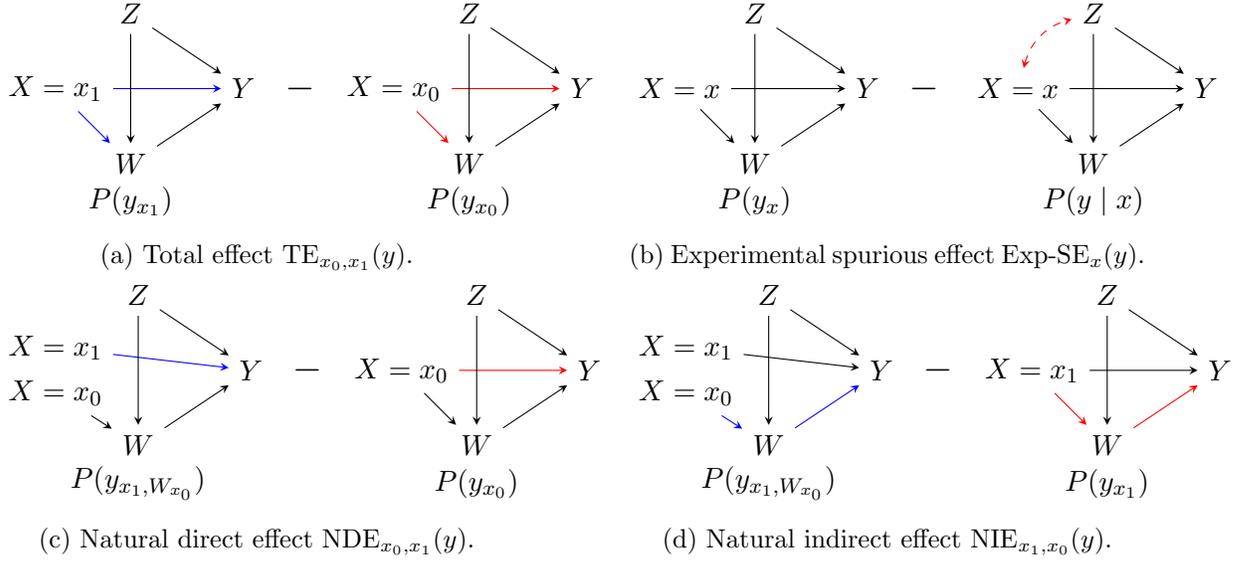
\begin{lemma}[TV decomposition I] \label{lemma:te+expse}
The total variation measure can be decomposed as
\begin{align}
    \text{TV}_{x_0, x_1}(y) =  \text{TE}_{x_0, x_1}(y) + (\text{Exp-SE}_{x_0}(y) - \text{Exp-SE}_{x_1}(y)).
\end{align}
\end{lemma}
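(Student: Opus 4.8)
The plan is to prove the identity by a direct ``add-and-subtract'' manipulation starting from the definition of $\text{TV}_{x_0,x_1}(y)$, inserting the two interventional (counterfactual) quantities $P(y_{x_1})$ and $P(y_{x_0})$ and regrouping. The only tools needed are the definitions of $\text{TE}$ and $\text{Exp-SE}$ from Def.~\ref{def:teexpse} together with linearity (so the argument works verbatim for any real-valued $Y$ via expectations, not just the binary case). No causal identification or assumption about the SFM is required, since the statement is a purely algebraic relation among observational and interventional functionals of the same SCM.

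Concretely, I would begin from $\text{TV}_{x_0,x_1}(y) = P(y \mid x_1) - P(y \mid x_0)$ and insert the interventional terms, writing
\begin{align*}
\text{TV}_{x_0,x_1}(y)
&= \big(P(y_{x_1}) - P(y_{x_0})\big)
 + \big(P(y \mid x_1) - P(y_{x_1})\big)
 - \big(P(y \mid x_0) - P(y_{x_0})\big).
\end{align*}
The first bracket is exactly $\text{TE}_{x_0,x_1}(y)$. For the remaining two brackets I would use $\text{Exp-SE}_{x}(y) = P(y_x) - P(y \mid x)$, which gives $P(y \mid x_1) - P(y_{x_1}) = -\,\text{Exp-SE}_{x_1}(y)$ and $-\big(P(y \mid x_0) - P(y_{x_0})\big) = \text{Exp-SE}_{x_0}(y)$. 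Substituting these yields $\text{TV}_{x_0,x_1}(y) = \text{TE}_{x_0,x_1}(y) + \text{Exp-SE}_{x_0}(y) - \text{Exp-SE}_{x_1}(y)$, which is the claim. Equivalently, one can verify the identity in the reverse direction by expanding the stated right-hand side and observing that the $P(y_{x_1})$ and $P(y_{x_0})$ terms cancel pairwise, leaving $P(y \mid x_1) - P(y \mid x_0)$.

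There is essentially no hard step here; the result is a telescoping cancellation of the two interventional terms. The one point that requires care is the \emph{sign bookkeeping}: because $\text{Exp-SE}_{x}(y)$ is defined with the interventional term first ($P(y_x) - P(y \mid x)$), the $x_1$ contribution enters the decomposition with a minus sign while the $x_0$ contribution enters with a plus sign, which is exactly what produces the combination $\text{Exp-SE}_{x_0}(y) - \text{Exp-SE}_{x_1}(y)$. Conceptually, I would emphasize that this split is the first instance of the program described in Sec.~\ref{sec:explain-variations}: the term $\text{TE}_{x_0,x_1}(y)$ isolates the downstream (causal) variation obtained by holding the population fixed and varying the intervention, while $\text{Exp-SE}_{x_0}(y) - \text{Exp-SE}_{x_1}(y)$ collects the upstream (spurious) variation arising from the mismatch between the observational conditioning event and the interventional population, consistent with the factual/counterfactual contrast expansion of Thm.~\ref{thm:contrasts}.
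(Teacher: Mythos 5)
Your proof is correct and takes essentially the same route as the paper's own argument (Lem.~\ref{lem:tv-decompositions} in the Appendix), which likewise inserts $P(y_{x_1})$ and $P(y_{x_0})$ into $P(y\mid x_1)-P(y\mid x_0)$ and lets the telescoping terms regroup into TE and the two Exp-SE contributions. If anything, your sign bookkeeping is the more careful one: it follows Def.~\ref{def:teexpse} exactly (so that the $x_1$ term enters with a minus and the $x_0$ term with a plus, yielding $\text{Exp-SE}_{x_0}(y)-\text{Exp-SE}_{x_1}(y)$ as stated in the lemma), whereas the appendix derivation writes the spurious terms under the opposite sign convention for Exp-SE.
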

Lem. \ref{lemma:te+expse} shows that the TV measure equals to the total effect on $Y$ when $X$ transitions from $x_0$ to $x_1$ plus the difference between the experimental spurious effect of $X = x_0$ and $X = x_1$ \footnote{An alternative way of interpreting this relation is by flipping TV and TE in the equation, namely: 
\begin{align}
\text{TE}_{x_0, x_1}(y) = \text{TV}_{x_0, x_1}(y) - (\text{Exp-SE}_{x_0}(y) - \text{Exp-SE}_{x_1}(y)).
\end{align}
This means that the total effect of transitioning $X$ from $x_0$ to $x_1$ on $Y$ is equal to the corresponding total variation of $Y$ minus the the difference in spurious effects of the baseline $X=x_0$ versus $X=x_1$. 
}. 
In other words, TV accounts for the sum of the directed (causal) and confounding paths from $X$ to $Y$. More formally, the lemma shows that the TV satisfies decomposability with respect to TE and Exp-SE. 

Interestingly, the TE itself is still not admissible w.r.t. Str-\{DE,IE\}, as it captures all causal influences of $X$ on $Y$, including the direct (through the direct link $X \rightarrow Y$) and indirect ones (i.e., paths via $W$). 
\begin{lemma}[TE inadmissibility]
The total effect measure TE$_{x_0, x_1}(y)$ is not admissible with respect to structural criteria Str-DE and Str-IE.
\end{lemma}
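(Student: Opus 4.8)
The statement is a non-admissibility (negative) claim, so by the definition of admissibility it suffices to exhibit, for each of the two structural criteria, a single SCM in $\Omega$ for which the criterion evaluates to $0$ yet $\text{TE}_{x_0,x_1}(y) \neq 0$; this contradicts the implication $Q(\mathcal{M})=0 \implies \mu(\mathcal{M})=0$. Conveniently, the two SCMs already introduced in Example~\ref{ex:berkeleyfpcfa}, namely $\mathcal{M}^*$ (Eqs.~\ref{eq:fpf-mstar}--\ref{eq:mstar-y}) and $\mathcal{M}'$ (Eqs.~\ref{eq:fpf-mprime}--\ref{eq:mprime-y}), already serve as the two witnesses, so no new construction is needed. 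I would begin by recording the generic formula for the total effect in these linear Bernoulli models, obtained by marginalizing the mediator $D$ under $\text{do}(X=x)$: since $P(Y=1 \mid \text{do}(x)) = 0.1 + \alpha x + \beta(0.5 + \lambda x)$, one gets $\text{TE}_{x_0,x_1}(y) = \alpha + \lambda\beta$.

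For Str-DE, I would take $\mathcal{M}^*$. Since $\alpha = 0$ in $f_Y$, the variable $X$ does not appear as a functional argument of $f_Y$, hence $X \notin \pa(Y)$ and $\text{Str-DE}(\mathcal{M}^*) = 0$ (exactly as established in the Admissions-continued discussion). Plugging $\alpha=0$, $\lambda=\tfrac{2}{10}$, $\beta=\tfrac{7}{10}$ into the formula above gives $\text{TE}_{x_0,x_1}(y) = \lambda\beta = 0.14 \neq 0$: the causal effect survives purely through the indirect path $X \to D \to Y$. Thus $\text{Str-DE}(\mathcal{M}^*)=0$ while $\text{TE}_{x_0,x_1}(y)\neq 0$, establishing non-admissibility with respect to Str-DE.

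For Str-IE, I would take $\mathcal{M}'$. Here $\beta = 0$, so $D$ drops out of $f_Y$, giving $\pa(Y)=\{X\}$; since $\an(\cdot)$ denotes strict ancestors, $X \notin \an(\{X\})$, whence $X \notin \an(\pa(Y))$ and $\text{Str-IE}(\mathcal{M}')=0$. With the mediator path severed, only the direct link contributes, and the formula yields $\text{TE}_{x_0,x_1}(y) = \alpha \neq 0$. Hence $\text{Str-IE}(\mathcal{M}')=0$ while $\text{TE}_{x_0,x_1}(y)\neq 0$, establishing non-admissibility with respect to Str-IE.

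Since these two Berkeley SCMs were already analyzed in the text, the arithmetic is entirely routine and is not the difficulty. The only point requiring care is the bookkeeping that links a vanishing structural coefficient to the absence of the corresponding diagram edge, namely that $\alpha=0$ genuinely removes $X$ from $\pa(Y)$ and $\beta=0$ removes $D$ from $\pa(Y)$, together with the strict-ancestor convention ensuring that a lone direct edge $X\to Y$ does not by itself place $X$ in $\an(\pa(Y))$. Both facts are already settled by the Admissions-continued example, so I would simply cite them and let the two witnesses close the argument.
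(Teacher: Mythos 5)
Your proof is correct and takes essentially the same route as the paper, which establishes this lemma via exactly the two Berkeley SCMs $\mathcal{M}^*$ and $\mathcal{M}'$ of Example~\ref{ex:berkeleyfpcfa}: there, $\alpha=0$ yields $X \notin \pa(Y)$ (Str-DE-fair) and $\beta=0$ yields $X \notin \an(\pa(Y))$ (Str-IE-fair), while $\text{TE}_{x_0,x_1}(y)=\alpha+\lambda\beta=0.14\neq 0$ in both models, which is precisely your pair of witnesses. Your explicit attention to the strict-ancestor convention and to the fact that a vanishing coefficient removes the corresponding argument from the structural function matches the paper's own bookkeeping, so nothing further is needed.
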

To solve $\fpcfa$, therefore, we will further need to disentangle the relationships within TE. In particular, we will need to determine the $Y$ variations that are a direct consequence of the protected attribute, and the ones that are mediated by other variables. In the literature, the total effect was shown to be decomposable into the measures known as the \textit{natural direct and indirect effects} \citep{pearl:01}. 
\begin{definition}[Natural direct and indirect effects] \label{def:ndenie}
    The natural direct and indirect effects are defined, respectively, as follows:
    \begin{align}
        \text{NDE}_{x_0, x_1}(y) &= P(y_{x_1, W_{x_0}}) - P(y_{x_0}) \label{eq:nde}  \\
        \text{NIE}_{x_1, x_0}(y) &= P(y_{x_1, W_{x_0}}) - P(y_{x_1}). \label{eq:nie}  
    \end{align}
Further, we write NDE-fair$_X(Y)$ for NDE$_{x_0, x_1}(y) = 0$, or simply NDE-fair when the attribute/outcome are clear from the context. The condition NIE-fair is defined analogously.
\end{definition}
Several observations are important making about these definitions. First in terms of semantics, the NDE captures the difference in Eq.~\ref{eq:nde}, namely, how the outcome $Y$ changes when setting $X = x_1$, but keeping the mediators $W$ at whatever value it would have taken had $X$ been $x_0$, compared to setting $X = x_0$ by intervention. 
This counterfactual statement is shown graphically in Fig.~\ref{fig:graphNDE}. Note that $Y$ ``perceives'' $X$ through the direct link (marked in blue) as if it is equal to $x_1$, written in counterfactual language as $y_{x_1}$, while $W$ perceives $X$ as if it is $x_0$, formally, $W_{x_0}$.  Putting these two together leads to the first factor in Eq.~\ref{eq:nde}, i.e., $y_{x_1, W_{x_0}}$.
The second factor in the contrast is $y_{x_0}$, which can be written equivalently as $y_{x_0, W_{x_0}}$, due to the consistency axiom. It represents the fact that both $Y$ and $W$ perceives $X$ at the same level, $x_0$ \footnote{
For further discussion on counterfactuals, see \citep[Sec.~7.2]{pearl:2k} and \citep{bareinboim2020on}.}. Whenever we subtract one from the other, in some sense, the variations coming from $X$ to $Y$ through $W$ are the same (since it perceives $X$ at the baseline level $x_0$), and what remains are the variations transmitted through the direct arrows, so the name direct effect. The qualification natural is because $W$ attains its value naturally, depending on the value of $X$, but not by interventions. 

Second, in the context of our COMPAS example, the NDE would measure how much the predicted probability of recidivism would have changed for an individual whose race was set by intervention to White, had their race been set to Non-White, but their juvenile and prior offense counts took a value they would have attained naturally (that is, a value naturally attained by White subjects). The contrast represented by the NDE  (in Eq.~\ref{eq:nde}) is known as a \textit{nested counterfactual}, since $X$ has distinct values when considering different variables. Albeit not realizable in the real world, it encodes significant types of variations that can be evaluated from a collection of mechanisms and fully specified SCM, and which is sometimes computable from data, as discussed in more details in Sec.~\ref{Identification}. 

Third, the definition of NIE follows a similar logic while flipping the sources of variations, as illustrated in Eq.~\ref{eq:nie}  and Fig.~\ref{fig:graphNIE}. More specifically, the outcome $Y$ responds to $X$ as being $x_1$ through the direct link in both factors of the contrast ($y_{x_1}$), which means that no direct influence from $X$ to $Y$ is ``active". On the other hand, $W$ responds to $X$ when varying from levels $X=x_1$ to $x_0$, formally written as $W_{x_1}$ versus $W_{x_0}$; this, in turn, affects $Y$, which formally is written as counterfactuals $y_{x_1, W_{x_1}}$ versus $y_{x_1, W_{x_0}}$. \footnote{The first term $y_{x_1, W_{x_1}}$ is equivalently written as $y_{x_1}$, which follows from the consistency axiom \citep[Sec.~7.2]{pearl:2k}. } The NIE is also a nested counterfactual. For the COMPAS example, the NIE would measure how much the predicted probability of recidivism would have changed for an individual whose race was  White, had their race been Non-White along the indirect causal pathway influencing the values of juvenile and prior offense counts. 

Syntactically, and following the discussion in Sec.~\ref{sec:explain-variations}, we can put these observations together and write the NDE and NIE as counterfactual contrasts (Eq.~\ref{eq:down}), namely: \footnote{Following prior discussion and reversing the usual simplification back, based on the application of the consistency axiom, these contrasts can more explicitly be written as: 

\begin{align}
        \text{NDE}_{x_0, x_1}(y) &= \mathcal{C}(\{x_0, W_{x_0}\}, \{x_1, W_{x_0}\}, \emptyset, \emptyset) \\
        \text{NIE}_{x_1, x_0}(y) &= \mathcal{C}(\{x_1, W_{x_1}\}, \{x_1, W_{x_0}\}, \emptyset, \emptyset).
\end{align}
It's evident when considering the NDE that the variations through the mediator $W$, $W_{x_0}$, coincide in both sides of the contrast and end up cancelling out, which means that all remaining variations are due to the direct change of $X$ from $x_0$ to $x_1$ in the first component of the pair. On the other hand, the direct variations in the NIE are both equal to $X=x_1$, which cancel out, and $Y$ changes are in response to the change in $W$, which varies differently depending on whether $X=x_1$ and $X=x_0$, or $W_{x_1}$ versus $W_{x_0}$. 
} 

\begin{align}
        \text{NDE}_{x_0, x_1}(y) &= \mathcal{C}(x_0, \{x_1, W_{x_0}\}, \emptyset, \emptyset) \\
        \text{NIE}_{x_1, x_0}(y) &= \mathcal{C}(x_1, \{x_1, W_{x_0}\}, \emptyset, \emptyset).
\end{align}

The notions of NDE and NIE, together with Exp-SE, in fact provide the first solution to the $\fpcfa$, as shown in the next result. 
\begin{theorem}[$\fpcfa$ solution (preliminary)] \label{thm:fpcfa-1st-sol}
The total variation measure can be decomposed as
    \begin{align}
        \text{TV}_{x_0, x_1}(y) &= \text{NDE}_{x_0, x_1}(y) - \text{NIE}_{x_1, x_0}(y) + (\text{Exp-SE}_{x_0}(y) - \text{Exp-SE}_{x_1}(y)).
    \end{align}
Furthermore, the measures NDE, NIE, and Exp-SE are admissible with respect to Str-DE, Str-IE, and Str-SE, respectively. More formally, we  write
\begin{align}
        \text{Str-DE-fair} &\implies \text{NDE-fair} \label{eq:admnde}\\
        \text{Str-IE-fair} &\implies \text{NIE-fair} \label{eq:admnie}\\
        \text{Str-SE-fair} &\implies \text{Exp-SE-fair}\label{eq:admexpse}.
\end{align}
Therefore, the measures $(\mu_{DE}, \mu_{IE}, \mu_{SE}) = (\text{NDE}_{x_0, x_1}(y), \text{NIE}_{x_1, x_0}(y), \text{Exp-SE}_{x}(y))$ solve the $\fpcfa$.
\end{theorem}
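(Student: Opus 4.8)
The plan is to separate the statement into its two claims---the algebraic decomposition of $\text{TV}_{x_0,x_1}(y)$, and the three admissibility implications---and treat them independently, since the first is purely algebraic while the second rests on the structural definitions.

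For the decomposition, I would build on Lemma~\ref{lemma:te+expse}, which already supplies
\begin{align*}
\text{TV}_{x_0, x_1}(y) = \text{TE}_{x_0, x_1}(y) + \big(\text{Exp-SE}_{x_0}(y) - \text{Exp-SE}_{x_1}(y)\big),
\end{align*}
so it remains only to verify $\text{TE}_{x_0, x_1}(y) = \text{NDE}_{x_0, x_1}(y) - \text{NIE}_{x_1, x_0}(y)$. Substituting Defs.~\ref{def:teexpse} and~\ref{def:ndenie}, the nested counterfactual term $P(y_{x_1, W_{x_0}})$ occurs with opposite signs in $\text{NDE}$ and $\text{NIE}$ and telescopes away, leaving $P(y_{x_1}) - P(y_{x_0}) = \text{TE}_{x_0, x_1}(y)$. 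This needs no structural input and I would dispatch it first.

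For admissibility, since $\text{NDE}$ and $\text{NIE}$ are counterfactual contrasts, I would expand each at the unit level via Thm.~\ref{thm:contrasts} (Eq.~\ref{eq:down} with $E = \emptyset$) and then show the unit-level difference vanishes under the relevant structural hypothesis. For Eq.~\ref{eq:admnde}, assume Str-DE-fair, i.e.\ $X \notin \pa(Y)$; then $f_Y$ does not take $X$ as an argument, so the direct intervention is inert and $y_{x_1, W_{x_0}}(u) = y_{x_0, W_{x_0}}(u) = y_{x_0}(u)$ for every $u$ (the last equality by consistency), whence $\text{NDE} = 0$. For Eq.~\ref{eq:admnie}, assume Str-IE-fair, i.e.\ $X \notin \an(\pa(Y))$; then $X$ is not an ancestor of any parent of $Y$, so the parents of $Y$ lying in $W$ are unchanged by the intervention on $X$, giving $W_{x_0}(u) = W_{x_1}(u)$ on those components and therefore $y_{x_1, W_{x_0}}(u) = y_{x_1, W_{x_1}}(u) = y_{x_1}(u)$, so $\text{NIE} = 0$.

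The delicate step is Eq.~\ref{eq:admexpse}, which I expect to be the main obstacle. Here one cannot argue by a pointwise cancellation of potential responses, because $\text{Exp-SE}_x(y) = P(y_x) - P(y \mid x)$ relates an interventional quantity to an observational conditional. I would instead translate the structural spurious criterion into an exogeneity statement: Str-SE-fair means $U_X \cap \an(Y) = \emptyset$ and $\an(X) \cap \an(Y) = \emptyset$, i.e.\ $X$ and $Y$ share no common cause, which yields $Y_x \perp X$. Consistency then gives $P(y_x) = P(y_x \mid x) = P(y \mid x)$, so $\text{Exp-SE}_x(y) = 0$. The subtlety lies entirely in justifying $Y_x \perp X$ from the two disjointness conditions defining Str-SE---that is, in checking that the absence of a shared exogenous source and of a shared ancestor is precisely what rules out spurious dependence between the potential response and the attribute. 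Collecting the three implications with the decomposition identity then shows that $(\text{NDE}, \text{NIE}, \text{Exp-SE})$ solve the $\fpcfa$.
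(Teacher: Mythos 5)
Your proposal is correct and follows essentially the same route as the paper: the same telescoping of the nested term $P(y_{x_1, W_{x_0}})$ combined with Lemma~\ref{lemma:te+expse} for the decomposition, and the same unit-level cancellation arguments via the structural basis expansion for the NDE and NIE admissibility, matching the paper's Lemmas~\ref{lem:ext-med}, \ref{lem:direct-adm}, and~\ref{lem:indirect-adm}. The only cosmetic difference is that you establish Str-SE-fair $\implies$ Exp-SE-fair directly from $Y_x \ci X$ and consistency, whereas the paper proves it for Ctf-SE (Lemma~\ref{lem:spurious-adm}) and transfers to Exp-SE via the equivalence in Lemma~\ref{lem:spurious-power}; both versions rest on the same step---asserted with comparable brevity in the paper itself---that the absence of common-cause/backdoor structure yields $Y_x \ci X$.
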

After showing a solution to $\fpcfa$, we make two important remarks. Firstly, the measures discussed so far admit a structural basis expansion (Thm.~\ref{thm:contrasts}) and can be  expanded as follows:
\begin{align} 
    \label{eq:tvdown}
    \text{TV}_{x_0, x_1}(y) &= \sum_u y(u) \big[P(u | x_1) - P(u\mid x_0)\big] \\ 
    \label{eq:tedown}
    \text{TE}_{x_0, x_1}(y) &= \sum_u \big[y_{x_1}(u) - y_{x_0}(u)\big]P(u) \\
    \label{eq:expseup}
    \text{Exp-SE}_{x}(y) &= \sum_u y_{x}(u) \big[P(u) - P(u\mid x)\big] \\
    \label{eq:ndedown}
    \text{NDE}_{x_0, x_1}(y) &= \sum_u \big[y_{x_1, W_{x_0}}(u) - y_{x_0}(u)\big] P(u) \\
    \label{eq:niedown}
    \text{NIE}_{x_1, x_0}(y) &= \sum_u \big[y_{x_1, W_{x_0}}(u) - y_{x_1}(u)\big] P(u).
\end{align}
The factorization in the display above connects the measures to the sampling-evaluation process discussed in Sec.\ref{sec:explain-variations}, explaining the observed contrasts in terms of unit-level quantities. We revisit this point shortly.
Secondly, one of the significant and practical implications of Thm.~\ref{thm:fpcfa-1st-sol} appears through the Eq.~\ref{eq:admnde}'s contrapositive (and Eqs.~\ref{eq:admnie}, \ref{eq:admexpse}), i.e.:
\begin{equation}
    (\text{NDE}_{x_0, x_1}(y) \neq 0) \implies  \neg \text{Str-DE-fair}. 
\end{equation}
Based on this, we have now a principled way of testing the following hypothesis:
\begin{align}
    H_0: \text{NDE}_{x_0, x_1}(y) = 0.
\end{align}
If the $H_0$ hypothesis is rejected, the fairness analyst can conclude that direct discrimination is present in the dataset. In contrast, any statistics or hypothesis test based on the TV are insufficient to test for the existence of a direct effect. 
\tikzstyle{fairmap}=[
	every node/.style={draw=none, align=center, fill=none, text centered, anchor=center,font=\it},
	every label/.style={font=\small, text = red},
	cx/.append style={anchor=center,sloped,rotate=90,text=red,font=\bf,pos=0.65},
	f1/.style={draw=,fill=gray!15,thick,inner sep=3pt,minimum width=2.5em, align=center, text centered},
	imp/.style={double,->},
]
\renewcommand{\myx}{1.6}
\renewcommand{\myy}{-1.5}
\definecolor{colone}{RGB}{99,172,229}
\definecolor{coltwo}{RGB}{231, 239,246}
\definecolor{colthree}{RGB}{173,203,227}
\begin{figure}
	\centering
	\begin{tikzpicture}[fairmap]
		
		\draw [very thin] (\myx, 0.5 * \myy) -- (9 * \myx, 0.5 * \myy);
		\draw [very thin, -{Latex[length=3mm]}] (0, 0) -- node[above, rotate=90]{population axis} (0, 2.2 * \myy);
		\draw [very thin, -{Latex[length=3mm]}] (0, 0) -- node[above]{mechanism axis} (9.25*\myx, 0);
		
		\draw [very thin] (\myx, 1.5 * \myy) -- (9 * \myx, 1.5*\myy);
		
		\draw [very thin] (\myx,0) -- (\myx, 1.5*\myy);
		\draw [very thin] (3*\myx,0) -- (3*\myx, 1.5*\myy);
		\draw [very thin] (5*\myx,0) -- (5*\myx, 1.5*\myy);
		\draw [very thin] (7*\myx,0) -- (7*\myx, 1.5*\myy);
		\draw [very thin] (9*\myx,0) -- (9*\myx, 1.5*\myy);
		
		\node [draw=red, rotate=90] at (0.5*\myx, \myy)  {$P(u)$};
		
		\node [f1] (tv) at (0.5*\myx, 0.25*\myy) {TV};
        
        \node (spurious) at (4 * \myx, 0.25*\myy) {Spurious};
        \node [f1] (exp-se) at (4*\myx, \myy) {Exp-SE};
        \node [f1] (se) at (4*\myx, 2*\myy) {S-SE};
        
        \node (causal) at (2 * \myx, 0.25*\myy) {Causal};
        \node [f1] (te) at (2*\myx, \myy) {TE};
        \node [f1] (ce) at (2*\myx, 2*\myy) {Str-TE};
        
        \node (direct) at (6 * \myx, 0.25*\myy) {Direct};
        \node [f1] (nde) at (6*\myx, \myy) {NDE};
        \node [f1] (de) at (6*\myx, 2*\myy) {Str-DE};
        
        \node (indirect) at (8 * \myx, 0.25*\myy) {Indirect};
        \node [f1] (nie) at (8*\myx, \myy) {NIE};
        \node [f1] (ie) at (8*\myx, 2*\myy) {Str-IE};

 		\node (te-se) at (1.25*\myx, 0.65*\myy) {$\wedge$};
 		\node (nde-nie) at (5.25 * \myx, 0.65*\myy) {$\wedge$};
        
 		\draw [-] (te) to [bend left = 0] (te-se);
 		\draw [-] (exp-se) to [bend right = 10] (te-se);
 		
 		\draw [-] (nde) to [bend left = 20] (nde-nie);
 		\draw [-] (nie) to [bend right = 10] (nde-nie);
 		
 		\draw [imp] (te-se) to [bend right = 10] (tv);
 		\draw [imp] (nde-nie) to [bend right = 10] (te);
 		
 		\draw [imp] (se) to [bend left = 0] (exp-se);
		\draw [imp] (ie) to (nie);
		\draw [imp] (de) to (nde);
		\draw [imp] (ce) to (te);

	\end{tikzpicture}
	\caption{Placing the total, experimental spurious, natural direct, and natural indirect effects along the population and mechanism axes that were first introduced in Fig.~\ref{fig:Erefined}.}
	\label{fig:babymap}
\end{figure}
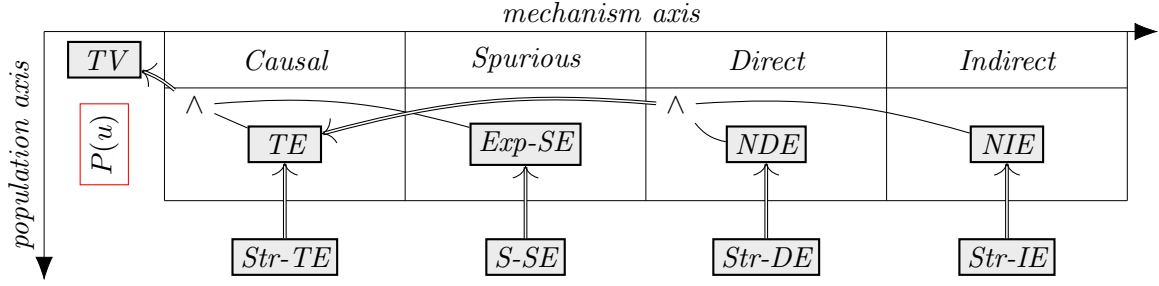

We display in Fig.~\ref{fig:babymap} the measures TE, NDE, NIE, and Exp-SE along the population and mechanism axes of the explainability plane (Fig.~\ref{fig:Erefined}). One may be tempted to surmise that the FPCFA is fully solved based on the results discussed so far. This is unfortunately not always the case, as illustrated next.  
\begin{example}[Limitation of the NDE] \label{ex:NDEfail}
A startup company is currently in hiring season. The hiring decision ($Y \in \lbrace 0,1\rbrace$ indicates whether the candidate is hired) is based on gender ($X \in \lbrace 0, 1\rbrace$ represents females and males, respectively), age ($Z \in \lbrace 0, 1\rbrace$, indicating younger and older applicants, respectively), and education level ($W \in \lbrace 0, 1\rbrace$ indicating whether the applicant has a PhD). The true SCM $\mathcal{M}$, unknown to the fairness analyst, is given by:
\begin{align}
    \label{eq:ndefail1}
    U &\gets N(0, 1)\\
    X &\gets \text{Bernoulli}(expit(U))\\
    Z &\gets \text{Bernoulli}(expit(U)) \\
    W &\gets \text{Bernoulli}(0.3) \\
    \label{eq:ndefail2}
    Y &\gets \text{Bernoulli}(\frac{1}{5}(X + Z - 2XZ) + \frac{1}{6}W),
\end{align}
where expit$(x) = \frac{e^x}{1 + e^x}$. In this case, the NDE can be computed as:
\begin{align}
        \text{NDE}_{x_0, x_1}(y) &= P(y_{x_1, W_{x_0}}) - P(y_{x_0}) \\
                                &= P(\text{Bernoulli}(\frac{1}{5}(1-Z) + \frac{1}{6}W) = 1)
                                 - P(\text{Bernoulli}(\frac{1}{5}(Z) + \frac{1}{6}W) = 1) \\ 
                                &= \sum_{z\in \lbrace 0,1\rbrace}\sum_{w \in \lbrace 0,1\rbrace} P(z, w)[\frac{1}{5}(1-z) + \frac{1}{6}w -\frac{1}{5}z - \frac{1}{6}w]\\
                                &= \sum_{z\in \lbrace 0,1\rbrace}\sum_{w \in \lbrace 0,1\rbrace} P(z)P(w) [\frac{1}{5}(1-2z)] \quad \text{since } P(z,w)=P(z)P(w) \\
                                &= \sum_{z\in \lbrace 0,1\rbrace} P(z)[\frac{1}{5}(1-2z)]
                                 = \frac{1}{2}\times\frac{1}{5} + \frac{1}{2}\times\frac{-1}{5} = 0. \label{eq:NDEfailmix}
\end{align}
In other words, the $\text{NDE}_{x_0, x_1}(y)$ is equal to zero. Still, perhaps surprisingly, the structural direct effect is present in this case, that is Str-DE-fair does not hold, since the outcome $Y$ is a function of gender $X$, as evident from the structural Eq.~\ref{eq:ndefail2}. $\hfill \square$
\end{example}
This example illustrates that even though the NDE is admissible with respect to structural direct effect, it may still be equal to 0 while structural direct effect exists. One can see through Eq.~\ref{eq:NDEfailmix} that the NDE is an aggregate measure over two distinct sub-populations. Specifically, when considering junior applicants, females are 20\% less likely to be hired (units with ($Z=0,X=0$)), whereas for senior applicants, males are 20\% less likely to be hired  (units with ($Z=1,X=1$)). Mixing these two groups together results in the cancellation of the two effects and the NDE equating to $0$, in turn, making it impossible for the analyst to detect discrimination using only the NDE. \footnote{This observation is structural, and despite of the number of samples available. In practice, depending on the sample size, some level of tolerance regarding the difference between these two groups may be present and still be undetectable through any statistical hypothesis testing. }

Another interesting way of understanding this phenomenon is through the structural basis expansion of the NDE. In Eq.~\ref{eq:ndedown}, the posterior weighting term is $P(u)$, which means that both younger and older applicants are included in the contrast. The fact that this contrast mixes somewhat heterogeneous units of the population, regarding the decision-making procedure to decide $Y$ ($f_y$),  motivates another important notion in fairness analysis:
\begin{definition}[Power] \label{def:power}
Let $\Omega$ be a space of SCMs. Let $Q$ be a structural criterion and $\mu_1$, $\mu_2$ fairness measures defined on $\Omega$. Suppose that $\mu_1$, $\mu_2$ are $(Q, \Omega)$-admissible. We say that $\mu_2$ is more powerful than $\mu_1$ if
\begin{align}
    \forall \mathcal{M} \in \Omega: \mu_2(\mathcal{M}) = 0 \implies \mu_1(\mathcal{M}) = 0. 
\end{align}
\end{definition}
The notion of power can be useful in the following context. Suppose there is an SCM $\mathcal{M}$ in the space $\Omega$ for which discrimination is present, $Q(\mathcal{M}) = 1$, while the measure $\mu_1$ is admissible but unable to capture it, i.e., $\mu_1(\mathcal{M}) = 0$. Still, another measure may exist such 
\newcommand{\dpax}{1.2}
\newcommand{\dpay}{-1.5}
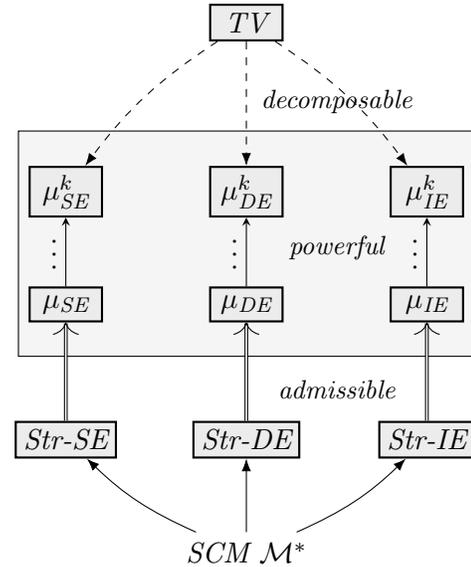
\begin{wrapfigure}{r}{0.4\textwidth}
	\centering
	\begin{tikzpicture}[fairmap]
		
		\node [f1] (tv) at (0*\dpax, -1.5*\dpay) {TV};
       
        \draw [draw=black, fill=gray!8] (-2.525*\dpax,1.46*\dpay) rectangle (2.525*\dpax,-0.538*\dpay);
 
        \node [f1] (1se) at (-2*\dpax, 0*\dpay) {$\mu^k_{\text{SE}}$};
        \node [f1] (kse) at (-2*\dpax, \dpay) {$\mu_{\text{SE}}$};
        \node [f1] (se) at (-2*\dpax, 2.2*\dpay) {Str-SE};
        
        \node [f1] (1de) at (0*\dpax,0*\dpay) {$\mu^k_{\text{DE}}$};
        \node [f1] (kde) at (0*\dpax, \dpay) {$\mu_{\text{DE}}$};
        \node [f1] (de) at (0*\dpax, 2.2*\dpay) {Str-DE};
        
        \node [f1] (1ie) at (2*\dpax, 0*\dpay) {$\mu^k_{\text{IE}}$};
        \node [f1] (kie) at (2*\dpax, \dpay) {$\mu_{\text{IE}}$};
        \node [f1] (ie) at (2*\dpax, 2.2*\dpay) {Str-IE};

        \node  (scm) at (0*\dpax, 3.2*\dpay) {SCM $\mathcal{M}^*$};
        
        \node (adm) at (1*\dpax, 1.75*\dpay) {\small admissible};
        \node (pow) at (1*\dpax, 0.5*\dpay) {\small powerful};
        \node (decomp) at (1*\dpax, -0.8*\dpay) {\small decomposable};
     

        \draw[-Latex] (scm) to [bend right = -10] (se);
        \draw[-Latex] (scm) to (de);
        \draw[-Latex] (scm) to [bend right = 10] (ie);
        \draw[-Latex,dashed] (tv) to [bend right = 10] (1se);
        \draw[-Latex,dashed] (tv) to (1de);
        \draw[-Latex,dashed] (tv) to [bend right = -10] (1ie);
        \draw[imp] (se) to [bend right = 0] (kse);
        \draw[imp] (de) to (kde);
        \draw[imp] (ie) to [bend right = 0] (kie);
        \draw[-stealth] (kse) to node[above, rotate=90]{$\dots$} (1se);
        \draw[-stealth] (kde) to node[above, rotate=90]{$\dots$} (1de);
        \draw[-stealth] (kie) to node[above, rotate=90]{$\dots$} (1ie);
        
	\end{tikzpicture}
	\caption{FPCFA with power relations. }
	\label{fig:APDmotivationPower}
	\vspace{-0.2in}
\end{wrapfigure}
 that $\mu_2(\mathcal{M}) \neq 0$. If this is the case, we would say that discrimination qualitatively described by criterion $Q$ can be detected using measure $\mu_2$, but not using $\mu_1$. We would then say that $\mu_2$ is \textit{more powerful} than $\mu_1$. Putting it differently, what Ex.~\ref{ex:NDEfail} showed was that the measure
\begin{align}
    \text{NDE}_{x_0,x_1}(y) = \mathcal{C}(x_0, \{x_1, W_{x_0}\}, \emptyset, \emptyset)
\end{align}
was not powerful enough. The reason in this case is that for the NDE, the conditioning events are $E_0 = E_1 = \emptyset$, which is not refined enough to capture the discrimination in the aforementioned example. Next, we re-write the definition of FPCFA to account for the measures' power:

\begin{definition}[FPCFA continued with power] \label{def:fpcfa-with-pow}
The Fundamental Problem of Causal Fairness Analysis is to find a collection of measures $\mu_1, \dots, \mu_k$
such that the following properties are satisfied: 
\begin{enumerate}[label=(\arabic*)]
	\item $\mu$ is decomposable w.r.t. $\mu_1, \dots, \mu_k$; 
	\item $\mu_1, \dots, \mu_k$ are admissible w.r.t. the structural fairness criteria $Q_1, Q_2, \dots, Q_k$. 
	\item $\mu_1, \dots, \mu_k$ are as powerful as possible.
\end{enumerate} 
\end{definition}

We provide in Fig.~\ref{fig:APDmotivationPower} an updated, visual representation of the FPCFA that accounts for the power relation across measures. In some sense, picking $(\text{NDE}_{x_0, x_1}(y), \text{NIE}_{x_1, x_0}(y),$ $\text{Exp-SE}_{x}(y))$ as the measures $(\mu_{DE}^k, \mu_{IE}^k, \mu_{SE}^k)$ helped to solve the original problem, but the gap between TV and the structural measures is so substantive that certain critical instances were left undetected. In the updated definition, the requirement is to find measures that are as powerful as possible, or in other words, the closest possible to the corresponding structural ones, $\strm$. 
In the sequel, we discuss how to construct increasingly more powerful measures by using more specific events $E$. 

\subsubsection{$X$-specific Contrasts - $P(u \mid x)$}
We will quantify the level of discrimination for a specific subgroup of the population for which $X(u)=x$ (for example, females) by considering contrasts with the conditioning event $E = \lbrace X = x\rbrace$. In fact, we are moving inwards in the population axis in Fig.~\ref{fig:Erefined}, following the discussion in Sec.~\ref{sec:explain-variations}, and the  
sub-population we are focusing on is more specific. More formally, this can be seen through the structural basis expansion (Eq.~\ref{eq:down}) and the fact that the 
 posterior after using the new $E$ becomes $P(u \mid X=x)$, which generates a family of $x$-specific measures:
 \begin{definition}[$x$-specific TE, DE, IE, and SE]
\label{def:xspecific}
The $x$-\{total, -direct, -indirect,$ $ -spurious\} effects are defined as follow 
    \begin{align}
        x\text{-TE}_{x_0, x_1}(y\mid x) &= P(y_{x_1} \mid x) - P(y_{x_0}\mid x)  \\
        x\text{-DE}_{x_0, x_1}(y\mid x) &= P(y_{x_1, W_{x_0}} \mid x) - P(y_{x_0}\mid x)  \\
        x\text{-IE}_{x_1, x_0}(y\mid x) &= P(y_{x_1, W_{x_0}}\mid x) - P(y_{x_1} \mid x)\\
        x\text{-SE}_{x_0, x_1}(y) &= P(y_{x_0} \mid x_1) - P(y_{x_0} \mid x_0).
    \end{align}
\end{definition}
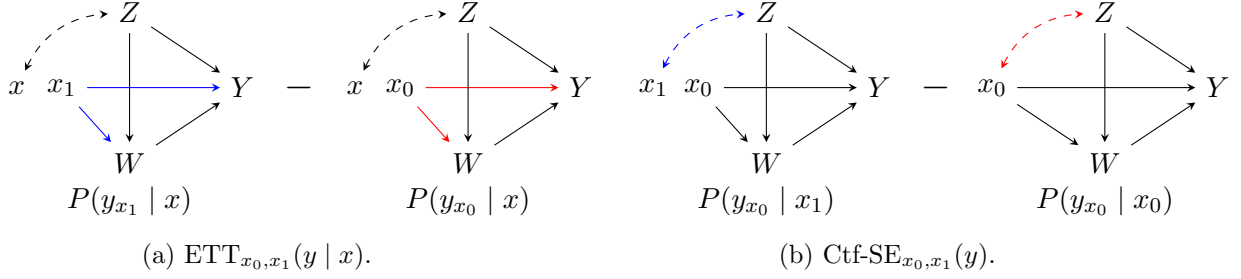
\begin{figure}
     \centering
     \begin{subfigure}[b]{0.45\textwidth}
         \centering
         \begin{tikzpicture}
	 [>=stealth, rv/.style={thick}, rvc/.style={triangle, draw, thick, minimum size=7mm}, node distance=18mm]
	 \pgfsetarrows{latex-latex};
		\node[rv] (z1) at (0,1) {$Z$};
	 	\node[rv] (x1) at (-1.5,0) {$x$};
	 	
	 	\node[rv] (x11) at (-0.9,0) {$x_1$};
	 	
	 	\node[rv] (w1) at (0,-1) {$W$};
	 	\node[rv] (y1) at (1.5,0) {$Y$};
	 	
	 	\draw[->, blue] (x11) -- (w1);
		\draw[->] (z1) -- (y1);
	 	\path[->, blue] (x11) edge[bend left = 0] (y1);
		\path[<->] (x1) edge[bend left = 30, dashed] (z1);
	 	\draw[->] (w1) -- (y1);
		\draw[->] (z1) -- (w1);
		
		\node (mns) at (2.25, 0) {\Large $-$};
		\node (a) at (0, -1.5) {$P(y_{x_1} \mid x)$};
		\node (b) at (4.5, -1.5) {$P(y_{x_0} \mid x)$};
		
		\node[rv] (z2) at (4.5,1) {$Z$};
	 	\node[rv] (x2) at (3,0) {$x$};
	 	\node[rv] (x20) at (3.6,0) {$x_0$};
	 	\node[rv] (w2) at (4.5,-1) {$W$};
	 	\node[rv] (y2) at (6,0) {$Y$};
	 	
	 	\draw[->, red] (x20) -- (w2);
		\draw[->] (z2) -- (y2);
	 	\path[->, red] (x20) edge[bend left = 0] (y2);
		\path[<->] (x2) edge[bend left = 30, dashed] (z2);
	 	\draw[->] (w2) -- (y2);
		\draw[->] (z2) -- (w2);
	 \end{tikzpicture}
         \caption{ETT$_{x_0, x_1}(y \mid x)$.}
         \label{fig:graphETT}
     \end{subfigure}
     \hfill
          \begin{subfigure}[b]{0.45\textwidth}
         \centering
             \begin{tikzpicture}
	 [>=stealth, rv/.style={thick}, rvc/.style={triangle, draw, thick, minimum size=7mm}, node distance=18mm]
	 \pgfsetarrows{latex-latex};
		\node[rv] (z1) at (0,1) {$Z$};
	 	\node[rv] (x1) at (-1.5,0) {$x_1$};
	 	
	 	\node[rv] (x1x) at (-0.9,0) {$x_0$};
	 	
	 	\node[rv] (w1) at (0,-1) {$W$};
	 	\node[rv] (y1) at (1.5,0) {$Y$};
	 	
	 	\draw[->] (x1x) -- (w1);
		\draw[->] (z1) -- (y1);
	 	\path[->] (x1x) edge[bend left = 0] (y1);
		\path[<->, blue] (x1) edge[bend left = 30, dashed] (z1);
	 	\draw[->] (w1) -- (y1);
		\draw[->] (z1) -- (w1);
		
		\node (mns) at (2.25, 0) {\Large $-$};
		\node (a) at (0, -1.5) {$P(y_{x_0} \mid x_1)$};
		\node (b) at (4.5, -1.5) {$P(y_{x_0} \mid x_0)$};
		
		\node[rv] (z2) at (4.5,1) {$Z$};
	 	\node[rv] (x2) at (3,0) {$x_0$};
	 	\node[rv] (w2) at (4.5,-1) {$W$};
	 	\node[rv] (y2) at (6,0) {$Y$};
	 	
	 	\draw[->] (x2) -- (w2);
		\draw[->] (z2) -- (y2);
	 	\path[->] (x2) edge[bend left = 0] (y2);
		\path[<->, red] (x2) edge[bend left = 30, dashed] (z2);
	 	\draw[->] (w2) -- (y2);
		\draw[->] (z2) -- (w2);
	 \end{tikzpicture}
     \caption{Ctf-SE$_{x_0, x_1}(y)$.}
     \label{fig:graphCtfSE}
     \end{subfigure}
     \caption{Graphical representations of some $x$-specific causal fairness measures. The blue and red color highlight where the contrast between the quantities lies.}
     \label{fig:graphMeasures}
\end{figure}
The $x$-TE is a well-known quantity and usually called the effect of treatment on the treated (ETT, for short) in literature, and appeared in \citep{heckman1998matching}, while the $x$-specific DE, IE, and SE are more recent quantities were introduced in \citep{zhang2018fairness}. 
\footnote{In fact, \cite{zhang2018fairness} originally named these quantities the \textit{counterfactual} DE, IE, and SE, but we highlight here that they are the $x$-specific counterparts of their marginal effects. This is for clarify of the discussion here since from now on in this paper, all quantities will be ``counterfactual'', in the sense of layer 3 in Pearl Causal Hierarchy \citep{bareinboim2020on}. }
Some observations ensue from these definitions. Firstly, these measures can be written as their structural basis and unit-level factorization (Eqs.~\ref{eq:down} and \ref{eq:up}), that is
\begin{align} 
    \label{eq:ettdown}
    x\text{-TE}_{x_0, x_1}(y\mid x) &= \sum_u [y_{x_1}(u) - y_{x_0}(u)]P(u\mid x) \\
    \label{eq:ctfdedown}
    x\text{-DE}_{x_0, x_1}(y\mid x) &= \sum_u [y_{x_1, W_{x_0}}(u) - y_{x_0}(u)]P(u\mid x) \\
    \label{eq:ctfiedown}
    x\text{-IE}_{x_1, x_0}(y\mid x) &= \sum_u [y_{x_1, W_{x_0}}(u) - y_{x_1}(u)]P(u\mid x) \\ 
        \label{eq:ctfsedown}
     x\text{-SE}_{x_0, x_1}(y) &= \sum_u y_{x_0}(u)[P(u\mid x_1) - P(u \mid x_0)].
\end{align}
To simplify the notation and the comparison with the measures discussed earlier, 
we re-write them as factual and counterfactual contrasts, namely:
    \begin{align}
    \label{eq:ettdown-basis}
        x\text{-TE}_{x_0, x_1}(y\mid x) &= \mathcal{C}(x_0, x_1, x, x) \\
        x\text{-DE}_{x_0, x_1}(y\mid x) &=  \mathcal{C}(x_0, \{x_1, W_{x_0}\}, x, x) \\
        x\text{-IE}_{x_1, x_0}(y\mid x) &=  \mathcal{C}(x_1, \{x_1, W_{x_0}\}, x, x)\\
        x\text{-SE}_{x_0, x_1}(y) &=  \mathcal{C}(x_0, x_0, x_1, x_0).
    \end{align}
Secondly, we will consider each of the measures individually. Starting with the $x$-TE, we note that it is simply a conditional version of the total effect (TE) for the subset of units $\mathcal{U}$ in which $X(u) = x$. This can be easily seen by comparing the contrast representation of the TE (Eq.~\ref{eq:te-basis}) versus the $x$-TE (Eq.~\ref{eq:ettdown-basis}), namely: 
\begin{eqnarray*}
	x\text{-TE}_{x_0, x_1}(y\mid x) =& \mathcal{C}(x_0, x_1, \underline{x, x})   \\ 
 \text{TE}_{x_0, x_1}(y) =& \mathcal{C}(x_0, x_1, \underline{\emptyset, \emptyset}), 
\end{eqnarray*}
which make it obvious that the former has $E_0 = E_1 = \emptyset$, whereas the latter  has $E_0 = E_1 = x$. Both measures, however, use the same counterfactual clauses $C_0 = x_0$ and $C_1 = x_1$. In terms of the sampling-evaluation process discussed earlier, even though these measures evaluate each unit in the same way (due to the same counterfactual clauses), the TE draws units at random from the population, while the $x$-TE filters them out based on $X$'s particular instantiation. The graphical visualization of the ETT is shown in Fig.~\ref{fig:graphETT} and can be compared with that of TE in Fig.~\ref{fig:graphTE}, for grounding the intuition. In words, note that the downstream effect of $X$ on $Y$ is the same, but now $Z$ is no longer disconnected from $X$, but varies in accordance to the event $X=x$. As we will show later on, in the startup hiring example (Ex.~\ref{ex:NDEfail}), the gender will lead to an additional source of information about age, which can be use in the measure. 

Thirdly, the counterfactual measures of direct and indirect effects, $x$-DE and $x$-IE, are conditional versions of the NDE and NIE, respectively. These observations are also reflected in Eqs.~\ref{eq:ctfdedown}-\ref{eq:ctfiedown}, in which the only difference compared to the general population measures is in the posterior weighting term $P(u\mid x)$, while for the NDE and NIE the weighting term is simply $P(u)$ (Eqs.~\ref{eq:ndedown}-\ref{eq:niedown}).  One difference relative to the natural DE and IE is that here a reference value, $X=x$, needs to be picked such that the baseline population can be selected. For instance, in the context of comparing the direct effect on $Y$ from transitioning $X$ from $x_0$ to $x_1$, one could more naturally set the baseline population to $X=x_0$. 

Fourthly, we consider the $x$-SE and its graphical representation, as shown in Fig.~\ref{fig:graphCtfSE}. This quantity also generalizes that of Exp-SE$_x(y)$ shown in Fig.~\ref{fig:graphExpSE}. The difference between these two quantities is in the weighting term, where $P(u) - P(u \mid x)$ in Exp-SE$_x(y)$ is replaced by $P(u\mid x_1) - P(u\mid x_0)$ in $x$-SE$_{x_0,x_1}(y)$. Despite its innocent appearance, this a substantive difference since the Exp-SE entails a comparison between the observational and interventional distributions, while $x$-SE is a purely counterfactual measure. \footnote{In terms of the Pearl Causal Hierarchy (PCH, for short), the quantity Exp-SE entails assumptions only relative to associational  and experimental quantities (PCH's layers 1 and 2), while the $x$-SE requires substantively stronger assumptions regarding counterfactuals (layer 3). For a more detailed discussion on that matter, refer to \citep{bareinboim2020on}.} 

After all, we can finally state the main result of this section, namely, that the quantities $x$-\{DE, IE, SE\} solve the $\fpcfa$. 
\begin{theorem}[$x$-specific $\fpcfa$ solution] \label{thm:fpcfa-1st-sol2}
The total variation measure can be decomposed as
    \begin{align}
        \text{TV}_{x_0, x_1}(y) &= {x\text{-DE}_{x_0, x_1}(y\mid x_0)} - {x\text{-IE}_{x_1, x_0}(y\mid x_0)} - {x\text{-SE}_{x_1, x_0}(y)}.
    \end{align}
Further, the measures x-\{DE, IE, SE\} are admissible w.r.t. Str-DE, Str-IE, Str-SE, respectively. Moreover, the counterfactual family is more powerful than NDE, NIE, and Exp-SE, respectively.
More formally, the admissibility relations can be written as:
\begin{align}
        \text{Str-DE-fair} &\admarrow x\text{-DE-fair} \label{eq:admctfde}\\
        \text{Str-IE-fair} &\admarrow x\text{-IE-fair} \label{eq:admctfie}\\
        \text{Str-SE-fair} &\admarrow x\text{-SE-fair}, \label{eq:admctfse}
\end{align}
and the power relations as:
\begin{align}
          x\text{-DE-fair} &\powarrow \text{NDE-fair}, \\
          x\text{-IE-fair} &\powarrow \text{NIE-fair}, \\
          x\text{-SE-fair} &\powarrow \text{Exp-SE-fair}.
\end{align}
Therefore, the measures $(\mu_{DE}, \mu_{IE}, \mu_{SE}) = (x\text{-DE}_{x_0, x_1}(y), x\text{-IE}_{x_1, x_0}(y), x\text{-SE}_{x_0, x_1}(y))$ solve the $\fpcfa$.
\end{theorem}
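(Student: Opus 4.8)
The plan is to verify the three claims in turn—the decomposition, the admissibility relations, and the power relations—since each rests on a different, mostly self-contained argument.

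For the decomposition, I would expand the right-hand side using Def.~\ref{def:xspecific}. Substituting
$x\text{-DE}_{x_0,x_1}(y\mid x_0)=P(y_{x_1,W_{x_0}}\mid x_0)-P(y_{x_0}\mid x_0)$,
$x\text{-IE}_{x_1,x_0}(y\mid x_0)=P(y_{x_1,W_{x_0}}\mid x_0)-P(y_{x_1}\mid x_0)$, and
$x\text{-SE}_{x_1,x_0}(y)=P(y_{x_1}\mid x_0)-P(y_{x_1}\mid x_1)$, the nested-counterfactual terms $P(y_{x_1,W_{x_0}}\mid x_0)$ cancel between the $x$-DE and $x$-IE contributions, and the $P(y_{x_1}\mid x_0)$ terms cancel between the $x$-IE and $x$-SE contributions, leaving exactly $P(y_{x_1}\mid x_1)-P(y_{x_0}\mid x_0)$. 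Applying the consistency axiom, $P(y_{x_1}\mid x_1)=P(y\mid x_1)$ and $P(y_{x_0}\mid x_0)=P(y\mid x_0)$, which is precisely $\text{TV}_{x_0,x_1}(y)$. This is a telescoping identity and needs no assumption beyond consistency.

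For admissibility, I would argue from the structural basis expansions in Eqs.~\ref{eq:ctfdedown}--\ref{eq:ctfsedown}, in which each measure is a posterior-weighted sum of a unit-level difference, so it suffices to show the unit-level difference vanishes for every $u$. For $x$-DE, Str-DE-fair gives $X\notin\pa(Y)$, so $f_Y$ ignores its direct $X$-argument; since in the SFM $Z$ is not a descendant of $X$, both $y_{x_1,W_{x_0}}(u)$ and $y_{x_0}(u)=y_{x_0,W_{x_0}}(u)$ evaluate $f_Y$ at identical arguments, so the difference is $0$ unit-by-unit. For $x$-IE, Str-IE-fair gives $X\notin\an(\pa(Y))$, so no parent of $Y$ responds to an intervention on $X$; hence $W_{x_0}$ and $W_{x_1}$ agree on all parents of $Y$, giving $y_{x_1,W_{x_0}}(u)=y_{x_1,W_{x_1}}(u)=y_{x_1}(u)$. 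For $x$-SE, Str-SE-fair gives $U_X\cap\an(Y)=\emptyset$ and $\an(X)\cap\an(Y)=\emptyset$, so the exogenous variables determining $X$ are disjoint from those determining $Y_{x_0}$ in the submodel $\mathcal{M}_{x_0}$, yielding the counterfactual independence $Y_{x_0}\ci X$; then $P(y_{x_0}\mid x_1)=P(y_{x_0}\mid x_0)$, so $x\text{-SE}=0$ (the statement for $x\text{-SE}_{x_1,x_0}$ follows by symmetry in $x_0,x_1$).

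For the power relations, the key observation is the law of total probability over posteriors, $P(u)=\sum_{x}P(x)\,P(u\mid x)$. Since NDE and NIE share the exact unit-level differences of their $x$-specific counterparts and differ only in the weight ($P(u)$ versus $P(u\mid x)$), substituting this identity into Eqs.~\ref{eq:ndedown}--\ref{eq:niedown} gives $\text{NDE}_{x_0,x_1}(y)=\sum_x P(x)\,x\text{-DE}_{x_0,x_1}(y\mid x)$ and likewise for NIE; thus the $x$-specific measures being zero forces the marginal one to zero, which is exactly the power claim. For Exp-SE, using $P(u)-P(u\mid x_0)=P(x_1)\big(P(u\mid x_1)-P(u\mid x_0)\big)$ shows $\text{Exp-SE}_{x_0}(y)=P(x_1)\cdot x\text{-SE}_{x_0,x_1}(y)$, so $x$-SE-fair implies Exp-SE-fair.

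I expect the main obstacle to be the $x$-SE admissibility step, specifically justifying the counterfactual independence $Y_{x_0}\ci X$ from the purely graphical Str-SE-fair condition; unlike the $x$-DE and $x$-IE cases, this is not a one-line unit-level cancellation but requires reasoning at the level of the exogenous variables feeding $Y$ in the submodel $\mathcal{M}_{x_0}$. A secondary subtlety is the nested counterfactual in the $x$-IE argument, where one must ensure $W_{x_0}$ and $W_{x_1}$ coincide on precisely the parents of $Y$, with components of $W$ off the causal path to $Y$ being irrelevant.
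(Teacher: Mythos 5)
Your proposal is correct and follows essentially the same route as the paper's own proof: the telescoping-plus-consistency argument is the paper's Lemmas on TV decomposition and the extended mediation formula, the unit-level cancellations for $x$-DE/$x$-IE and the exogenous-disjointness argument giving $Y_{x_0} \ci X$ for $x$-SE match the paper's admissibility lemmas (the paper phrases the spurious case via absence of backdoor paths and the action/observation exchange rule, which is the same content), and your identities $\text{NDE}_{x_0,x_1}(y)=\sum_x P(x)\,x\text{-DE}_{x_0,x_1}(y\mid x)$ and $\text{Exp-SE}_{x_0}(y)=P(x_1)\cdot x\text{-SE}_{x_0,x_1}(y)$ are precisely the paper's power lemmas. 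No gaps.
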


Similarly to the discussion in the general-population measures (i.e., $P(u)$), the significance, and practical implications of Thm.~\ref{thm:fpcfa-1st-sol2} appears through the Eq.~\ref{eq:admctfde}'s contrapositive (and Eqs.~\ref{eq:admctfie}, \ref{eq:admctfse}), i.e.:
\begin{equation}
    (x\text{-DE}_{x_0, x_1}(y) \neq 0) \implies  \neg \text{Str-DE-fair}. 
\end{equation}
Based on this, we have now a principled way of testing the following hypothesis:
\begin{align}
    H_0: x\text{-DE}_{x_0, x_1}(y) = 0.
\end{align}
If the $H_0$ hypothesis is rejected, the fairness analyst can conclude that direct discrimination is present in the dataset. Naturally, similar tests can be performed regarding the indirect and spurious structural measures. 

\begin{example}[Revisiting the Startup's hiring \& NDE lack of power] \label{ex:ndefail-fix-x}
Consider the SCM $\mathcal{M}$ given in Eq.~\ref{eq:ndefail1}-\ref{eq:ndefail2}. For $X = x_0$ we compute the $x$-specific direct effects as:
    \begin{align}
     x\text{-DE}_{x_0, x_1}(y \mid x_0) &= P(y_{x_1, W_{x_0}} \mid x_0) - P(y_{x_0}\mid x_0) \\
                                        &= P(\text{Bernoulli}(\frac{1}{5}(1-Z) + \frac{1}{6}W) = 1 \mid x_0)\\ 
                                        & - P(\text{Bernoulli}(\frac{1}{5}(Z) + \frac{1}{6}W) = 1 \mid x_0) \\
                                        &= \sum_{z \in \lbrace 0,1\rbrace} \sum_{w \in \lbrace 0,1\rbrace} P(w)P(z\mid x_0)[\frac{1}{5}(1-2z) + \frac{1}{6}w - \frac{1}{6}w] \\
                                        &=  \sum_{z \in \lbrace 0,1\rbrace} \frac{1}{5}(1-2z) P(z\mid x_0) = 0.036.
    \end{align}
In words, when considering female applicants ($X = x_0$), they are 3.6\% less likely of being hired than they would be, had they been male. In other words, direct discrimination is certainly present in the hiring process of the startup company. $\hfill \square$
\end{example}




\subsubsection{$Z$-specific Contrasts - $P(u\mid z)$}
One might also be interested in capturing discrimination for a specific subset of $\mathcal{U}$ for which $Z(u) = z$
similarly as for the $x$-specific measures. Here, we will consider two possibilities in terms of sub-population selection, first when event $Z(u) = z$ and then when $Z(u) = z, X(u) = x$. Before introducing the corresponding $z$- and $(x,z)$-specific quantities, we clarify one major difference compared to the general and $x$-specific case, namely in the spurious effects. As noted in Sec.~\ref{foundations}, spurious effects are captured by factual contrasts of the form
\begin{align}
    P(y_{x} \mid E_1) - P(y_{x} \mid E_0) = \sum_u y_x(u)[P(u \mid E_1) - P(u\mid E_0)],
\end{align}
which rely on comparing different units corresponding to events $E_0, E_1$. These spurious effects represent variations that causally precede $X$ and $Y$. Interestingly enough, under the assumptions of the SFM (Sec.~\ref{sec:SFM}), conditioning on $Z(u) = z$ closes all backdoor paths between $X$ and $Y$. In other words, fixing $Z$ also fixes the possible spurious variations, and therefore on a $z$- or $(x,z)$-specific level spurious effects are always equal to zero\footnote{Experienced readers might notice, in the presence of unobserved confounders (UCs), we could have more explicitly defined the corresponding, $z$-, $(x, z)$-specific notions
\begin{align}
    z\text{-SE}_x(y) &= P(y \mid x, z) - P(y_x \mid z),\\
    (x,z)\text{-SE}_{x_0,x_1}(y) &= P(y_x \mid x_1, z) - P(y_x \mid x_0, z).
\end{align}
Naturally, this would account for the spurious variations brought about by the UCs. For a more comprehensive treatment of these issues, we refer readers to Sec.~\ref{DIBN}.}. 
Therefore, we can consider the following measures:
\begin{definition}[$z$- and $(x, z)$-specific TE, DE, and IE]
    The $z$-specific total, direct and indirect effects are defined as
    \begin{align}
        z\text{-TE}_{x_0, x_1}(y\mid z) &= P(y_{x_1} \mid z) - P(y_{x_0}\mid z) \\
        z\text{-DE}_{x_0, x_1}(y\mid z) &= P(y_{x_1, W_{x_0}} \mid z) - P(y_{x_0}\mid z) \\
        z\text{-IE}_{x_1, x_0}(y\mid z) &= P(y_{x_1, W_{x_0}}\mid z) - P(y_{x_1} \mid z) \\
        (x,z)\text{-TE}_{x_0, x_1}(y\mid z) &= P(y_{x_1} \mid x, z) - P(y_{x_0}\mid x, z) \\
        (x,z)\text{-DE}_{x_0, x_1}(y\mid z) &= P(y_{x_1, W_{x_0}} \mid x, z) - P(y_{x_0}\mid x, z) \\
        (x,z)\text{-IE}_{x_1, x_0}(y\mid z) &= P(y_{x_1, W_{x_0}}\mid x, z) - P(y_{x_1} \mid x, z).
    \end{align}
\end{definition}
As before, the measures can be factorized using the corresponding unit-level outcomes:
\begin{align}
    z\text{-TE}_{x_0, x_1}(y\mid z) &= \sum_u [y_{x_1}(u) - y_{x_0}(u)]P(u\mid z) \label{eq:zte}\\
    z\text{-DE}_{x_0, x_1}(y\mid z) &= \sum_u [y_{x_1, W_{x_0}}(u) - y_{x_0}(u)]P(u\mid z) \label{eq:zde}\\
    z\text{-IE}_{x_1, x_0}(y\mid z) &= \sum_u  [y_{x_1, W_{x_0}}(u) - y_{x_1}(u)]P(u\mid z) \label{eq:zie}
\end{align}
\begin{align}
	(x,z)\text{-TE}_{x_0, x_1}(y\mid z) &= \sum_u [y_{x_1}(u) - y_{x_0}(u)]P(u\mid x, z) \label{eq:xzte}\\
    (x,z)\text{-DE}_{x_0, x_1}(y\mid z) &= \sum_u [y_{x_1, W_{x_0}}(u) - y_{x_0}(u)]P(u\mid x,z) \label{eq:xzde} \\
    (x,z)\text{-IE}_{x_1, x_0}(y\mid z) &= \sum_u  [y_{x_1, W_{x_0}}(u) - y_{x_1}(u)]P(u\mid x,z). \label{eq:xzie}
\end{align}
These quantities can also be represented more explicitly as contrasts:
\begin{align}
    z\text{-TE}_{x_0, x_1}(y\mid z) &=  \mathcal{C}(x_0, x_1, z, z)\\ 
    z\text{-DE}_{x_0, x_1}(y\mid z) &=  \mathcal{C}(x_0, \{x_1, W_{x_0}\}, z, z)\\ 
    z\text{-IE}_{x_1, x_0}(y\mid z) &=  \mathcal{C}(x_1, \{x_1, W_{x_0}\}, z, z) 
\end{align}
\begin{align}
	(x,z)\text{-TE}_{x_0, x_1}(y\mid z) &=  \mathcal{C}(x_0, x_1, \{x,z\}, \{x,z\})\\ 
    (x,z)\text{-DE}_{x_0, x_1}(y\mid x, z) &=  \mathcal{C}(x_0, \{x_1, W_{x_0}\}, \{x,z\}, \{x,z\}) \\
    (x,z)\text{-IE}_{x_1, x_0}(y\mid x, z) &=  \mathcal{C}(x_1, \{x_1, W_{x_0}\}, \{x,z\}, \{x,z\}).
\end{align}
The $z$-TE, $z$-DE, and $z$-IE (and similarly the $(x,z)$- counterparts) are simply conditional versions of TE, NDE, and NIE respectively, restricted to the subpopulation of $\mathcal{U}$ such that $Z(u) = z$ (or $Z(u) = z, X(u) = x$), which is reflected in the posterior weighting term which becomes $P(u\mid z)$ (or $P(u\mid x,z)$). 

Several important remarks are due. Using the sampling of units analogy from before, we notice that $z$-specific effects filter on units which have $Z(u) = z$, which means they provide us with a more refined lens for detecting discrimination than the general population measures. Similarly, the $(x, z)$-specific measures can be seen as additionally filtering the units on $Z(u) = z$, after they were filtered based on $X(u) = x$, which is precisely what $x$-specific measures have done. Therefore, $(x,z)$-specific measures can be seen as more refined than $x$- and $z$- specific ones. The only uncertainty left in terms of power is about comparing $x$-specific and $z$-specific measures.

Interestingly, under the SFM, the $(x,z)$-specific measures are equal to the $z$-specific measures. This result cannot be deduced from the structural basis expansions above (Eq.~\ref{eq:zde}-\ref{eq:xzie}), but requires the assumptions encoded in the SFM (namely the absence of backdoor paths from $X$ to $Y$ conditional on $Z$). This equivalence of $z$- and $(x,z)$-specific measures under the SFM shows that $z$-specific measures are in fact more powerful than the $x$-specific ones, although this need not be the case in general. Following this discussion, we are ready to present the main result regarding the measures introduced above (while, as discussed earlier, for the spurious effects we rely on the general and $x$-specific notions):
\begin{theorem}[$z$-specific $\fpcfa$ solution] \label{thm:fpcfa-1st-sol3}
The total variation measure can be decomposed as
    \begin{align}
        \text{TV}_{x_0, x_1}(y) &= {\sum_{z} z\text{-DE}_{x_0, x_1}(y\mid z) P(z)} - {\sum_z z\text{-IE}_{x_1, x_0}(y\mid z) P(z)} \nonumber \\
        &\;\;\;\; - (\text{Exp-SE}_{x_0}(y) - \text{Exp-SE}_{x_1}(y))\\
        &= {\sum_{z} (x,z)\text{-DE}_{x_0, x_1}(y\mid x, z) P(z \mid x)} - {\sum_z (x,z)\text{-IE}_{x_1, x_0}(y\mid x, z) P(z\mid x)} \nonumber \\ & \;\;\;\; - {x\text{-SE}_{x_1, x_0}(y)}.
    \end{align}
Further, the measures $z$-DE and $(x,z)$-DE are admissible w.r.t. Str-DE, whereas $z$-IE and $(x,z)$-IE are admissible w.r.t. Str-IE. Moreover, the following power relations hold:
\begin{align}
    (x,z)\text{-DE-fair} &\powarrow z\text{-DE-fair} \powarrow \text{NDE-fair}, \\
    (x,z)\text{-IE-fair} &\powarrow z\text{-IE-fair} \powarrow \text{NIE-fair},
\end{align}
and also
\begin{align}
    (x,z)\text{-DE-fair} &\powarrow x\text{-DE-fair},\\
    (x,z)\text{-IE-fair} &\powarrow x\text{-IE-fair}.
\end{align}
Additionally, under the SFM, we can say that:
\begin{align}
    z\text{-DE-fair} &\powarrow x\text{-DE-fair}, \\
    z\text{-IE-fair} &\powarrow x\text{-IE-fair}.
\end{align}
Therefore, under the SFM, the measures $(\mu_{DE}, \mu_{IE}, \mu_{SE}) = (z\text{-DE}_{x_0, x_1}(y), z\text{-IE}_{x_1, x_0}(y),$ $ x\text{-SE}_{x_0, x_1}(y))$ give a more powerful solution to $\fpcfa$ than the $x$-specific ones.
\end{theorem}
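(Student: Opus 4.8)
The plan is to reduce every claim to the already-established marginal ($P(u)$) and $x$-specific ($P(u\mid x)$) results of Thms.~\ref{thm:fpcfa-1st-sol} and \ref{thm:fpcfa-1st-sol2}, exploiting the single feature common to all these measures: they share an identical unit-level difference and differ \emph{only} in their posterior weighting. For the two decompositions, I would begin from the structural-basis expansions in Eqs.~\ref{eq:zde}--\ref{eq:xzie} and marginalize over $Z$. The two algebraic identities that do all the work are $\sum_z P(u\mid z)P(z) = P(u)$ and $\sum_z P(u\mid x,z)P(z\mid x) = P(u\mid x)$ (the latter because $P(u\mid x,z)P(z\mid x)=P(u,z\mid x)$ sums to $P(u\mid x)$). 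Inserting these inside the sums collapses $\sum_z z\text{-DE}(y\mid z)P(z)$ to $\text{NDE}_{x_0,x_1}(y)$ (cf. Eq.~\ref{eq:ndedown}) and $\sum_z (x,z)\text{-DE}(y\mid x,z)P(z\mid x)$ to $x\text{-DE}_{x_0,x_1}(y\mid x)$, with the indirect effects handled identically. Hence the first decomposition reduces to the marginal TV decomposition of Thm.~\ref{thm:fpcfa-1st-sol}, and the second reduces to Thm.~\ref{thm:fpcfa-1st-sol2} upon setting $x=x_0$.

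For admissibility, the observation is that the unit-level difference inside $z$-DE and $(x,z)$-DE is exactly $y_{x_1,W_{x_0}}(u)-y_{x_0}(u)$, the same integrand as in the NDE. Admissibility of the NDE w.r.t. Str-DE was obtained in Thm.~\ref{thm:fpcfa-1st-sol} by showing that Str-DE-fair (i.e.\ $X\notin\pa(Y)$) forces this difference to vanish for \emph{every} unit $u$. Since a pointwise-zero integrand produces a zero weighted sum under \emph{any} posterior, the identical conclusion holds verbatim with the weights $P(u\mid z)$ or $P(u\mid x,z)$. The same remark applied to the NIE's difference $y_{x_1,W_{x_0}}(u)-y_{x_1}(u)$ delivers admissibility of $z$-IE and $(x,z)$-IE w.r.t. Str-IE. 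Thus admissibility is inherited for free and needs no new structural argument.

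The power relations that are \emph{not} SFM-specific follow from linear-aggregation identities of the same flavor. From $\text{NDE}=\sum_z z\text{-DE}(y\mid z)P(z)$, vanishing of every $z$-DE forces $\text{NDE}=0$, giving $z$-DE-fair $\powarrow$ NDE-fair; from $z\text{-DE}(y\mid z)=\sum_x (x,z)\text{-DE}(y\mid x,z)P(x\mid z)$ we get $(x,z)$-DE-fair $\powarrow$ $z$-DE-fair; and from $x\text{-DE}(y\mid x)=\sum_z (x,z)\text{-DE}(y\mid x,z)P(z\mid x)$ we get $(x,z)$-DE-fair $\powarrow$ $x$-DE-fair. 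Each is simply the statement that fairness on every refined cell implies fairness of the coarse average; the indirect-effect chain is proved by the identical manipulation.

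The genuinely delicate step, which I expect to be the main obstacle, is the SFM-only relation $z$-DE-fair $\powarrow$ $x$-DE-fair, since it does \emph{not} follow from aggregation alone: $x$-DE weights the unit-level difference by $P(u\mid x)$, which is not a convex mixture of the $P(u\mid z)$. The crux is the claimed equality $(x,z)\text{-DE}=z\text{-DE}$ under the SFM, which I would establish through the counterfactual conditional independence
\[
\{Y_{x_0},\,Y_{x_1,W_{x_0}}\} \ci X \mid Z .
\]
This holds under the SFM because conditioning on $Z$ blocks the only backdoor between $X$ and $Y$: in the SFM the sole bidirected arc is $X \bidir Z$, with no $X \bidir W$ or $X \bidir Y$. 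Concretely, the nested counterfactual $Y_{x_1,W_{x_0}}$ is a function of $U_Y$, $U_W$, and $Z$ only (the intervention fixes every value of $X$ that $Y$ and $W$ perceive), and under the SFM $\{U_Y,U_W\}\ci U_X$, so given $Z$ the counterfactual carries no residual information about $X$. This independence equates the $X$-conditional and the $X$-unconditional expectations given $Z$, yielding $(x,z)\text{-DE}=z\text{-DE}$ and, analogously, $(x,z)\text{-IE}=z\text{-IE}$. Combining this equality with $(x,z)$-DE-fair $\powarrow$ $x$-DE-fair from the previous paragraph gives the chain $z$-DE-fair $\implies (x,z)$-DE-fair $\implies x$-DE-fair, i.e.\ $z$-DE-fair $\powarrow$ $x$-DE-fair under the SFM, and the indirect effect is identical. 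Verifying this counterfactual independence rigorously from the SFM d-separation structure — rather than assuming it — is precisely where the modeling assumptions (and not merely the measure definitions) enter, and is the step I would treat most carefully.
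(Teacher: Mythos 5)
Your proposal is correct and follows essentially the same route as the paper's proof: the two decompositions and all aggregation-type power relations are obtained exactly as in the paper via the posterior-marginalization identities of Lem.~\ref{lem:causal-power} combined with Lems.~\ref{lem:ext-med} and~\ref{lem:tv-decompositions}, admissibility is inherited from the pointwise unit-level vanishing established in Lems.~\ref{lem:direct-adm} and~\ref{lem:indirect-adm}, and the SFM-only step $z\text{-DE-fair} \powarrow x\text{-DE-fair}$ rests on the same counterfactual independence $\{Y_{x_0},\,Y_{x_1,W_{x_0}}\} \ci X \mid Z$ that the paper obtains via the counterfactual-graph (\textbf{make-cg}) construction in the proof of Thm.~\ref{thm:sfm} --- your direct verification from $\{U_W, U_Y\} \ci \{U_X, U_Z\}$ is an equivalent, self-contained derivation of that same fact. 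One cosmetic remark: your reduction of the first display to Thm.~\ref{thm:fpcfa-1st-sol} actually yields the spurious term with sign $+\big(\text{Exp-SE}_{x_0}(y) - \text{Exp-SE}_{x_1}(y)\big)$, consistent with Def.~\ref{def:teexpse} and Lem.~\ref{lemma:te+expse}; the minus sign printed in the theorem statement traces to the paper's own flipped convention for Exp-SE inside the proof of Lem.~\ref{lem:tv-decompositions}, not to any flaw in your argument.
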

With $z$-specific measures in hand, we revisit Ex.~\ref{ex:NDEfail}, which showed that the NDE can equal $0$ even though direct discrimination exists:
\begin{example}[Revisiting the Startup's hiring \& NDE lack of power]
    Consider the SCM $\mathcal{M}$ given in Eq.~\ref{eq:ndefail1}-\ref{eq:ndefail2}. For $Z = 0$ we compute the $z$-specific direct effects as:
    \begin{align}
        z\text{-DE}(y \mid Z=0) &= P(y_{x_1, W_{x_0}} \mid Z = 0) - P(y_{x_0}\mid Z = 0) \\
                                &= P(\text{Bernoulli}(\frac{1}{5}(1-Z) + \frac{1}{6}W) = 1 \mid Z = 0)\\ & - P(\text{Bernoulli}(\frac{1}{5}(Z) + \frac{1}{6}W) = 1 \mid Z = 0)\nonumber\\
                                &= \sum_{w \in \lbrace 0,1\rbrace} P(w)[\frac{1}{5} + \frac{1}{6}w - \frac{1}{6}w] = \frac{1}{5}.
    \end{align}
In words, when considering younger applicants ($Z = 0$), females are 20\% less likely to be hired than their male counterparts. $\hfill \square$
\end{example}
Interesting enough, note that the $z$-specific DE is able to detect discrimination in the above example, and finds an even larger disparity transmitted through the direct mechanism compared to the $x$-specific DE measure in Ex.~\ref{ex:ndefail-fix-x}.

\subsubsection{More informative contrasts ($V' \subseteq V$-specific).} 
In case even more detailed measures of fairness are needed, we can consider specific subsets of the observed variables, $V' \subseteq V$. For example, we might be interested in quantifying discrimination for specific units $u$ that correspond to $Z(u) = z, W(u) = w$ (for example quantifying discrimination for a specific age group with a specific level of education). Other choices of $V'$ than $\lbrace Z, W\rbrace$ are possible, but due to a large number of possibilities, we do not cover all of them here. Instead, we define generic $v'$-specific measures for an arbitrary choice of $v'$:
\begin{definition}[$V' \subseteq V$-specific TE, DE and IE]
Let $V' \subseteq V$ be a subset of the observables $V$. For any fixed value of $V' = v'$, we define the $v'$-specific total, direct, and indirect effects as:
    \begin{align}
        v'\text{-TE}_{x_0, x_1}(y\mid v') &= P(y_{x_1} \mid v') - P(y_{x_0}\mid v')\\ 
        v'\text{-DE}_{x_0, x_1}(y\mid v') &= P(y_{x_1, W_{x_0}} \mid v') - P(y_{x_0}\mid v') \\ 
        v'\text{-IE}_{x_1, x_0}(y\mid v') &= P(y_{x_1, W_{x_0}}\mid v') - P(y_{x_1} \mid v').
    \end{align}
\end{definition}
Once more, these measures admit a structural basis expansion and which are written with the corresponding contrasts:
\begin{align}
    v'\text{-DE}_{x_0, x_1}(y\mid v') &= \sum_u [y_{x_1, W_{x_0}}(u) - y_{x_0}(u)]P(u\mid v') = \mathcal{C}(x_0, \{x_1, W_{x_0}\}, v', v')\\
    v'\text{-IE}_{x_1, x_0}(y\mid v') &= \sum_u  [y_{x_1, W_{x_0}}(u) - y_{x_1}(u)]P(u\mid v') = \mathcal{C}(x_1, \{x_1, W_{x_0}\}, v', v').
\end{align}
Similarly as in the $z$-specific case, the notion of a spurious effect is lacking whenever $Z \subseteq V'$, so once again we rely on previously developed notions of spurious effects. Importantly, the $v'$-specific measures give an even stronger solution to FPCFA than the $z$- or $(x,z)$-specific measures:
\begin{theorem}[$v'$-specific $\fpcfa$ solution] \label{thm:fpcfa-1st-sol4}
Suppose $V' \subseteq V$ is a subset of the observables that contains both $X$ and $Z$. The total variation measure can be decomposed as
    \begin{align}
        \text{TV}_{x_0, x_1}(y) &= {\sum_{v'} v'\text{-DE}_{x_0, x_1}(y\mid v') P(v' \mid x)} - {\sum_{v'} v'\text{-IE}_{x_1, x_0}(y\mid v') P(v'\mid x)} - {x\text{-SE}_{x_1, x_0}(y)}.
    \end{align}
Further, the measures $v'$-\{DE, IE\} are admissible w.r.t. Str-DE, Str-IE, respectively. Moreover, the $v'$-specific family is more powerful than the $(x,z)$-specific, namely:
\begin{align}
    v'\text{-DE-fair} &\powarrow (x,z)\text{-DE-fair},\\ 
    v'\text{-IE-fair} &\powarrow (x,z)\text{-IE-fair}. 
\end{align}
Therefore, the measures $(\mu_{DE}, \mu_{IE}, \mu_{SE}) = (v'\text{-DE}_{x_0, x_1}(y), v'\text{-IE}_{x_1, x_0}(y), x\text{-SE}_{x_0, x_1}(y))$ give a more powerful solution to $\fpcfa$ than the $z$- or $(x,z)$-specific ones.
\end{theorem}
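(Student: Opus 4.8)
The plan is to reduce all three claims to the already-established $x$-specific solution (Thm.~\ref{thm:fpcfa-1st-sol2}) together with the structural-basis expansions, exploiting the fact that the $v'$-specific measures differ from the $x$-specific ones only through the posterior weighting $P(u \mid v')$. Throughout I take the conditioning value $x$ to be the baseline $x_0$, consistent with the $x$-specific decomposition.

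First, for the decomposition I would establish the aggregation identity
\begin{align*}
\sum_{v'} P(u \mid v')\,P(v' \mid x_0) = P(u \mid x_0),
\end{align*}
which holds whenever $X \subseteq V'$: since $V'(u)$ is determined by the unit $u$, the product $P(u \mid v')\,P(v' \mid x_0)$ collapses (using $P(u,v') = P(u)\,\mathbb{1}(V'(u)=v')$ and $P(v',x_0)=P(v')\,\mathbb{1}(v'_X = x_0)$) to $\tfrac{P(u)}{P(x_0)}\,\mathbb{1}(X(u)=x_0)$ on the single surviving term $v' = V'(u)$. Substituting this into the structural-basis expansions of $v'$-DE and $v'$-IE and interchanging the order of summation, the weighted aggregates reduce to the $x$-specific measures:
\begin{align*}
\sum_{v'} v'\text{-DE}_{x_0,x_1}(y\mid v')\,P(v'\mid x_0) &= x\text{-DE}_{x_0,x_1}(y \mid x_0), \\
\sum_{v'} v'\text{-IE}_{x_1,x_0}(y\mid v')\,P(v'\mid x_0) &= x\text{-IE}_{x_1,x_0}(y \mid x_0).
\end{align*}
The claimed decomposition then follows immediately by plugging these into the $x$-specific decomposition of Thm.~\ref{thm:fpcfa-1st-sol2}, the spurious term $x\text{-SE}_{x_1,x_0}(y)$ being carried over unchanged (here $Z \subseteq V'$ guarantees, under the SFM, that no separate $v'$-level spurious term is needed).

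Second, for admissibility I would argue purely at the unit level. Under Str-DE-fair we have $X \notin \pa(Y)$, so $f_Y$ does not listen to $X$ directly and $y_{x_1,W_{x_0}}(u) = y_{x_0,W_{x_0}}(u) = y_{x_0}(u)$ for every $u$ by the consistency axiom; hence the unit-level difference inside the expansion of $v'$-DE vanishes identically, and so does any posterior-weighted average of it, in particular the one weighted by $P(u\mid v')$. Symmetrically, Str-IE-fair forces $X \notin \an(\pa(Y))$, so the parents of $Y$ do not respond to $X$, giving $y_{x_1,W_{x_0}}(u) = y_{x_1}(u)$ and hence $v'$-IE $= 0$. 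This is exactly the argument used for NDE/NIE in Thm.~\ref{thm:fpcfa-1st-sol}, and it transfers verbatim because admissibility depends only on the unit-level contrast being identically zero, not on the weighting.

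Third, for the power relations I would use the law of total probability to write the coarser $(x,z)$-specific measures as convex combinations of the finer $v'$-specific ones. Writing $V' = \{X, Z, R\}$ and conditioning further on the extra coordinates $R$,
\begin{align*}
(x,z)\text{-DE}_{x_0,x_1}(y \mid x, z) = \sum_{r} P(r \mid x, z)\; v'\text{-DE}_{x_0,x_1}\big(y \mid (x,z,r)\big),
\end{align*}
and analogously for IE. Consequently, if every $v'$-specific measure is zero (i.e.\ $v'$-DE-fair holds), the average is zero as well, yielding $v'\text{-DE-fair} \powarrow (x,z)\text{-DE-fair}$ and likewise for the indirect effect. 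I expect the main obstacle to be bookkeeping rather than conceptual: carefully verifying the aggregation identity when $V'$ strictly contains $\{X,Z\}$ (so that $P(v'\mid x_0)$ and $P(u\mid v')$ interact correctly), and confirming that the spurious contribution is entirely absorbed by $x\text{-SE}$ under the SFM, so that the weighted sum collapses exactly onto the $x$-specific identity without leaving a residual term.
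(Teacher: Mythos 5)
Your proposal is correct and follows essentially the same route as the paper's own proof (distributed across the Fairness Map lemmas in Appendix A): the aggregation identity $\sum_{v'} P(u \mid v')P(v' \mid x_0) = P(u \mid x_0)$ is the same weighted-average mechanism used in Lem.~\ref{lem:causal-power}, your unit-level vanishing arguments for admissibility are exactly Lems.~\ref{lem:direct-adm} and \ref{lem:indirect-adm}, and your law-of-total-probability power argument matches the paper's averaging of finer measures into coarser ones. The only (immaterial) difference is ordering: the paper routes the decomposition through $\text{TV} = \text{ETT} - \text{Ctf-SE}$ and the extended mediation formula before aggregating, whereas you collapse $v'$-DE and $v'$-IE directly onto the $x$-specific measures and invoke Thm.~\ref{thm:fpcfa-1st-sol2} — and your observation that the aggregation identity needs only $X \subseteq V'$, with $Z \subseteq V'$ serving the spurious-term absorption and the $(x,z)$-comparison, is a correct and slightly sharper bookkeeping of the hypotheses.
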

The next example illustrates why having more flexible, $v'$-specific measures can be informative, and therefore useful in some practical   settings. 
\begin{example}[Startup hiring - Version II]
    A startup company is hiring employees. Let $X \in \{ x_0,x_1 \}$ denote female and male applicants respectively. The employment decision $Y \in \{ 0, 1\}$ is based on gender and education level $W$. The SCM $\mathcal{M}$ is given by:
    \begin{align}
    \label{eq:vspecfirst}
    X &\gets \text{Bernoulli}(0.5)\\
    W &\gets \mathcal{N}(14, 4)  \\
    \label{eq:vspeclast}
    Y &\gets \text{Bernoulli}\big(0.1 + \frac{W}{50} + 0.1 * X* \mathbb{1}(W < 20)\big).
\end{align}
Since there are no confounders ($Z = \emptyset$), general, $x$-specific and $z$-specific effects are all equal:
\begin{align}
    \text{NDE}_{x_0, x_1}(y) = x\text{-DE}_{x_0, x_1}(y \mid x) = z\text{-DE}_{x_0, x_1}(y \mid z) = 9.2\%.
\end{align}
Therefore, there is clearly direct discrimination against female employees by the company. 

The company argues in the legal proceedings that in the high-tech industry, they are mostly concerned with highly educated individuals. In words, they should be asked whether they discriminate highly educated female applicants, which is represented through the quantity $    w\text{-DE}_{x_0, x_1}(y \mid w > 20)$. This number can be computed as follows: 
\begin{align}
    w\text{-DE}_{x_0, x_1}(y \mid w > 20) = 0\%,
\end{align}
In words, the company claim was accurate since highly educated individuals were not discriminated against. $\hfill \square$
\end{example}
What the example shows is that $v'$-specific measures can sometimes capture aspects of discrimination that otherwise cannot be quantified using general, $x$-specific, or $z$-specific measures. 

\paragraph{Probabilities of causation.} Remarkably, the $v'$-specific measures carry a fundamental connection to what is known in the literature as \textit{probabilities of causation} \cite[Ch.~9]{pearl:2k}. For example, by picking event $v' = \lbrace x_0, y_0\rbrace$, the measure $v'$-TE becomes
\begin{equation} \label{eq:xyte-deriv}
    (x, y)\text{-TE}_{x_0, x_1}(y \mid x_0, y_0) = P(y_{x_1} \mid x_0, y_0) - P(y_{x_0} \mid x_0, y_0), 
\end{equation}
where $Y=y$ is a shortcut to $Y=1$. First, note that  $P(y_{x_0} \mid x_0, y_0) = P(y \mid x_0, y_0)$, since by the consistency axiom $Y = Y_{x_0}$ whenever $X=x_0$. Obviously,  $P(y \mid x_0, y_0) =0$ since $y_0 \neq 1$. Putting these together, the r.h.s. of Eq.~\ref{eq:xyte-deriv} can be re-written as
\begin{equation}
    (x, y)\text{-TE}_{x_0, x_1}(y \mid x_0, y_0) = P(y_{x_1} \mid x_0, y_0),
\end{equation}
which is known as the \textit{probability of sufficiency} \citep[Def.~9.2.2]{pearl:2k}. The measure computes the probability that a change in attribute from $X = x_0$ to $X = x_1$ produces a change in outcome from $Y = y_0$ to $Y = y_1$, or, in words, how much $X$'s value is ``sufficient" to produce $y_1$. Along similar lines, $v'$-TE for the event $v'=\lbrace x_1, y_1\rbrace$ can be written as
\begin{align}
    (x, y)\text{-TE}_{x_0, x_1}(y \mid x_1, y_1) &= P(y_{x_1} \mid x_1, y_1) - P(y_{x_0} \mid x_1, y_1)\\
                                                 &= 1 - P(y_{x_0} \mid x_1, y_1)\\ 
                                                 &= P(y_{x_0} = 0 \mid x_1, y_1),
\end{align}
which is known as the \textit{probability of necessity} \citep[Def.~9.2.1]{pearl:2k}. The second line of the derivation followed since by the consistency axiom, $Y_x1 = Y$, and also the fact that $Y=1$ in the factual world. The measures computes the probability that a change in attribute from $X = x_1$ to $X = x_0$ produces a change in outcome from $Y = y_1$ to $Y = y_0$, or how $X$'s value is ``necessary" to produce $y_1$. These two types of variations usually appear together and may be modeled through what is known as the probability of necessity and sufficiency (PNS). We refer readers to \cite[Ch.~9]{pearl:2k} for further discussion. 

\subsubsection{Unit-level Contrasts - $\delta_u$} 
Finally, the most powerful measures to consider are unit-level measures, as defined next:
\begin{definition}[Unit-level TE, DE, and IE] Given a unit $U = u$, the unit-level total, direct, and indirect effects are given by
    \begin{align}
        u\text{-TE}_{x_0, x_1}(y(u)) &= y_{x_1}(u) - y_{x_0}(u) = \mathcal{C}(x_0, x_1, u, u)\\
        u\text{-DE}_{x_0, x_1}(y(u)) &= y_{x_1, W_{x_0}}(u) - y_{x_0}(u) = \mathcal{C}(x_0, \{x_1, W_{x_0}\}, u, u)\\
        u\text{-IE}_{x_1, x_0}(y(u)) &= y_{x_1, W_{x_0}}(u) - y_{x_1}(u) = \mathcal{C}(x_1, \{x_1, W_{x_0}\}, u, u).
    \end{align}
\end{definition}
For unit-level measures the posterior distribution that is used as a weighting term is $\delta_u$, where $\delta$ is the Dirac delta function. The unit-level measures can be seen as the canonical basis under which all other measures are expanded. They also give the strongest theoretical solution to the FPCFA, once again, with the help of $x$-specific spurious effect developed earlier:
\begin{theorem}[unit-level $\fpcfa$ solution] \label{thm:fpcfa-sol-u}
The total variation measure can be decomposed as
    \begin{align}
        \text{TV}_{x_0, x_1}(y) &= {\sum_{u} u\text{-DE}_{x_0, x_1}(y(u)) P(v' \mid x)} - {\sum_{u} u\text{-IE}_{x_1, x_0}(y(u)) P(u\mid x)} - {x\text{-SE}_{x_1, x_0}(y)}.
    \end{align}
Further, the measures $u$-\{DE, IE\} are admissible w.r.t. Str-DE, Str-IE, respectively. Moreover, the $u$-specific family is more powerful than the $v'$-specific, namely:
\begin{align}
    u\text{-DE-fair} &\implies v'\text{-DE-fair}, \\
    u\text{-IE-fair} &\implies v'\text{-IE-fair}.
\end{align}
Therefore, the measures $(\mu_{DE}, \mu_{IE}, \mu_{SE}) = (u\text{-DE}_{x_0, x_1}(y), u\text{-IE}_{x_1, x_0}(y), x\text{-SE}_{x_0, x_1}(y))$ give the most powerful solution to $\fpcfa$.
\end{theorem}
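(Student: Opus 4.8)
The plan is to obtain this result almost entirely as a corollary of the already-established $x$-specific solution (Thm.~\ref{thm:fpcfa-1st-sol2}) together with the unit-level structural basis expansions, since the unit-level measures are precisely the summands appearing in those expansions. For the decomposition, I would first recall the factorizations $x\text{-DE}_{x_0,x_1}(y\mid x_0) = \sum_u [y_{x_1, W_{x_0}}(u) - y_{x_0}(u)]\,P(u\mid x_0)$ and $x\text{-IE}_{x_1,x_0}(y\mid x_0) = \sum_u [y_{x_1, W_{x_0}}(u) - y_{x_1}(u)]\,P(u\mid x_0)$ from Eqs.~\ref{eq:ctfdedown}--\ref{eq:ctfiedown}. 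Reading the bracketed unit-level differences as $u\text{-DE}_{x_0,x_1}(y(u))$ and $u\text{-IE}_{x_1,x_0}(y(u))$ gives $\sum_u u\text{-DE}_{x_0,x_1}(y(u))\,P(u\mid x_0) = x\text{-DE}_{x_0,x_1}(y\mid x_0)$ and the analogous identity for the indirect term. Substituting both into the $x$-specific TV decomposition of Thm.~\ref{thm:fpcfa-1st-sol2} immediately yields the claimed unit-level decomposition (with reference level $x=x_0$ in the posteriors), the spurious contribution still being carried by $x\text{-SE}_{x_1,x_0}(y)$.

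The second step is admissibility, which is in fact cleaner at the unit level than in the population case because no averaging is involved and the argument is pointwise in $u$. If Str-DE-fair holds, then $X\notin\pa(Y)$, so $f_Y$ does not read $X$; under the SFM the confounder $Z$ is not a descendant of $X$, so $Z_{x_1}(u)=Z_{x_0}(u)=Z(u)$, while in $y_{x_1, W_{x_0}}(u)$ the mediators are pinned to $W_{x_0}(u)$. Hence $Y$ receives identical parental inputs in $y_{x_1, W_{x_0}}(u)$ and in $y_{x_0}(u)$, giving $u\text{-DE}_{x_0,x_1}(y(u))=0$ for every $u$. Symmetrically, if Str-IE-fair holds then $X\notin\an(\pa(Y))$, so $Y$ either does not depend on $W$ or depends on it only through parent-mediators $W\cap\pa(Y)$ that are not descendants of $X$; in either case $(W\cap\pa(Y))_{x_0}(u)=(W\cap\pa(Y))_{x_1}(u)$, so $y_{x_1, W_{x_0}}(u)$ and $y_{x_1}(u)=y_{x_1, W_{x_1}}(u)$ coincide and $u\text{-IE}_{x_1,x_0}(y(u))=0$ for every $u$. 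This is the mechanism underlying the admissibility half of Thm.~\ref{thm:fpcfa-1st-sol}, merely specialized to a single exogenous realization.

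For the power relations, the key observation is that every $v'$-specific measure is a $P(u\mid v')$-weighted average of the corresponding unit-level measure, $v'\text{-DE}_{x_0,x_1}(y\mid v') = \sum_u u\text{-DE}_{x_0,x_1}(y(u))\,P(u\mid v')$ and likewise for the indirect effect. Interpreting $u\text{-DE}$-fairness as the vanishing of $u\text{-DE}_{x_0,x_1}(y(u))$ for all $u$, a weighted average of zeros is zero, so $u\text{-DE-fair}\implies v'\text{-DE-fair}$ and identically for IE; this is exactly the statement required by Def.~\ref{def:power}, and since $\delta_u$ is the finest admissible posterior it places the unit-level family at the most refined point of the population axis, establishing the ``most powerful'' claim.

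I expect the only genuinely delicate point to be the treatment of the spurious term and the precise reading of ``unit-level fairness''. Because conditioning down to a single unit $u$ (equivalently, fixing all exogenous variables) closes every backdoor path between $X$ and $Y$, there is no non-degenerate unit-level analogue of the spurious effect, so the decomposition must borrow the $x$-specific $x\text{-SE}_{x_1,x_0}(y)$ exactly as in Thms.~\ref{thm:fpcfa-1st-sol2} and \ref{thm:fpcfa-1st-sol4}; I would verify that the telescoping across the direct, indirect, and this borrowed spurious term still reproduces TV with the reference level $x_0$ held consistently throughout the posteriors. The remaining care is fixing the convention that $u\text{-DE}$-fairness and $u\text{-IE}$-fairness refer to the entire $u$-indexed family, so that they are genuine binary fairness statements to which Def.~\ref{def:power} applies, after which the admissibility and power claims are formal.
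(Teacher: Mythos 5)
Your proposal is correct and takes essentially the same route as the paper: the decomposition follows by substituting the unit-level structural basis expansions into the $x$-specific TV decomposition of Thm.~\ref{thm:fpcfa-1st-sol2}, the admissibility claims are proved pointwise in $u$ exactly as in Lem.~\ref{lem:direct-adm} and Lem.~\ref{lem:indirect-adm}, and the power relation is the $P(u \mid v')$-weighted-average argument of Lem.~\ref{lem:causal-power}. Your handling of the delicate point -- that fixing $u$ closes all backdoor paths, so no non-degenerate unit-level spurious effect exists and the decomposition must borrow $x\text{-SE}_{x_1, x_0}(y)$ -- also matches the paper's treatment.
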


The unit-level measures represent the most refined level at which  discrimination can be described. In fact, introducing these measures also brings us to the final level of the population axis of the explainability plane (Fig.~\ref{fig:Erefined}). Recall, the population axis ranges from the general population measures (with a posterior $P(u)$), all the way to the deterministic measures which consider a single unit (with a posterior $\delta_u$), eliciting a range of measures which may be useful for fairness analysis. We next move onto giving a systematic overview of the TV-family of measures that was introduced in this section.

\subsection{Summary of the TV-family \& the Fairness Map} \label{sec:fairmap}

To facilitate comparison and understanding after introducing the measures of the TV-family, we show next how they can be more explicitly written as contrasts:

\begin{lemma}[TV family as contrasts] \label{lemma:TVfamily}
    The TV-family of causal fairness measures is a collection of contrasts $\mathcal{C}(C_0, C_1, E_0, E_1)$ (Def.~\ref{def:contrast}) that follow the specific instantiations of counterfactual and factual clauses, $C_0, C_1, E_0, E_1$, as described in Table~\ref{table:tv-family}.
\end{lemma}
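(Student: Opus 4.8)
The plan is to verify, row by row, that every member of the TV-family coincides with the contrast template
\[
\mathcal{C}(C_0, C_1, E_0, E_1) = P(y_{C_1} \mid E_1) - P(y_{C_0} \mid E_0)
\]
of Def.~\ref{def:contrast} under the clause assignments recorded in Table~\ref{table:tv-family}. Because this lemma merely consolidates representations that were established as each measure was introduced in Sec.~\ref{Measures}, most of the work is bookkeeping: Eqs.~\ref{eq:te-basis} and \ref{eq:expse-basis} already give $\text{TE}_{x_0,x_1}(y) = \mathcal{C}(x_0, x_1, \emptyset, \emptyset)$ and $\text{Exp-SE}_{x}(y) = \mathcal{C}(\emptyset, x, x, \emptyset)$, the NDE and NIE were written as counterfactual contrasts just before Thm.~\ref{thm:fpcfa-1st-sol}, and the $x$-, $z$-, $(x,z)$-, $v'$-, and $u$-specific families were each cast in contrast form within their respective subsections. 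I would assemble these identities and fill in the only two rows not yet displayed as contrasts, namely TV and $x$-SE, by direct substitution into Def.~\ref{def:contrast}.

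For a generic row the verification is a one-line substitution: plugging the claimed clauses into the right-hand side of Def.~\ref{def:contrast} must reproduce the measure's defining expression. For example, the direct-effect clause pattern gives $\mathcal{C}(x_0, \{x_1, W_{x_0}\}, E, E) = P(y_{x_1, W_{x_0}} \mid E) - P(y_{x_0} \mid E)$, which specializes to NDE, $x$-DE, $z$-DE, $(x,z)$-DE, $v'$-DE, or $u$-DE as the conditioning event $E$ ranges over $\emptyset$, $\{x\}$, $\{z\}$, $\{x,z\}$, $v'$, or $u$ respectively; the indirect-effect rows are identical with first clause $x_1$ replacing $x_0$, and the total-effect rows replace $\{x_1, W_{x_0}\}$ by $x_1$. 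For TV, taking $C_0 = C_1 = \emptyset$ and $(E_0, E_1) = (x_0, x_1)$ yields $P(y \mid x_1) - P(y \mid x_0)$, and for $x$-SE the assignment $C_0 = C_1 = x_0$ with the two conditioning events $x_0$ and $x_1$ reproduces Def.~\ref{def:xspecific}.

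The single non-routine ingredient is the consistency axiom, which reconciles the compact table entries with their fully-written nested-counterfactual forms: the identities $y_{x_0} = y_{x_0, W_{x_0}}$ and $y_{x_1} = y_{x_1, W_{x_1}}$ let the baseline clause of each direct/indirect contrast omit the mediator subscript, so that e.g.\ the explicit $\text{NDE} = \mathcal{C}(\{x_0, W_{x_0}\}, \{x_1, W_{x_0}\}, \emptyset, \emptyset)$ collapses to the tabulated $\mathcal{C}(x_0, \{x_1, W_{x_0}\}, \emptyset, \emptyset)$; I expect this to be the only step requiring care. As a concluding sanity check I would confirm that each row respects the factual/counterfactual typing of Def.~\ref{def:contrast}: every total, direct, and indirect effect is a counterfactual contrast ($E_0 = E_1$), TV and $x$-SE are factual contrasts ($C_0 = C_1$), while Exp-SE is the lone entry that is neither (its clauses satisfy $C_0 \neq C_1$ and $E_0 \neq E_1$), reflecting that it contrasts an interventional with an observational quantity and thereby crosses PCH layers.
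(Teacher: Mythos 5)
Your proposal is correct and takes essentially the same route as the paper: Lemma~\ref{lemma:TVfamily} has no separate proof beyond collating the contrast representations already derived measure-by-measure (Eqs.~\ref{eq:te-basis}--\ref{eq:expse-basis}, the NDE/NIE contrasts preceding Thm.~\ref{thm:fpcfa-1st-sol}, and the $x$-, $z$-, $(x,z)$-, $v'$-, and $u$-specific forms), with direct substitution into Def.~\ref{def:contrast} for TV and $x$-SE and the consistency axiom ($y_{x} = y_{x, W_{x}}$) reconciling the compact baseline clauses, exactly as you describe. One small caveat on your concluding typing check: you classify Exp-SE via the in-text form $\mathcal{C}(\emptyset, x, x, \emptyset)$ (so $C_0 \neq C_1$ and $E_0 \neq E_1$), whereas Table~\ref{table:tv-family} instead lists $C_0 = C_1 = x$, $E_0 = \emptyset$, $E_1 = x$ --- a factual contrast that, by consistency, equals the \emph{negative} of Def.~\ref{def:teexpse} --- so the mismatch you would hit there is an internal inconsistency of the paper's table rather than a flaw in your argument.
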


\begin{wraptable}{r}{6.4cm}
		\begin{tabular}{|M{0.15cm}||M{1.75cm}|M{0.3cm}|M{1.2cm}|M{0.3cm}|M{0.3cm}|}
            \hline
			& Measure & $C_0$ & $C_1$ & $E_0$ & $E_1$   \\[1mm] 	\hline \hline
			\multirow{5}{*}{\rotatebox{90}{general}} & TV$_{x_0, x_1}$ & $\emptyset$ & $\emptyset$ & $x_0$ & $x_1$ \\\cline{2-6}			
			& Exp-SE$_{x}$ & $x$ & $x$ & $\emptyset$ & $x$ \\\cline{2-6}
			& TE$_{x_0, x_1}$ & $x_0$ & $x_1$ & $\emptyset$ & $\emptyset$ \\\cline{2-6}
			& NDE$_{x_0, x_1}$ & $x_0$ & $x_1, W_{x_0}$ & $\emptyset$ & $\emptyset$ \\\cline{2-6}
			& NIE$_{x_0, x_1}$ & $x_0$ & $x_0, W_{x_1}$ & $\emptyset$ & $\emptyset$ \\\cline{2-6}
            \hline
            
		     \multirow{4}{*}{\rotatebox{90}{$X = x$}} & $x$-TE$_{x_0, x_1}$ & $x_0$ & $x_1$ & $x$ & $x$ \\\cline{2-6}
            & $x$-SE$_{x_0, x_1}$ & $x_0$ & $x_0$ & $x_0$ & $x_1$ \\\cline{2-6}
            & $x$-TE$_{x_0, x_1}$ & $x_0$ & $x_1$ & $x$ & $x$ \\\cline{2-6}
			& $x$-DE$_{x_0, x_1}$ & $x_0$ & $x_1, W_{x_0}$ & $x$ & $x$ \\\cline{2-6}
			& $x$-IE$_{x_0, x_1}$ & $x_0$ & $x_0, W_{x_1}$ & $x$ & $x$ \\\cline{2-6}
			\hline
			
			\multirow{3}{*}{\rotatebox{90}{$Z = z$}} & $z$-TE$_{x_0, x_1}$ & $x_0$ & $x_1$ & $z$ & $z$ \\\cline{2-6}
			& $z$-DE$_{x_0, x_1}$ & $x_0$ & $x_1, W_{x_0}$ & $z$ & $z$ \\\cline{2-6}
			& $z$-IE$_{x_0, x_1}$ & $x_0$ & $x_0, W_{x_1}$ & $z$ & $z$ \\\cline{2-6}
			\hline
			
			\multirow{3}{*}{\rotatebox{90}{$V' \subseteq V$}} & $v'$-TE$_{x_0, x_1}$ & $x_0$ & $x_1$ & $v'$ & $v'$ \\\cline{2-6}
			\multirow{3}{*}{\rotatebox{90}{}} & $v'$-TE$_{x_0, x_1}$ & $x_0$ & $x_1$ & $v'$ & $v'$ \\\cline{2-6}
			& $v'$-DE$_{x_0, x_1}$ & $x_0$ & $x_1, W_{x_0}$ & $v'$ & $v'$ \\\cline{2-6}
			& $v'$-IE$_{x_0, x_1}$ & $x_0$ & $x_0, W_{x_1}$ & $v'$ & $v'$ \\\cline{2-6}
			\hline
			
			\multirow{3}{*}{\rotatebox{90}{unit}} & $u$-TE$_{x_0, x_1}$ & $x_0$ & $x_1$ & $u$ & $u$ \\\cline{2-6}
			\multirow{3}{*}{\rotatebox{90}{}} & $u$-TE$_{x_0, x_1}$ & $x_0$ & $x_1$ & $u$ & $u$ \\\cline{2-6}

			& $u$-DE$_{x_0, x_1}$ & $x_0$ & $x_1, W_{x_0}$ & $u$ & $u$ \\\cline{2-6}
			& $u$-IE$_{x_0, x_1}$ & $x_0$ & $x_0, W_{x_1}$ & $u$ & $u$ \\\cline{2-6}

			\hline
		\end{tabular}
			    \caption{Measures of fairness in the TV-family. TE stands for total effect, Exp experimental, SE spurious, N natural, DE direct, IE indirect, $v'$ for an event $V' = v'$, where $V' \subseteq V$.}
		\label{table:tv-family}
\vspace{-0.985in}
\end{wraptable}

A few things are worth noting relative to this taxonomy. First, The measures are grouped in five categories, based on the granularity of the events $E_0, E_1$. For each of the contrasts, we define a criterion based on the resulting measure. Namely, we say $Y$ is fair with respect to $X$ in the $x$-TE measure if $x$-TE$_{x_0, x_1}(y \mid x) = 0 \; \forall x$. We write $x$-TE-fair$_X(Y)$ for this condition, or $x$-TE-fair, for short. 

Further note that Table~\ref{table:tv-family} has a distinct structure. In fact, the contrasts corresponding to TE, DE, and IE measures have repeating (equal) counterfactual clauses $C_0$ and $C_1$, whereas the conditioning event $E$ changes. Mathematically, the measures in the table, but for the spurious effects, can be written more succinctly as
\begin{align}
    \begin{rcases}
    \begin{dcases}
    E\text{-TE}_{x_0, x_1}(y\mid E) &= \mathcal{C}(x_0, x_1, E, E)\\
    E\text{-DE}_{x_0, x_1}(y\mid E) &= \mathcal{C}(x_0, \{x_1, W_{x_0}\}, E, E)\\
    E\text{-IE}_{x_0, x_1}(y\mid E) &= \mathcal{C}(x_0, \{x_0, W_{x_1}\}, E, E)
    \end{dcases}
    \end{rcases} \nonumber \\ \text{ for } E \in \{ \emptyset, x, z, v', u\}.
\end{align}

\noindent Apart from the overarching structure underlying the measures, as described in Table~\ref{table:tv-family}, there is more structure across them as delineated in the next result, which comes under the rubric of the fairness map. 
\begin{theorem}[Fairness Map]\label{thm:map}
	The total variation (TV) family of causal measures of fairness admits a number of relations of decomposability, admissibility, and power, which are represented in what we call the Fairness Map, as shown in Fig.~\ref{fig:map}.
\end{theorem}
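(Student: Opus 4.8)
The plan is to treat Thm.~\ref{thm:map} as a \emph{consolidation} result rather than a fresh derivation: every arrow drawn in Fig.~\ref{fig:map} is an instance of exactly one of three relation types --- decomposability, admissibility, or power --- and each such instance has either already been established in Secs.~\ref{foundations}--\ref{Measures} or follows from those results by transitivity. Accordingly, the proof is an assembly argument. I would organize it around the two axes of the explainability plane: each horizontal layer of the population axis (corresponding to a posterior $P(u)$, $P(u\mid x)$, $P(u\mid z)$, $P(u\mid v')$, $\delta_u$) contributes one decomposition of TV together with the admissibility edges to $\strm$, and the vertical stacking of layers contributes the power edges.

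First I would collect the decomposability relations, i.e.\ the dashed edges from TV down to each triple of direct/indirect/spurious measures. These are precisely Lem.~\ref{lemma:te+expse} (the TE/Exp-SE layer), Thm.~\ref{thm:fpcfa-1st-sol} (NDE, NIE, Exp-SE), Thm.~\ref{thm:fpcfa-1st-sol2} ($x$-specific), Thm.~\ref{thm:fpcfa-1st-sol3} ($z$- and $(x,z)$-specific), Thm.~\ref{thm:fpcfa-1st-sol4} ($v'$-specific), and Thm.~\ref{thm:fpcfa-sol-u} (unit-level); each states TV $= f(\mu_{\text{DE}},\mu_{\text{IE}},\mu_{\text{SE}})$ with $f$ vanishing at the origin, which is the content of the decomposability edge for that row. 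Next I would collect the admissibility relations --- the double arrows connecting the bottom-row structural criteria $\strm$ to the causal measures directly above them. These are stated and proved inside the very same theorems: Eqs.~\ref{eq:admnde}--\ref{eq:admexpse} for the general layer, Eqs.~\ref{eq:admctfde}--\ref{eq:admctfse} for the $x$-specific layer, and the analogous admissibility claims in Thms.~\ref{thm:fpcfa-1st-sol3}, \ref{thm:fpcfa-1st-sol4}, and \ref{thm:fpcfa-sol-u}. The fact that TV itself carries no admissibility edge is exactly Lem.~\ref{lem:tvnotadmissible}, which explains why the Map needs the intermediate measures at all.

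Third, for the power edges I would first record the single structural observation that the power relation of Def.~\ref{def:power} is \emph{transitive}: if $\mu_3(\mathcal{M})=0 \implies \mu_2(\mathcal{M})=0$ and $\mu_2(\mathcal{M})=0 \implies \mu_1(\mathcal{M})=0$ hold for all $\mathcal{M}\in\Omega$, then $\mu_3(\mathcal{M})=0 \implies \mu_1(\mathcal{M})=0$, so the power relation is a preorder on each column of the Map. The primitive edges $x\text{-DE-fair}\powarrow\text{NDE-fair}$, $(x,z)\text{-DE-fair}\powarrow z\text{-DE-fair}\powarrow\text{NDE-fair}$, $(x,z)\text{-DE-fair}\powarrow x\text{-DE-fair}$, $z\text{-DE-fair}\powarrow x\text{-DE-fair}$ (under the SFM), $v'\text{-DE-fair}\powarrow(x,z)\text{-DE-fair}$, and $u\text{-DE-fair}\implies v'\text{-DE-fair}$, together with their IE- and SE-counterparts, are exactly the power claims of Thms.~\ref{thm:fpcfa-1st-sol2}--\ref{thm:fpcfa-sol-u}. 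Every remaining composite power arrow appearing in the Hasse-style diagram of Fig.~\ref{fig:map} is then obtained by composing these primitives along the population axis via transitivity.

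The main obstacle will be bookkeeping and completeness rather than any single hard inequality: I must verify that the partial order drawn in Fig.~\ref{fig:map} is precisely the transitive closure of the primitive edges, with no arrow missing and none asserted that was not justified. The delicate point is the handling of the two incomparable branches, the $x$-specific and $z$-specific measures: the edge $z\text{-DE-fair}\powarrow x\text{-DE-fair}$ holds \emph{only} under the SFM (because it relies on the equality of $z$- and $(x,z)$-specific measures established in Thm.~\ref{thm:fpcfa-1st-sol3}, which in turn uses the SFM's backdoor assumption), so this branch of the Map must be flagged as SFM-conditional, whereas the remaining power, admissibility, and decomposability edges hold over the full Semi-Markovian class $\Omega$. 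I would close by noting that the spurious column collapses to Exp-SE and $x$-SE, since on the $z$-, $(x,z)$-, $v'$- and unit-levels the spurious effect vanishes under the SFM, so those layers borrow $x\text{-SE}$ as their spurious component.
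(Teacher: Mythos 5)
Your proposal is correct and matches the paper's own proof in structure: the paper likewise proves Thm.~\ref{thm:map} as an assembly of primitive edges (power relations per column in Lems.~\ref{lem:causal-power} and \ref{lem:spurious-power}, admissibility established once at the unit/counterfactual level in Lems.~\ref{lem:direct-adm}--\ref{lem:spurious-adm} and propagated downward by transitivity, and decomposability via the extended mediation formula, Lem.~\ref{lem:ext-med}, together with the TV decompositions of Lem.~\ref{lem:tv-decompositions}), exactly the three relation types you enumerate. Your flagging of the $z$-to-$x$ power edge as SFM-conditional and of the collapse of the spurious column beyond the $x$-level also agrees with where the paper's proofs invoke the SFM independence $Y_x \ci X \mid Z$, so no gap remains.
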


In words, the measures of the TV family satisfy an entire hierarchy of relations in terms of the properties discussed so far, namely,  admissibility, decomposability, and power. This hierarchy is one of the main results of this manuscript. There are several observations worth making at this point. First, each arrow in Fig.~\ref{fig:map} corresponds to an implication, and the full and more syntactic version of the map is provided in the Appendix \ref{appendix:map}, including the proofs. 
There are different ways of reading the map, and perhaps the most natural one is to navigate along the two axes, mechanisms and population, which match the dimensions discussed earlier in the explainability plane (Fig.~\ref{fig:Erefined}/Sec.~\ref{sec:explain-variations}). 

First, note that the mechanism's axis is partitioned into two. First, there are the elementary structural fairness criteria (Def.~\ref{def:str-fair}), in which each represent a different type of mechanism. Second, there are the composite measures, which are the ones that are usually readable from the data. More prominently, the causal effects are marked in gray, which is also known as the total effects, and the total variation is shown on the left-top corner. 

In a complementary way, the population axis can also be partitioned as well. First, there are the structural measures (below the blue-dotted line), which are computable from the true SCM $\mathcal{M}$ and almost always unobservable. On the other side, there are the ``empirical'' measures (above the blue line), which are possibly computable depending on the combination of data and assumptions about the underlying generative processes. 

Given this initial acknowledgment, we note this is a rough characterization, and then navigate through the axis in a more detailed manner, along each of the axis separately.

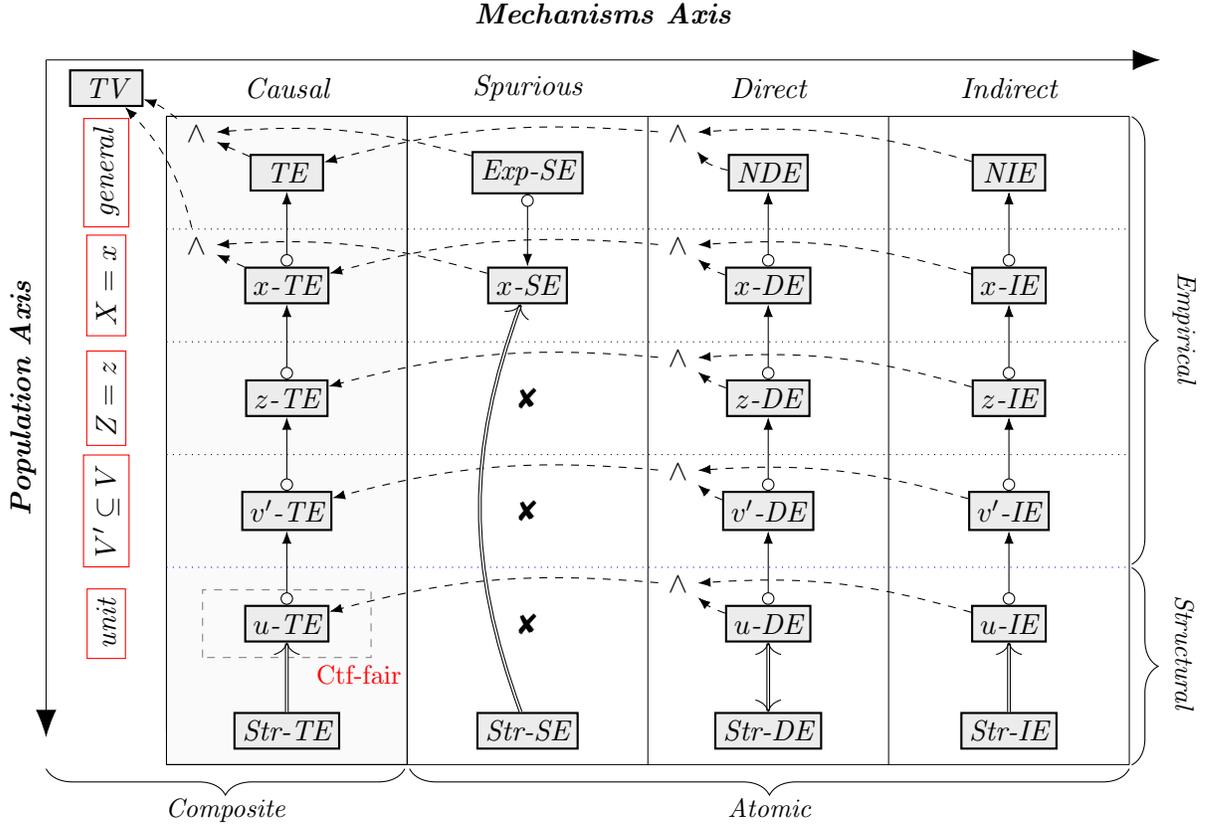
\begin{figure}[t]
	\centering
	\begin{tikzpicture}[fairmapfull]
	
        (0,0) -- node [black,above, yshift = 7.5] 
        {\small Composite} (2.98 * \myx, 0) ;
        (3.02 * \myx ,0) -- 
         node [black,above, yshift = 7.5] 
              {\small Atomic} 
        (9 * \myx, 0) ;

	    \draw [decorate,decoration={brace,mirror,amplitude=10pt}, yshift=2pt]
        (0,-9.5) -- 
         node [black,below, yshift = -7.5] 
              {\small Composite} 
        (2.98 * \myx, -9.5) ;

		\draw [decorate,decoration={brace,mirror,amplitude=10pt}, yshift=2pt]
        (3.02 * \myx ,-9.5) -- 
         node [black,below, yshift = -7.5] 
             {\small Atomic} 
        (9 * \myx, -9.5) ;

	    \draw [decorate,decoration={brace,amplitude=10pt}, yshift=2pt]
        (9.02 * \myx, 4.55 * \myy) -- 
         node [black, left, rotate=-90, xshift = 20, yshift = 20] {\small Structural} 
        (9.02 * \myx, 6.3 * \myy ) ;

	    \draw [decorate,decoration={brace,amplitude=10pt}, yshift=2pt]
        (9.02 * \myx,  0.55*\myy) -- 
         node [black, left, rotate=-90, xshift = 20, yshift = 20] {\small  Empirical} 
        (9.02 * \myx, 4.52 * \myy ) ;

	    

        \draw [draw=black, fill=gray!4] (\myx,\myy*0.5) rectangle (\myx * 3,\myy*6.25);

		\draw [very thin] (\myx, 0.5 * \myy) -- (9 * \myx, 0.5 * \myy);
		\draw [very thin, -{Latex[length=3.5mm]}] (0, 0) -- node[above, rotate=90]{\textbf{Population Axis}} (0, 6 * \myy);

		\draw [very thin,  -{Latex[length=3.5mm]}] (0, 0) -- node[above, yshift = 10]{\textbf{Mechanisms Axis}} (9.25*\myx, 0);	

		\draw [sepline] (\myx, 1.5 * \myy) -- (9 * \myx, 1.5*\myy);
		\draw [sepline] (\myx,2.5 * \myy) -- (9 * \myx, 2.5*\myy);
		\draw [sepline] (\myx,3.5 * \myy) -- (9 * \myx, 3.5*\myy);
		\draw [sepline, color=blue] (\myx,4.5 * \myy) -- (9 * \myx, 4.5*\myy);
		\draw [very thin] (\myx,6.25 * \myy) -- (9 * \myx, 6.25*\myy);

		\draw [very thin] (\myx,-0.75) -- (\myx, 5.95*\myy);		
		\draw [very thin] (3*\myx,-0.75) -- (3*\myx, 6.25*\myy);
		\draw [very thin] (5*\myx,-0.75) -- (5*\myx, 6.25*\myy);
		\draw [very thin] (7*\myx,-0.75) -- (7*\myx, 6.25*\myy);
		\draw [very thin] (9*\myx,-0.75) -- (9*\myx, 6.25*\myy);
		
		\node [draw=red, rotate=90] at (0.5*\myx, \myy)  {general};
  		\node [draw=red, rotate=90] at (0.5*\myx, 2*\myy)  {$X = x$};
  		\node [draw=red, rotate=90] at (0.5*\myx, 3*\myy)  {$Z = z$};
  		\node [draw=red, rotate=90] at (0.5*\myx, 4*\myy)  {$V' \subseteq V$};
  		\node [draw=red, rotate=90] at (0.5*\myx, 5*\myy)  {unit};
		
		\node [f1] (tv) at (0.5*\myx, 0.25*\myy) {TV};
        
        \node (spurious) at (4 * \myx, 0.25*\myy) {Spurious};
        \node [f1] (exp-se) at (4*\myx, \myy) {Exp-SE};
        \node [f1] (ctf-se) at (4*\myx, 2*\myy) {$x$-SE};
        \node (zse) at (4*\myx, 3*\myy) {\ding{56}};
        \node (v-se) at (4*\myx, 4*\myy) {\ding{56}};
        \node (unit-se) at (4*\myx, 5*\myy) {\ding{56}};
        \node [f1] (se) at (4*\myx, 5.95*\myy) {Str-SE};
        
        \node (causal) at (2 * \myx, 0.25*\myy) {Causal};
        \node [f1] (te) at (2*\myx, \myy) {TE};

        \node [f1] (ett) at (2*\myx, 2*\myy) {$x$-TE};
        \node [f1] (zte) at (2*\myx, 3*\myy) {$z$-TE};
        \node [f1] (vce) at (2*\myx, 4*\myy) {$v'$-TE};
        \node [f1] (uce) at (2*\myx, 5*\myy) {$u$-TE};
        \draw[draw=gray, dashed, label={above left:{Ctf-fair}}] (1.3*\myx, 5.3*\myy) rectangle ++(1.4*\myx, -0.6*\myy);
        \node [text=red, font=\small] at (2.6*\myx, 5.45*\myy) {Ctf-fair};
        \node [f1] (ce) at (2*\myx, 5.95*\myy) {Str-TE};
        
        \node (direct) at (6 * \myx, 0.25*\myy) {Direct};
        \node [f1] (nde) at (6*\myx, \myy) {NDE};
        \node [f1] (ctf-de) at (6*\myx, 2*\myy) {$x$-DE};
        \node [f1] (zde) at (6*\myx, 3*\myy) {$z$-DE};
        \node [f1] (vde) at (6*\myx, 4*\myy) {$v'$-DE};
        \node [f1] (ude) at (6*\myx, 5*\myy) {$u$-DE};
        \node [f1] (de) at (6*\myx, 5.95*\myy) {Str-DE};
        
        \node (indirect) at (8 * \myx, 0.25*\myy) {Indirect};
        \node [f1] (nie) at (8*\myx, \myy) {NIE};
        \node [f1] (ctf-ie) at (8*\myx, 2*\myy) {$x$-IE};
        \node [f1] (zie) at (8*\myx, 3*\myy) {$z$-IE};
        \node [f1] (vie) at (8*\myx, 4*\myy) {$v'$-IE};
        \node [f1] (uie) at (8*\myx, 5*\myy) {$u$-IE};
        \node [f1] (ie) at (8*\myx, 5.95*\myy) {Str-IE};

 		\node (te-se) at (1.25*\myx, 0.65*\myy) {$\wedge$};
 		\node (nde-nie) at (5.25 * \myx, 0.65*\myy) {$\wedge$};
 		\node (ctf-de-ie) at (5.25 * \myx, 1.65*\myy) {$\wedge$};
 		\node (z-de-ie) at (5.25 * \myx, 2.65*\myy) {$\wedge$};
 		\node (v-de-ie) at (5.25 * \myx, 3.65*\myy) {$\wedge$};
 		\node (u-de-ie) at (5.25 * \myx, 4.65*\myy) {$\wedge$};

 		\node (ctf-se-ett) at (1.25*\myx, 1.65*\myy) {$\wedge$};
        
 		\draw [dec] (te) to [bend left = 0] (te-se);
 		\draw [dec] (exp-se) to [bend right = 10] (te-se);
 		
 		\draw [dec] (ett) to [bend right = 0] (ctf-se-ett);
 		\draw [dec] (ctf-se) to [bend right = 10] (ctf-se-ett);

 		\draw [dec] (nde) to [bend left = 20] (nde-nie);
 		\draw [dec] (nie) to [bend right = 10] (nde-nie);
 		
 		\draw [dec] (zde) to [bend left = 10] (z-de-ie);
 		\draw [dec] (zie) to [bend right = 10] (z-de-ie);

 		\draw [dec] (ctf-de) to [bend left = 10] (ctf-de-ie);
 		\draw [dec] (ctf-ie) to [bend right = 10] (ctf-de-ie);
 		
 		\draw [dec] (vde) to [bend left = 10] (v-de-ie);
 		\draw [dec] (vie) to [bend right = 10] (v-de-ie);
 		
 		\draw [dec] (ude) to [bend left = 10] (u-de-ie);
 		\draw [dec] (uie) to [bend right = 10] (u-de-ie);
 		
 		\draw [dec] (te-se) to [bend right = 10] (tv);
 		\draw [dec] (nde-nie) to [bend right = 10] (te);
 		\draw [dec] (z-de-ie) to [bend right = 10] (zte);
 		\draw [dec] (ctf-de-ie) to [bend right = 10] (ett);
 		\draw [dec] (v-de-ie) to [bend right = 10] (vce);
 		\draw [dec] (u-de-ie) to [bend right = 10] (uce);
 		\draw [dec] (ctf-se-ett) to [bend right = 15] (tv);

 		\draw [pow] (exp-se) to [bend left = 0] (ctf-se);
 		\draw [pow] (ctf-de) to [bend right = 0] (nde);
 		\draw [pow] (zde) to [bend right = 0] (ctf-de);
 		\draw [pow] (vde) to [bend left = 0] (zde);
 		\draw [pow] (ude) to [bend left = 0] (vde);
 		\draw [pow] (zie) to [bend right = 0] (ctf-ie);
 		\draw [pow] (ctf-ie) to [bend right = 0] (nie);
 		\draw [pow] (vie) to [bend left = 0] (zie);
 		\draw [pow] (uie) to [bend left = 0] (vie);
 		
 		\draw [pow] (ett) to [bend right = 0] (te);
 		\draw [pow] (zte) to [bend right = 0] (ett);
 		\draw [pow] (vce) to (zte);
 		\draw [pow] (uce) to (vce);
 		
 		\draw [adm] (se) to [bend left = 20] (ctf-se);
		\draw [adm, <->] (de) to (ude);
		\draw [adm, ->] (ie) to (uie);
		\draw [adm] (ce) to (uce);

	\end{tikzpicture}
	\caption{Fairness Map for the TV family of measures. The $x$-axis represent the mechanisms (causal, spurious, direct, and indirect), and the $y$-axis the events that capture increasingly more granular sub-populations, from general $(P(u))$ to unit level, and structural. The arrow $\admarrow$ indicates relations of admissibility,$\powarrow$ of power, and $\decomparrow$ of decomposability.}
	\label{fig:map}
\end{figure}

\paragraph{Population axis (vertical) -- Admissibility \& Power relations.\\} 

\noindent When reading the map vertically, from bottom to top, one can find all power and admissibility relations from Thm.~\ref{thm:fpcfa-1st-sol} to Thm.~\ref{thm:fpcfa-sol-u}. For example, the last column of the map (``indirect'') shows that
\begin{align}
    \text{Str-IE} \admarrow u\text{-IE} \powarrow v'\text{-IE} \powarrow z\text{-IE} \powarrow x\text{-IE} \powarrow \text{NIE}.
\end{align}
In words, this says that
\begin{enumerate}
	\item[(i)] unit IE is \textit{admissible} w.r.t. structural IE;
	\item[(ii)] unit IE is \textit{more powerful than} $v'$-IE, which is \textit{more powerful than} $z$-IE, which is \textit{more powerful than} $x$-IE, which is \textit{more powerful} than NIE;
	\item[(iii)] by transitivity of the admissibility and power relations, it follows that every measure in the column is \textit{admissible} w.r.t. structural IE.
\end{enumerate} 
The other columns of the map can be interpreted in a similar fashion.


\paragraph{Mechanisms axis (horizontal) -- Decomposability relations.\\} 

\noindent When reading the map horizontally, from the right to the left, the  decomposability relations are encoded. For example, consider the first row of the map (``general''), it shows that 
\begin{align}
    \text{TE} \decomparrow& \text{NDE} \land \text{NIE} \\ 
    \text{TV} \decomparrow& \text{TE} \land \text{Exp-SE}, 
\end{align}
In words, this says that 
\begin{enumerate}
	\item[(i)] the total variation (TV) can be decomposed into the total  (TE) and experimental spurious effects (Exp-SE);
	\item[(ii)] the total effect (TE) can further be decomposed into natural direct effect (NDE)  and natural indirect effect (NIE).   
	\item[(iii)] More explicitly, these relations can be combined and written as: 
	\begin{align}
    \text{TV} \decomparrow  \text{NDE} \land \text{NIE} \land \text{Exp-{SE}}.
\end{align}
\end{enumerate}

More strongly, this can be stated for every level of the population axis (i.e., the TE is decomposed into DE and IE at every level), as shown next:

\begin{corollary}[Extended Mediation Formula] \label{cor:fm-extmed}
The total effect admits a decomposition into its direct and indirect parts, at every level of granularity of event $E$ in the Fairness Map in Fig.~\ref{fig:map}.  Formally, we can say that
\begin{align}
    \text{TE}_{x_0, x_1}(y) &= \text{NDE}_{x_0, x_1}(y) - \text{NIE}_{x_1, x_0}(y) \\
    x\text{-TE}_{x_0, x_1}(y \mid x) &= x\text{-DE}_{x_0, x_1}(y \mid x) - x\text{-IE}_{x_1, x_0}(y \mid x) \\
    z\text{-TE}_{x_0, x_1}(y \mid z) &= z\text{-DE}_{x_0, x_1}(y \mid z) - z\text{-IE}_{x_1, x_0}(y \mid z)\\
    v'\text{-TE}_{x_0, x_1}(y \mid v') &= v'\text{-DE}_{x_0, x_1}(y \mid v') - v'\text{-IE}_{x_1, x_0}(y \mid v')\\
    u\text{-TE}_{x_0, x_1}(y(u)) &= u\text{-DE}_{x_0, x_1}(y(u)) - u\text{-IE}_{x_1, x_0}(y(u)).
\end{align}
\end{corollary}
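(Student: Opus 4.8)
The plan is to prove a single unit-level identity and then lift it to every level of granularity by linearity, exploiting the fact that at each fixed level the three measures share the very same posterior weighting term. First I would establish the identity at the deterministic (unit) level. By the definitions of the unit-level measures, for every $u \in \mathcal{U}$,
\begin{align*}
  u\text{-DE}_{x_0, x_1}(y(u)) - u\text{-IE}_{x_1, x_0}(y(u))
  &= \big(y_{x_1, W_{x_0}}(u) - y_{x_0}(u)\big) - \big(y_{x_1, W_{x_0}}(u) - y_{x_1}(u)\big)\\
  &= y_{x_1}(u) - y_{x_0}(u) = u\text{-TE}_{x_0, x_1}(y(u)).
\end{align*}
The crucial feature is that the nested counterfactual $y_{x_1, W_{x_0}}(u)$ appears with opposite signs in the direct and indirect parts and therefore cancels exactly, leaving precisely the two terms that constitute the total effect. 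This already proves the last equation of the corollary.

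Next I would lift this pointwise identity to each aggregate level using the structural basis expansions of Thm.~\ref{thm:contrasts} (reproduced in Eqs.~\ref{eq:tedown}--\ref{eq:niedown} for the general case, in Eqs.~\ref{eq:ettdown}--\ref{eq:ctfiedown} for the $x$-specific case, and in Eqs.~\ref{eq:zte}--\ref{eq:zie} for the $z$-specific one). The key observation is that for a fixed level $E \in \{\emptyset, x, z, v', u\}$, the three measures $E\text{-DE}$, $E\text{-IE}$, and $E\text{-TE}$ are each obtained by weighting their respective unit-level integrands against the \emph{identical} posterior $P(u \mid E)$. Writing $\Delta_{\mathrm{DE}}(u) = y_{x_1, W_{x_0}}(u) - y_{x_0}(u)$, $\Delta_{\mathrm{IE}}(u) = y_{x_1, W_{x_0}}(u) - y_{x_1}(u)$, and $\Delta_{\mathrm{TE}}(u) = y_{x_1}(u) - y_{x_0}(u)$, we have $\Delta_{\mathrm{DE}}(u) - \Delta_{\mathrm{IE}}(u) = \Delta_{\mathrm{TE}}(u)$ for every $u$, so that
\begin{align*}
  E\text{-DE} - E\text{-IE}
  &= \sum_u \big(\Delta_{\mathrm{DE}}(u) - \Delta_{\mathrm{IE}}(u)\big) P(u \mid E) \\
  &= \sum_u \Delta_{\mathrm{TE}}(u)\, P(u \mid E) = E\text{-TE}.
\end{align*}
Specializing $E = \emptyset$ (so $P(u \mid E) = P(u)$) recovers $\text{TE} = \text{NDE} - \text{NIE}$; setting $E = x, z, v'$ yields the remaining three equations, where for the $v'$-level one additionally reads off the weighting term $P(u \mid v')$ from the $v'$-specific expansions.

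I do not anticipate a genuine obstacle, since the statement is a level-wise restatement of the classical mediation formula; the only points requiring care are bookkeeping ones. First, one must verify that the consistency-axiom simplifications already baked into the definitions (e.g., $y_{x_0} \equiv y_{x_0, W_{x_0}}$ and $y_{x_1} \equiv y_{x_1, W_{x_1}}$) are applied uniformly, so that the direct and indirect integrands genuinely share the same nested term $y_{x_1, W_{x_0}}(u)$ that drives the cancellation. Second, one must confirm that linearity of the sum (or integral, in the continuous case) applies and that the \emph{same} posterior $P(u \mid E)$ indeed appears in all three expansions at each level---this is immediate from Thm.~\ref{thm:contrasts}, but it is the one place where a mismatched conditioning event would break the telescoping. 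Everything else reduces to the elementary algebraic cancellation exhibited above.
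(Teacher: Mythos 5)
Your proposal is correct and is essentially the paper's own argument: the paper (Lem.~\ref{lem:ext-med} in Appendix~\ref{appendix:map}) proves the same identity for all levels at once by adding and subtracting the nested counterfactual term $P(y_{x_1, W_{x_0}} \mid E)$ in the contrast $P(y_{x_1}\mid E) - P(y_{x_0}\mid E)$, which is exactly your cancellation of $y_{x_1, W_{x_0}}(u)$ carried out at the probability level rather than pointwise. Your unit-level-then-average route via Thm.~\ref{thm:contrasts} is equivalent by linearity of expectation, so no gap remains.
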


Furthermore, the TV measure admits different expansions into DE, IE, and SE measures (as shown in Thm.~\ref{thm:fpcfa-1st-sol}-\ref{thm:fpcfa-sol-u}). The importance of these decompositions was already stated earlier, as they played a crucial role in solving the decomposability part of the FPCFA. 

In summary, the Fairness Map represents a general, theoretical solution to the FPCFA, and shows how the gap between the observed (TV in the top left of the map) and the structural (bottom of the map) can be bridged from first principles. The map therefore, in principle, closes the problem pervasive throughout the literature, as formalized earlier in this  manuscript.

\subsection{The Identification Problem \& the FPCFA in practice} \label{sec:ID}

The Fairness Map introduced in Thm.~\ref{thm:map} contains various admissible measures w.r.t. to different structural mechanisms. All these measures are well-defined and computable from the underlying data-generating model, the true SCM $\mathcal{M}$. However, $\mathcal{M}$ is not available in practice, which was the very motivation for engaging in the discussions so far, and finding proxies for the structural measures. One key consideration that follows is which of these measures can be computed in practice, given (1) a set of assumptions $\mathcal{A}$ about the underlying $\mathcal{M}$ and (2) data from past decisions generated by $\mathcal{M}$. This question indeed can be seen as a problem of  \textit{identifiability} \cite[Sec.~3.2.4]{pearl:2k}. We formalize this notation considering the context of this discussion. 

\begin{definition}[Identifiability]
Let the true, generative SCM $\mathcal{M} = \langle V, U, P(U), F \rangle$, and a set of assumptions $\mathcal{A}$ and an observational distribution $P(v)$ generated by it. Let $\Omega_{\mathcal{A}}$ the space of all SCMs compatible with $\mathcal{A}$. 
Let $\phi$ be a query that can be computed from $\mathcal{M}$.  The quantity $\phi$ is said to be identifiable from $\Omega_{\mathcal{A}}$ and the observational distribution $P(V)$ if
\begin{align}
    \forall \mathcal{M}_1, \mathcal{M}_2 \in \Omega_{\mathcal{A}}:
 \mathcal{A}^{\mathcal{M}_1} =   \mathcal{A}^{\mathcal{M}_2} \text{ and }    \\P^{\mathcal{M}_1}(V) =  P^{\mathcal{M}_2}(V) \implies \phi(\mathcal{M}_1) = \phi(\mathcal{M}_2).
\end{align}
In words, if any two SCMs agree with the set of assumptions ($\mathcal{A}$) and also generate the same observational distribution ($P(v)$), then they should agree with the answer to the query $\phi$. 
\end{definition}

A query $\phi$ is identifiable if it can be uniquely computed from the combination of qualitative assumptions and empirical data. 
In fact, the lack of identifiability means that one cannot compute the value of $\phi$ from the observational data and set of assumptions, i.e., the gap between the true generative process, $\mathcal{M}$, and the feature that we are trying to obtain from it, $\phi$, is too large, and cannot be bridged through the pair $\langle \mathcal{A}, P(v) \rangle$. In practice, one common way of articulating assumptions about $\mathcal{M}$ is through the use of causal diagrams. Whenever the causal diagram is known, we can then write the following : 
\begin{align}
    \Omega^{\mathcal{G}} = \{ \mathcal{M}: \mathcal{M} \text{ compatible with } \mathcal{G} \}, 
\end{align}
where compatibility is related to sharing the same causal diagram, which encodes qualitative assumptions, following the construction in Def.~\ref{def:diagram}\footnote{For a more formal account of this notion, see discussion on CBNs in \cite[Sec.~1.3]{bareinboim2020on})}.

\begin{example}[(Non-)Identifiability of measures]
Let $\Omega^{\mathcal{G}}$ be the space of SCMs that are compatible with the causal diagram $\mathcal{G}$
\begin{center}
    \begin{tikzpicture}
		[>=stealth, rv/.style={thick}, rvc/.style={triangle, draw, thick, minimum size=7mm}, node distance=18mm]
		\pgfsetarrows{latex-latex};
		\begin{scope}
		\node[rv] (1) at (-2,-0.5) {$X$};
		\node[rv] (2) at (0,-1.5) {$W$};
		\node[rv] (3) at (2,-0.5) {$Y$};
		\draw[->] (1) -- (2);
		\path[->] (1) edge[bend left = 0] (3);
		\draw[->] (2) -- (3);
		\draw[<->, dashed] (2) edge[bend right = 30] (3);
		\end{scope}
\end{tikzpicture} .	
\end{center}
When considering the quantities TE$_{x_0, x_1}(y)$ and NIE$_{x_0, x_1}(y)$ in this context, we can say that:
\begin{enumerate}[label=(\roman*)]
    \item quantity TE$_{x_0, x_1}(y)$ is identifiable over $\Omega^{\mathcal{G}}$,
    \item quantity NIE$_{x_0, x_1}(y)$ is not identifiable over $\Omega^{\mathcal{G}}$.
\end{enumerate}
In fact, for any SCM in $\Omega^{\mathcal{G}}$, we have that     $\text{TE}_{x_0, x_1}(y)$  is equal to 
\begin{align}
P(y \mid x_1) - P(y \mid x_0).
\end{align}
To show that NIE$_{x_0, x_1}(y)$ is not identifiable, consider the following two SCMs:
\begin{numcases}{\mathcal{M}_1 := }
X & $\gets$ $U_X$ \label{eq:id-m1-1st}\\
W & $\gets$ $\mathbb{1}(U_D < 0.2 +0.4 X + 0.4 U_{WY})$  \\
Y & $\gets$ $\mathbb{1}(U_Y < 0.1X + \underline{0.7}W + \underline{0.1}U_{WY})$, \label{eq:id-m1-last}
\end{numcases}
\begin{numcases}{\mathcal{M}_2 := }
X & $\gets$ $U_X$ \label{eq:id-m2-1st}\\
W & $\gets$ $\mathbb{1}(U_D < 0.2 + 0.4 X + 0.4 U_{WY})$  \\
Y & $\gets$ $\mathbb{1}(U_Y < 0.2X + \underline{0.1}W + \underline{0.7}U_{WY})$,\label{eq:id-m2-last}
\end{numcases}
where $U_X, U_D, U_{WY}$ and $U_Y$ are independent, exogenous variables, with $U_X, U_{WY}$ binary with $P(U_X = 1) = P(U_{WY} = 1) = \frac{1}{2}$, and $U_D, U_Y$ distributed uniformly Unif$[0, 1]$.
 Both $\mathcal{M}_1, \mathcal{M}_2$ are compatible with $\mathcal{G}$ and hence are in $\Omega^{\mathcal{G}}$. The reader can verify that the two SCMs generate the same observational distribution. However, computing that
\begin{align}
    \text{NIE}^{\mathcal{M}_1}_{x_0, x_1}(y) = 28\%  \neq  \text{NIE}^{\mathcal{M}_2}_{x_0, x_1}(y) = 4\% 
\end{align}
shows lack of identifiability in the given context.  $\hfill \square$
\end{example}

Following the discussion in Sec.~\ref{sec:diagram}, we noted that 
one SCM $M$ induces a particular causal diagram $\mathcal{G}$. Still, 
specifying the precise $\mathcal{G}$ may be non-trivial in practice, and we hence introduced the standard fairness model (SFM). In this case, we will be particularly interested in the set of SCM defined by the SFM projection of the causal diagram, which is called $\Omega^{SFM}$. Reasoning within the  $\Omega^{SFM}$ space has two interesting  consequences. First, identification is in principle more challenging since this context is generally larger, containing more SCMs than the true $\Omega^{\mathcal{G}}$. Given that more SCMs implies the possibility of finding an alternative SCM that agrees with the assumptions and $P(v)$, and disagrees in the query, identifiability will in general be less frequent.  Still, second, since the SFM projection encodes fewer assumptions than the specific causal diagram $\mathcal{G}$, from the fairness analyst perspective, it will be in general easier to elicit such knowledge to construct a diagram. This situation is more visibly seen through Fig.~\ref{fig:spaceID}.

We now extend the FPCFA to account for the identifiability issues discussed above:
\begin{definition}[FPCFA continued with Identifiability] \label{def:fpcfa-with-id}
$[\Omega$, $Q$ as before$]$
Let the true, unobserved generative SCM $\mathcal{M} = \langle V, U, P(U), F \rangle$, and let $\mathcal{A}$ be a set of assumption and $P(v)$ be the observational distribution generated by it. 
Let $\Omega^{\mathcal{A}}$ the space of all SCMs compatible with $\mathcal{A}$. 
The Fundamental Problem of Causal Fairness Analysis is to find a collection of measures $\mu_1, \dots, \mu_k$
such that the following properties are satisfied: 
\begin{enumerate}[label=(\arabic*)]
	\item $\mu$ is decomposable w.r.t. $\mu_1, \dots, \mu_k$; 
	\item $\mu_1, \dots, \mu_k$ are admissible w.r.t. the structural fairness criteria $Q_1, Q_2, \dots, Q_k$. 
	\item $\mu_1, \dots, \mu_k$ are as powerful as possible.
	\item $\mu_1, \dots, \mu_k$ are identifiable from the observational distribution $P(v)$ and class $\Omega^{\mathcal{A}}$.
\end{enumerate} 
\end{definition}
\begin{wrapfigure}{r}{0.40\textwidth}
\centering
\includegraphics[keepaspectratio,width=0.40\columnwidth]{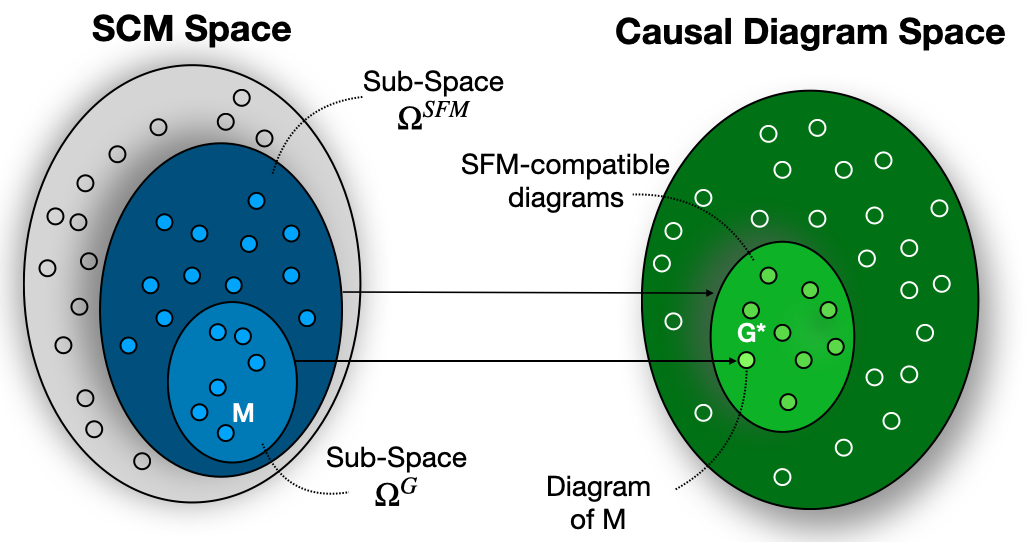}
\caption{Spaces of SCMs (left) and Causal Diagrams (right). (Left) Each point corresponds to a fully instantiated SCM. The SCMs compatible with the diagram $\mathcal{G}$ are shown in light blue, and the ones with the SFM in dark blue. (Right) Each point corresponds to a causal diagram. The lightest green dot corresponds to the true diagram $\mathcal{G}$, while the ones in the light green area correspond to different diagrams compatible with the SFM assumption. }
\label{fig:spaceID}
\vspace{-1.1in}
\end{wrapfigure}

The first question we ask is about solving the Step (4) of FPCFA when having the full causal graph $\mathcal{G}$. To this end, we state the following theorem:
\begin{theorem}[Identifiability over $\Omega^{\mathcal{G}}$] \label{thm:id-graph}
   Let $\mathcal{G}$ be a causal diagram compatible with the SFM and let $\Omega^{\mathcal{G}}$ be the context defined based on $\mathcal{G}$. Then,
   \begin{enumerate}[label=(\roman*)]
       \item TE, NDE, NIE, and Exp-SE are identifiable,
       \item $x$-TE, $x$-DE, $x$-IE, and $x$-SE are identifiable,
       \item $z$-TE, $z$-DE, and $z$-IE are identifiable,
       \item if $\{W, Y\} \cap V' \neq \emptyset$, then $v'$-TE , $v'$-DE , and $v'$-IE are not identifiable except in degenerate cases,
       \item $u$-TE, $u$-DE, and $u$-IE are not identifiable except in degenerate cases.
   \end{enumerate}
By degenerate cases we refer to instances in which a measure is equal to $0$ and identifiable from the absence of pathways. 
\end{theorem}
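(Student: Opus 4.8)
The plan is to split the five items into the positive identifiability claims (i)--(iii), which I would obtain from adjustment and mediation formulas, and the negative claims (iv)--(v), which I would establish by explicit counterexamples. Everything on the positive side rests on a single structural fact about the SFM that I would prove first: in any $\mathcal{G}$ compatible with the SFM, the confounder set $Z$ satisfies the backdoor criterion for the pair $(X,Y)$. Indeed, the only source of non-causal association is the bidirected edge $X \bidir Z$, so conditioning on $Z$ blocks every backdoor path, namely those of the form $X \bidir Z \to Y$ and $X \bidir Z \to W \to Y$. This yields the conditional ignorability $Y_x \ci X \mid Z$, together with the mediator analogue needed for natural effects: because $Z$ is a non-descendant of $X$ and the SFM admits no bidirected edges incident to $W$ or $Y$, there is no treatment-induced confounding of the $W \to Y$ mechanism.

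For (i), TE is identifiable by the backdoor adjustment $P(y_x) = \sum_z P(y \mid x, z) P(z)$, Exp-SE is then identifiable since $P(y \mid x)$ is purely observational, and NDE and NIE are identifiable by the mediation formula $P(y_{x_1, W_{x_0}}) = \sum_{z,w} P(y \mid x_1, z, w) P(w \mid x_0, z) P(z)$, whose validity I would justify from the SFM conditions just established. For (ii), the $x$-specific quantities are the same functionals but re-weighted by the posterior $P(z \mid x)$ rather than $P(z)$: using $Y_x \ci X \mid Z$ I would write $P(y_{x'} \mid x) = \sum_z P(y \mid x', z) P(z \mid x)$ and its nested counterpart, giving $x$-TE, $x$-DE, $x$-IE, and (via $P(y_{x_0}\mid x_0)=P(y\mid x_0)$) $x$-SE. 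For (iii), conditioning on $Z=z$ closes the backdoor path directly, so $P(y_x \mid z) = P(y \mid x, z)$ and $P(y_{x_1, W_{x_0}} \mid z) = \sum_w P(y \mid x_1, z, w) P(w \mid x_0, z)$, yielding $z$-TE, $z$-DE, $z$-IE; identifiability of the $(x,z)$-versions follows because they coincide with the $z$-versions under the SFM.

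For the negative results I would argue, for each claim, by constructing two SCMs $\mathcal{M}_1, \mathcal{M}_2 \in \Omega^{\mathcal{G}}$ that induce identical $P(V)$ (in fact identical interventional distributions) yet disagree on the target measure, mirroring the non-identifiability example given earlier for NIE. For (iv), the crucial observation is that once $V'$ contains $W$ or $Y$, the conditioning event is post-treatment, so the posterior $P(u \mid v')$ mixes counterfactual worlds that $P(V)$ cannot disentangle; I would make this precise by specializing to $v' = \{x_0, y_0\}$, where $v'$-TE reduces to the probability of sufficiency $P(y_{x_1} \mid x_0, y_0)$, a probability of causation known to be non-identifiable without further assumptions such as monotonicity, and then exhibit the two witnessing SCMs. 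For (v), I would use that $u$-TE$(u) = y_{x_1}(u) - y_{x_0}(u)$ is a deterministic functional of $\mathcal{F}$ at a fixed $u$, and that two SCMs sharing $P(V)$ may differ arbitrarily in their cross-world responses, so the unit-level contrast is not pinned down. In both (iv) and (v) the degenerate cases are precisely those where the relevant structural criterion is false (e.g. $X \notin \pa(Y)$ forces every direct-effect measure to equal $0$), in which case the measure is trivially identifiable from the absence of the pathway, and I would dispatch these separately.

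The main obstacle is twofold. On the positive side, the delicate point is verifying that the mediation formula genuinely applies in the SFM, i.e. that there is no mediator--outcome confounding that is itself affected by $X$; this needs the explicit argument that $Z$ is a non-descendant of $X$ and that the SFM forbids bidirected edges touching $W$ or $Y$. On the negative side, the real work is engineering the counterexample SCMs so that their observational and interventional distributions match \emph{exactly} while the post-treatment-conditioned or unit-level quantity differs; this requires carefully exploiting the degrees of freedom in $P(U)$ and the cross-world structure that $P(V)$ leaves unconstrained, and tying (iv) cleanly to the known non-identifiability of probabilities of causation.
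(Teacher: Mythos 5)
Your positive part (i)--(iii) follows essentially the same path as the paper's proof (Appendix on soundness of the SFM, Part I): backdoor adjustment over $Z$ for TE/Exp-SE, the conditional ignorability $Y_x \ci X \mid Z$ for the $x$-specific family, and the mediation-type expression $\sum_{z,w} P(y \mid x_1, z, w)\,P(w \mid x_0, z)\,P(z \mid \cdot)$ for the nested counterfactuals. One point of rigor you correctly flag but leave informal: the identification of $x$-DE requires the \emph{cross-world} independence $Y_{x_1, w} \ci (W_{x_0}, X) \mid Z$, which is a layer-3 statement that does not follow from the backdoor criterion alone; the paper derives it by constructing the counterfactual graph via \textbf{make-cg} (Shpitser--Pearl twin-network machinery) and reading off the d-separation there, and then unnesting the counterfactual. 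Your remark that the SFM forbids bidirected edges touching $W$ or $Y$ is exactly the structural fact that makes this go through, but to be airtight you would need the counterfactual-graph step rather than ordinary d-separation in $\mathcal{G}$.

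On the negative side you take a genuinely different route. The paper does not exhibit counterexample pairs at all: it assumes a mediated path $X \rightarrow W_{i_1} \rightarrow \dots \rightarrow W_{i_k} \rightarrow Y$ exists (the non-degenerate case), builds the counterfactual graph for $P(y_{x_1} \mid x_0, w)$, and observes that each $W_{i_j}$ and its counterfactual copy ${W_{i_j}}_{x_1}$ share the exogenous $U_{W_{i_j}}$, producing a cross-world C-component on which ID$^*$ fails; non-identifiability then follows uniformly for every SFM-compatible $\mathcal{G}$ with such a path, with all other post-treatment events ``analogous.'' Your counterexample strategy is equally legitimate (non-identifiability \emph{is} the existence of two SCMs in $\Omega^{\mathcal{G}}$ agreeing on $P(V)$ and disagreeing on the query) and more elementary, and the reduction of $v' = \{x_0, y_0\}$ to the probability of sufficiency cleanly imports known results. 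The concrete gap is coverage of (iv): your specialization only witnesses events containing $Y$, whereas the theorem covers any $V'$ with $\{W, Y\} \cap V' \neq \emptyset$; the case $W \in V'$, $Y \notin V'$ --- precisely the quantity $P(y_{x_1} \mid x_0, w)$ the paper treats in detail --- is not a probability of causation and needs a separate witness, since it hinges on the unidentifiable cross-world joint of $(W_{x_0}, W_{x_1})$. A sign-flip construction of the kind the paper itself uses elsewhere (Prop.~\ref{prop:ctf-fair-nonid}: $W \gets -X + U_W$ versus $W \gets -X + (-1)^X U_W$, identical $P(V)$ and diagram, different cross-world coupling) closes that case and simultaneously delivers your item (v), so your framework suffices once this witness is added.
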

For example, $v'$-DE or $u$-DE could be identifiable (and equal to $0$) if the causal diagram $\mathcal{G}$ does not contain the arrow $X \rightarrow Y$ (this is a case we call degenerate in the above theorem). In summary, we can claim that general, $x$-specific, and $z$-specific measures are identifiabile over $\Omega^{\mathcal{G}}$ whenever $\mathcal{G}$ is compatible with the SFM. However, $v'$ or unit level measures are in general not identifiable, without additional assumptions. 

The important next question we ask is whether there is a gap in solving the FPCFA under the context $\Omega^{SFM}$ compared to $\Omega^{\mathcal{G}}$. In the first instance, as shown in the following theorem, the answer is negative, showing formally show why our definition of the SFM is indeed sensible in the context of $\fpcfa$:
\begin{theorem}[Identifiability over $\Omega^{SFM}$ \& Soundness of SFM] \label{thm:sfm}
Under the Standard Fairness Model (SFM) the orientation of edges within possibly multidimensional variable sets $Z$ and $W$ does not change any of general, $x$-specific or $z$-specific measures. That is, if two diagrams $G_1$ and $G_2$ have the same projection to the Standard Fairness Model, i.e.,
\begin{equation}
    \Pi_{\text{SFM}}(\mathcal{G}_1) = \Pi_{\text{SFM}}(\mathcal{G}_2)
\end{equation}
then any measure $\mu(P(v), G)$ will satisfy
\begin{equation}
   \mu(P(v), \mathcal{G}_1) = \mu(P(v), \mathcal{G}_2) = \mu(P(v), \mathcal{G}_{\text{SFM}}). 
\end{equation}
That is, if measures $\mu_1, \dots, \mu_k$ in Step (4) of FPCFA in Def.~\ref{def:fpcfa-with-id} are identifiable over the class of SCMs $\Omega^{\mathcal{G}}$ corresponding to a causal diagram $\mathcal{G}$, then they are also identifiable over the class of SCMs $\Omega^{SFM}$ corresponding to the diagram's SFM projection $\mathcal{G}_{\text{SFM}}$. The notation $\mu(P(v), \mathcal{G})$ indicates the measures are computed based on the observational distribution $P(v)$ and the causal diagram $\mathcal{G}$ (as opposed to being computed based on the SCM $\mathcal{M}$ as before).
\end{theorem}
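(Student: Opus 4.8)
The plan is to reduce the soundness claim to a single statement: under the SFM, every general, $x$-specific, and $z$-specific measure identifies to a formula that is a function of the observational law $P(v) = P(x,z,w,y)$ \emph{only through the block-level conditionals} $P(x)$, $P(z)$, $P(z\mid x)$, $P(w\mid x,z)$, and $P(y\mid x,z,w)$, together with marginalizations over the \emph{full} vectors $z$ and $w$. If I establish this, then since $\mathcal{G}_1$, $\mathcal{G}_2$, and $\mathcal{G}_{\text{SFM}}$ induce the same observed $P(v)$ and the identification formula is literally the same function, the three evaluations coincide, yielding the displayed equality; and since the formula is valid for \emph{every} SCM whose diagram projects to $\mathcal{G}_{\text{SFM}}$, it certifies identifiability over the larger class $\Omega^{SFM}$, which is exactly the stated consequence.

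First I would write down the identification formulas for the building-block counterfactuals from which the whole TV-family is assembled as contrasts (Def.~\ref{def:contrast}): the interventional outcome $P(y_x)$, the nested counterfactual $P(y_{x_1,W_{x_0}})$, and their conditional analogues $P(y_x\mid x')$, $P(y_x\mid z)$, and so on. For the $z$-specific case this is immediate: conditioning on the whole block $Z=z$ closes every backdoor path (as already noted), so $P(y_{x}\mid z) = P(y\mid x,z)$ and the nested quantity collapses to a within-stratum mediation formula. For the general and $x$-specific cases the formulas are the backdoor adjustment $P(y_x) = \sum_z P(y\mid x,z)P(z)$ and the mediation formula $P(y_{x_1,W_{x_0}}) = \sum_{z,w} P(y\mid x_1,z,w)P(w\mid x_0,z)P(z)$, with $P(z)$ replaced by $P(z\mid x')$ in the $x'$-specific versions. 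Each is manifestly a function of $P(v)$ through the block conditionals above, so the block-level dependence is evident once validity is granted.

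The crux is a purely graphical lemma: for \emph{any} diagram $\mathcal{G}$ with $\Pi_{\text{SFM}}(\mathcal{G}) = \mathcal{G}_{\text{SFM}}$, the block $Z$ satisfies the backdoor criterion relative to $(X,Y)$, and the conditions justifying the mediation formula hold with confounder block $Z$ and mediator block $W$ --- no matter how edges are oriented \emph{within} $Z$ and within $W$. The non-descendant half is easy: the SFM forbids $X\to Z$, $W\to Z$, $Y\to Z$, and $Y \to W$, so no directed path enters the $Z$-block and $\de(X)\cap Z = \emptyset$ in every such $\mathcal{G}$. The subtle half is that reorienting internal edges can turn a $Z$-node into a collider, so conditioning on it could \emph{open} a backdoor path. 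The observation that rescues the argument is that any backdoor path from $X$ must begin with the bidirected edge $X \leftrightarrow Z_i$ (since $X$ has no incoming directed edge among the observables) and must eventually leave the $Z$-block to reach $Y$ (as $Y\notin Z$ and no edge returns into $Z$ from $W$ or $Y$); the last $Z$-node on the path, where it exits toward $W$ or $Y$, necessarily carries an outgoing arrow on the path and is therefore a \emph{non-collider}, so conditioning on it (it lies in $Z$) blocks the path. This exit-node argument is insensitive to internal orientation, which is exactly what is required; an analogous exit-node analysis handles the three simultaneous backdoor conditions behind the mediation formula.

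I expect this graphical lemma --- controlling the collider structure created by arbitrary internal orientations of $Z$ and $W$ --- to be the main obstacle, whereas the algebra of assembling the block-level formulas into the measures of Table~\ref{table:tv-family} and reading off the equality $\mu(P(v),\mathcal{G}_1)=\mu(P(v),\mathcal{G}_2)=\mu(P(v),\mathcal{G}_{\text{SFM}})$ is routine. Once the lemma is in hand, the identifiability-transfer conclusion is a one-line argument: a valid formula that depends on $P(v)$ alone cannot disagree between two SCMs in $\Omega^{SFM}$ that share $P(v)$, so the measures remain identifiable over $\Omega^{SFM}$ with value equal to $\mu(P(v),\mathcal{G}_{\text{SFM}})$.
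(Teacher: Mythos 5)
Your proposal is correct in substance but takes a genuinely different route from the paper's proof. The paper works directly with the cluster diagram $\mathcal{G}_{\text{SFM}}$: it derives the identification expressions from first principles at the block level (noting that standard identification machinery does not directly apply to clustered nodes), using cluster-level separation statements such as $(Y \ci X \mid Z)_{\mathcal{G}_{\underline{X}}}$ for the Layer-2 quantities and, crucially, the \textbf{make-cg} counterfactual-graph construction of Shpitser and Pearl to read off $Y_x \ci X \mid Z$ and $Y_{x_1,w} \ci (W_{x_0}, X) \mid Z$ for the Layer-3 quantities (ETT, $x$-SE, $x$-DE/IE); invariance across refinements $\mathcal{G}_1, \mathcal{G}_2$ is then asserted as a consequence, with the soundness of cluster-diagram reasoning left implicit (cf.\ the C-DAG reference). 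You invert this: you quantify over all refined diagrams compatible with the projection and prove explicitly, via your exit-node/non-collider lemma, that the backdoor and mediation preconditions hold no matter how edges are oriented within $Z$ and $W$. That lemma is sound as you state it --- since no directed or bidirected edge enters the $Z$-block from $W$ or $Y$ under the SFM, the last $Z$-node on any path into $X$ carries an outgoing arrow at its exit, hence is a non-collider and is conditioned on, and $\de(X)\cap Z = \emptyset$ --- and it makes precise exactly the step the paper leaves tacit. What your approach buys is a self-contained proof of the refinement-invariance (the ``soundness'' half of the theorem); what the paper's approach buys is explicit handling of the counterfactual quantities.

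The one place your plan as written is not quite enough is the remark that ``an analogous exit-node analysis handles the three simultaneous backdoor conditions behind the mediation formula.'' The condition certifying the mediation formula and the $x$-specific quantities includes cross-world independences, e.g.\ $Y_{x_1,w} \ci W_{x_0} \mid Z$ and $Y_x \ci X \mid Z$, and these are \emph{not} d-separation statements in $\mathcal{G}$: no amount of path analysis on the original refined diagram certifies them. You need to run your exit-node argument on the counterfactual (twin-network/make-cg) graph of the refined diagram, as the paper does for $\mathcal{G}_{\text{SFM}}$. The argument does transfer --- since $Z \cap \de(X) = \emptyset$, the $Z$-block is shared across worlds, the $W$-duplicates are linked only by within-block bidirected edges, and under the SFM there is no $W \bidir Y$ or $X \bidir Y$ edge, so every path between $Y_{x_1,w}$ and $\{W_{x_0}, X\}$ must exit the shared $Z$-block through a non-collider that is conditioned on --- but this step must be carried out on the counterfactual graph explicitly, not inferred from the observational-graph lemma. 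With that repair, your argument is complete for the general, $x$-specific, and $z$-specific measures, which is all the theorem claims (the non-identifiability of $v'$- and unit-level measures in the paper's Part II belongs to the companion theorem).
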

The proofs of Thm.~\ref{thm:id-graph} and ~\ref{thm:sfm} are given in Appendix \ref{appendix:sfm}, together with a discussion on relaxing the assumptions of the SFM, and a discussion on the estimation of measures. The theorem shows that the SFM projection of a diagram $\mathcal{G}_{\text{SFM}}$ is equally useful as the fully specified diagram $\mathcal{G}$ for computing any of the general, $x$-specific or $z$-specific measures in Lem.~\ref{lemma:TVfamily}. That is, specifying more precisely the causal structure contained in multivariate nodes $Z$ and $W$ would not change the values the different measures. The SFM projection $\mathcal{G}_{\text{SFM}}$ can be understood as a coarsening of the equivalence class of SCMs compatible with the graph $\mathcal{G}$. Perhaps surprisingly, this coarsening does not hurt the identifiability of some of the most interesting measures. Moreover, for computing the $v'$-specific and unit-level measures, additional assumptions would be necessary, even if the full diagram $\mathcal{G}$ was available (see Appendix \ref{appendix:sfm} for more details). The key observation is that $v'$-specific measures require the identification of the joint counterfactual distribution $P(v'_{x_0}, v'_{x_1})$, and these two potential outcomes are never observed simultaneously. Therefore, unless we are interested in $v'$-specific or unit-level measures, we can simply focus on constructing the $\mathcal{G}_{\text{SFM}}$ and not worry about full details of the diagram $\mathcal{G}$. The formulation of FPCFA with identifiability uncovers an interesting interplay of power and identifiability, in which increasingly strong assumptions are needed to identify more powerful measures.

\subsection{Other relations with the literature}
Equipped with the Fairness Map, which was the culmination of understanding the relationship between a multitude of measures, we can now analyze the connection of Causal Fairness Analysis with some influential previous works that articulated other measures in the literature. In particular, we will discuss the criteria of counterfactual fairness in Sec.~\ref{sec:ctf-fair} and individual fairness in Sec.~\ref{sec:ctf-if}. 

\subsubsection{Criterion 1. Counterfactual fairness} \label{sec:ctf-fair}
One criterion that has received considerable attention in the literature is called ``\textit{counterfactual fairness}'' \citep{kusner2017counterfactual}. Noteworthy in terms of terminology, the name ``counterfactual fairness'' is a misnomer, and somewhat misleading, as there are various measures that are counterfactual in nature and could be employed to reason about fairness, following the previous discussion and the Fairness Map (Fig.~\ref{fig:map}). Regardless of the name, the criterion has important limitations that we elaborate on this section.

To begin with, the definition of the proposed criterion is somewhat ambiguous in regard to whether it represents a unit-level quantity or a probabilistic-type of counterfactual\footnote{For various reasons, probabilistic measures tend to be discussed in the literature.}. To understand the issue, we list in the sequel three possible definitions compatible with the original paper, and then discuss their interpretations:

\begin{enumerate}[label=(\roman*)]
\item Counterfactual Fairness -- Unit-level (Ctf$_{\text{fair}}^{\text{ (u)}}$): 
\begin{align} \label{eq:ctf-fair-org2}
y_{x}(u) - y_{x'}(u) = 0, \;\;\forall  x, x', u \in \mathcal{U}.
 \end{align} 
\item Counterfactual Fairness -- Unit-level/probabilistic version (Ctf$_{\text{fair}}^{ (up)}$): 
\begin{align} \label{eq:ctf-fair-org-mix}
P(y_{x}(u) \mid X=x, W=w) = P(y_{x'}(u) \mid X=x, W=w), \;\; \forall x, x', w.
\end{align}
\item Counterfactual Fairness -- Population-level  (Ctf$_{\text{fair}}^{\text{ (p)}}$): 
\begin{align} \label{eq:ctf-fair-org-pop}
P(y_{x} \mid X=x, W=w) = P(y_{x'} \mid X=x, W=w), \;\; \forall x, x', w.
\end{align}
\end{enumerate}
In fact, the paper use the unit-level probabilistic version (Ctf$_{\text{fair}}^{ (up)}$) as its core definition \citep[Def.~5]{kusner2017counterfactual}, which is a direct translation to our notation so as to make the context and comparisons more transparent. \footnote{In particular, the original paper uses $A$ for the protected attribute, where we use $X$, and it uses $X$ for the remaining attributes where we use $W$. } The authors ``emphasize that counterfactual fairness is an individual-level definition, which is substantially different from comparing different individuals that happen to share the same “treatment” $X = x$ and coincide on the values of $W = w$" \citep[Sec.~3]{kusner2017counterfactual}. 
Interestingly, this seems a deliberate choice and suggest a unit-level definition of fairness. Importantly, the probabilistic unit-level (Ctf$_{\text{fair}}^{\text{ (up)}}$) and the unit-level definition (Ctf$_{\text{fair}}^{\text{ (u)}}$) are equivalent, as shown next: 

\begin{proposition}[Ctf$_{\text{fair}}^{\text{(up)}} \iff$ Ctf$_{\text{fair}}^{\text{ (u)}}$]
The unit-level counterfactual fairness (Eq.~\ref{eq:ctf-fair-org2})  and 
the  unit-level/probabilistic counterfactual fairness  (Eq.~\ref{eq:ctf-fair-org-mix}) criteria 
 are equivalent. 
\end{proposition}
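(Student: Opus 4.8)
The plan is to prove the two implications separately, while recognizing that the entire substance of this equivalence hinges on how one reads the ``$(u)$'' appearing inside $P(y_x(u) \mid X=x, W=w)$. I would state up front that this notation must be understood as a genuinely \emph{unit-level} object: once the unit $U=u$ is fixed, all randomness in the SCM is resolved, so $y_x(u)$ and $y_{x'}(u)$ are deterministic numbers rather than population averages. This is exactly the feature that separates (up) from the population version (p), and flagging it is what makes the statement true.

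The forward direction (u) $\implies$ (up) is immediate. If $y_x(u) - y_{x'}(u) = 0$ for every $x, x', u$, then the potential responses $y_x$ and $y_{x'}$ coincide as functions of the unit. Hence, for any conditioning event $\{X=x, W=w\}$, the two induced (degenerate) distributions $P(y_x(u) \mid X=x, W=w)$ and $P(y_{x'}(u) \mid X=x, W=w)$ put all their mass at the same value, so they are equal. For the converse (up) $\implies$ (u), I would invoke the two-step generative reading from Sec.~\ref{sec:explain-variations}: after a unit $U=u$ is sampled, every counterfactual $y_C(u)$ is fully determined. Therefore, for a fixed unit $u$ with factual attributes $X(u)=x$ and $W(u)=w$, both $P(y_x(u) \mid X=x, W=w)$ and $P(y_{x'}(u) \mid X=x, W=w)$ are Dirac point masses located at $y_x(u)$ and $y_{x'}(u)$ respectively. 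The consistency axiom is used here to keep the factual leg coherent, namely $y_x(u)=Y(u)$ whenever $X(u)=x$, so that conditioning on the unit's own observed $(x,w)$ is well defined. Equating two point masses forces equality of their atoms, $y_x(u)=y_{x'}(u)$; ranging over all admissible $x, x'$ and over all $u \in \mathcal{U}$ then recovers (u).

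The step I expect to be the main obstacle is not any calculation but rather the careful interpretation of the ambiguous probabilistic notation, and I would devote the most attention to it. Concretely, I would contrast the degenerate, single-unit posterior underlying (up) with the cell posterior $P(u \mid x, w)$ underlying (p), which ranges over \emph{all} units consistent with $(x,w)$. Under that population reading the converse implication genuinely fails: two potential responses can share the same conditional distribution on a cell while differing unit-by-unit, through precisely the internal cancellation illustrated in Ex.~\ref{ex:NDEfail}. Making this distinction explicit is essential, since it is what allows (up) to collapse to the deterministic criterion (u) while (p) remains strictly weaker, and it motivates treating (p) separately in the discussion that follows.
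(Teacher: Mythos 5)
Your proof is correct and takes essentially the same route as the paper: once the unit $U=u$ is fixed, every counterfactual $y_x(u)$ is deterministic, so the conditional distributions in Ctf$_{\text{fair}}^{\text{(up)}}$ degenerate to point masses and the conditioning event $\{X=x, W=w\}$ becomes redundant, making equality of distributions equivalent to unit-level equality of values. Your closing contrast with the population-level posterior $P(u \mid x, w)$ is exactly the distinction the paper draws immediately afterwards via Prop.~\ref{prop:ctf-fprob-avg}.
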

This proposition suggests that the notation used in the original definition of the counterfactual fairness criterion, Ctf$_{\text{fair}}^{(up)}$, entails some confusion. In words, once the unit $U=u$ is specified, as originally stated in the criterion, $Y_x(u)$ is fully determined. It is therefore redundant, and there is no need for considering or conditioning on event $X=x, W=w$, as this is implied by the choice of unit $u$. 

However, the authors also state that ``the distribution over possible predictions for an individual should remain unchanged in a world where an individual’s protected attributes had been different" \citep[Sec.~1]{kusner2017counterfactual}  As explained above, if the unit $U=u$ is known, there are no probabilities involved, and the statements are deterministic. Therefore, under the alternative description the authors provide, a different formulation of the criterion is needed. In fact, if the goal is to have a probabilistic counterpart of Eq.~\ref{eq:ctf-fair-org2}, as the above statement might lead one to think, then the unit $U=u$ should be removed altogether, which leads more explicitly to Ctf$_{\text{fair}}^{\text{(p)}}$ definition, as displayed in Eq.~\ref{eq:ctf-fair-org-pop}.  
Interestingly, using structural basis expansion from Thm.~\ref{thm:contrasts}, we can show the relation of the unit- and the probabilistic-level definitions:
\begin{proposition}[Ctf$_{\text{fair}}^{\text{(p)}}$ is a probabilistic average of Ctf$_{\text{fair}}^{\text{ (u)}}$] \label{prop:ctf-fprob-avg}
Consider the following measure:
\begin{align}
    (x, w)\text{-TE}_{x,x'}(y\mid x, w) = P(y_{x} \mid X=x, W=w) - P(y_{x'} \mid X=x, W=w).
\end{align}
Then, the Ctf$_{\text{fair}}^{\text{(p)}}$ criterion is equivalent to $(x, w)\text{-TE}_{x,x'}(y\mid x, w) = 0, \;\;\forall x, x', w$. Furthermore, the measure underlying the Ctf$_{\text{fair}}^{\text{(p)}}$ criterion can be written as
\begin{align}
    (x, w)\text{-TE}_{x,x'}(y\mid x, w) 
                                        &= \sum_u [y_{x}(u) - y_{x'}(u)] P(u \mid x, w).
\end{align}
\end{proposition}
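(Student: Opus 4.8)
The plan is to dispatch the two claims separately, each being a short consequence of machinery already in place: the equivalence is essentially notational, while the expansion is a direct instance of the structural basis expansion established in Thm.~\ref{thm:contrasts}.

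First, I would settle the equivalence between the Ctf$_{\text{fair}}^{\text{(p)}}$ criterion and the vanishing of $(x,w)\text{-TE}$. By definition (Eq.~\ref{eq:ctf-fair-org-pop}), Ctf$_{\text{fair}}^{\text{(p)}}$ asserts $P(y_{x} \mid X=x, W=w) = P(y_{x'} \mid X=x, W=w)$ for all $x, x', w$. Recalling the definition $(x,w)\text{-TE}_{x,x'}(y \mid x, w) = P(y_{x} \mid X=x, W=w) - P(y_{x'} \mid X=x, W=w)$, the criterion holds exactly when $(x,w)\text{-TE}_{x,x'}(y\mid x, w) = 0$ for every $x, x', w$. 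This half requires no further argument beyond rearranging the asserted equality into a difference.

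Second, for the representation I would recognize $(x,w)\text{-TE}_{x,x'}(y \mid x, w)$ as a contrast in the sense of Def.~\ref{def:contrast}, namely $\mathcal{C}(x', x, \{x,w\}, \{x,w\})$, with counterfactual clauses $C_0 = x'$, $C_1 = x$ and factual clauses $E_0 = E_1 = \{X=x, W=w\}$. Since $E_0 = E_1$, this is a \emph{counterfactual} contrast, so the structural basis expansion of Thm.~\ref{thm:contrasts}, Eq.~\ref{eq:down}, applies verbatim with $E = \{X=x, W=w\}$. Substituting $C_1 = x$ and $C_0 = x'$ yields
\begin{align*}
P(y_{x} \mid x, w) - P(y_{x'} \mid x, w) = \sum_u \big(y_{x}(u) - y_{x'}(u)\big)\, P(u \mid x, w),
\end{align*}
which is precisely the claimed form.

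I do not anticipate a genuine obstacle; the content lies in correctly matching the clauses to the template of Thm.~\ref{thm:contrasts} and in verifying that the two conditioning events coincide, so that the counterfactual (rather than the factual) branch of the expansion is the relevant one. The single point worth stating carefully is that the conditioning event $\{X=x, W=w\}$ is held identical across the two potential responses $y_{x}$ and $y_{x'}$; this is exactly what makes the posterior weight $P(u\mid x,w)$ common to both terms and lets the unit-level difference $y_{x}(u) - y_{x'}(u)$ factor out. This also makes transparent the titular interpretation: the population-level criterion is the $P(u\mid x,w)$-weighted average of the unit-level quantity $y_{x}(u) - y_{x'}(u)$ whose pointwise vanishing is the unit-level criterion Ctf$_{\text{fair}}^{\text{ (u)}}$.
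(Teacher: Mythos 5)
Your proposal is correct and follows the paper's own route: the equivalence is a definitional rearrangement, and the expansion is the counterfactual branch of the structural basis expansion (Thm.~\ref{thm:contrasts}, Eq.~\ref{eq:down}) applied to the contrast $\mathcal{C}(x', x, \{x,w\}, \{x,w\})$ with common evidence $E = \{X=x, W=w\}$. Your explicit check that $E_0 = E_1$ selects the counterfactual rather than the factual branch is exactly the one point the argument turns on, so nothing is missing.
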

In words, Prop.~\ref{prop:ctf-fprob-avg} shows that probabilistic counterfactual fairness criterion takes an average of the unit level differences $y_{x}(u) - y_{x'}(u)$, weighted by the posterior $P(u \mid x, w)$, and requires the average to be equal to $0$. Note the difference between this definition and the unit-level definition, which requires every unit-level difference $y_{x}(u) - y_{x'}(u)$ to be $0$.

After explaining the difference between the two possible and qualitatively different interpretations of counterfactual fairness, and clearing up the notational confusion with respect to fixing a unit $U = u$, we now discuss somewhat more serious issues regarding the criterion, including from a conceptual, technical, and practical viewpoints. In fact, the issues listed below apply to both the Ctf$_{\text{fair}}^{\text{(u)}}$ and Ctf$_{\text{fair}}^{\text{(p)}}$ interpretations of counterfactual fairness, with the three major points being: 
\begin{enumerate}[label= \arabic*.]
    \item inadmissibility of Ctf$_{\text{fair}}^{\text{(u)}}$ and Ctf$_{\text{fair}}^{\text{(p)}}$ with respect to $\strm$,
    \item lack of accounting for spurious effects, and 
    \item hardness/impossibility of identifiability. 
\end{enumerate}

\subsection*{Issue 1. Inadmissiblity w.r.t. structural direct, indirect, and spurious effects} 
In the context of the discussion that lead to the conclusions in Sec.~\ref{sec:fairmap}, it is somewhat natural to expect that the counterfactual fairness measure is inadmissible w.r.t. any of the structural criteria, as more formally shown in the sequel. 

\begin{proposition}[Unit-TE, $(x, w)$-TE not admissible]
     The unit-level total effect (unit-TE$_{x_0, x_1}(y)$) and the $(x, w)$-specific total effect ($(x,w)$-TE$_{x_0, x_1}(y \mid x, w)$) are both not admissible w.r.t. the structural direct, indirect, and spurious criteria. Formally, we write
        \begin{align}
            \text{Str-DE-fair} &\centernot\implies \text{unit-TE-fair},\;\; \text{Str-DE-fair} \centernot\implies (x, w)\text{-TE-fair} \\
            \text{Str-IE-fair} &\centernot\implies \text{unit-TE-fair},\;\;\; \text{Str-IE-fair} \centernot\implies (x, w)\text{-TE-fair}\\
            \text{Str-SE-fair} &\centernot\implies \text{unit-TE-fair},\;\; \text{Str-SE-fair} \centernot\implies (x, w)\text{-TE-fair}.
        \end{align}
\end{proposition}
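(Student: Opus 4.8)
The plan is to refute admissibility directly from its definition: since a measure $\mu$ is $(Q,\Omega)$-admissible iff $Q(\mathcal{M})=0 \implies \mu(\mathcal{M})=0$, it suffices to exhibit, for each structural criterion, a single SCM in which the criterion is satisfied (fair) yet the measure is nonzero. The conceptual engine behind every counterexample is the same observation already used for the TE inadmissibility lemma: both $\text{unit-TE}_{x_0,x_1}(y(u)) = y_{x_1}(u) - y_{x_0}(u)$ and $(x,w)\text{-TE}_{x_0,x_1}(y\mid x,w)$ are \emph{total} causal-effect contrasts (Table~\ref{table:tv-family}, with $C_0=x_0$, $C_1=x_1$, and conditioning event $E=u$ or $E=\{x,w\}$). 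Consequently they aggregate the direct and indirect causal influence of $X$ on $Y$ while remaining insensitive to purely spurious variation. I would therefore separately ``switch off'' all but one active mechanism and show that the total-effect contrast survives. Since no single SCM can be simultaneously DE-, IE- and SE-fair while retaining a nonzero $X\text{-}Y$ effect, two constructions are needed.

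First I would handle Str-DE and Str-SE together with an \emph{indirect-only} SCM over $\{X,W,Y\}$, e.g.
\begin{align*}
    X &\gets \text{Bernoulli}(0.5), & W &\gets \text{Bernoulli}(0.1 + 0.8\,X), & Y &\gets \text{Bernoulli}(0.1 + 0.8\,W),
\end{align*}
with mutually independent exogenous variables. Here $X \notin \pa(Y)=\{W\}$, so Str-DE-fair holds (Def.~\ref{def:str-fair}(i)), and since the exogenous variables are independent and $X$ has no endogenous ancestors, Str-SE-fair holds as well (Def.~\ref{def:str-fair}(iii)). It remains to evaluate both measures. For any unit with $U_W \in (0.1,0.9)$ and $U_Y \in (0.1,0.9)$ one gets $W_{x_1}=1$, $W_{x_0}=0$, hence $y_{x_1}(u)=1 \neq 0 = y_{x_0}(u)$, refuting $\text{unit-TE}$-fairness. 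For $(x,w)\text{-TE}$ one pushes the posterior $P(u\mid x,w)$ through the intervened mechanisms; choosing the subpopulation $\{X=x_0,\,W=0\}$ forces $U_W \geq 0.1$, so $W_{x_0}=0$ whereas $W_{x_1}$ is $1$ with probability $0.8/0.9$, yielding a strictly positive contrast. This refutes admissibility of both measures w.r.t. Str-DE and Str-SE.

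Second I would handle Str-IE with a \emph{direct-only} SCM, e.g. $X \gets \text{Bernoulli}(0.5)$, $W \gets U_W$ (independent, not an argument of $f_Y$), and $Y \gets \text{Bernoulli}(0.1 + 0.8\,X)$. Now $\pa(Y)=\{X\}$ contains no mediator and $X$ is not a (strict) ancestor of any other parent of $Y$, so there is no mediated path and Str-IE-fair holds (Def.~\ref{def:str-fair}(ii)); independence of the exogenous variables again gives Str-SE-fair. Yet $y_{x_1}(u)-y_{x_0}(u)=1$ for every $u$ with $U_Y \in (0.1,0.9)$, and since $Y$ is independent of $W$ under the intervention on $X$, the conditional contrast equals $(0.1+0.8)-0.1 = 0.8 \neq 0$ for every $(x,w)$. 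This refutes admissibility of $\text{unit-TE}$ and $(x,w)\text{-TE}$ w.r.t. Str-IE (and re-confirms the failure w.r.t. Str-SE). Combining the two constructions yields all six non-implications in the statement.

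The hard part will be the careful evaluation of the $(x,w)\text{-TE}$ contrast in the indirect-only model: unlike $\text{unit-TE}$, which reduces to a deterministic per-unit difference, $(x,w)\text{-TE}$ is a conditional interventional (layer-2/3) quantity, and conditioning on the factual event $\{X=x,W=w\}$ reshapes the relevant posterior over exogenous states before the intervention is applied. One must check both that the chosen event has positive probability and that the mediated effect does not accidentally cancel after conditioning --- indeed, in the construction above the contrast \emph{vanishes} for $(x,w)=(x_0,1)$, so the argument hinges on exhibiting the \emph{right} $(x,w)$. Keeping the mechanism coefficients strictly monotone in $W$ (as chosen here) guarantees non-cancellation for at least one subpopulation and makes the remaining verification routine.
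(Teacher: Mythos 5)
Your proposal is correct, and it in fact proves the proposition in the direction it is literally stated, which the paper itself never spells out: the appendix contains no standalone proof of this proposition, and the surrounding text justifies it via the decomposability of the total effect (Corol.~\ref{cor:fm-extmed}) together with Ex.~\ref{ex:ctf-fair-fail} (the salary SCM $X \gets U_X$, $W \gets -X + U_W$, $Y \gets X + W + U_Y$), which exhibits the \emph{converse} failure --- both measures equal zero while Str-DE and Str-IE are active. That example establishes the lack-of-power problem motivating the section, but the six displayed non-implications ($Q$-fair $\centernot\implies$ $\mu$-fair) require precisely what you built: SCMs in which a structural criterion is satisfied yet the total-effect contrast survives. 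Your two single-active-mechanism constructions (indirect-only for Str-DE and Str-SE; direct-only for Str-IE) are the same device underlying Lem.~\ref{lem:tvnotadmissible} for the TV, and your computations check out: in the indirect-only model the stratum $\{X=x_0, W=0\}$ has positive probability and yields $(x,w)\text{-TE} = \tfrac{8}{9}(0.9) + \tfrac{1}{9}(0.1) - 0.1 \approx 0.71 \neq 0$, and you rightly flagged that the stratum $(x_0, w=1)$ degenerates (there $W_{x_1} = W_{x_0} = 1$, so the contrast vanishes), so exhibiting the right stratum is genuinely necessary. Two further points in your favor: the parenthetical ``strict ancestor'' reading of Def.~\ref{def:str-fair}(ii) is needed in the direct-only model (with $\pa(Y) = \{X\}$, a reflexive reading of $\an(\pa(Y))$ would trivialize Str-IE) and matches the paper's evident intent; and your counterexamples are Markovian, which is slightly stronger than required since $\Omega$ is the semi-Markovian class. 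Net comparison: the paper's example buys the practically alarming direction (certifying the measure is zero while structural discrimination exists), while your argument supplies the formal content of the proposition as written --- the two are complementary, and both ultimately rest on the observation that unit-TE and $(x,w)$-TE are composite total-effect contrasts.
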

   
\noindent The importance of this result stems from the fact that even if one is able to  ascertain 
\begin{align*}
y_{x_1}(u) -& y_{x_0}(u) = 0 \;\;\forall u, \text{ or }\\
P(y_{x_1} \mid X=x, W=w) -& P(y_{x_0} \mid X=x, W=w) = 0 \;\;\forall x, w,
\end{align*}
it could still be that case that neither the direct nor the indirect (nor the spurious) effects are equal to $0$. The broader discussion around the Fairness Map, and the idea of decomposability of measures into admissible ones was introduced precisely to avoid such  situations. The next example highlights this issue more vividly. 

\begin{example}[Startup Hiring Continued - Salaries] \label{ex:ctf-fair-fail}
The startup company from Ex.~\ref{ex:NDEfail} has closed the hiring season. In the hiring process, the company achieved demographic parity, which means in this context that 50\% of new hires were female. Now, the company needs to decide on each employee's salary. In order to be ``fair'',  each employee is evaluated on how well they perform their tasks. The salary $Y$ is then determined based on this information, but, due to a subconscious bias of the executive determining the salaries, gender also affects how salaries are determined. The SCM $\mathcal{M}^*$ corresponding to this process is:

\begin{empheq}[left ={\mathcal{F}^*, P^*(U): \empheqlbrace}]{align}
		        X  &\gets U_X \label{eq:ctf-fair-fail-1} \\
 		        W  &\gets -X + U_W \label{eq:ctf-fair-fail-2}\\
 		        Y  &\gets X + W + U_Y.  \label{eq:ctf-fair-fail-3} \\ \nonumber\\
 		            U_X &\in \{0,1\}, P(U_X = 1) = 0.5, \\
 		            U_W &, U_Y \sim N(0, 1).       
\end{empheq}

\noindent For any unit $u = (u_x, u_w, u_y)$, we can compute that 
\begin{align}
    y_{x_1}(u) - y_{x_0}(u) = 
    \underbrace{(1 + (-1 + u_w) + u_y)}_{y_{x_1}(u)} - \underbrace{(0 + (-0 + u_w) + u_y)}_{y_{x_0}(u)} = 0, \label{eq:ctf-fair-fail-unit-cancel}
\end{align}
showing that unit-level total effect is 0. Furthermore, for each choice of $X=x, W=w$, it is also true that 
\begin{align}
    P(y_{x_1} \mid X=x, W=w) - P(y_{x_0} \mid X=x, W=w) = 0.
\end{align}
Therefore, both interpretations of the counterfactual fairness criterion are satisfied. However, direct discrimination against female employees still exists since the $f_y$ mechanism in Eq.~\ref{eq:ctf-fair-fail-3} assigns a higher salary to male employees. On the other hand, the mechanism $f_w$ in Eq.~\ref{eq:ctf-fair-fail-2} shows that female employees are better at performing their tasks, and should therefore be paid more. Nevertheless, the superior performance of female employees in performing their tasks is cancelled out by the direct discrimination favoring males (as witnessed by Eq.~\ref{eq:ctf-fair-fail-unit-cancel}). In effect, they are paid the same as they would be had they been male.  $\hfill \square$
\end{example}
The inability of total effect to detect direct and indirect effects stems from the fact that the total effect is decomposable (see Corol.~\ref{cor:fm-extmed}). The example above illustrates the first critical shortcoming of the criterion proposed by \cite{kusner2017counterfactual}, as in any other composite measure, and any optimization procedure based on it, i.e., zeroing the Ctf$_{\text{fair}}$ measure, may lead to unintended side effects and discrimination if implemented in the real world.

\subsection*{Issue 2. Ancestral closure \& Spurious effects} 
The purported criterion rules out, by construction, the possibility of existence of any spurious types of variations. In particular, the argument relies on the notion introduced in the paper called \textit{ancestral closure} (AC, for short) w.r.t. the protected attribute set. The AC requires that all protected attributes and their parents, and all their ancestors, should be measured and included in the set of endogenous variables. 
This is obviously a very stringent requirement, which is hard to ascertain in practice. 
The paper then argues that ``the fault should be at the postulated set of protected attributes rather than with the definition of counterfactual fairness, and that typically we should expect set $X$ to be closed under ancestral relationships given by the causal graph. For instance, if Race is a protected attribute, and Mother’s race is a parent of Race, then it should also be in $X$". 

Conceptually speaking, we contrast this constraint over the space of models with the very existence of dashed-bidirected arrows in causal diagrams, as discussed earlier. These arrows in particular allow for the possibility that there are variations between $X$ and $Z$ that can be left unexplained in the model, or unmeasured confounders may exist. 
Practically speaking, assuming that no bidirected arrows exist is a strong assumption that do not hold in many settings. For instance, consider the widely recognized phenomenon in the fairness literature known as \textit{redlining} \citep{zenou2000racial, hernandez2009redlining}. In some practical settings, the location where loan applicants live may correlate with their race. Applications might be rejected based on the zip code, disproportionately affecting certain minority groups in the real world. 

It has been reported in the literature that correlation between gender and location, or religious and location may possibly exist, and therefore, should be acknowledged through modeling. For instance, the one-child policy affecting mainly urban areas in China had visible effects in terms of shifting the gender ratio towards males \citep{hesketh2005effect, ding2006family}. Beyond race or gender, religious segregation is also a recognized phenomenon in some urban areas \citep{brimicombe2007ethnicity}. Again, while we make no claim that location affects race (or religion), or vice-versa, the bidirected arrows give a degree of modeling flexibility that allows for the encoding of such co-variations. Still, this without making any commitment to whatever historical processes and other complex dynamics that took place  and  generated such imbalance in the first place. To corroborate this point, consider the following example:
\begin{example}[Spurious associations in COMPAS \& Adult datasets]
A data scientist is trying to understand the correlation between the features in the COMPAS dataset. The protected attribute $X$ is race, and the demographic variables $Z_1$, $Z_2$ are age and sex. The data scientist tests two hypotheses, namely:
\begin{align}
    H^{(1)}_0: X \ci Z_1,\\
    H^{(2)}_0: X \ci Z_2.
\end{align}
The association of $X$ and $Z_1$, $Z_2$ are shown graphically in the bottom row of Fig.~\ref{fig:indeptests0}. Both of the hypotheses are rejected ($p$-values $< 0.001$). However, possible confounders of this relationship are not measured in the corresponding dataset.

Similarly, the same data scientist is now trying to understand the correlation of the features in the Adult dataset. The protected attribute $X$ is gender, and the demographic variables $Z_1$, $Z_2$ are age and race. The data scientist tests the independence of sex and age ($X \ci Z_1$), and sex and race ($X \ci Z_2$), and both hypotheses are rejected (p-values $< 0.001$, see Fig.~\ref{fig:indeptests0} top row). Again, possible confounders of this relationship are not measured in the corresponding dataset, meaning that the attribute $X$ cannot be separated from the confounders $Z_1, Z_2$ using any of the observed variables. $\hfill \square$
\begin{figure}
	\centering
	\includegraphics[height=90mm,angle=0]{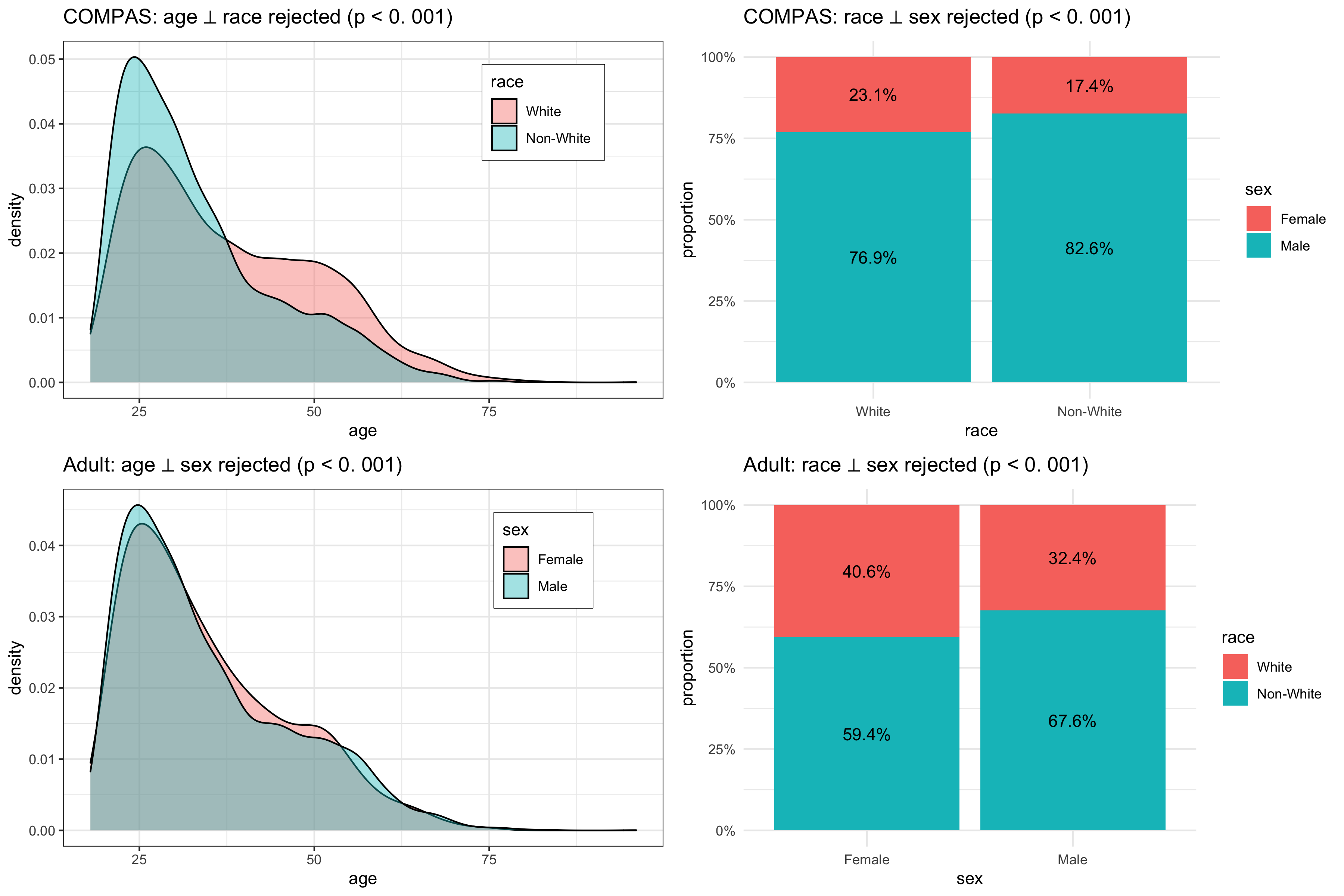}
	\caption{Testing for independence of the protected attribute ($X$) and the confounders ($Z$) on the Adult and COMPAS datasets.}
	\label{fig:indeptests0}
\end{figure}
\end{example}
As this example illustrates, from a both conceptual and practical standpoint, disallowing the possibility of non-causal relationships and confounding induced by some historical or societal context, and the associated spurious effects, can be an major limitation to any type of fairness analysis. 

\subsection*{Issue 3. Lack of identifiability}
An important practical property of any fairness measure is its identifiability under different sets of causal assumptions. We introduced the notion of identifiability in Sec.~\ref{sec:ID} to better understand when a fairness measure can be used in practice. We then discussed some necessary assumptions for measures in the Fairness Map to be identifiable. A significant implication of this prior discussion in the context of counterfactual fairness is highlighted by the following result:

\begin{proposition}[Unit-TE, $(x, w)$-TE not identifiable] \label{prop:ctf-fair-nonid}
    Suppose that $\mathcal{M}$ is a Markovian model and that $\mathcal{G}$ is the associated causal diagram. Assume that the set of mediators between $X$ and $Y$ is non-empty, $W \neq \emptyset$. Then, the measures unit-TE$_{x_0, x_1}(y)$ and $(x,w)$-TE$_{x_0, x_1}(y \mid x, w)$ are not identifiable from observational data, even if the fully specified diagram $\mathcal{G}$ is known.
\end{proposition}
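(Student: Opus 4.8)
The plan is to prove non-identifiability by exhibiting two Markovian SCMs, $\mathcal{M}_1$ and $\mathcal{M}_2$, that share the same causal diagram $\mathcal{G}$ and induce the same observational distribution $P(v)$, yet disagree on the query. Since non-identifiability over $\Omega^{\mathcal{G}}$ only becomes easier when $\mathcal{G}$ is coarsened to the SFM, it suffices to work with the minimal diagram witnessing $W\neq\emptyset$: I would take $V=\{X,W,Y\}$ with edges $X\to W$, $W\to Y$, and $X\to Y$, and independent exogenous variables $U_X,U_W,U_Y$. The conceptual point I want to isolate is that both $(x,w)\text{-TE}_{x_0,x_1}(y\mid x,w)$ and the unit-level total effect are \emph{cross-world} (Layer 3) quantities: they require the joint law of the mediator's potential responses $(W_{x_0},W_{x_1})$, which are never observed together and hence are not pinned down by $P(v)$. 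The Causal Hierarchy Theorem of \citep{bareinboim2020on} guarantees such quantities are generically non-recoverable from Layer 1, and the construction makes this concrete.

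\textbf{Construction.} I would let $\mathcal{M}_1$ and $\mathcal{M}_2$ coincide in $f_X$, $f_Y$, and $P(U_X),P(U_Y)$, and differ \emph{only} in $f_W$, while holding the conditional $P(W\mid X)$ fixed. Concretely, take $U_W\sim\mathrm{Unif}(0,1)$, marginals $p_0=P(W=1\mid x_0)$ and $p_1=P(W=1\mid x_1)$ with $p_0\neq p_1$, and set
\begin{align}
\mathcal{M}_1:\quad & f_W(x_0,u)=\mathbb{1}(u<p_0), && f_W(x_1,u)=\mathbb{1}(u<p_1), \\
\mathcal{M}_2:\quad & f_W(x_0,u)=\mathbb{1}(u<p_0), && f_W(x_1,u)=\mathbb{1}(u>1-p_1).
\end{align}
Both assignments yield $P(W=1\mid x_0)=p_0$ and $P(W=1\mid x_1)=p_1$, so the two models agree on every observational marginal and conditional: $P(X)$, $P(W\mid X)$, and $P(Y\mid X,W)$ (the latter determined by the shared $f_Y,P(U_Y)$) are identical, hence $P(X,W,Y)$ is identical. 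Yet $\mathcal{M}_1$ couples $(W_{x_0},W_{x_1})$ comonotonically while $\mathcal{M}_2$ couples them anti-comonotonically, so the cross-world conditional $P(W_{x_1}=w'\mid W_{x_0}=w)$ differs between them. I would also fix $f_Y$ to depend genuinely on $W$, e.g. $Y\gets\mathbb{1}(U_Y<\gamma+\alpha X+\beta W)$ with $\beta\neq 0$, so that this difference can propagate to $Y$.

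\textbf{Computing the discrepancy.} For the $(x,w)$ query I would condition on $X=x_0$. By the consistency axiom $P(y_{x_0}\mid x_0,w)=P(y\mid x_0,w)$, which is identifiable and equal across models, so all of the discrepancy sits in the other term. Using Markovianity ($U_Y$ is unconstrained by the event $\{X=x_0,W=w\}$, which only constrains $U_W$ via $f_W(x_0,U_W)=w$),
\begin{align}
P(y_{x_1}\mid x_0,w)=\sum_{w'} P\big(W_{x_1}=w'\mid W_{x_0}=w\big)\,P(y\mid x_1,w').
\end{align}
The factor $P(y\mid x_1,w')$ is the same in both models and depends non-trivially on $w'$ (since $\beta\neq0$), while $P(W_{x_1}=w'\mid W_{x_0}=w)$ differs between $\mathcal{M}_1$ and $\mathcal{M}_2$; choosing $p_0,p_1$ away from the degenerate boundary makes the weighted sums unequal, so $(x,w)\text{-TE}$ takes different values. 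For the unit-level effect I would argue distributionally, treating the query as the joint law $P(y_{x_0},y_{x_1})$ (equivalently the distribution of $Y_{x_1}-Y_{x_0}$), since a pointwise comparison across distinct exogenous spaces is ill-posed. Because $Y_{x_0}=f_Y(x_0,W_{x_0},U_Y)$ and $Y_{x_1}=f_Y(x_1,W_{x_1},U_Y)$ share $U_Y$ but inherit the differing coupling of $(W_{x_0},W_{x_1})$, the induced joint over $(Y_{x_0},Y_{x_1})$ differs between the two models, establishing non-identifiability of unit-TE by the same witness.

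\textbf{Main obstacle.} I expect the delicate part to be twofold: first, verifying cleanly that the two $f_W$ specifications leave \emph{every} observational quantity invariant (the marginal-preserving, coupling-varying trick is the whole engine, and the argument must make explicit that Markovianity lets $U_Y$ remain fresh after conditioning on the mediator); and second, framing unit-TE's non-identifiability rigorously as a statement about the counterfactual joint $P(y_{x_0},y_{x_1})$ rather than a literal per-$u$ equality, so that the definition of identifiability applies unambiguously. Pinning the numerical non-degeneracy (ensuring the final values differ rather than accidentally coinciding) is routine once $\beta\neq0$ and $p_0\neq p_1$ are imposed, but it should be stated to avoid a vacuous example.
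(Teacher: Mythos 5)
Your proposal is correct and takes essentially the same route as the paper: the paper establishes Prop.~\ref{prop:ctf-fair-nonid} by exhibiting the pair $\mathcal{M}^*$ (with $W \gets -X + U_W$) and $\mathcal{M}'$ (with $W \gets -X + (-1)^X U_W$), which is exactly your marginal-preserving, coupling-flipping construction --- an anti-comonotone reversal of $U_W$ under $x_1$ exploiting the symmetry of $P(U_W)$, so that $P(v)$ and $\mathcal{G}$ coincide while unit-TE and $(x,w)$-TE differ. Your binary-threshold variant and your distributional framing of unit-TE via the cross-world joint $P(y_{x_0}, y_{x_1})$ are only cosmetic differences from the paper's Gaussian sign-flip and per-unit computation.
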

The proposition shows that the measures on which counterfactual fairness is based are never computable from observational data and the causal diagram, even for models in which Markovianity is assumed to hold, a strong assumption. The main issue with these quantities is that they require knowledge of the joint distribution of counterfactual outcomes $Y_{x_1}, Y_{x_0}$, which are never observed at the same time
\footnote{Such quantities can be identified under additional, stronger assumptions, such as monotonicity \citep{tian2000probabilities,plevcko2020fair}.}

The issue discussed above obviously curtails the generality of the proposed method, since the underlying measures are not identifiable immediately, as illustrated next. 
\begin{example}[Non-ID of Ctf$_{\text{fair}}^{\text{(u)}}$, Ctf$_{\text{fair}}^{\text{(p)}}$ - Startup Salaries Continued]
Consider the SCM $\mathcal{M}^*$ of the Startup Salaries example (Ex.~\ref{ex:ctf-fair-fail}) given in Eq.~\ref{eq:ctf-fair-fail-1}-\ref{eq:ctf-fair-fail-3}. In $\mathcal{M}^*$ we showed that
    \begin{align}
         (x,w)\text{-TE}_{x_0, x_1}(y\mid x,w) = 0.
    \end{align}
Consider now an alternative SCM $\mathcal{M}'$ given by:
\begin{empheq}[left ={\mathcal{F}', P'(U): \empheqlbrace}]{align}
		        X  &\gets U_X \label{eq:ctf-fair-nonid-1} \\
 		        W  &\gets -X + (-1)^X U_W \label{eq:ctf-fair-nonid-2}\\
 		        Y  &\gets X + W + U_Y,  \label{eq:ctf-fair-nonid-3}
\end{empheq}
with the same distribution $P(u)$ over the units as for $\mathcal{M}^*$.
It's verifiable that that $\mathcal{M}'$ generates the same observational distribution as $\mathcal{M}^*$ and has the same causal diagram $\mathcal{G}$.
However, notice that for $u = (1, u_w, u_y)$, we have
    \begin{align}
        u\text{-TE}_{x_0, x_1}(y) = y_{x_1}(u) - y_{x_0}(u) = -2u_w \neq 0 \text{ whenever } u_w \neq 0. 
    \end{align}
    Furthermore, we have that 
    \begin{align}
        (x,w)\text{-TE}_{x_0, x_1}(y\mid x, w > 0) \neq 0.
    \end{align}
    Therefore, $\mathcal{M}^*$ and $\mathcal{M}'$ generate the same observational distribution and have the same causal diagram, but differ substantially with respect to counterfactual fairness. $\hfill \square$
\end{example}
The example constructed above is not atypical, but stems from the general non-identifiability result in Prop.~\ref{prop:ctf-fair-nonid}. These results raise the question as to whether counterfactual fairness criteria -- either  Ctf$_{\text{fair}}^{\text{(u)}}$ or Ctf$_{\text{fair}}^{\text{(p)}}$ -- can be used for the purpose of bias detection in any practical setting. In fact, to circumvent the identifiability issue discussed above, the proposal of the paper is that ``the model $\mathcal{M}^*$ must be provided" \citep[Sec.~4.2]{kusner2017counterfactual}. This means that the fully specified causal model $\mathcal{M}^*$ is needed to assess the existence of discrimination. The assumptions put forward in our manuscript are concerned with constructing the causal diagram $\mathcal{G}$, or the simplified version of the diagram in the form of an SFM. In stark contrast, the assumptions needed to provide the model $\mathcal{M}^*$ are orders of magnitude stronger than those needed for constructing the causal diagram or the SFM. This level of knowledge requires reading the intentions and minds of decision-makers, or having access to the internal systems and strategic secrets of companies, which are usually not accessible to outsiders.
On the more mathematical side, as alluded to earlier, inducing such a structural model from observational data alone is almost never possible \cite[Thm.~1]{bareinboim2020on}. 

\subsubsection{Criterion 2. Individual fairness} \label{sec:ctf-if}
In this section, we discuss a prominent measure introduced by \cite{dwork2012fairness} called \textit{individual fairness} (IF, for short). 
One of the most natural intuitions behind fairness is that if we constraint the population in a way that the units are the same but for the protected attribute, this would allow us to make claims about the impact of variations of this attribute. In fact, since nothing else remains to explain the observed disparities, the differences in outcome would be attributable to the change in the protected attribute. 

To ground this intuition, we introduced in Sec.~\ref{foundations} the explainability plane (Fig.~\ref{fig:Erefined}) that spawns the population and the mechanisms axes. In terms of the population axis, we noted that as the event $E=e$ is enlarged, the corresponding measure of fairness became more and more \textit{individualized}. Formally, the restriction on the observed information translates into a more precise subpopulation of the space of unobservable units $\mathcal{U}$. The analysis discussed earlier here relied on three observations that will be key to compare other causal measures with the IF measure, and try to understand its causal implications. 
First, the plane is contingent on the assumptions encoded in the SFM. As we will show formally, assumptions about the underlying causal structure are also relevant in the framework of IF. Secondly, the explainability plane considers admissibility and power of different measures, and we use these notions to place and understand the IF condition in the context of the Fairness Map. Thirdly, as highlighted by our analysis of the FPCFA, optimizing based on a specific composite criterion may in fact fail to remove bias that could be in principle detected when a more fine-grained analysis of the causal mechanisms generating the disparity is undertaken. We discuss conditions under which the IF framework is optimizing based on composite measures, with practical examples in which this may lead to unintended and potentially harmful side effects. 
We start with the definition of individual fairness:
\begin{definition}[Individual Fairness] \label{def:IF}
Let $d$ be a fairness metric on $\mathcal{X} \times \mathcal{Z} \times \mathcal{W}$. An outcome $Y$ is said to satisfy individual fairness if
\begin{align}\label{eq:IF}
    |P(y \mid x, z, w) - P(y \mid x', z', w')| \leq d((x, z, w), (x', z', w')),
\end{align}
$\forall \; x, x', w, w', z, z'$.
\end{definition} 
The framework of IF assumes the existence of a fairness metric $d$ that computes the distance between two individuals described by attributes $(x, z, w)$ and $(x', z', w')$, while the outcome $y$ is not taken into account.
In words, IF requires that individuals who are similar with respect to metric $d$ need to have a similar outcome. This requirement is represented by a Lipschitz property in Eq.~\ref{eq:IF}. If the distance between two values of the covariates, $d((x, z, w), (x', z', w'))$, is smaller than $\epsilon$, then the criterion in Eq.~\ref{eq:IF} implies that individuals who coincide with these covariate values must have a similar probability of a positive outcome, that is
\begin{align}
    |P(y \mid x, z, w) - P(y \mid x', z', w')| \leq \epsilon.
\end{align} 
We now look at the implications of the IF criterion, and observe some possible shortcomings that can result from ignoring the causal structure.

\begin{table}\centering
	\begin{tabular}{|M{0.35cm}|M{0.75cm}|M{6cm}|M{6cm}|}
	    \cline{1-4}
		\multicolumn{2}{|c|}{} & Example~\ref{ex:startup-hiring-iii}A & Example~\ref{ex:startup-hiring-iii}B \\[1mm]\hline
		\multirow{2}{1.5cm}{\rotatebox{90}{SCM $\mathcal{M}$\;\;\;\;}} & $\mathcal{F}$ &\begin{gather}
    X \gets U_{XY}\;\;\;\;\;\;\;\;\;\;\;\;\;\;\;\;\;\;\;\;\;\; \;\;\; \\
    Z \gets U_Z\;\;\;\;\;\;\;\;\;\;\;\;\;\;\;\;\;\;\;\;\;\; \;\;\;\;\; \\
    Y \gets X - U_{XY} + Z + U_Y \label{eq:startup-hiring-iiiA-3}
\end{gather} & \begin{gather}
    X \gets U_{XZ}\;\;\;\;\;\;\;\;\;\;\;\; \\
    Z \gets U_{XZ} + U_{ZY} \\
    Y \gets U_{ZY} + U_Y \;\;\label{eq:startup-hiring-iiiB-3}
\end{gather}\\[1mm]\cline{2-4}
		& $P(u)$ & \vspace{5pt} $U_{XY} \sim \text{Bernoulli}(0.5)$, $U_Z, U_Y \sim N(0, 1)$ \vspace{5pt} & \vspace{5pt} $U_{XZ} \sim \text{Bernoulli}(0.5)$, $U_{ZY}, U_{Y} \sim N(0, 1)$ \vspace{5pt} \\[1mm]\hline
		\rotatebox{90}{diagram} & $\mathcal{G}$ &\begin{tikzpicture}
		[>=stealth, rv/.style={thick}, rvc/.style={triangle, draw, thick, minimum size=7mm}, node distance=18mm]
		\pgfsetarrows{latex-latex};
		\begin{scope}
		\node[rv] (1) at (-2,-0.5) {$X$};
		\node[rv] (2) at (0,0.5) {$Z$};
		\node[rv] (3) at (2,-0.5) {$Y$};
		\path[->] (1) edge[bend left = 0] (3);
		\draw[->] (2) -- (3);
		\path[<->, dashed] (1) edge[bend left = -20] (3);
		\end{scope}
	\end{tikzpicture} & \begin{tikzpicture}
		[>=stealth, rv/.style={thick}, rvc/.style={triangle, draw, thick, minimum size=7mm}, node distance=18mm]
		\pgfsetarrows{latex-latex};
		\begin{scope}
		\node[rv] (1) at (-2,-0.5) {$X$};
		\node[rv] (2) at (0,0.5) {$Z$};
		\node[rv] (3) at (2,-0.5) {$Y$};
		\path[->] (1) edge[bend left = 0] (3);
		\path[<->, dashed] (1) edge[bend left = 20] (2);
		\path[<->, dashed] (2) edge[bend left = 20] (3);
		\end{scope}
	\end{tikzpicture} \\\hline
	\end{tabular}
		\caption{An example of two situations in which the IF criterion has different meanings. }\medskip\small
	\label{table:startup-hiring-iii}
\end{table}

\subsection*{Issue 1. IF is oblivious to causal structure}
The IF definition in Eq.~\ref{eq:IF} is agnostic with respect to the underlying causal structure that generated the data. We start with two examples of a hiring process that are on the surface similar, but differ with respect to the underlying causal structure. As we will see, this will show that the implications of the IF criterion can be quite different, which will highlight the fact that the causal structure cannot be dismissed when using this criterion. 

\begin{example}[Startup Hiring III] \label{ex:startup-hiring-iii}
Suppose that two startup companies, A and B, are hiring employees. Let $X$ (sex) represent the protected attribute, $Z$ the candidates performance on an aptitude test, and $Y$ the overall score for job hiring $Y$. The set of mediators $W$ is in this case empty. The hiring process is similar, yet there is a difference between the two companies.
In both instances, we assume age is a latent, unobserved factor, which has shared information with gender. In company A, age affects the salary directly, whereas in company B, age affects the aptitude test result. Additionally, in company B the aptitude test result has shared information with the salary, represented by the unobserved variable which measures how much the candidate prepared for the interview day. The respective SCMs and  causal diagrams are shown in Table~\ref{table:startup-hiring-iii}.
Suppose that the fairness metric $d$ is in both cases is
\begin{align}
    d((x, z), (x', z')) = |z - z'|.
\end{align}
Then, the IF criterion can be written as
\begin{align}
    \big|\ex[y \mid x, z] - \ex[y \mid x', z']\big| \leq d((x, z), (x', z')) = |z - z'|. \;\;\forall x, x', z, z'.
\end{align}
Notice that in company A, we can compute that
\begin{align}
    \ex^{\mathcal{M}_A}[y \mid x, z] &= \ex^{\mathcal{M}_A}[X - U_{XY} + Z + U_Y \mid x, z] \\
                                     &= \underbrace{\ex^{\mathcal{M}_A}[X - U_{XY} \mid x, z]}_{=0 \text{ as } X = U_{XY}} +\; \ex^{\mathcal{M}_A}[Z \mid x, z] + \underbrace{\ex^{\mathcal{M}_A}[U_Y \mid x, z]}_{\substack{= 0 \text{ as } U_Y \sim N(0,1),\\ U_Y \ci Z, X}} \\
                                     &= z.
\end{align}
Therefore, we can conclude that
\begin{align}
    \big|\ex^{\mathcal{M}_A}[y \mid x_1, z] - \ex^{\mathcal{M}_A}[y \mid x_0, z']\big| = |z-z'|.
\end{align}
In company B, however, we can compute:
\begin{align}
    \ex^{\mathcal{M}_B}[y \mid x, z] &= \ex^{\mathcal{M}_B}[U_{ZY} + U_Y \mid x, z] \\
                                     &= {\ex^{\mathcal{M}_B}[Z - U_{XZ} \mid x, z]} + \underbrace{\ex^{\mathcal{M}_B}[U_Y \mid x, z]}_{\substack{= 0 \text{ as } U_Y \sim N(0,1),\\ U_Y \ci Z, X}} \\
                                     &= \ex^{\mathcal{M}_B}[Z - X \mid x, z] = z - x.
\end{align}
Therefore, the IF criterion is not satisfied, which can be shown by computing:
\begin{align}
    \big|\ex^{\mathcal{M}_B}[y \mid x_1, z] - \ex^{\mathcal{M}_B}[y \mid x_0, z']\big| = |z-1-z'|.
\end{align}
When assessing direct discrimination on a structural level, in company A, the mechanism $f_y$ in Eq.~\ref{eq:startup-hiring-iiiA-3} shows the presence of direct discrimination. In company B, however, the mechanism $f_y$ in Eq.~\ref{eq:startup-hiring-iiiB-3} shows no direct discrimination. We could pick a more empirical measure of DE, such as the NDE (Def.~\ref{def:ndenie}). Evaluating the NDE using the generated data: 
\begin{align}
    \text{NDE}^{\mathcal{M}_A}_{x_0, x_1}(y) = 1,\\
    \text{NDE}^{\mathcal{M}_B}_{x_0, x_1}(y) = 0, 
\end{align}
which is consistent with the observed discrimination at the structural level.
$\hfill\square$
\end{example} 
Somewhat paradoxically, the example illustrates that in company A direct discrimination exists, yet the IF criterion is satisfied, whereas in company B the criterion is not fulfilled, but there is no direct discrimination. This example, even though perhaps surprising at first, is reflective of the fact that IF does not take the causal structure into account. Our conclusion is that without the causal diagram, the consequences of using IF might be unclear. Therefore, from this point forward, we assume the SFM structure, and look at the IF framework in this fixed context. 

\subsection*{Issue 2. IF captures the direct effect only under the SFM}
We next show that under the assumptions of the standard fairness model, the IF condition given in Eq.~\ref{eq:IF} has causal implications. In other words, we investigate where the IF condition can be placed in the Fairness Map in Fig.~\ref{fig:map}. An initial difficulty arises from the fact that the IF criterion is not written in the form of a contrastive measure (which were studied in Sec.~\ref{foundations}). Therefore, instead of using the exact IF criterion, we look at a criterion that is implied by the IF criterion, but is itself a contrastive measure. This criterion is based on the measure known as the observational direct effect:

\begin{definition}[Observational direct effect]
The observational direct effect (Obs-DE, for short) is defined as
\begin{align}
    \text{Obs-DE}_{x_0, x_1}(y \mid z, w) = P(y \mid x_1, z, w) - P(y \mid x_0, z, w).
\end{align}
Based on this measure, we define the Obs-DE-fair criterion as:
\begin{align} \label{eq:IF-DE}
    \text{Obs-DE-fair} \iff \text{Obs-DE}_{x_0, x_1}(y \mid z, w) = 0 \;\; \forall z, w.
\end{align}
\end{definition}

\noindent The Obs-DE-fair criterion is implied by IF whenever the fairness metric $d$ satisfies 
\begin{align}
    d((x_1, z,w), (x_0, z, w)) = 0 \;\; \forall z, w,
\end{align}
that is, when the metric $d$ does not depend on the protected attribute $X$.
The Obs-DE condition can then be obtained from Eq.~\ref{eq:IF} by setting $(x, z, w) = (x_1, z, w)$ and $(x', z', w') = (x_0, z, w)$. The Obs-DE criterion, which is implied by the IF condition under certain assumptions, is admissible with respect to structural direct criterion: 
\begin{proposition}[Admissibility of Obs-DE w.r.t. Str-DE and IF] \label{prop:IF-adm-str-de}
    Suppose that the metric $d$ does not depend on the $X$ variable, that is
    \begin{align}
        d((x, z,w), (x', z', w')) = d((z,w), (z', w')).
    \end{align}
    Then, the IF criterion in Eq.~\ref{eq:IF} implies the Obs-DE-fair criterion in Eq.~\ref{eq:IF-DE}. Furthermore, under the assumptions of the standard fairness model
    the Obs-DE measure is admissible with respect to Str-DE, that is
    \begin{align}
        \text{Str-DE-fair} \admarrow \text{Obs-DE-fair.}
    \end{align}
\end{proposition}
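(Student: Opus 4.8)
The plan is to prove the two assertions separately. For the first claim, that the IF criterion implies Obs-DE-fair when $d$ does not depend on $X$, I would simply instantiate the Lipschitz inequality of Def.~\ref{def:IF} at the pair of covariate profiles that agree everywhere except in the protected attribute. Taking $(x,z,w)=(x_1,z,w)$ and $(x',z',w')=(x_0,z,w)$ in Eq.~\ref{eq:IF} gives
\begin{align*}
\big|P(y\mid x_1,z,w) - P(y\mid x_0,z,w)\big| \le d\big((x_1,z,w),(x_0,z,w)\big).
\end{align*}
By the hypothesis that $d$ ignores $X$, the right-hand side equals $d((z,w),(z,w))$, which vanishes since $d$ is a (pseudo)metric with $d(a,a)=0$ (equivalently, this is the stated condition $d((x_1,z,w),(x_0,z,w))=0$). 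Hence the left-hand side is $0$ for every $z,w$, which is exactly $\text{Obs-DE}_{x_0,x_1}(y\mid z,w)=0$, i.e.\ Obs-DE-fair.

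For the second claim, that Str-DE-fair implies Obs-DE-fair under the SFM, I would argue structurally. Str-DE-fair means $\text{Str-DE}_X(Y)=\mathbb{1}(X\in\pa(Y))=0$ by Def.~\ref{def:str-fair}, so the arrow $X\to Y$ is absent and $\pa(Y)\subseteq\{Z,W\}$ in the SFM; thus $Y\gets f_Y(Z,W,U_Y)$. The decisive feature of the SFM (Def.~\ref{def:sfm}) is that its only bidirected edge is $X\dashleftarrow\dasharrow Z$, so no hidden confounder touches $Y$; consequently $U_Y$ is independent of $U_X,U_Z,U_W$ and of the latent behind the $X\dashleftarrow\dasharrow Z$ edge. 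Since $X,Z,W$ are each functions of those exogenous variables alone, $U_Y$ is independent of $(X,Z,W)$ jointly. Therefore
\begin{align*}
P(y\mid x,z,w) = P\big(f_Y(z,w,U_Y)=y \mid X=x,Z=z,W=w\big) = P\big(f_Y(z,w,U_Y)=y\big),
\end{align*}
which is free of $x$; subtracting the $x_1$ and $x_0$ versions yields $\text{Obs-DE}_{x_0,x_1}(y\mid z,w)=0$, and the admissibility relation $\text{Str-DE-fair}\admarrow\text{Obs-DE-fair}$ follows.

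I expect the delicate point to be that $P(y\mid x,z,w)$ conditions on $W$, a \emph{descendant} of $X$: once $X\to Y$ is deleted, $W$ becomes a collider on the path $X\to W\leftarrow Z\to Y$, so conditioning on $W$ opens that path, and one must check the association is still blocked (here at $Z$, which is conditioned and sits as a non-collider on $W\leftarrow Z\to Y$). The exogenous-independence argument above sidesteps this subtlety cleanly, but as an independent cross-check I would also verify $X\perp_d Y \mid \{Z,W\}$ by d-separation in the SFM with $X\to Y$ removed, enumerating the four simple $X$--$Y$ paths (through $X\to W$ and through the latent $X\dashleftarrow\dasharrow Z$) and confirming each is blocked by $\{Z,W\}$. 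The first claim is essentially immediate once the right substitution is made; the genuine content lives entirely in the SFM confounding structure used for the second claim.
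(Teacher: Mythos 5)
Your proposal is correct, and both halves land where the paper does. The first claim is proved exactly as in the text: instantiate Eq.~\ref{eq:IF} at $(x_1,z,w)$ and $(x_0,z,w)$, and use the hypothesis on $d$ (together with $d(a,a)=0$) to force the right-hand side to vanish, giving $\text{Obs-DE}_{x_0,x_1}(y\mid z,w)=0$ for all $z,w$. For the second claim your route differs in framing but not in substance: the paper identifies $P(y\mid x_1,z,w)-P(y\mid x_0,z,w)$ with the controlled direct effect $P(y_{x_1,z,w})-P(y_{x_0,z,w})$ via the ignorability condition $Y_{x,z,w}\ci X,Z,W$ encoded in the SFM, and then observes that Str-DE-fair (i.e., $X\notin\pa(Y)$, Def.~\ref{def:str-fair}) makes $y_{x_1,z,w}(u)=y_{x_0,z,w}(u)$ unit by unit, so the CDE is zero. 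You bypass the counterfactual layer entirely and argue at the level of exogenous variables: since the SFM's only bidirected edge is $X \dashleftarrow\dasharrow Z$, we have $U_Y \ci (U_X,U_Z,U_W)$, hence $U_Y \ci (X,Z,W)$, and with $Y\gets f_Y(Z,W,U_Y)$ the conditional $P(y\mid x,z,w)=P(f_Y(z,w,U_Y)=y)$ is free of $x$. These are two expressions of the same fact — the ignorability the paper invokes is itself a consequence of your exogenous-independence statement — but your version is more elementary, needing only Layer-1 reasoning, while the paper's CDE formulation buys a reusable interpretation of Obs-DE as a causal quantity that it exploits in the surrounding discussion. Your d-separation cross-check (that $\{Z,W\}$ blocks all $X$--$Y$ paths once $X\to Y$ is deleted, despite $W$ being a conditioned collider on $X\to W\leftarrow Z\to Y$, which $Z$ re-blocks) is sound and correctly flags the one subtlety in conditioning on a descendant of $X$; no gap remains.
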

A further positive result shows that the Obs-DE criterion is in fact powerful in the context of detecting direct discrimination (again under suitable assumptions):
\begin{proposition}[Power of IF w.r.t. Str-DE] \label{prop:IF-pow-str-de}
    Suppose that the Obs-DE-fair criterion in Eq.~\ref{eq:IF-DE} holds. Under the assumptions of the standard fairness model, the Obs-DE measure is more powerful than $z$-DE, $x$-DE and NDE:
    \begin{align}
        \text{Obs-DE-fair} \powarrow z\text{-DE-fair} \powarrow x\text{-DE-fair} \powarrow \text{NDE-fair}.
    \end{align}
\end{proposition}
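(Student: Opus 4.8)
The plan is to collapse the entire chain to a single new implication and then borrow the rest from results already in hand. Recalling the definition of power (Def.~\ref{def:power}) and the fact that the power relation is transitive, observe that the relations $z\text{-DE-fair} \powarrow x\text{-DE-fair} \powarrow \text{NDE-fair}$ have already been established under the SFM in Thm.~\ref{thm:fpcfa-1st-sol3} (the link $x\text{-DE-fair} \powarrow \text{NDE-fair}$ coming from Thm.~\ref{thm:fpcfa-1st-sol2}). Hence it suffices to prove only the first, genuinely new link, namely that $\text{Obs-DE-fair} \powarrow z\text{-DE-fair}$, i.e.\ that $\text{Obs-DE}_{x_0,x_1}(y\mid z,w)=0$ for all $z,w$ forces $z\text{-DE}_{x_0,x_1}(y\mid z)=0$ for all $z$. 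The full statement then follows by transitivity of $\powarrow$.

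First I would re-express the $z$-specific direct effect in terms of observational conditionals. Under the assumptions encoded in the SFM -- in particular, that the only latent confounding is the dashed edge $X \dashleftarrow\dasharrow Z$, so that conditioning on $Z$ closes every backdoor path between $X$ and $Y$, between $X$ and $W$, and between $W$ and $Y$ -- the nested counterfactual admits the conditional mediation formula. Concretely, I would show
\begin{align}
P(y_{x_1, W_{x_0}} \mid z) &= \sum_w P(y \mid x_1, z, w)\, P(w \mid x_0, z), \\
P(y_{x_0} \mid z) &= \sum_w P(y \mid x_0, z, w)\, P(w \mid x_0, z),
\end{align}
where the second line uses consistency ($y_{x_0}=y_{x_0,W_{x_0}}$) together with conditional ignorability of $X$ given $Z$. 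Subtracting the two and identifying the bracketed difference with the observational direct effect gives
\begin{align}
z\text{-DE}_{x_0, x_1}(y \mid z) &= \sum_w \big[ P(y \mid x_1, z, w) - P(y \mid x_0, z, w) \big]\, P(w \mid x_0, z) \nonumber \\
&= \sum_w \text{Obs-DE}_{x_0, x_1}(y \mid z, w)\, P(w \mid x_0, z).
\end{align}

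With this identity secured, the conclusion is immediate: $z\text{-DE}_{x_0,x_1}(y\mid z)$ is a weighted average of the Obs-DE terms with nonnegative weights $P(w \mid x_0, z)$ summing to one. If $\text{Obs-DE-fair}$ holds, every bracketed term vanishes, so each summand is zero and $z\text{-DE}_{x_0,x_1}(y\mid z)=0$ for every $z$, yielding $z\text{-DE-fair}$; combined with the earlier chain this delivers $\text{Obs-DE-fair} \powarrow z\text{-DE-fair} \powarrow x\text{-DE-fair} \powarrow \text{NDE-fair}$. The main obstacle I anticipate is rigorously justifying the conditional mediation formula, which is the one place the SFM assumptions are genuinely used: one must verify graphically, on $\mathcal{G}_{\text{SFM}}$, that conditioning on $Z$ blocks the $X$--$Y$, $X$--$W$, and $W$--$Y$ backdoors and that no $W$--$Y$ confounder is itself a descendant of $X$. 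These are exactly the structural facts already invoked in the admissibility argument of Prop.~\ref{prop:IF-adm-str-de} and in the identifiability results of Thm.~\ref{thm:id-graph}, so I would reuse that reasoning rather than re-derive it; once the identity holds, the power claim reduces to the trivial observation that a nonnegatively weighted sum of zeros is zero.
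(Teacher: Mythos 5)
Your proposal is correct and follows essentially the paper's own route: the key identity you derive is exactly the identification expression for $z$-DE$_{x_0,x_1}(y\mid z)$ under the SFM recorded in Table~\ref{table:IDexpressions} (justified by the counterfactual-graph argument in the proof of Thm.~\ref{thm:sfm}), which exhibits $z$-DE as a $P(w\mid x_0,z)$-weighted average of Obs-DE terms. The remaining links $z\text{-DE-fair} \powarrow x\text{-DE-fair} \powarrow \text{NDE-fair}$ are, as you note, imported from Thm.~\ref{thm:fpcfa-1st-sol2} and Thm.~\ref{thm:fpcfa-1st-sol3}, so nothing further is needed.
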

Under the SFM\footnote{The exact assumption needed here can be written as $Y_{x,z,w} \ci X, Z,W$. This assumption is encoded in the SFM.} $P(y \mid x_1, z, w) - P(y \mid x_0, z, w)$ equals what is known as the \textit{controlled direct effect}
\begin{align}
    \text{CDE}_{x_0, x_1} := P(y_{x_1,z, w}) - P(y_{x_0,z, w}).
\end{align}
Therefore, under certain assumptions, the constraint implied by IF in fact precludes the existence of a direct effect and has a valid causal interpretation. Importantly, the assumptions that are needed are of a causal nature, and ignoring the causal diagram of the data generating model can lead to undesired consequences when using the IF condition (see Ex.~\ref{ex:startup-hiring-iii}).

To continue the discussion, we consider two distinct cases when choosing the fairness metric $d$, on which much of the IF framework relies:
\begin{enumerate}[label=(\roman*)]
    \item metric $d$ is sparse, meaning that it does not depend on all variables in the sets $Z, W$,
    \item metric $d$ is complete, meaning that it depends on all variables in the sets $Z, W$.
\end{enumerate}
We now consider these two cases separately, and point out their possible drawbacks. We emphasize that our goal is not to pick a metric but to shed light on the fundamental interplay between the arguments/properties of the fairness metric $d$ and the underlying causal mechanisms, which describes where the decision-making process takes place in the real world and from where data is collected. 

\subsection*{Issue 3. Sparse metrics $d$ lead to lack of admissibility}
\paragraph{From individual to global.} 
Suppose that the IF condition in Eq.~\ref{eq:IF} holds. Under suitable causal assumptions, the condition precludes the existence of direct discrimination, as was shown above. However, even if the IF condition holds, the disparity between the groups corresponding to $X = x_0$ and $X = x_1$ (measured by the TV) could still be large, if the conditional distributions 
$$ Z, W \mid X = x_0 \text{ and } Z,W \mid X = x_1$$
differ. This observation leads to the second step of the framework of \cite{dwork2012fairness}.
The authors provide the following significant result:
\begin{proposition}[Optimal Transport bound on TV \citep{dwork2012fairness}]
    Let $d$ be a fairness metric, and suppose that the individual fairness condition in Eq.~\ref{eq:IF} holds. Let the optimal transport cost between $Z,W \mid X = x_1 \text{ and } Z, W \mid X = x_0$ be denoted by 
    \begin{align}
        \text{OTC}_{x_0, x_1}^d((Z, W)). 
    \end{align}
    Then, the TV measure between the groups is bounded by the optimal transport cost up to a constant $C_d$ dependent on the metric $d$ only, namely
    \begin{align}
        |\text{TV}_{x_0, x_1}(y)| \leq C_d * \text{OTC}_{x_0, x_1}^d((Z, W)).
    \end{align}
\end{proposition}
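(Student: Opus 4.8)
The plan is to read the individual-fairness inequality in Eq.~\ref{eq:IF} as a Lipschitz-continuity statement about the conditional outcome map $(x,z,w)\mapsto P(y\mid x,z,w)$, and then convert that pointwise modulus of continuity into a bound on the \emph{group-level} disparity by passing through a coupling of the two group-conditional covariate distributions. Concretely, write $\mu_1$ for the law of $(Z,W)$ given $X=x_1$ and $\mu_0$ for the law of $(Z,W)$ given $X=x_0$. First I would expand the total variation by the law of total probability,
\begin{align}
\text{TV}_{x_0,x_1}(y) = \ex_{\mu_1}\big[P(y\mid x_1, Z, W)\big] - \ex_{\mu_0}\big[P(y\mid x_0, Z, W)\big],
\end{align}
so that the disparity is expressed purely as a difference of expectations of the conditional mean outcome taken under the two different covariate laws.

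Next I would introduce an arbitrary coupling $\gamma$ of $\mu_1$ and $\mu_0$, i.e.\ a joint law over pairs $((z,w),(z',w'))$ with the prescribed marginals. Because the marginals of $\gamma$ agree with $\mu_1$ and $\mu_0$, the two expectations above merge into a single expectation over $\gamma$, giving
\begin{align}
\text{TV}_{x_0,x_1}(y) = \ex_{\gamma}\big[P(y\mid x_1, z, w) - P(y\mid x_0, z', w')\big].
\end{align}
Taking absolute values, moving them inside the expectation (triangle inequality), and applying the individual-fairness bound in Eq.~\ref{eq:IF} to each pair yields
\begin{align}
\big|\text{TV}_{x_0,x_1}(y)\big| \leq \ex_{\gamma}\big[ d\big((x_1, z, w),(x_0, z', w')\big)\big].
\end{align}
Since the coupling $\gamma$ was arbitrary, taking the infimum over all couplings of $\mu_1$ and $\mu_0$ on the right-hand side produces exactly the optimal transport cost, identifying the right-hand side with $\text{OTC}^d_{x_0,x_1}((Z, W))$. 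This last step is an application of the Kantorovich formulation of optimal transport (equivalently, of Kantorovich--Rubinstein duality, since $P(y\mid x,\cdot)$ is Lipschitz).

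The part I expect to require the most care is pinning down the constant $C_d$ and reconciling the ground cost. The cost appearing in the coupling bound is the \emph{full} metric $d$ evaluated at arguments differing in the $X$-coordinate ($x_1$ versus $x_0$), whereas $\text{OTC}^d_{x_0,x_1}((Z,W))$ is defined as a transport cost over the $(Z,W)$-coordinates alone. To close this gap I would split $d((x_1,z,w),(x_0,z',w'))$ via the triangle inequality into a covariate-transport term $d((x_1,z,w),(x_1,z',w'))$ and an $X$-flip term $d((x_1,z',w'),(x_0,z',w'))$; the former is controlled by the $(Z,W)$-restricted metric underlying the OTC, while the latter is absorbed into $C_d$ and vanishes precisely when $d$ does not depend on $X$ (the regime discussed for the Obs-DE criterion). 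The constant $C_d$ therefore encodes both this $X$-flip contribution and any normalization relating the Lipschitz modulus of $d$ to the ground cost of the OTC. Secondary technical obligations --- existence of a near-optimal coupling and validity of the duality for possibly continuous $(Z,W)$ --- follow from standard optimal-transport theory once $d$ is assumed to be a genuine metric with mild regularity, so these I would state rather than re-derive.
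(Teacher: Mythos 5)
Your coupling argument is correct and is essentially the canonical proof: the paper does not re-derive this proposition but imports it from \citet{dwork2012fairness}, whose argument is exactly this Lipschitz-plus-transportation bound, i.e., expand the group disparity under an arbitrary coupling of the two conditional covariate laws, apply the Lipschitz condition of Eq.~\ref{eq:IF} pairwise, and pass to the infimum over couplings (the Kantorovich formulation), just as you do. The one point to watch is your final step: an additive $X$-flip term cannot in general be ``absorbed'' into the \emph{multiplicative} constant $C_d$ (take OTC $=0$ with a nonzero flip cost), but, as you yourself observe, that term vanishes exactly when $d$ does not depend on $X$ --- which is the regime the paper operates in (cf.\ the metric assumption in Prop.~\ref{prop:IF-adm-str-de} and the metric $d((x,z,w),(x',z',w'))=|w-w'|$ of Ex.~\ref{ex:IF-sparse-metric}) --- so your proof is complete there, with $C_d$ playing only the role of a normalization relating the Lipschitz modulus to the ground cost.
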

In words, if the optimal transport (OT) distance between distributions $$ Z,W \mid X = x_1 \text{ and } Z, W \mid X = x_0,$$
with the metric $d$ measuring the transport cost, is small, the TV measure is consequently small as well. Here, however, there is an important nuance, stemming from the decomposability of the TV measure, as shown in the following proposition:
\begin{proposition}[Inadmissibility of OTC]
    The optimal transport cost $\text{OTC}_{x_0, x_1}^d((Z, W))$ is not admissible with respect to structural indirect and structural spurious criteria. Formally, we write that:
    \begin{align}
        \text{Str-IE-fair} &\centernot\implies \big(\text{OTC}_{x_0, x_1}^d((Z, W)) = 0 \big),\\
        \text{Str-SE-fair} &\centernot\implies \big(\text{OTC}_{x_0, x_1}^d((Z, W)) = 0 \big).
    \end{align}
\end{proposition}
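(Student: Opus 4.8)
The plan is to prove inadmissibility by producing explicit counterexamples: SCMs within the SFM class in which the structural indirect (respectively spurious) criterion is satisfied, yet the optimal transport cost $\text{OTC}^d_{x_0,x_1}((Z,W))$ is strictly positive. Recall that admissibility would require $Q(\mathcal{M})=0 \implies \text{OTC}=0$, so exhibiting a single model violating this contrapositive suffices. The key observation driving the construction is that $\text{OTC}^d_{x_0,x_1}((Z,W))$ measures the discrepancy between the conditional distributions $Z,W \mid X=x_1$ and $Z,W \mid X=x_0$, and this discrepancy can be made nonzero purely through \emph{spurious} association between $X$ and $Z$, without any causal path from $X$ into $W$ (which would be needed for a structural indirect effect), and without any mediated effect at all.

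\textbf{Str-IE-fair counterexample.} First I would build an SCM in which $X$ has \emph{no} causal influence on the mediators $W$ (so that $X \notin \an(\pa(Y))$ via $W$, forcing $\text{Str-IE}_X(Y)=0$), yet $Z$ is spuriously correlated with $X$ and the conditional law of $Z$ differs across $X=x_0,x_1$. Concretely, take $W=\emptyset$ and let $Z$ share exogenous randomness with $X$ (a bidirected arrow $X \dashleftarrow\dashrightarrow Z$, which is permitted in the SFM), for instance $X \gets U_{XZ}$ and $Z \gets U_{XZ} + U_Z$ with $U_{XZ}\sim\text{Bernoulli}(0.5)$. Since there is no mediator and no edge from $X$ into any ancestor of $Y$ other than the direct link, $\text{Str-IE}_X(Y)=0$ holds. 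However, $P(Z\mid x_1) \neq P(Z\mid x_0)$ because of the shared exogenous variable, so the two conditional distributions of $(Z,W)=Z$ are genuinely different, and for any reasonable metric $d$ the optimal transport cost between two distinct distributions is strictly positive. This establishes the first non-implication.

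\textbf{Str-SE-fair counterexample.} Next I would build an SCM in which there is \emph{no} spurious association ($\text{Str-SE}_X(Y)=0$, i.e. no shared exogenous variables or common ancestors of $X$ and $Y$), yet $X$ causally affects $W$ so that $W\mid X=x_0$ and $W\mid X=x_1$ differ. For example, let $X\gets U_X$, $W \gets X + U_W$, with $U_X,U_W$ independent and no confounding anywhere. Here all of $X$'s exogenous input is private and $X$ has no ancestors, so $\text{Str-SE}_X(Y)=0$. But $W$ is a direct causal child of $X$, so $P(W\mid x_1)$ is a shifted version of $P(W\mid x_0)$, making $\text{OTC}^d_{x_0,x_1}((Z,W))>0$. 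This gives the second non-implication.

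\textbf{The main obstacle} I anticipate is being careful about the choice of metric $d$ and ensuring strict positivity of the optimal transport cost rather than merely its non-vanishing in the trivial sense. Since $\text{OTC}^d$ depends on $d$, I must either argue for an arbitrary (fixed but generic) metric that separates distinct distributions, or note that for any metric $d$ that is a genuine metric on $\mathcal{Z}\times\mathcal{W}$, the transport cost between two distributions is zero only when they coincide; hence positivity follows from $P(Z,W\mid x_0)\neq P(Z,W\mid x_1)$. The remaining bookkeeping — verifying the structural criteria on the toy SCMs via Def.~\ref{def:str-fair} and confirming the conditional distributions differ — is routine. The conceptual point, which the proof should make explicit, is that $\text{OTC}$ conflates the covariate shift arising from \emph{both} indirect causal paths and spurious confounding into a single quantity, exactly as the TV measure does, and therefore cannot isolate either one.
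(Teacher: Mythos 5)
Your proposal is correct and takes essentially the approach the paper relies on: inadmissibility is shown by exhibiting SCMs within the SFM where one mechanism is structurally absent while the other drives the covariate shift between $Z,W \mid X=x_0$ and $Z,W \mid X=x_1$ (a purely spurious $X \dashleftarrow\dashrightarrow Z$ edge for the Str-IE case, a purely causal $X \rightarrow W$ edge for the Str-SE case), so that $\text{OTC}_{x_0,x_1}^d((Z,W)) > 0$ for any point-separating metric $d$ — precisely the conflation of indirect and spurious variation that the paper makes vivid in the surrounding Startup Hiring IV example. Your caveat about the metric is the right one to keep explicit, since for a sparse pseudo-metric such as the paper's $d = |w - w'|$ the transport cost can vanish despite $P(z \mid x_0) \neq P(z \mid x_1)$, so strict positivity genuinely requires $d$ to separate points of $\mathcal{Z} \times \mathcal{W}$.
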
 
\noindent To see the relevance of the proposition above, we proceed by means of an example, in which the above optimal transport distance is small and the TV is minimized, but in which indirect and spurious discrimination  still exist.
\begin{example}[Startup Hiring IV] \label{ex:IF-sparse-metric}
Suppose that a startup company is hiring accountants. Let $X$ (sex) be the protected attribute, $Z$ be the age of the candidate and $W$ their performance on an accountancy test, upon which the job decision $Y$ is based. The following SCM $\mathcal{M}^*$ describes the situation:

\begin{empheq}[left ={\mathcal{F}^*, P^*(U): \empheqlbrace}]{align}
                    \label{eq:startup-iv-fx}
		            X &\gets U_{XZ} \\
                    Z &\gets -U_{XZ} + U_Z \label{eq:startup-iv-fz}\\
                    W &\gets X + Z + U_W \label{eq:startup-iv-fw}\\
                    Y &\gets \mathbb{1}(U_Y < \text{expit}(W)),  \label{eq:startup-hiring-iv-y}
                    \\\nonumber\\
 		            U_{XZ} &\in \{0,1\}, P(U_{XZ} = 1) = 0.5, \\
 		            U_Z &, U_{W}, U_{Y} \sim \text{Unif}[0,1],
\end{empheq}

where $\text{expit}(x) = \frac{e^x}{1+e^x}$. The $f_w$ mechanism in Eq.~\ref{eq:startup-iv-fw} shows that older candidates perform better at the test, and that women perform better than men, given equal age. However, due to latent confounding, arising from a specific historical context, women tend to leave the profession at an earlier age (mechanisms $f_x, f_z$ in Eq.~\ref{eq:startup-iv-fx} and~\ref{eq:startup-iv-fz} show that lower age is correlated with being female, through the $U_{XZ}$ variable). 
The causal graph representing this situation is given by
\begin{center}
		\begin{tikzpicture}
	 [>=stealth, rv/.style={thick}, rvc/.style={triangle, draw, thick, minimum size=7mm}, node distance=18mm]
	 \pgfsetarrows{latex-latex};
	 \begin{scope}
		\node[rv] (0) at (-0.5,0.75) {$Z$};
	 	\node[rv] (1) at (-1.5,-1) {$X$};
	 	\node[rv] (2) at (0.5,-1) {$W$};
	 	\node[rv] (3) at (2.5,-1) {$Y$};
	 	\draw[->] (1) -- (2);
		\path[<->] (1) edge[bend left = 30, dashed] (0);
	 	\draw[->] (2) -- (3);
		\draw[->] (0) -- (2);
	 \end{scope}
	 \end{tikzpicture}.
	\end{center}
Importantly, the marginal distributions $W \mid X = x_0$ and $W \mid X = x_1$ are equal in $\mathcal{M}^*$. 
An outside authority, which certifies whether discrimination is present, decides that the metric $d$ is given by:
\begin{align}
d((x, z, w), (x', z', w')) = |w-w'|.
\end{align}
In this case, we have that
\begin{align}
    | P(y \mid x, z, w) - P(y \mid x', z', w') | =& | \text{expit}(w) - \text{expit}(w') | \\  \leq & \frac{1}{4} | w - w'|,
\end{align}
where the last inequality follows from an application of the mean value theorem.
Furthermore, the optimal transport cost is $0$, because the marginal distributions of $W$ are matching between the groups. There is no direct discrimination, since $Y$ is not a function of $X$ (Eq.~\ref{eq:startup-hiring-iv-y}). Therefore, the IF criterion is satisfied and the TV measure equals $0$. However, when applying the decomposition of TV found in the $x$-specific solution to $\fpcfa$ in Thm.~\ref{thm:fpcfa-1st-sol2}, we have that
\begin{align}
    \text{TV}_{x_0, x_1}(y) &= {x\text{-DE}_{x_0, x_1}(y\mid x_0)} - {x\text{-IE}_{x_1, x_0}(y\mid x_0)} - {x\text{-SE}_{x_1, x_0}(y)} \\
                            &= \underbrace{(0\%)}_{direct} - \underbrace{(14\%)}_{indirect} - \underbrace{(-14\%)}_{spurious} \label{eq:startup-iv-cancel},
\end{align}
which indicates that even though the TV equals $0$, the spurious and indirect effects exist. 
$\hfill \square$
\end{example}
Notice the following about the example. Women, who are naturally better at their jobs, are interviewed at a younger age. If the source of the confounding comes from the fact that women (willingly) advance to a different profession in later stages of their career, then the cancellation of spurious and indirect effects in Eq.~\ref{eq:startup-iv-cancel} might be acceptable. If, however, the spurious effect stems from a confounding mechanism in which women abandon their careers for certain adverse reasons, then the situation could reasonably be deemed unfair. Without causal considerations, these two cases are indistinguishable. This example is inspired by an example of the original IF paper, which says that ``the imposition of a metric already occurs in many classification processes, including credit scores for loan applications" \citep[Sec.~6.1.1]{dwork2012fairness}. Notice that such a metric is based on a single mediator $W$, similar to the metric in Ex.~\ref{ex:IF-sparse-metric}. 

A possible objection to Ex.~\ref{ex:IF-sparse-metric} is that the metric $d$ does not include all confounders and mediators $Z, W$, which introduces a different issues, as discussed next.

\subsection*{Issue 4. Complete metrics $d$ do not allow for business necessity}

We now suppose that the fairness metric $d$ includes all variables in $Z, W$. If this is the case, then the optimal transport condition implies the independence of $X$ and the $Z, W$ variables, as shown in the following proposition:
\begin{proposition}[OTC $\implies X \ci Z,W$]
    Suppose that the metric $d$ is of the following form
    \begin{align}
        d((x, z, w), (x', z', w')) = \|z - z'\| + \|w - w'\|,
    \end{align}
    where $\|\cdot\|$ is any norm on $\mathbb{R}^d$. Then, we have that the optimal transport condition implies the independence of $X$ and $\{Z, W\}$, namely:
    \begin{align}
        \text{OTC}_{x_0, x_1}^d((Z, W)) = 0 \implies X \ci Z, W.
    \end{align}
\end{proposition}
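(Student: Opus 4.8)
The plan is to show that the vanishing of the optimal transport cost forces the two conditional laws of $(Z,W)$ to coincide, and then to conclude independence from the fact that $X$ is binary. Write $P_1$ for the law of $(Z,W)\mid X=x_1$ and $P_0$ for the law of $(Z,W)\mid X=x_0$, both probability measures on $\mathcal{Z}\times\mathcal{W}$. The first observation is that the chosen metric does not depend on the $x$-coordinate at all: the cost of moving mass from $(z,w)$ to $(z',w')$ is $c((z,w),(z',w')) = \|z-z'\| + \|w-w'\|$. Since a sum of norms is again a norm on the product space $\mathbb{R}^{d_Z}\times\mathbb{R}^{d_W}$, the function $c$ is a genuine metric that separates points, and consequently $\text{OTC}^d_{x_0,x_1}((Z,W))$ is exactly the $1$-Wasserstein distance $W_1(P_1,P_0)$ associated with $c$.

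The crux is the implication $W_1(P_1,P_0) = 0 \Rightarrow P_1 = P_0$. First I would invoke Kantorovich--Rubinstein duality, which gives
$$ W_1(P_1,P_0) = \sup_{f}\Big| \int f \, dP_1 - \int f \, dP_0 \Big|, $$
the supremum taken over all $1$-Lipschitz functions $f$ with respect to $c$. If this quantity is zero, then $\int f\,dP_1 = \int f\,dP_0$ for every $c$-Lipschitz $f$; rescaling removes the Lipschitz-constant restriction, so the two measures integrate every bounded Lipschitz function identically. Because the bounded Lipschitz functions form a measure-determining class on a metric space, this yields $P_1 = P_0$. An equivalent route, which I would mention as a robustness check, is to take near-optimal couplings $\gamma_n$ with $\int c \, d\gamma_n \to 0$, extract a weakly convergent subsequence (tightness holds automatically since the marginals are fixed), and use lower semicontinuity of the cost to obtain a limiting coupling of zero cost. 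A coupling of zero cost under a metric $c$ must be supported on the diagonal $\{(z,w)=(z',w')\}$, and a diagonal-supported coupling with marginals $P_1$ and $P_0$ exists only when $P_1 = P_0$.

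With $P_1 = P_0$ established, the last step is purely elementary and uses that $X\in\{x_0,x_1\}$ is binary. Denoting the common conditional law by $Q$, the law of total probability gives, for every measurable set $A$,
$$ \pr((Z,W)\in A) = \pr((Z,W)\in A\mid x_0)\,\pr(x_0) + \pr((Z,W)\in A\mid x_1)\,\pr(x_1) = Q(A). $$
Hence $\pr((Z,W)\in A \mid X = x) = Q(A) = \pr((Z,W)\in A)$ for each value $x$, which is precisely the statement $X \ci Z, W$.

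I expect the main obstacle to be the measure-theoretic justification of $W_1 = 0 \Rightarrow P_1 = P_0$ rather than anything specific to fairness: one must check that the metric genuinely separates points (so that zero cost truly forces mass onto the diagonal) and that the underlying space supports the duality and tightness arguments even when $\mathcal{Z},\mathcal{W}$ are unbounded. The binary structure of $X$, which the SFM guarantees, is what lets equality of conditional laws upgrade cleanly to full independence; for a non-binary $X$ one would instead need to argue that the conditional law is constant across \emph{all} values of $X$.
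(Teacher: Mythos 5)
Your proof is correct and follows the argument the paper evidently intends (the proposition is stated without an explicit proof in the appendix): since the cost $\|z-z'\|+\|w-w'\|$ is a genuine metric on the product space, vanishing optimal transport cost forces the conditional laws of $(Z,W)$ given $x_1$ and $x_0$ to coincide, and for binary $X$ equality of the two conditional laws immediately yields $X \ci Z, W$ by the law of total probability. Your zero-cost-coupling argument (tightness, lower semicontinuity, diagonal support) is the right way to make the step $W_1 = 0 \Rightarrow P_1 = P_0$ airtight even without the finite-first-moment hypothesis that Kantorovich--Rubinstein duality would require, so nothing is missing.
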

Furthermore, if the metric $d$ does not consider $X$ then the IF condition implies the independence of $X$ and $Y$ conditional of $Z, W$.
\begin{proposition}[IF $\implies X \ci Y \mid Z, W$]
    Suppose that $d$ is a fairness metric and suppose that the IF condition in Eq.~\ref{eq:IF} holds. Then, for a binary outcome $Y$, $X \ci Y \mid Z, W$.
\end{proposition}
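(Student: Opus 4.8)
The plan is to read off the conditional independence directly from the Lipschitz inequality defining individual fairness, exploiting the two features emphasized in the surrounding text: that the metric $d$ is assumed not to depend on the protected attribute $X$ (as in the hypothesis of Prop.~\ref{prop:IF-adm-str-de}, carried over here), and that $Y$ is binary. The whole argument is short and amounts to instantiating Eq.~\ref{eq:IF} at a single, carefully chosen pair of points.

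First I would use the metric axiom $d(a,a)=0$ together with the assumption that $d$ ignores $X$ to conclude that, for every fixed $z$ and $w$, the points $(x_1,z,w)$ and $(x_0,z,w)$ have zero distance, i.e.\ $d((x_1,z,w),(x_0,z,w)) = d((z,w),(z,w)) = 0$. The only thing to verify here is the precise sense in which $d$ ``does not consider $X$'': under the form $d((x,z,w),(x',z',w')) = d((z,w),(z',w'))$, the surviving coordinates $z,w$ coincide on both arguments, so the metric evaluates to $0$.

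Next I would instantiate the IF condition Eq.~\ref{eq:IF} with $(x,z,w)=(x_1,z,w)$ and $(x',z',w')=(x_0,z,w)$. Its right-hand side is the distance just shown to vanish, so the inequality forces $|P(y\mid x_1,z,w) - P(y\mid x_0,z,w)| \leq 0$, hence $P(y\mid x_1,z,w) = P(y\mid x_0,z,w)$ for all $z,w$. This is the crux of the matter: equality of the positive-outcome probability across the two values of $X$, uniformly in the covariates.

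Finally I would invoke binariness of $Y$ to upgrade this single-probability equality to full conditional independence. Since $P(Y=0\mid \cdot)=1-P(Y=1\mid \cdot)$, matching $P(Y=1\mid x_1,z,w)=P(Y=1\mid x_0,z,w)$ automatically matches the entire conditional law of $Y$ given $(X,Z,W)$, so that $P(Y\mid X,Z,W)$ does not depend on $X$, which is precisely $X \ci Y \mid Z, W$. I do not anticipate a genuine obstacle, as the result is essentially a one-line specialization of the Lipschitz bound; the one subtle point worth stating explicitly is the role of the binary assumption, since for a non-binary $Y$ the condition Eq.~\ref{eq:IF} only controls the probability of the designated outcome $y$ and would not by itself pin down the whole conditional distribution, so the conclusion as stated genuinely relies on $Y$ being two-valued.
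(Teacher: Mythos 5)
Your proof is correct and matches the paper's intended argument: the paper derives exactly this route via its Obs-DE discussion, noting that when $d$ ignores $X$ one has $d((x_1,z,w),(x_0,z,w))=0$, so instantiating Eq.~\ref{eq:IF} at $(x_1,z,w)$ versus $(x_0,z,w)$ forces $P(y\mid x_1,z,w)=P(y\mid x_0,z,w)$ for all $z,w$, which for binary $Y$ is precisely $X \ci Y \mid Z, W$. Your explicit remark on why binariness is needed (the Lipschitz bound only controls the designated outcome $y$) is a faithful reading of the same point the paper encodes by stating the result only for binary $Y$.
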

Finally, putting the above two propositions together implies that the variable $X$ is independent from all other observables in $V$, as shown next:
\begin{proposition}[OTC $\wedge$ IF $\implies X \ci V \setminus \{X\}$]
    Suppose that the metric $d$ is of the form $d((x, z, w), (x', z', w')) = \|z - z'\| + \|w - w'\|,$ 
    where $\|\cdot\|$ is any norm on $\mathbb{R}^d$. Suppose also that $\text{OTC}_{x_0, x_1}^d((Z, W)) = 0$ and the IF condition in Eq.~\ref{eq:IF} holds. Then we have that
    \begin{align}
        X \ci Z,W, Y.
    \end{align}
\end{proposition}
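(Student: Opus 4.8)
The plan is to assemble the claim directly from the two immediately preceding propositions, which already supply the two independence building blocks, and then glue them together by a single factorization of the joint distribution. Under the stated norm form of the metric, $d((x, z, w), (x', z', w')) = \|z - z'\| + \|w - w'\|$, the assumption $\text{OTC}_{x_0, x_1}^d((Z, W)) = 0$ yields $X \ci Z, W$ by the proposition ``OTC $\implies X \ci Z, W$''. Separately, the IF condition in Eq.~\ref{eq:IF} yields $X \ci Y \mid Z, W$ by the proposition ``IF $\implies X \ci Y \mid Z, W$''. These two facts are the only ingredients I would need; everything else is bookkeeping on the joint law.

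First I would write the joint by the chain rule and substitute the two independences into the two factors. Concretely,
\begin{align}
    P(x, z, w, y) = P(y \mid x, z, w)\, P(x, z, w).
\end{align}
Into the first factor I substitute $P(y \mid x, z, w) = P(y \mid z, w)$, which is exactly the content of $X \ci Y \mid Z, W$, and into the second I substitute $P(x, z, w) = P(x)\, P(z, w)$, which is the content of $X \ci Z, W$. Combining and recombining $P(y \mid z, w)\, P(z, w) = P(y, z, w)$ gives
\begin{align}
    P(x, z, w, y) = P(y \mid z, w)\, P(z, w)\, P(x) = P(x)\, P(y, z, w).
\end{align}
Since the joint factors as the product of the marginal of $X$ and the marginal of $(Y, Z, W)$, we conclude $X \ci Z, W, Y$. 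Equivalently, the same conclusion follows abstractly from the contraction property of conditional independence applied to $X \ci (Z, W)$ and $X \ci Y \mid (Z, W)$, but the explicit factorization above is self-contained and I would favor it.

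I do not expect a serious obstacle at this final step, since the two preparatory propositions carry essentially all the analytic weight -- in particular the optimal-transport argument linking a vanishing transport cost to equality of the conditional distributions of $(Z, W)$, and the argument that IF forces conditional independence of $X$ and a binary $Y$. The only point requiring minor care is to verify that the hypotheses of both preparatory propositions are genuinely in force: the norm form of $d$ is assumed verbatim, and the feature that $d$ does not depend on $X$ (needed for the IF $\implies$ conditional-independence step) is inherited from that same form, since $\|z - z'\| + \|w - w'\|$ carries no dependence on the $X$-coordinates. With both propositions applicable, the factorization is immediate and the statement follows.
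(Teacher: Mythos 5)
Your proof is correct and matches the paper's route: the paper explicitly frames this proposition as ``putting the above two propositions together,'' i.e., obtaining $X \ci Z,W$ from the vanishing optimal transport cost and $X \ci Y \mid Z,W$ from IF, then concluding $X \ci Z,W,Y$ by contraction, exactly as in your factorization. Your added check that the norm-form metric carries no dependence on the $X$-coordinates (so the IF-to-conditional-independence step genuinely applies) is a worthwhile point of care that the paper leaves implicit.
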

The proposition shows that if (i) the metric $d$ includes all variables in $Z, W$; (ii) the IF condition holds; (iii) the optimal transport distance is small, then the protected attribute $X$ is independent from all other endogenous variables in the system. As we will discuss later in Sec.~\ref{Tasks}, this can be a very strong requirement in practice, which requires completely removing the influence of $X$, and is not compatible with considerations about business necessity under the disparate impact doctrine.

\section{Fairness tasks} \label{Tasks}
The main goal of this section is to equip the reader with the tools for solving fairness problems in practice, building on the foundations introduced in previous sections. We classify fairness problems into three tasks, in increasing order of difficulty:
\begin{enumerate}[label=Task \arabic*.]
	\item \textbf{Bias detection and quantification:} the first and most basic task of fair ML. We may consider operating with a dataset $\mathcal{D}$ of past decisions, or in infinite samples with an observed distribution $P(V)$ over variables $V$. The task is to define a mapping $$M: \mathcal{P} \to \mathbb{R},$$
	where $\mathcal{P}$ is the set of possible distributions $P(V)$. $M$ is viewed as a \textit{fairness measure} and it is often constructed so that $M(P(V)) = 0$ would suggest the absence of some form of discrimination.
	\item \textbf{Fair prediction:} The task of fair prediction, usually, relies on a certain measure of fairness. The task is to learn a distribution $P^*(V)$ while maximizing utility $U(P(V))$ and satisfying $$ |M(P^*(V))| \leq \epsilon,$$ where $M$ is a measure of fairness as discussed in Task 1. Fair classification and fair regression problems fall into this category\footnote{Different categories of fair prediction methods exist, namely pre-processing, in-processing, and post-processing methods. These will be discussed separately in Sec.~\ref{fairprediction}.}.
	\item \textbf{Fair decision-making:} In fair decision-making, the well-being of certain groups over time is considered. Notions of affirmative actions also fall into this category. We might be interested in designing a policy $\pi$, which at every time step affects the observed distribution $P_t(V)$ (which now changes over time steps) so that we have
	$$ P_{t+1}(V) = \pi(P_t(V)),$$
	and we are, perhaps, interested in controlling how $M(P_t(V))$ changes with $t$.
\end{enumerate}
Note that these three tasks form a certain hierarchy, and are introduced in order of difficulty. Fair prediction often relies on a specific fairness measure; fair decision-making often relies on both a fairness measure and fair predictions. The first two tasks are discussed in Sec.~\ref{biasdetection} and Sec.~\ref{fairprediction}, respectively, while the last task (fair decision-making) is left for future work.
\subsection{Task 1: Bias Detection \& Quantification} \label{biasdetection}
In the context of Task 1, we distinguish two different, but closely related subtasks. These subtasks are referred to as bias detection and bias quantification. In bias detection, we are interested in providing a binary decision rule $\psi$ which determines whether discrimination is present or not. In bias quantification, we are interested in how strong the discrimination is, and therefore provide a real-valued number, instead of a binary decision. In what follows, we give the mathematical formulation of the two subtasks, together with an approach for how to solve them.   

\begin{definition}[Bias Detection under SFM]
Let $\Omega$ be a space of SCMs. Let $Q$ be a structural fairness criterion, $Q: \Omega \to \{0, 1\}$, determining whether a causal mechanism within the SCM $\mathcal{M} \in \Omega$ is active ($Q(\mathcal{M}) = 0$ if mechanism not active, $Q(\mathcal{M}) = 1$ if active). The task of bias detection is to test the hypothesis
\begin{align}
    H_0: Q(\mathcal{M}) = 0,
\end{align}
that is, constructing a mapping $\psi(\mathcal{G}_{\text{SFM}}, \mathcal{D})$ into $\{0, 1\}$, which provides a decision rule for testing $H_0$, based on the standard fairness model $\mathcal{G}_{\text{SFM}}$ and the data $\mathcal{D}$.
\end{definition}
In words, we are interested whether direct, indirect, or spurious discrimination exists (corresponding to $Q \in \strm$, see Def.~\ref{def:str-fair}).
The null hypothesis $H_0$ assumes that discrimination is not present, and the decision rule $\psi$ determines whether $H_0$ should be rejected based on the SFM and the available data. Notice, crucially, that $\psi$ is a function of $\mathcal{G}_{\text{SFM}}$ and $\mathcal{D}$. This stems from the fact that the SCM $\mathcal{M}$ is never available to the data scientist. Therefore, we cannot directly reason about $Q(\mathcal{M})$, but instead need to find an \textit{admissible} measure $\mu$ that satisfies
\begin{align}
    Q(\mathcal{M}) = 0 \implies \mu(\mathcal{M}) = 0,
\end{align}
where $\mu(\mathcal{M})$ can be computed in practice. 
Recall the result from Prop.~\ref{lem:tvnotadmissible} which shows that the TV measure is not admissible with respect to $\strm$ and therefore should not be used for bias detection when one is interested in direct, indirect, and spurious effects. Moreover, we note that solving the bias detection task depends on solving the $\fpcfa$, which we now restate in the form more suitable for Task 1:
\begin{definition}[FPCFA continued for Task 1] \label{def:fpcfa-task-1}
$[\Omega$, $Q$ as before$]$
Let the true, unobserved generative SCM $\mathcal{M} = \langle V, U, P(U), F \rangle$, and let $\mathcal{A}$ be a set of assumption and $P(v)$ be the observational distribution generated by it. 
Let $\Omega^{\mathcal{A}}$ the space of all SCMs compatible with $\mathcal{A}$. 
The Fundamental Problem of Causal Fairness Analysis is to find a collection of measures $\mu_1, \dots, \mu_k$
such that the following properties are satisfied: 
\begin{enumerate}[label=(\arabic*)]
	\item $\mu$ is decomposable w.r.t. $\mu_1, \dots, \mu_k$; 
	\item $\mu_1, \dots, \mu_k$ are admissible w.r.t. the structural fairness criteria $Q_1, Q_2, \dots, Q_k$. 
	\item $\mu_1, \dots, \mu_k$ are as powerful as possible.
	\item $\mu_1, \dots, \mu_k$ are identifiable from the observational distribution $P(v)$ and class $\Omega^{\mathcal{A}}$.
\end{enumerate}
The final step of FPCFA for Task 1 is
\begin{enumerate}[resume, label=(\arabic*)]
    \item estimate $\mu_1, \dots, \mu_k$ and their $(1-\alpha)$ confidence intervals from the observational data and the SFM projection of the causal diagram.
\end{enumerate}
\end{definition}
Upon solving $\fpcfa$ for Task 1, we obtain measures $\mu_i$ based on which the decision rule $\psi$ can be constructed. In particular, the decision rule $\psi$ will be constructed by computing the $(1-\alpha)$ confidence interval for $\mu_i$ using bootstrap. If the interval excludes $0$, the $H_0$ hypothesis is rejected. 

The derived measures $\mu_i$ obtained from solving $\fpcfa$ for Task 1 can also be used for the related task of bias quantification:

\begin{definition}[Bias Quantification under SFM]
Let $\Omega$ be a space of SCMs and let $(Q_i)_{i=1:3} = \strm$. The task of bias quantification is concerned with finding a mapping $\phi: \Omega \to \mathbb{R}^3$ where the $i$-th component $\phi_i$ is admissible with respect to $Q_i$. 
\end{definition}
In words, the amount of discrimination is summarized using a 3-dimensional statistic. Each component of the statistic corresponds to one of the direct, indirect, or spurious effects. The measures $\mu_i$ obtained from $\fpcfa$ can be used to solve the task of bias quantification, by setting
\begin{align}
    \phi(\mathcal{M}) = \big(\mu_{\text{DE}}(\mathcal{M}), \mu_{\text{IE}}(\mathcal{M}), \mu_{\text{SE}}(\mathcal{M})\big).
\end{align}
We can now discuss a specific proposal for the measures $\mu_i$.

\paragraph{Measures $\mu_i$ for Task 1.}
Following the $x$-specific solution of $\fpcfa$ from Thm.~\ref{thm:fpcfa-1st-sol2}, we use the following measures:
\begin{align}
    \mu_{\text{DE}} \text{ is given by }& x\text{-DE}_{x_0, x_1}(y \mid x_0) = P(y_{x_1, W_{x_0}} \mid x_0) - P(y_{x_0} \mid x_0) \\
    \mu_{\text{IE}} \text{ is given by }& x\text{-IE}_{x_1, x_0}(y \mid x_0) = P(y_{x_1, W_{x_0}} \mid x_0) - P(y_{x_1} \mid x_0) \\
    \mu_{\text{SE}} \text{ is given by }& x\text{-SE}_{x_1, x_0}(y) = P(y_{x_1} \mid x_0) - P(y_{x_1} \mid x_1).
\end{align}
Moreover, the solution also showed that the TV can be decomposed as:
\begin{align}
    \text{TV}_{x_0, x_1}(y) = \underbrace{x\text{-DE}_{x_0, x_1}(y \mid x_0)}_{\mu_{\text{DE}}} - \underbrace{x\text{-IE}_{x_1, x_0}(y \mid x_0)}_{\mu_{\text{IE}}} - \underbrace{x\text{-SE}_{x_1, x_0}(y \mid x_0)}_{\mu_{\text{SE}}}. 
\end{align}
In words, the TV equals the $x$-specific direct effect with a transition $x_0 \to x_1$, minus the $x$-specific indirect effect with the opposite transition $x_1 \to x_0$ and minus the $x$-specific spurious effect with the transition $x_1 \to x_0$. One critical point to note is that such a decomposition is not unique, since the TV can also be decomposed as:
\begin{align}
    \text{TV}_{x_0, x_1}(y) = -x\text{-DE}_{x_1, x_0}(y \mid x_0) + x\text{-IE}_{x_1, x_0}(y \mid x_0) - x\text{-SE}_{x_1, x_0}(y \mid x_0). 
\end{align}
To achieve symmetry and avoid picking a specific order, we propose using the average of the two decompositions. In particular, define the symmetric $x$-specific direct and indirect effects as:
\begin{definition}[Symmetric $x$-specific direct and indirect effect]
The symmetric $x$-specific direct and indirect effects are defined as:
\begin{align}
    x\text{-DE}^{\text{sym}}_{x}(y\mid x) = \frac{1}{2}\big(x\text{-DE}_{x_0, x_1}(y \mid x) - x\text{-DE}_{x_1, x_0}(y \mid x)\big) \\
    x\text{-IE}^{\text{sym}}_{x}(y\mid x) = \frac{1}{2}\big(x\text{-IE}_{x_0, x_1}(y \mid x) - x\text{-IE}_{x_1, x_0}(y \mid x)\big).
\end{align}
\end{definition}
Therefore, we propose to use $x\text{-DE}^{\text{sym}}_{x}(y \mid x_0)$ and $x\text{-IE}^{\text{sym}}_{x}(y \mid x_0)$ instead of $x\text{-DE}_{x_1, x_0}(y \mid x_0)$ and $x\text{-IE}_{x_1, x_0}(y \mid x_0)$ for Task 1. The benefit of these alternative measures is that no single transition $x_0 \to x_1$ has to be chosen for computing the direct/indirect effect, but both $x_0 \to x_1$ and $x_1 \to x_0$ transitions are considered, by taking the average of the two. Such an approach offers measures of direct and indirect effect which are symmetric with respect to the change in the protected attribute, unlike their counterparts that consider a single transition. 

\subsubsection{Legal Doctrines - A Formal Approach}
Equipped with specific measures that can be used to perform bias detection and quantification, we offer a formal approach for assessing the legal doctrines of disparate impact and treatment. Our operational approach is described in Algorithm~\ref{algo:cookbook}, and is one of the highlights of the manuscript. The algorithm takes the dataset $\mathcal{D}$, the SFM projection $\Pi_{\text{SFM}}(\mathcal{G})$ of the causal diagram, and the Business Necessity Set (BN-set) as an input. When using the SFM, the allowed BN-sets are $\emptyset, \{Z\}, \{W\}$, and $\{Z, W\}$\footnote{Handling more involved BN-sets is discussed in detail in Sec.~\ref{DIBN}.}. We next apply the Fairness Cookbook in practice.

\begin{algorithm}
  \caption{Fairness Cookbook for Task 1}
  \begin{algorithmic}[1]
    \Statex \textbullet~\textbf{Inputs:} Dataset $\mathcal{D}$, SFM projection $\Pi_{\text{SFM}}(\mathcal{G})$, Business Necessity Set BN-set.
    \State \textbf{Obtain the dataset $\mathcal{D}$.}
    \State \textbf{Determine the Standard Fairness Model projection $\Pi_{\text{SFM}}(\mathcal{G})$} of the causal diagram $\mathcal{G}$ corresponding to the SCM $\mathcal{M}$. Note that the full diagram $\mathcal{G}$ need not be specified for this.\newline
    \textit{Additionally: are there known bidirected edges between $X, Z, W$, and $Y$ groups?} If yes, go to Appendix \ref{appendix:IDEst} and consider the estimation in presence of bidirected edges. Otherwise continue to next step.
    \State \textbf{Consider the existence of Disparate Treatment:}
    \begin{itemize}
        \item compute the measure $x\text{-DE}^{\text{sym}}_{x}(y \mid x_0)$ and its 95\% confidence interval (for bias quantification, return this result and skip to next step)
        \item test the hypothesis
        \begin{align}
            H_0^{(x\text{-DE})}&: x\text{-DE}^{\text{sym}}_{x}(y \mid x_0) = 0.
        \end{align}
        \begin{itemize}
            \item if $H_0^{(x\text{-DE})}$ not rejected $\implies$ no evidence of disparate treatment
            \item if $H_0^{(x\text{-DE})}$ rejected $\implies$ evidence of disparate treatment
        \end{itemize}
        \item \textit{Additionally:} if no evidence of disparate treatment in overall population, for $Z=z$ test the hypothesis $H_0^{\text{($z$-DE)}}: z\text{-DE}^{\text{sym}}_{x}(y \mid z) = 0.$
    \end{itemize}
  \State \textbf{Consider the existence of Disparate Impact:}
    \begin{itemize}
    \item compute the measures $x\text{-IE}^{\text{sym}}_{x}(y \mid x_0)$ and $x\text{-SE}_{x_1, x_0}(y)$ and their 95\% confidence interval (for bias quantification, return this result and terminate the algorithm)
        \item if $W \notin$ BN-set, test the hypothesis
        \begin{align}
            H_0^{(x\text{-IE})}&: x\text{-IE}^{\text{sym}}_{x}(y \mid x_0) = 0.
        \end{align}
        \begin{itemize}
            \item if $H_0^{(x\text{-IE})}$ not rejected $\implies$ no evidence of disparate impact
            \item if $H_0^{(x\text{-IE})}$ rejected $\implies$ evidence of disparate impact
            \item \textit{Additionally:} if no evidence of disparate impact in overall population, for $Z=z$ test the hypothesis $ H_0^{\text{($z$-IE)}}: z\text{-IE}^{\text{sym}}_{x}(y \mid z) = 0$.
        \end{itemize}
        \item if $Z \notin$ BN-set, test the hypothesis
        \begin{align}
            H_0^{(x\text{-SE})}&: x\text{-SE}_{x_1, x_0}(y) = 0.
        \end{align}
        \begin{itemize}
            \item if $H_0^{(x\text{-SE})}$ not rejected $\implies$ no evidence of disparate impact
            \item if $H_0^{(x\text{-SE})}$ rejected $\implies$ evidence of disparate impact
        \end{itemize}
    \end{itemize}
  \end{algorithmic}
  \label{algo:cookbook}
\end{algorithm}

\subsubsection{Empirical Evaluation}
The practical usefulness of the Fairness Cookbook for Task 1 is demonstrated on two examples. Firstly, we apply the cookbook for the task of bias detection to the US Census 2018 dataset. After that, we apply the cookbook for the task of temporal bias quantification on a College Admissions dataset.

\begin{example}[US Government Census 2018] \label{ex:census}
	The United States Census of 2018 collected broad information about the US Government employees, including demographic information $Z$ ($Z_1$ for age, $Z_2$ for race, $Z_3$ for nationality), gender $X$ ($x_0$ female, $x_1$ male), marital and family status $M$, education information $L$, and work-related information $R$. In an initial analysis, a data scientist observed that male employees on average earn \$14000/year more than female employees, that is
	\begin{align}
		\ex[y \mid x_1] -  \ex[y \mid x_0] = \$14000.
	\end{align}
	Following the Fairness Cookbook, the data scientist does the following:\newline
	\textbf{SFM projection:}
	the SFM projection of the causal diagram $\mathcal{G}$ of this dataset is given by
	\begin{equation}
	    \Pi_{\text{SFM}}(\mathcal{G}) = \langle X = \lbrace X \rbrace,  Z = \lbrace Z_1, Z_2, Z_3 \rbrace, W = \lbrace M, L, R\rbrace, Y = \lbrace Y \rbrace\rangle.
	\end{equation}
	\textbf{Disparate treatment:} when considering disparate treatment, she computes $x\text{-DE}^{\text{sym}}_{x}(y \mid x_0)$ and its 95\% confidence interval to be 
	\begin{equation}
	    x\text{-DE}^{\text{sym}}_{x}(y \mid x_0) = \$9980 \pm \$1049.
	\end{equation}
	The hypothesis $H_0^{(x\text{-DE})}$ is thus rejected, providing evidence of disparate treatment of females.\newline
	\textbf{Disparate impact:} when considering disparate impact, she computes Ctf-SE, Ctf-IE and their respective 95\% confidence intervals:
	\begin{align}
	    x\text{-DE}^{\text{sym}}_{x}(y \mid x_0) &= \$5126 \pm \$778,\\
	    x\text{-SE}_{x_1, x_0}(y) &= -\$1675 \pm  \$955.
	\end{align}
	The data scientist decides that the differences in salary explained by the spurious correlation of gender with age, race, and nationality are not considered discriminatory. Therefore, she tests the hypothesis
	$$H_0^{(x\text{-IE})}: x\text{-IE}^{\text{sym}}_{x}(y \mid x_0) = 0,$$ which is rejected, indicating evidence of disparate treatment of female employees of the government. Measures computed in the example are visualized in Fig.~\ref{fig:census}. $\hfill \square$
\end{example}

\begin{figure}
	\centering
	\includegraphics[height=60mm,angle=0]{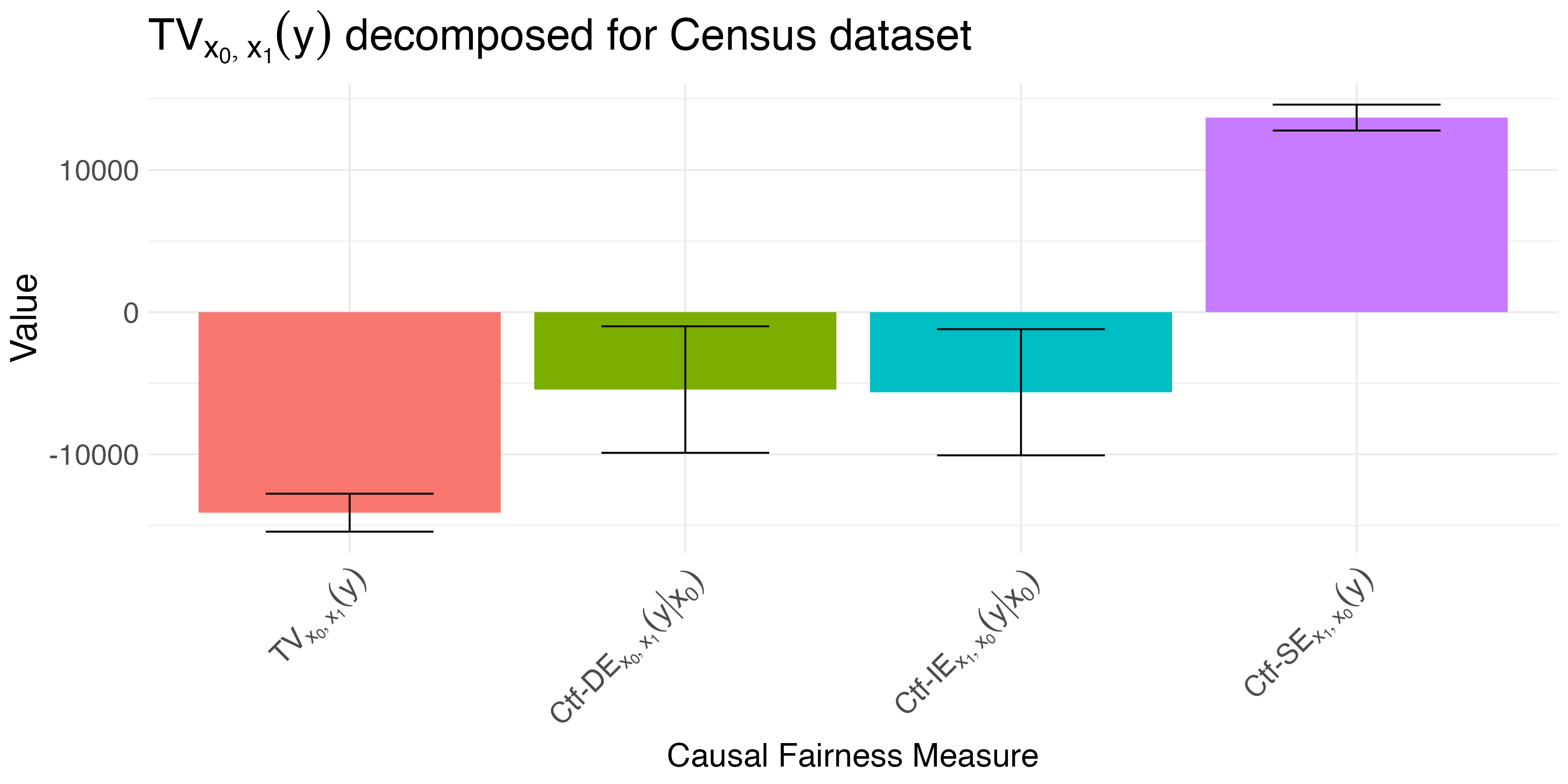}
	\caption{Measures obtained when applying the Fairness Cookbook for Task 1 on the Government Census 2018 dataset.}
	\label{fig:census}
\end{figure}

\begin{example}[Bias Quantification in College Admissions] \label{ex:col-adm-time}
A university in the United States admits applicants every year. The university is interested in quantifying discrimination in the admission process and track it over time, between 2010 and 2020. The data generating process changes over time, and can be described as follows. Let $X$ denote gender ($x_0$ female, $x_1$ male). Let $Z$ be the age at time of application ($Z = 0$ under 20 years, $Z = 1$ over 20 years) and let $W$ denote the department of application ($W = 0$ for arts\&humanities, $W = 1$ for sciences). Finally, let $Y$ denote the admission decision ($Y = 0$ rejection, $Y = 1$ acceptance). The application process changes over time and is given by

\begin{empheq}[left ={\mathcal{F}(t), P(U): \empheqlbrace}]{align}
		        X & \gets \mathbb{1} ( U_X < 0.5 + 0.1U_{XZ})\label{eq:fcb2-x}\\
		        Z & \gets \mathbb{1} (U_Z < 0.5 + \kappa(t) U_{XZ})  \\ 
		        W & \gets \mathbb{1} (U_W < 0.5 + \lambda(t) U_{XZ})  \\ 
		        Y & \gets \mathbb{1} (U_Y < 0.1 + \alpha(t)X + \beta(t)W + 0.1Z). \label{eq:fcb2-y} \\
		        \nonumber\\
 		            U_{XZ} &\in \{0,1\}, P(U_{XZ} = 1) = 0.5, \\
 		            U_X &, U_Z, U_W, U_Y \sim \text{Unif}[0, 1].       
\end{empheq}
The coefficients $\kappa(t), \lambda(t), \alpha(t), \beta(t)$ change every year, and obey the following dynamics:
\begin{align}
    \kappa(t+1) &= 0.9\kappa(t) \\
    \lambda(t+1) &= \lambda(t) (1 - \beta(t)) \\
    \beta(t+1) &= \beta(t) (1 - \lambda(t)) f(t), f(t) \sim \text{Unif}[0.8, 1.2] \\
    \alpha(t+1) &= 0.8\alpha(t).
\end{align}
The equations can be interpreted as follows. The coefficient $\kappa(t)$ decreases over time, meaning that the overall age gap between the groups decreases. The coefficient $\lambda(t)$ decreases compared to the previous year, by an amount dependent on $\beta(t)$. In words, the rate of application to arts\&humanities departments decreases if these departments have lower overall admission rates (i.e., students are less likely to apply to departments that are hard to get into). Further, $\alpha(t)$, which represents gender bias, decreases over time. Finally, $\beta(t)$ represent the increase in the probability of admission when applying to a science department. Its value depends on the value from the previous year, multiplied by $(1 - \lambda(t))$ and the random variable $f(t)$. Multiplication by the former factor describes the mechanism in which the benefit of applying to a science department decreases if a larger proportion of students apply for it. The latter factor describes a random variation over time which describes how well (in relative terms) the science departments are funded, and can be seen as depending on research and market dynamics in the sciences.

The head data scientist at the university decides to use the Fairness Cookbook for performing bias quantification. The SFM projection of the causal diagram $\mathcal{G}$ of the dataset is given by
	\begin{equation}
	    \Pi_{\text{SFM}}(\mathcal{G}) = \langle X = \lbrace X \rbrace,  Z = \lbrace Z \rbrace, W = \lbrace W\rbrace, Y = \lbrace Y \rbrace\rangle.
	\end{equation}
After that, the analyst estimates the quantities
\begin{align}
    x\text{-DE}^{\text{sym}}_{x}(y \mid x_0), x\text{-DE}^{\text{sym}}_{x}(y \mid x_0), \text{ and } x\text{-SE}_{x_1, x_0}(y) \;\;\; \forall t \in \{2010, \dots, 2020\}.
\end{align}
The temporal dynamics of the estimated measures of discrimination (together with the ground truth values obtained from the SCM $\mathcal{M}_t$) are shown graphically in Fig.~\ref{fig:col-adm-time}. $\hfill \square$    
\end{example}

\begin{figure}
	\centering
	\includegraphics[height=70mm,angle=0]{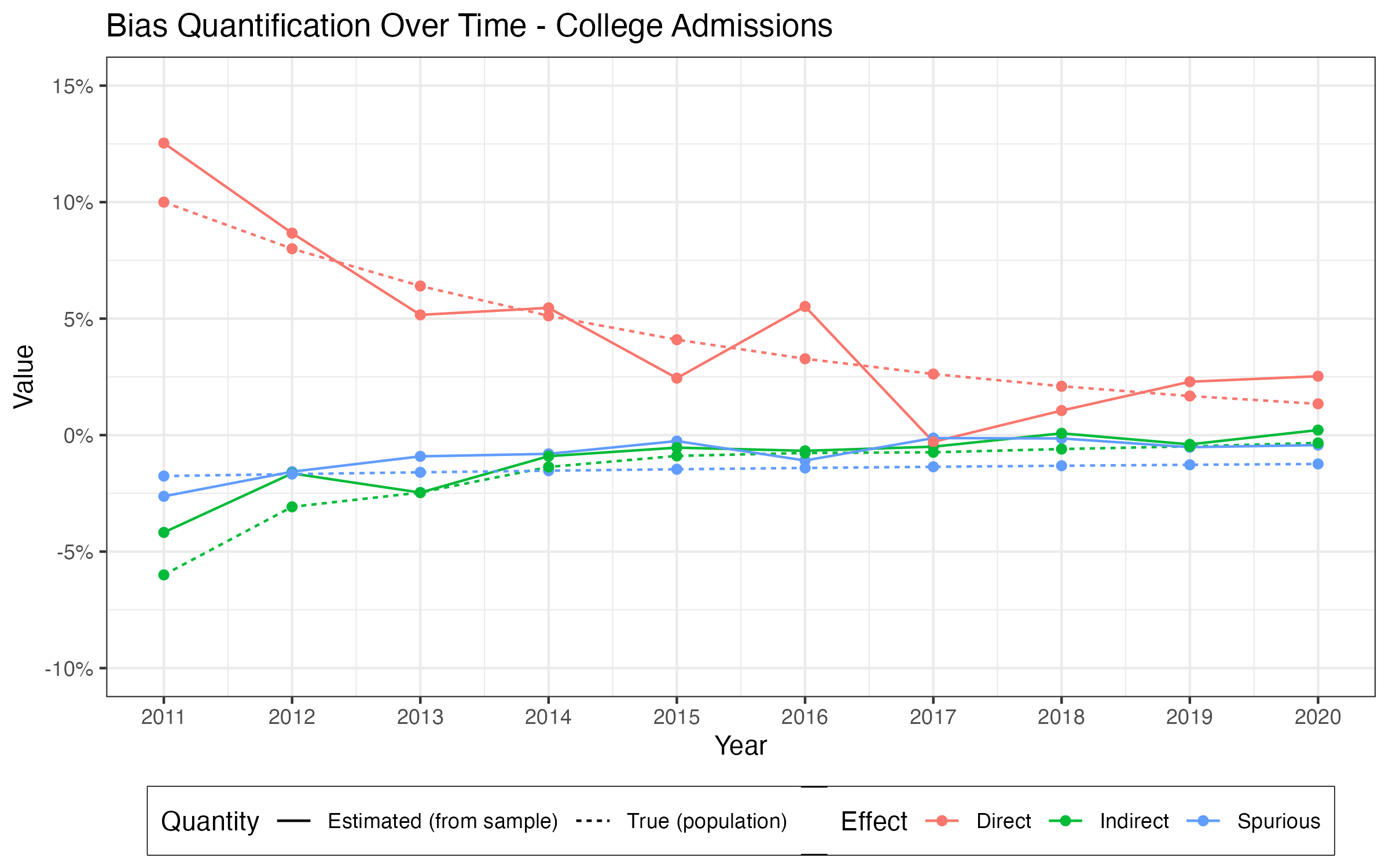}
	\caption{Tracking bias over time in the synthetic College Admissions dataset from Ex.~\ref{ex:col-adm-time}, between years 2010 and 2020. Both the estimated values from simulated samples (solid line) and the true population values (dashed lines) are shown, for direct (red), indirect (green), and spurious (blue) effects.}
	\label{fig:col-adm-time}
\end{figure}
\subsection{Task 2. Fair Prediction} \label{fairprediction}
We are now ready to discuss Task 2, which builds on similar foundations as the previous task. The section is organized as follows. 

\begin{enumerate}[label=(\roman*)]
    \item We first discuss previous literature on (fair) prediction; in particular, we discuss post-processing, in-processing, and pre-processing methods. 
    \item We formalize the $\fpcfa$ for Task 2, which is the problem that needs to be solved s.t. causally meaningful fair predictions can be obtained. 
    \item We introduce the Fair Prediction Theorem (Thm.~\ref{thm:fpt}) that explains  why  standard methods for fair prediction,  agnostic to the causal structure, fail in solving FPCFA.
    \item We develop two alternative formulations of the fair prediction optimization problem capable of remedying the shortcomings of methods found in the  literature.
\end{enumerate}

\subsubsection{Prediction}
In the context of prediction, one is generally interested in constructing a predictor $\widehat{Y}$ of $Y$, which is a function of $X, Z$ and $W$. More precisely, from a causal inference point of view, this process can be conceptualized as constructing an additional mechanism $\widehat{Y} \gets f_{\widehat{Y}}(x, z, w)$ in the SCM, which is under our control, as shown in Fig.~\ref{fig:sfmpred}. A typical choice of $f_{\widehat{Y}}$ in the context of regression is the estimate of $\ex[Y \mid X = x, Z=z, W = w]$, whereas for classification a rounded version of such an estimate is often considered.

\begin{figure}
	\begin{center}
		\begin{tikzpicture}
	 [>=stealth, rv/.style={thick}, rvc/.style={triangle, draw, thick, minimum size=7mm}, node distance=18mm]
	 \pgfsetarrows{latex-latex};
	 \begin{scope}
	    
	    \fill [gray!5, rotate = 45] (-1.8,-1.4) rectangle  (2.2,1.6);
	    \draw [ultra thin, rotate = 45] (-1.8,-1.4) rectangle (2.2,1.6) node[above right, yshift=-0.7cm, xshift=0.6cm] {Original diagram $\mathcal{G}$};
		\node[rv] (0) at (0,1.5) {$Z$};
		\node[rv] (1) at (-1.5,0) {$X$};
		\node[rv] (2) at (0,-1.5) {$W$};
		\node[rv] (3) at (1.6,0.5) {$Y$};
		
		\node[rv, blue] (4) at (1.6,-1.5) {$\widehat{Y}$};
		\draw[->] (1) -- (2);
		\draw[->] (0) -- (3);
		\path[->] (1) edge[bend left = 0] (3);
		\path[<->] (1) edge[bend left = 30, dashed] (0);
		\draw[->] (2) -- (3);
		\draw[->] (0) -- (2);
		\draw[->, blue] (0) -- (4);
		\draw[->, blue] (1) -- (4);
		\draw[->, blue] (2) -- (4);
	 \end{scope}
	 \end{tikzpicture}
	\end{center}
	\caption{Standard Fairness Model (SFM) extended with a blue node $\widehat{Y}$, for the task of (fair) prediction.}
	\label{fig:sfmpred}
\end{figure}
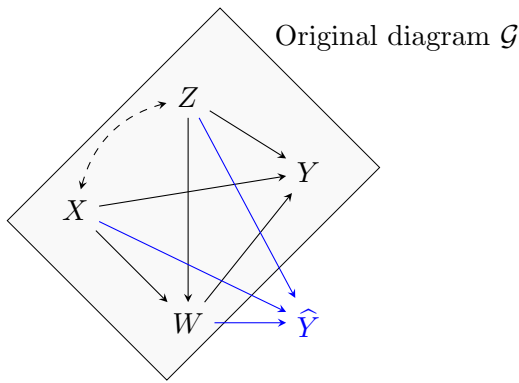

When constructing \textit{fair} predictions, one is additionally interested in ensuring that the constructed $\widehat{Y}$ also satisfies a fairness constraint. In the fairness literature, there are three broad categories for achieving this. These approaches are referred to as post-processing, in-processing, and pre-processing. We now cover them in order. Although there are many possible target measures of fairness which the predictor $\widehat{Y}$ could satisfy, in this manuscript we focus on methods that aim to achieve the condition TV$_{x_0, x_1}(\widehat{y}) = 0$.

\subsubsection{Post-processing} \label{Postproc}
Post-processing methods are the simplest and most easily described. First, one constructs a predictor $f_{\widehat{Y}}$ without applying fairness constraints. The output of $f_{\widehat{Y}}(x, z, w)$ is then taken and transformed using a transformation $T$, such that the constructed predictor 
\begin{align}
    \widehat{Y} \gets T(f_{\widehat{Y}}(x, z, w)),
\end{align}
satisfies the condition TV$_{x_0, x_1}(\widehat{y}) = 0$. We illustrate the post-processing methods with an example. The \textit{reject-option classification} of \cite{kamiran2012decision} starts by estimating the probabilities of belonging to the positive class, ${P}(y)$ (label the estimates with $f_{\widehat{Y}}(x, z, w)$). The classifier $\widehat{Y}$ is then constructed such that
\begin{align*}
	\widehat{Y}(x, z, w) = \mathbb{1}(f_{\widehat{Y}}(x, z, w) > \theta_x),
\end{align*}
where $\theta_{x_0}, \theta_{x_1}$ are group-specific thresholds chosen so that $\widehat{Y}$ satisfies TV$_{x_0, x_1}(\widehat{y}) = 0$, and also that $\theta_{x_0}, \theta_{x_1}$ are as close as possible to $0.5$ (to minimize the loss in accuracy). An important question we discuss shortly is whether the $\widehat{Y}$ constructed in such a way also behaves well from a causal perspective.

\subsubsection{In-processing} \label{Inproc}
In-processing methods take a different route. Instead of massaging unconstrained predictions, they attempt to incorporate a fairness constraint into the learning process. This in effect means that the mechanism $f_{\widehat{Y}}$ is no longer unconstrained, but is required to lie within a class of functions which satisfy the TV constraint. Broadly speaking, this is achieved by formulating an optimization problem of the form
\begin{alignat}{2}
&\text{arg min}_{f_{\widehat{Y}}}        &\qquad& L\big(Y, f_{\widehat{Y}}(x, w, z)\big) \label{eq:optimize}\\
&\text{subject to} &      & \text{TV}_{x_0, x_1}({\widehat{y}}) \leq \epsilon,\label{eq:group}\\
&                  &      & ||f_{\widehat{Y}}(x, w, z) - f_{\widehat{Y}}(x', w', z')|| \leq \tau((x, w, z), (x', w', z')).\label{eq:individual}
\end{alignat}
where $L$ is a suitable loss function\footnote{A common choice here is the loss $\ex\big[Y - f_{\widehat{Y}}(x, w, z)\big]^2$.} and $\tau$ is a metric on the covariates $V\setminus Y$. 
In the language of \cite{dwork2012fairness}, the TV minimization constraint in Line~\ref{eq:group} ensures \textit{group fairness}, whereas the constraint in Line~\ref{eq:individual} ensures \textit{covariate-specific fairness}\footnote{This notion corresponds to \textit{individual fairness} in the work of \cite{dwork2012fairness}. Causally speaking, this would be seen as a \textit{covariate-specific} fairness constraint, as the term individual is overloaded.}, meaning that predictions for individuals with similar covariates $x, z, w$ should be similar. 
Exactly formulating and efficiently solving problems as in Lines~\ref{eq:optimize}-\ref{eq:individual} constitutes an entire field of research. Due to space limitations, we do not go into full detail on how this can be achieved, but rather name a few well-known examples. \cite{zemel2013learning} use a clustering-based approach, whereas \cite{zhang2018mitigating} use an adversarial network approach. \cite{kamishima2012fairness} add a mutual information constraint to control the TV in parametric settings. \cite{agarwal2018reductions} formulate a saddle-point problem with moment-based constraints to achieve the desired minimization of the TV.
The mentioned methods differ in many practical details, but all attempt to satisfy the constraint $\text{TV}_{x_0, x_1}({\widehat{y}}) = 0$ by constraining the learner $f_{\widehat{Y}}$. Again, the question arises as to whether constructing the mechanism $f_{\widehat{Y}}$ so that TV equals 0 can provide guarantees about the causal behavior of the predictor.

\subsubsection{Pre-processing} \label{Preproc}
The last category of methods are the pre-processing methods. Here, the aim is to start from a distribution $P(x, w, z, y)$ and find its ``fair version", labeled $\widetilde{P}(x, w, z, y)$. Sometimes an exact mapping between $\tau: \mathcal{V} \to \mathcal{V}$ is constructed\footnote{$\mathcal{V}$ here denotes the domain in which the observables $V$ take values.}, and $\tau$ can even be stochastic. In that case, the transformed distribution $\widetilde{P}$ is defined as:
\begin{align}
    \widetilde{P}(v) = \ex_{\tau}\big[P\circ\tau(v)\big].
\end{align}
The fair pre-processing methods formulate an optimization problem that attempts to find the optimal $\widetilde{P}(V)$, where optimality is defined as minimizing some notion of distance to the original distribution $P(V)$. There are two different approaches here, that have different causal implications:
\begin{enumerate}[label=(\Alph*)]
    \item the protected attribute $X$ should be independent from the rest of observables $V \setminus X$ in the fair distribution $\widetilde{P}(V)$, written $X \ci V\setminus X$,
    \item the protected attribute $X$ should be independent from the the outcome $Y$ in the fair distribution $\widetilde{P}(V)$, written $X \ci Y$.
\end{enumerate}
The first approach requires that the effect of the attribute $X$ is entirely erased from the data. The second, less stringent option requires the independence $X \ci Y$ in $\widetilde{P}(V)$, which is equivalent with having TV$_{x_0, x_1}(\widehat{y}) = 0$. These two cases will be discussed separately in the remainder of the section. In Fig.~\ref{fig:fair-pred-summary} we give a schematic representation of the three categories of fair prediction methods, and in particular how they relate to a typical machine learning workflow. We next move onto formulating $\fpcfa$ for Task 2.

\begin{figure}
	\centering
	

		
		
		
		
		
		
		
		
		
		

		


    \includegraphics[width=\textwidth]{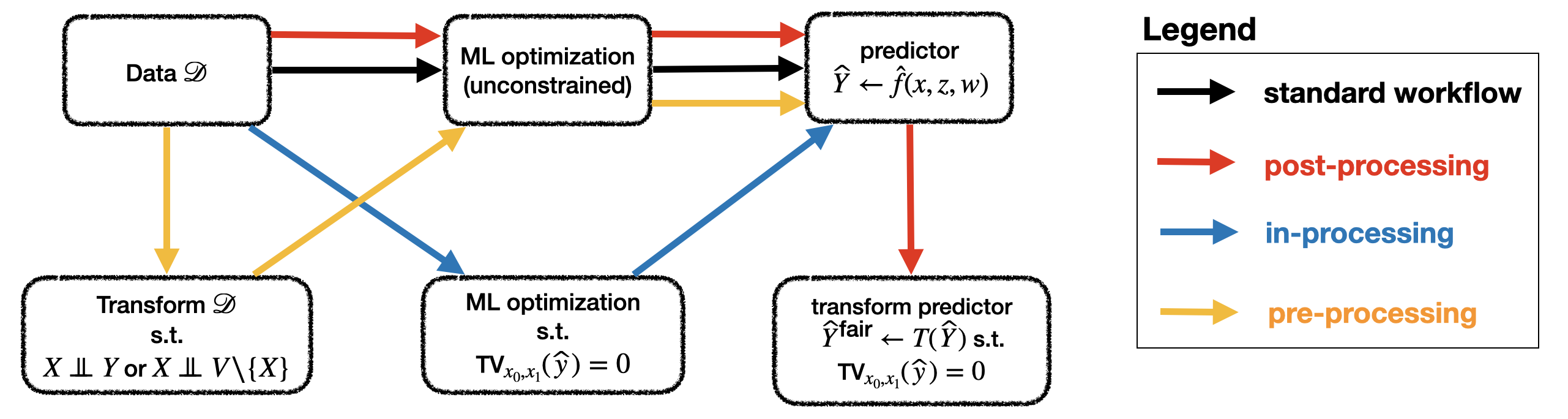}
	\caption{A schematic summary of the post-processing (red arrows), in-processing (blue), and pre-processing (yellow) fair prediction methods, compared to a typical ML workflow (black).}
	\label{fig:fair-pred-summary}
\end{figure}

\subsubsection{$\fpcfa$ for Task 2.}
Building on the previous definition of $\fpcfa$, we can now state its version in the context of fair prediction:
\begin{definition}[FPCFA continued for Task 2] \label{def:fpcfa-task-2}
$[\Omega$, $Q$ as before$]$
Let the true, unobserved generative SCM $\mathcal{M} = \langle V, U, P(U), F \rangle$, and let $\mathcal{A}$ be a set of assumption and $P(v)$ be the observational distribution generated by it. 
Let $\Omega^{\mathcal{A}}$ the space of all SCMs compatible with $\mathcal{A}$. 
The Fundamental Problem of Causal Fairness Analysis is to find a collection of measures $\mu_1, \dots, \mu_k$
such that the following properties are satisfied: 
\begin{enumerate}[label=(\arabic*)]
	\item $\mu$ is decomposable w.r.t. $\mu_1, \dots, \mu_k$; 
	\item $\mu_1, \dots, \mu_k$ are admissible w.r.t. the structural fairness criteria $Q_1, Q_2, \dots, Q_k$. 
	\item $\mu_1, \dots, \mu_k$ are as powerful as possible.
	\item $\mu_1, \dots, \mu_k$ are identifiable from the observational distribution $P(v)$ and class $\Omega^{\mathcal{A}}$.
\end{enumerate}
The final step of FPCFA for Task 2 is to construct an alternative SCM $\mathcal{M}'$ such that
\begin{enumerate}[resume, label=(\arabic*)]
    \item the measures $\mu_1, \dots, \mu_k$ satisfy that
    \begin{align}
        \mu_1(\mathcal{M}') = \dots = \mu_k(\mathcal{M}') = 0.
    \end{align}
\end{enumerate}
\end{definition}
To make matters explicit, in the formulation of $\fpcfa$ for Task 2, we want to ensure that the constructed predictor $\widehat{Y}$ satisfies
\begin{align} \label{eq:x-spec-cond}
    x\text{-DE}^{\text{sym}}_{x}(\widehat{y} \mid x_0) = x\text{-IE}^{\text{sym}}_{x}(\widehat{y} \mid x_0) = x\text{-SE}_{x_1, x_0}(\widehat{y}) = 0,
\end{align}
instead of just requiring that TV$_{x_0, x_1}(\widehat{y}) = 0$. The question we address formally next is whether the condition in Eq.~\ref{eq:x-spec-cond} can be achieved by methods that focus on minimizing TV. For this purpose, we prove the Fair Prediction Theorem that is formulated for in-processing methods in the linear case:

\begin{theorem}[Fair Prediction Theorem] \label{thm:fpt}
Let SFM$(n_Z, n_W)$ be the standard fairness model with $|Z| = n_Z$ and $|W| = n_W$. Let $E$ denote the set of edges of SFM$(n_Z, n_W)$. Further, let $\mathcal{S}^{\textit{linear}}_{n_Z, n_W}$ be the space of linear structural causal models (with the exception of $X$ variable which is Bernoulli) compatible with the SFM$(n_Z, n_W)$ and whose structural coefficients are drawn uniformly from $[-1, 1]^{|E|}$. An SCM $\mathcal{M} \in \mathcal{S}^{\textit{linear}}_{n_Z, n_W}$ is said to be $\epsilon$-TV-compliant if
\begin{alignat}{2}
    \label{eq:fpt-objective}
    \widehat{f}_{\text{fair}} = &\argmin_{f \text{linear}}        &\qquad& \ex[Y - f(X, Z, W)]^2\\
    &\text{subject to} &      & TV_{x_0, x_1}(f) = 0 \label{eq:fpt-constraint}
\end{alignat}
also satisfies
\begin{align}
     |x\text{-DE}_{x_0, x_1}(\widehat{f}_{\text{fair}} \mid x_0)| \leq \epsilon,  \\
     |x\text{-IE}_{x_0, x_1}(\widehat{f}_{\text{fair}} \mid x_0)| \leq \epsilon, \\
     |x\text{-SE}_{x_0, x_1}(\widehat{f}_{\text{fair}})| \leq \epsilon. 
\end{align}
Under the Lebesgue measure over $[-1, 1]^{|E|}$, the set of $0$-TV-compliant SCMs in SFM$(n_Z, n_W)$ has measure 0. Furthermore, for any $n_Z, n_W$, there exists an $\epsilon = \epsilon(n_Z, n_W)$ such that
\begin{align}
    \pr (\mathcal{M} \text{ is } \epsilon\text{-TV-compliant}) \leq \frac{1}{4}.
\end{align}
\end{theorem}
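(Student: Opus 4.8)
The plan is to reduce the entire statement to one fact: the direct effect of the TV-constrained optimal linear predictor is a non-trivial real-analytic function of the structural coefficients. First I would fix the linear parametrization of the SFM. Collecting the regressors into $R = (X, Z^\top, W^\top)^\top$, the structural assignment for the sink $Y$ reads $Y = \gamma^\top R + U_Y$ with $U_Y \ci R$ (no bidirected edge enters $Y$ under the SFM), so the unconstrained minimizer of $\ex[Y - f(X,Z,W)]^2$ over linear $f$ has coefficient vector exactly $\gamma = \Sigma_{RR}^{-1}\Sigma_{RY}$. Writing the constraint functional as $\text{TV}_{x_0,x_1}(f) = g^\top \theta$ with $g = \ex[R\mid x_1]-\ex[R\mid x_0]$ (whose $X$-entry is $x_1-x_0 \neq 0$, so $g\neq 0$), the problem in Eqs.~\ref{eq:fpt-objective}--\ref{eq:fpt-constraint} is a strictly convex quadratic with a single linear equality constraint, and I would solve it in closed form with one Lagrange multiplier:
$$ \widehat\theta = \gamma - \frac{g^\top\gamma}{g^\top\Sigma_{RR}^{-1}g}\,\Sigma_{RR}^{-1}g. $$

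Next I would evaluate the three target measures at $\widehat f_{\text{fair}}(R) = \widehat\theta^\top R$ (the intercept is irrelevant since it cancels in every contrast). Using the linear structural form $W = B_{WX}X + B_{WZ}Z + U_W$ and propagating the nested counterfactuals, each of $x$-DE, $x$-IE, $x$-SE of a linear predictor is an explicit linear functional of its coefficient vector; in particular the direct effect equals $x\text{-DE}_{x_0,x_1}(\widehat f_{\text{fair}}\mid x_0) = \widehat a\,(x_1-x_0)$, i.e. it is proportional to the predictor's coefficient $\widehat a$ on $X$ (the first entry of $\widehat\theta$). Moreover, since $\text{TV}_{x_0,x_1}(\widehat f_{\text{fair}})=0$ by construction, the decomposition of Thm.~\ref{thm:fpcfa-1st-sol2} forces the fixed linear relation $x\text{-DE} - x\text{-IE} - x\text{-SE} = \text{TV} = 0$ for the fair predictor, so any two of the three measures determine the third. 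Consequently the set of $0$-TV-compliant SCMs is contained in $\{ x\text{-DE}=0\} = \{\widehat a = 0\}$, and it suffices to control this single set.

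Then I would establish analyticity and non-triviality. The entries of $\Sigma_{RR}$, of $g$, of $\gamma$, and the scalar $g^\top\gamma$ are polynomials in the structural coefficients; since the private exogenous noises $U_Z, U_W$ keep $\Sigma_{RR}$ positive definite and $g\neq 0$ for every coefficient value, the denominators $\det\Sigma_{RR}$ and $g^\top\Sigma_{RR}^{-1}g$ never vanish. Hence $\widehat\theta$, and with it $\widehat a$, is a rational, in particular real-analytic, function on the open cube $(-1,1)^{|E|}$. To see it is not identically zero I would exhibit one instance: take a single mediator and switch off the confounding between $X$ and $Z$ so that $x$-SE vanishes and $\text{TV}=0$ reduces to $x\text{-DE}=x\text{-IE}$, then check by direct computation that the constrained optimizer has $\widehat a\neq 0$ at a chosen coefficient tuple, and perturb into the interior using analyticity. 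Since a real-analytic function that is not identically zero has zero set of Lebesgue measure zero, $\{\widehat a = 0\}$---and therefore the $0$-TV-compliant set---has measure $0$, which proves the first claim.

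Finally, for the probability bound I would use continuity of measure from above. Writing $D$ for the random variable $x\text{-DE}_{x_0,x_1}(\widehat f_{\text{fair}}\mid x_0)$ induced by drawing the coefficients uniformly on $[-1,1]^{|E|}$, the events $\{|D|\leq \epsilon\}$ decrease to $\{D = 0\}$ as $\epsilon\downarrow 0$, and $\pr(D=0)=0$ by the measure-zero result just established; hence $\pr(|D|\leq\epsilon)\to 0$. I would then choose $\epsilon = \epsilon(n_Z,n_W)$ small enough that $\pr(|D|\leq\epsilon)\leq \tfrac14$, which is legitimate because the law of $D$ depends only on the fixed dimensions $(n_Z,n_W)$. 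Since $\epsilon$-TV-compliance entails $|D|\leq\epsilon$, we obtain $\pr(\mathcal{M}\text{ is }\epsilon\text{-TV-compliant})\leq \pr(|D|\leq\epsilon)\leq\tfrac14$. The main obstacle I anticipate is the bookkeeping in the closed-form constrained solution together with verifying that the noise structure keeps $\Sigma_{RR}$ nonsingular and $\widehat a$ genuinely non-constant over the parameter cube; once those are in hand, the analyticity and measure-zero machinery is routine, and no quantitative small-ball estimate for $D$ is needed beyond its atomlessness at $0$.
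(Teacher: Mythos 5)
Your proposal is correct, and its first half coincides with the paper's own argument: the paper likewise writes the constrained problem as a strictly convex quadratic with the single linear constraint $c^\top \widetilde{a}_{VY}=0$, $c=\ex[V\mid x_1]-\ex[V\mid x_0]$, obtains the same closed-form solution $\widehat{a}_{VY} = a_{VY} - \frac{c^\top a_{VY}\,\Sigma^{-1}c}{c^\top\Sigma^{-1}c}$, observes that the $x$-DE constraint is just $c_1^\top\widehat{a}_{VY}=0$ (the coefficient on $X$), notes all quantities are polynomial/rational in the structural coefficients, and concludes the $0$-TV-compliant set sits inside an algebraic hypersurface of measure zero. Two remarks there: you are actually more careful than the paper on one point, since the paper implicitly assumes the defining polynomial is not identically zero, whereas you explicitly exhibit a witness tuple with $\widehat a \neq 0$ (e.g.\ $a_{UZ}=0$, $a_{XY}=0$, $a_{XW}a_{WY}\neq 0$, which is a legitimate interior point of the cube) and invoke analyticity — that step should be spelled out in either version. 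Where you genuinely diverge is the $\epsilon$-bound: the paper proceeds quantitatively, bounding the degree of the hypersurface ($\leq 2+4p+2p^4$ type bounds), applying Lojasiewicz's inequality to convert $|p(a)|\leq\epsilon$ into a distance bound from $\mathcal{H}^{DE}(0)$, and then Crofton's formula to bound the volume of the resulting tube, yielding an explicit (if non-computable, as the paper itself concedes the Lojasiewicz constants are dimension-dependent) expression for $\epsilon(n_Z,n_W)$. You instead use soft continuity of measure: $\{|D|\leq\epsilon\}\downarrow\{D=0\}$ as $\epsilon\downarrow 0$ and $\pr(D=0)=0$, so some $\epsilon(n_Z,n_W)$ with $\pr(|D|\leq\epsilon)\leq\frac14$ exists, and $\epsilon$-TV-compliance entails $|D|\leq\epsilon$. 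Since the theorem only asserts existence of such an $\epsilon$, your route is fully sufficient and eliminates the Lojasiewicz/Crofton machinery entirely; what it gives up is any handle, even in principle, on the size of $\epsilon$, which the paper's tube-volume bound at least formally provides (and which underlies the paper's empirical plot of $\pr(|D|\leq\epsilon)$ versus $\epsilon$).
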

The proof is given in Appendix \ref{appendix:fpt}. The theorem states that, for a random linear SCM, the optimal fair predictor with TV measure equal to 0 will almost never have the $x$-specific fairness measures equal to 0. The remarkable implication of the theorem is that minimizing the TV measure provides no guarantees that the direct, indirect and spurious effects are also minimized. That is, the resulting fair classifier might not be causally meaningful. 

The Fair Prediction Theorem considers the linear case for in-processing methods, but we conjecture that it has implications for more complex settings too (see also empirical evidence on real data below). For example, note that in the optimization problem in Lines~\ref{eq:fpt-objective}-\ref{eq:fpt-constraint} we are searching over linear functions $f$ of $X, Z,$ and $W$. For pre-processing methods that achieve $X \ci \widehat{Y}$, the space of allowed functions $f$ would be even more flexible, but the underlying optimization problem would remain similar. Even though formal results are difficult to provide, our observations raise a serious concern about whether any of the fair prediction methods in the literature provide predictors that are well-behaved in a causal sense. We now exemplify this point empirically, by applying several well-known fair prediction methods on the COMPAS dataset.

\subsubsection{Empirical evaluation of the Fair Prediction Theorem}
Consider the following example based on the COMPAS dataset.

\begin{example}[COMPAS continued for Fair Prediction]
A team of data scientists from ProPublica have shown that the COMPAS dataset from Broward County contains a strong racial bias against minorities. They are now interested in producing fair predictions $\widehat{Y}$ on the dataset, to replace the biased predictions. To this end they implement:
\begin{enumerate}[label=(\roman*)]
	\item \label{class:rf} \textbf{baseline:} a random forest classifier trained without any fairness constraints,
	\item \label{class:reweigh} \textbf{pre-processing:} a logistic regression classifier trained with the \textit{reweighing} method \citep{kamiran2012data},
	\item \label{class:reductions} \textbf{in-processing:} \textit{fair reductions} approach of \cite{agarwal2018reductions} with a logistic regression base classifier,
	\item \label{class:rejectoption} \textbf{post-processing:} a random forest classifier trained without fairness constraints, with \textit{reject-option} post-processing applied \citep{kamiran2012decision}.
\end{enumerate}
The fair prediction algorithms \ref{class:reweigh}, \ref{class:reductions}, and \ref{class:rejectoption} are intended to set the TV measure to $0$. After constructing these predictors, the team make use of the Fairness Cookbook in Algorithm~\ref{algo:cookbook}. Following the steps of the Fairness Cookbook, the team computes the TV measure, together with the appropriate measures of direct, indirect, and spurious discrimination.

The obtained decompositions of the TV measures are shown in Figures \ref{fig:empirical-fp}\ref{class:reweigh}, \ref{fig:empirical-fp}\ref{class:reductions}, and \ref{fig:empirical-fp}\ref{class:rejectoption}. The ProPublica team notes that all methods substantially reduce the $\text{TV}_{x_0,x_1}(\widehat{y})$, however, the measures of direct, indirect, and, spurious effects are not necessarily reduced to $0$, consistent with the Fair Prediction Theorem. $\hfill \square$
\end{example}
\begin{figure}
	\centering
	\includegraphics[height=120mm,angle=0]{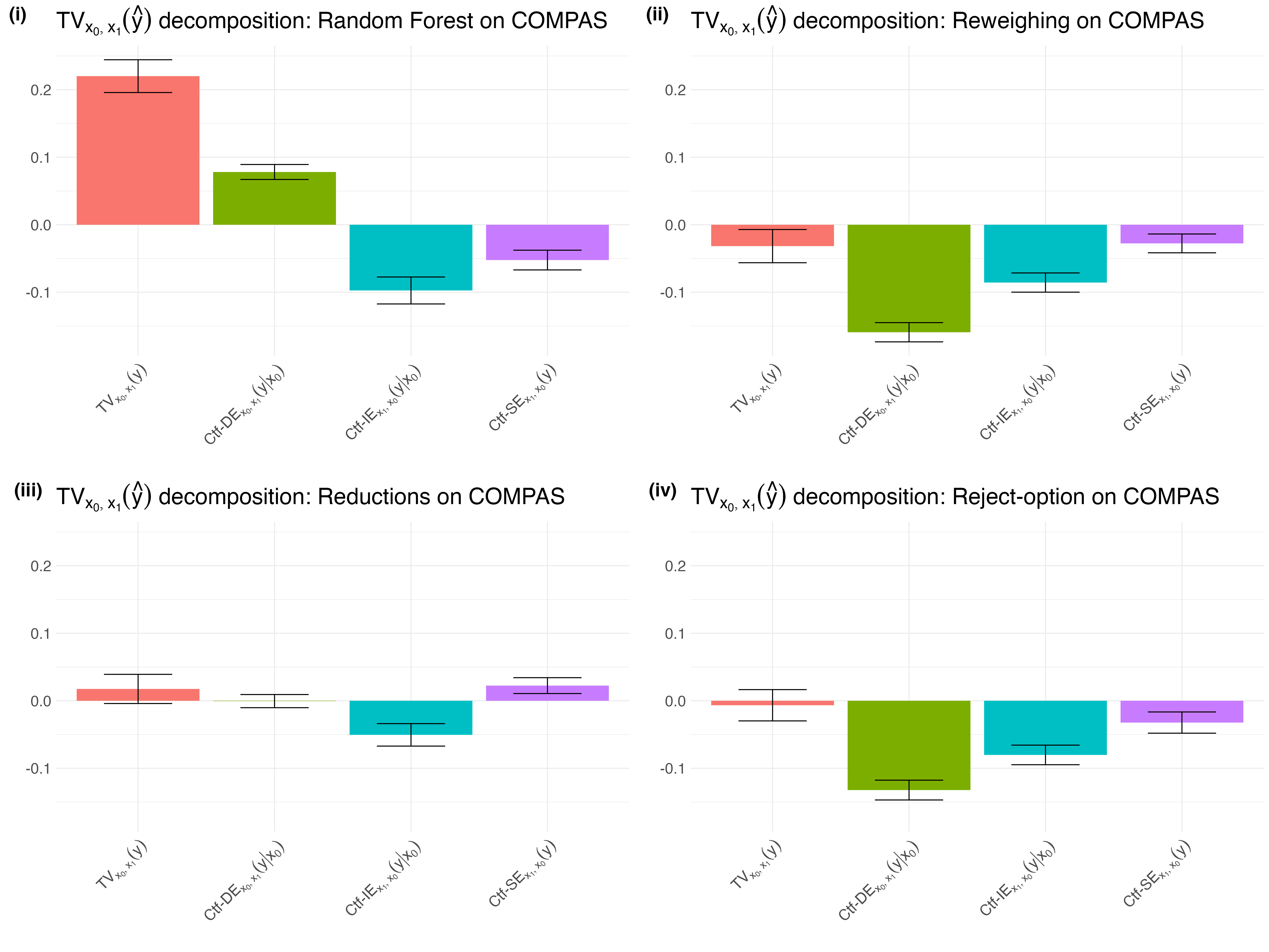}
	\caption{Causal Fairness Analysis applied to a standard prediction method (random forest, subfigure (i)) and three different fair prediction algorithms (reweighing \citep{kamiran2012data} in subfigure (ii), reductions \citep{agarwal2018reductions} in subfigure (iii), and reject-option \citep{kamiran2012decision} in subfigure (iv)). All of the fair predictions methods reduce the TV measure, but fail to nullify the causal measures of fairness. Confidence intervals of the measures, obtained using bootstrap, are shown as vertical bars.}
	\label{fig:empirical-fp}
\end{figure}

One class of fair prediction methods that are not addressed by the discussion above are the pre-processing methods that achieve the independence of the protected attribute with all the observables, namely $X \ci V \setminus \{X\}$. 

\subsubsection{Pre-processing methods that achieve $X \ci V \setminus \{X\}$}

A pre-processing method that achieves attribute independence ($X \ci V \setminus \{X\}$) is the proposal of \cite{dwork2012fairness}, in which in the pre-processing step the distribution 
$$ V \setminus \{X\} \mid X = x_0 \text{ is transported onto } V \setminus \{X\} \mid X = x_1. $$
However, as witnessed by the following example, such an approach does not guarantee that causal measures of fairness vanish:
\begin{example}[Failure of Optimal Transport methods] \label{ex:ot-fails}
A company is hiring prospective applicants for a new job position. Let $X$ denote gender ($x_0$ for male, $x_1$ for female), $W$ denotes a score on a test (taking to values, $\pm \epsilon$), $Y$ the outcome of the application ($Y = 0$ for no job offer, $Y = 1$ for job offer). The following SCM $\mathcal{M}$ describes the data generating process:
\begin{empheq}[left ={\mathcal{F}, P(U): \empheqlbrace}]{align}
                X &\gets U_X \\
                W &\gets \epsilon (2U_W-1) \\
                Y &\gets \begin{cases} U_Y \vee 1(W > 0) \text{ if } X = x_0 \\  U_Y \vee 1(W < 0) \text{ if } X = x_1  \end{cases} \\
		        \nonumber\\
 		            U_X &, U_Z, U_Y, U_Y \sim \text{Bernoulli}(0.5).       
\end{empheq}
After first part of the selection process, the company realized that they are achieving demographic parity
\begin{align}
    \text{TV}_{x_0, x_1}(y) = 0,
\end{align}
but they are uncertain whether they are causally fair, with respect to the direct and indirect effects. For this reason, they choose to optimally transport the conditional distributions, namely
\begin{align}
    W, Y \mid x_1 \overset{\tau}{\mapsto} W, Y \mid x_0,
\end{align}
where $\tau$ denotes the optimal transport map between the two distributions. By doing so, the company aims to make sure that both the direct and the indirect effect are equal to $0$.

The obtained optimal transport map $\tau$ can be described as follows:
\begin{align}
    \tau(w, y) = \begin{cases}
        (-\epsilon, 0) &\text{ if } (w,y) = (\epsilon, 0) \\
        (\epsilon, 1) &\text{ if } (w,y) = (\epsilon, 1) \\
        (\pm\epsilon, 1) \text{ w.p. } \frac{1}{2} &\text{ if } (w,y) = (-\epsilon, 1) 
    \end{cases}
\end{align}
The conditional distributions $W, Y \mid x_0$ and $W, Y \mid x_1$ are shown in Fig.~\ref{fig:ot-fails-map}, together with the optimal transport map. Denote by $\widetilde{W}, \widetilde{Y}$ the transformed values of $W, Y$. After the transformation, we compute the indirect effect, comparing the potential outcomes $\widetilde{Y}_{x_0}$ and $\widetilde{Y}_{x_0, \widetilde{W}_{x_1}}$, where the latter describes the potential outcome where $X = x_0$ along the direct pathway, and $W$ behaves like $\widetilde{W}$ under the intervention $X = x_1$. We compute the as follows:
\begin{align}
P(\widetilde{y}_{x_0, \widetilde{W}_{x_1}}) &= \sum_w P(\widetilde{y}_{x_0, w}, \widetilde{W}_{x_1} = w) \\
                                &= P(y_{x_0, \epsilon}, \widetilde{W}_{x_1} = \epsilon) + P(y_{x_0, -\epsilon}, \widetilde{W}_{x_1} = -\epsilon),
\end{align}
where the first term $P(y_{x_0, \epsilon}, \widetilde{W}_{x_1} = \epsilon)$ equals $\frac{1}{2}$, corresponding to $(U_W, U_Y) = (1, \{0, 1\})$ (weighted w.p. $1$) and $(U_W, U_Y) = (0, \{0, 1\})$ (weighted w.p. $\frac{1}{2}$). The second term, $P(y_{x_0, -\epsilon}, \widetilde{W}_{x_1} = -\epsilon)$ equals $\frac{1}{8}$, corresponding to $(U_W, U_Y) = (0, 1)$ (weighted w.p. $\frac{1}{2}$). Thus, we have that $P(y_{x_0, \widetilde{W}_{x_1}}) = \frac{5}{8}$. The term $P(\widetilde{y}_{x_0})= P(y_{x_0}) = P(y \mid x_0) = \frac{3}{4}$. Putting together, we have that
\begin{align}
    \text{NIE}_{x_0, x_1}(\widetilde{y}) = P(\widetilde{y}_{x_0, \widetilde{W}_{x_1}}) - P(\widetilde{y}_{x_0}) = \frac{5}{8} - \frac{3}{4} = -\frac{1}{8},
\end{align}
showing that the indirect effect after the optimal transport step is non-zero.
$\hfill \square$
\end{example}
\begin{figure}
     \centering
     \begin{subfigure}[b]{0.45\textwidth}
         \centering
         \begin{tikzpicture}
              [>=stealth, rv/.style={thick}, rvc/.style={triangle, draw, thick, minimum size=7mm}, node distance=18mm]
	 \pgfsetarrows{latex-latex};
	 \begin{scope}
		\node[rv] (1) at (-1.5,0) {$X$};
		\node[rv] (2) at (0,-1.5) {$W$};
		\node[rv] (3) at (1.5,0) {$Y$};
		\node[rv] (4) at (0, -2.5) {};
		\draw[->] (1) -- (3);
		\draw[->] (2) -- (3);
	 \end{scope}
         \end{tikzpicture}
         \caption{Causal graph corresponding to Example~\ref{ex:ot-fails}.}
         \label{fig:ot-fails-dag}
     \end{subfigure}
     \hfill
     \begin{subfigure}[b]{0.45\textwidth}
         \centering
         \includegraphics[width=\textwidth]{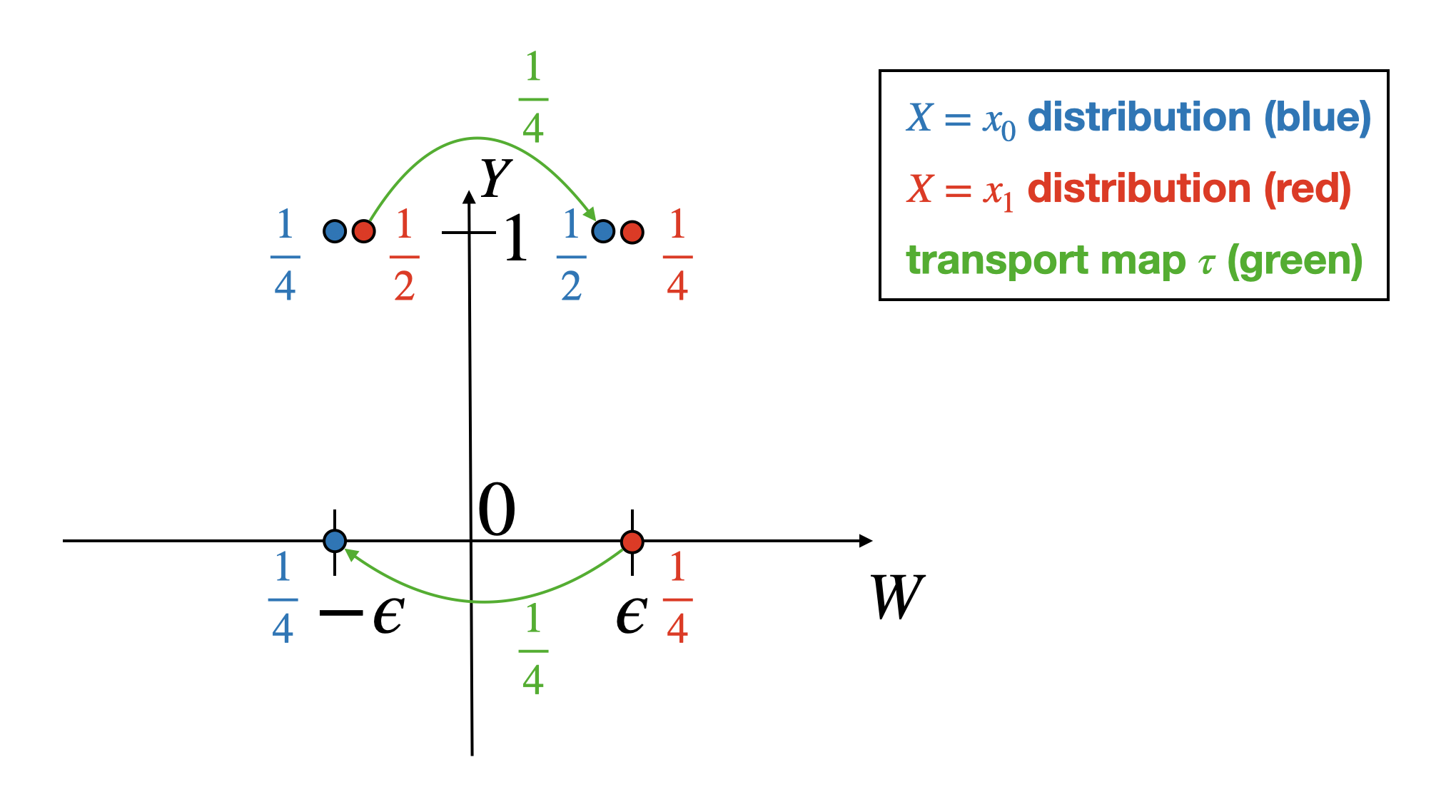}
         \caption{Conditional distributions $W, Y\mid x_0$ (blue) $W, Y\mid x_1$ (red), and the optimal transport map $\tau$ (green) from Example~\ref{ex:ot-fails}.}
         \label{fig:ot-fails-map}
     \end{subfigure}
\end{figure}
The reader might wonder about the underlying issue for why all of the discussed methods from previous literature fail. We next move onto explaining the shortcomings of these methods in more detail, and give two possible formulations that can help when constructing causally meaningful predictors.

\subsubsection{Towards the solution}
Our next goal is to remedy the pitfalls of the fair prediction methods discussed so far. In particular, we outline a strategy for ensuring that direct, indirect, and spurious effects vanish (or a subset of them, in case of business necessity). There are two conditions that are needed to guarantee causal behaviour of our predictor:
\begin{enumerate}[label = (\Roman*)]
    \item the causal structure of the SFM is preserved for the predictor $\widehat{Y}$,
    \item The identification expressions of $x$-DE, $x$-IE, and $x$-SE equal 0 in the new SCM $\mathcal{M}'$.
\end{enumerate}
We first show formally that the two conditions provide guarantees for the constructed classifier $\widehat{Y}$:

\begin{proposition}[Fair Predictor Causal Conditions] \label{prop:fair-pred-causal}
Let $\mathcal{M}$ be an SCM compatible with the SFM and let $\widehat{Y}$ be a predictor of the outcome $Y$ satisfying:
\begin{enumerate}[label=(\alph*)]
    \item $X, Z, W$ and $\widehat{Y}$ are compatible with the SFM,
    \item the identification expressions for $x\text{-DE}^{\text{sym}}_{x}(y \mid x_0), x\text{-DE}^{\text{sym}}_{x}(y \mid x_0), \text{ and } x\text{-SE}_{x_1, x_0}(y)$ equal $0$, namely
    \begin{align}
        \sum_{z, w} [P(y \mid x_1, z, w) - P(y\mid x_0, z, w)]P(w \mid x_1, z)P(z \mid x_0) &= 0 \\
        \sum_{z, w} [P(y \mid x_1, z, w) - P(y\mid x_0, z, w)]P(w \mid x_0, z)P(z \mid x_0) &= 0 \\ 
        \sum_{z, w} P(y\mid x_0, z, w)[P(w \mid x_1, z) - P(w \mid x_0, z)]P(z \mid x) &= 0 
                \end{align} 
        \begin{align}
        \sum_{z, w} P(y\mid x_1, z, w)[P(w \mid x_1, z) - P(w \mid x_0, z)]P(z \mid x) &= 0 \\
        \sum_z P(y \mid x_1, z)[P(z \mid x_1) - P(z \mid x_0)] &= 0.
    \end{align}
\end{enumerate}
Then, the predictor $\widehat{Y}$ satisfies:
\begin{align} \label{eq:x-spec-cond-rep}
    x\text{-DE}^{\text{sym}}_{x}(\widehat{y} \mid x_0) = x\text{-IE}^{\text{sym}}_{x}(\widehat{y} \mid x_0) = x\text{-SE}_{x_1, x_0}(\widehat{y}) = 0.
\end{align}
\end{proposition}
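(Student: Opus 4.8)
The strategy is to reduce the claim to a pure identification argument. By hypothesis (a), the augmented model obtained by adjoining the predictor node $\widehat{Y}$ to $\mathcal{M}$ again has a causal diagram compatible with the SFM, now with $\widehat{Y}$ occupying the role of the outcome (as in Fig.~\ref{fig:sfmpred}). Crucially, $\widehat{Y} \gets f_{\widehat{Y}}(x, z, w)$ is a mechanism of $X, Z, W$ alone and the mechanisms of $X, Z, W$ are left untouched, so the observational distribution $P(x, z, w, \widehat{y})$ is well-defined and shares the conditional ingredients $P(w \mid x, z)$ and $P(z \mid x)$ with the original model. Hence the identifiability results established earlier (Thm.~\ref{thm:id-graph} and Thm.~\ref{thm:sfm}) transfer verbatim from $Y$ to the outcome $\widehat{Y}$: each of the $x$-specific measures $x\text{-DE}$, $x\text{-IE}$, and $x\text{-SE}$ for $\widehat{Y}$ is identifiable and equals the corresponding functional of $P(x, z, w, \widehat{y})$, obtained by substituting $P(\widehat{y} \mid \cdot)$ for $P(y \mid \cdot)$ in the displays of condition (b).

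Second, I would write down these identification functionals explicitly. Under the SFM assumption (which encodes $\widehat{Y}_{x,z,w} \ci X, Z, W$ and the closing of all backdoor paths between $X$ and $\widehat{Y}$ by conditioning on $Z$), the directional $x$-specific effects admit adjustment-style expressions after the usual consistency-axiom simplifications inside each nested potential outcome. Matching these against condition (b), the two directions of the direct effect, $x\text{-DE}_{x_0,x_1}(\widehat{y}\mid x_0)$ and $x\text{-DE}_{x_1,x_0}(\widehat{y}\mid x_0)$, identify (up to sign) to the first two functionals; the two directions of the indirect effect to the third and fourth; and $x\text{-SE}_{x_1,x_0}(\widehat{y})$ to the fifth. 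The symmetric measures $x\text{-DE}^{\text{sym}}_x$ and $x\text{-IE}^{\text{sym}}_x$ are, by their very definition, one-half of the appropriate signed combinations of these directional functionals.

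Third, I would combine. Since hypothesis (b) asserts that each of the five identification functionals vanishes for the predictor, and since the symmetric direct and indirect effects are linear combinations of the directional functionals while the spurious effect is the fifth functional itself, all three target quantities vanish:
\begin{align}
    x\text{-DE}^{\text{sym}}_{x}(\widehat{y} \mid x_0) = x\text{-IE}^{\text{sym}}_{x}(\widehat{y} \mid x_0) = x\text{-SE}_{x_1, x_0}(\widehat{y}) = 0.
\end{align}

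I expect the main obstacle to lie in the first step: verifying rigorously that condition (a) is precisely what licenses the transfer of the SFM identification formulas from $Y$ to $\widehat{Y}$. One must confirm that adjoining $\widehat{Y}$ introduces no new bidirected edges, so that the no-unobserved-confounding structure of the SFM is preserved, and that $\widehat{Y}$ inherits the position of a pure sink driven by $X, Z, W$; only then does Thm.~\ref{thm:sfm} guarantee that the measures depend on nothing beyond $P(x, z, w, \widehat{y})$ and the SFM projection. Once this is secured, the bookkeeping of matching the five functionals to the signed directional measures is routine.
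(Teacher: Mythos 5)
Your proposal is correct and takes essentially the same route as the paper: condition (a) makes the augmented diagram with $\widehat{Y}$ as sink an instance of the SFM, so the identification expressions established in Thm.~\ref{thm:sfm} apply verbatim with $P(\widehat{y}\mid\cdot)$ in place of $P(y\mid\cdot)$, and the five functionals in (b) are precisely the identified forms (up to sign) of the two directional $x$-DE's, the two directional $x$-IE's, and $x$-SE, whose vanishing gives the symmetric measures by averaging. This is the same argument the paper relies on and replays explicitly in the proof of Thm.~\ref{thm:causalIF}.
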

Based on the proposition, we can offer two solutions that give causally meaningful fair predictors, which are discussed next.

\subsubsection{Causally aware in-processing} The first option for constructing fair predictions that obey causal constraints is via in-processing. The simple idea is to replace the constraint TV$_{x_0,x_1}(\widehat{y}) = 0$ with a number of constraints that represent the identification expressions of the important causal quantities that we wish to minimize. After that, we can use the fact that the causal structure of the SFM is inherited for a predictor $\widehat{Y}$ constructed with in-processing. The formal statement of the in-processing approach is given in the following theorem:
\begin{theorem}[In-processing with causal constraints] \label{thm:fp-inproc}
Let $\mathcal{M}$ be an SCM compatible with the SFM. Let $\widehat{Y}$ be constructed as the optimal solution to
\begin{alignat}{2}
    \label{eq:inproc-causal-objective}
    \widehat{Y} = &\argmin_{f}        &\qquad& \ex[Y - f(X, Z, W)]^2\\
    &\text{subject to} &      & x\text{-DE}^{\text{ID}}_{x_0, x_1}(\widehat{y} \mid x_0) = 0 \\
    &&      & x\text{-DE}^{\text{ID}}_{x_1, x_0}(\widehat{y}\mid x_0) = 0 \\
    &&      & x\text{-IE}^{\text{ID}}_{x_0, x_1}(\widehat{y} \mid x_0) = 0 \\
    &&      & x\text{-IE}^{\text{ID}}_{x_1, x_0}(\widehat{y} \mid x_0) = 0 \\
    &&      & x\text{-SE}^{\text{ID}}_{x_1, x_0}(\widehat{y}) = 0
\end{alignat}
where $x\text{-DE}^{\text{ID}}$, $x\text{-IE}^{\text{ID}}$, and $x\text{-SE}^{\text{ID}}$ represent the identification expressions of the corresponding measures (as shown in Prop.~\ref{prop:fair-pred-causal}). Then the predictor $\widehat{Y}$ satisfies 
\begin{align} \label{eq:x-spec-cond-rep-2}
    x\text{-DE}^{\text{sym}}_{x}(\widehat{y} \mid x_0) = x\text{-IE}^{\text{sym}}_{x}(\widehat{y} \mid x_0) = x\text{-SE}_{x_1, x_0}(\widehat{y}) = 0.
\end{align}
\end{theorem}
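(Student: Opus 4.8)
The plan is to reduce the theorem to Proposition~\ref{prop:fair-pred-causal}, which already guarantees that any predictor meeting two conditions --- (a) compatibility of $(X,Z,W,\widehat{Y})$ with the SFM and (b) vanishing of the five identification expressions --- has all three symmetric $x$-specific effects equal to zero. The work therefore splits into checking that the in-processing construction in Eqs.~\eqref{eq:inproc-causal-objective}ff.\ yields a predictor satisfying both (a) and (b), and then invoking the proposition. It is worth noting up front that the fairness conclusion attaches to feasibility, not optimality: every $f$ in the feasible set of the program produces a causally fair predictor, and the objective $\ex[Y-f(X,Z,W)]^2$ merely selects the most accurate among them.

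First I would verify condition (a). By construction the predictor is installed as a new mechanism $\widehat{Y}\gets f(X,Z,W)$ whose only functional arguments are $X,Z,W$ (together with, if $f$ is stochastic, a fresh exogenous variable drawn independently of all of $U$). Hence $\widehat{Y}$ occupies exactly the structural role of $Y$ in the SFM: it inherits the incoming edges $X\to\widehat{Y}$, $Z\to\widehat{Y}$, $W\to\widehat{Y}$, it has no outgoing edges, and --- because its randomness is under our control and independent of the rest of the system --- it participates in no bidirected edge. Consequently the tuple $(X,Z,W,\widehat{Y})$ is itself compatible with the SFM, establishing (a). This is the step I would spell out most carefully, since it is exactly what makes the formulas in Proposition~\ref{prop:fair-pred-causal}(b) valid \emph{identifiers} of the underlying counterfactual $x$-specific effects (via Thm.~\ref{thm:id-graph}/\ref{thm:sfm}) rather than mere formal sums over $P(\widehat{y}\mid x,z,w)$.

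Next I would verify condition (b). The key observation is that the five constraints imposed in the program are, by definition, the identification expressions $x\text{-DE}^{\text{ID}}$, $x\text{-IE}^{\text{ID}}$, $x\text{-SE}^{\text{ID}}$ (both transition directions for the direct and indirect effects) evaluated on the conditional law of $\widehat{Y}$, and these coincide term-for-term with the five displayed expressions of Proposition~\ref{prop:fair-pred-causal}(b) after the substitution $y\mapsto\widehat{y}$. The feasible set is nonempty --- the constant predictor $f\equiv\ex[Y]$ makes $P(\widehat{y}\mid x,z,w)$ independent of its conditioning arguments and hence zeroes each expression --- so the argmin is well defined and its minimizer satisfies all five constraints. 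A small bookkeeping step handles the symmetric measures: since $x\text{-DE}^{\text{sym}}$ is half the difference of the $x_0\to x_1$ and $x_1\to x_0$ transitions, and both identification expressions are constrained to zero, the symmetric direct (resp.\ indirect) measure vanishes as a linear combination of two zeros.

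With both hypotheses verified, the conclusion in Eq.~\eqref{eq:x-spec-cond-rep-2} follows immediately from Proposition~\ref{prop:fair-pred-causal}. The main obstacle is not computational but structural: one must argue rigorously that the learned mechanism genuinely respects the SFM's missing-edge constraints (identical parent set, no spurious bidirected edge to $X$), because it is precisely SFM-compatibility that promotes ``the identification expression equals zero'' into ``the true counterfactual measure equals zero.'' Once that is in place, identifiability does the rest of the work automatically.
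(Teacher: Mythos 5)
Your proposal is correct and takes essentially the same route as the paper, which establishes Thm.~\ref{thm:fp-inproc} precisely by observing that the in-processing construction makes $(X,Z,W,\widehat{Y})$ inherit the SFM structure (condition (a) of Prop.~\ref{prop:fair-pred-causal}) and that the program's constraints are exactly the identification expressions of Prop.~\ref{prop:fair-pred-causal}(b) with $y\mapsto\widehat{y}$, so the conclusion follows from that proposition. Your additional touches --- feasibility of the program via the constant predictor, independence of any randomization in $f$ from the exogenous variables, and the explicit reduction of the symmetric measures to the two zeroed transitions --- are sound elaborations of the same argument rather than a different approach.
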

The following remark shows that the result of the theorem holds even more broadly than just for the standard fairness model:
\begin{remark}[Robustness of in-processing with causal constraints]
Thm.~\ref{thm:fp-inproc} is stated for an SCM that is compatible with the SFM. However, such an assumption can be relaxed. In particular, the result of the theorem remains true even if the bidirected edges $X \bidir Y$, $Z \bidir Y$, and $W \bidir Y$ are present in the model.
\end{remark}

\subsubsection{Causally aware pre-processing}
After discussing a suitable in-processing approach, we can offer an approach based on pre-processing, inspired by the optimal transport approach of \cite{dwork2012fairness}:

\begin{definition}[Causal Individual Fairness (Causal IF)]
Let $\mathcal{M}$ be an SCM compatible with the SFM. Let the business necessity set be denoted as BN-set, taking values
\begin{align}
    \text{BN-set} \in \big\{ \emptyset, \{Z\}, \{W\}, \{Z, W\}  \big\}.
\end{align}
The Causal Individual Fairness algorithm performs sequential optimal transport of the distributions of $Z, W,$ and $Y$ (in this fixed topological ordering) conditional on the values of the parental set. The procedure is described formally in Algorithm~\ref{algo:causal-if}. 
\end{definition}
\begin{algorithm}[t]
    \caption{Causal Individual Fairness (Causal IF)} \label{algo:causal-if}
    \begin{algorithmic}
        \Statex \textbullet~\textbf{Inputs:} Dataset $\mathcal{D}$, SFM projection $\Pi_{\text{SFM}}(\mathcal{G})$, Business Necessity Set BN-set.
        \For{$V' \in \{ Z, W, Y \}$}
\If{$V' \notin $ BN-set}
    \State transport $V' \mid x_0, \pa(V')$ onto $V' \mid x_1, \tau^{\pa(V')}(\pa(V'))$
    \State let $\tau^{V'}$ denote the transport map 
\ElsIf{$V' \in $ BN-set}
    \State transport $V' \mid x, \pa(V')$ onto $V' \mid x, \tau^{\pa(V')}(\pa(V'))$ for $x \in \{x_0, x_1\}$
    \State let $\tau^{V'}$ denote the transport map
\EndIf
\EndFor
\end{algorithmic}
\end{algorithm}
To be even more precise, causal IF starts by optimally transporting
$$ Z \mid X = x_0 \text{ onto }  Z \mid X = x_1,$$
unless $Z$ is in the business necessity set. Let $\tau^Z$ denote the optimal transport map. Then, in the next step, the distribution of $W$ is transported, namely, $$ W \mid X = x_0, Z = z \text{ onto }  Z \mid X = x_1, Z = \tau^Z(z) \;\; \forall z.$$ In the final step the distribution of $Y$ is transported $$ Y \mid X = x_0, Z = z, W = w \text{ onto }  Z \mid X = x_1, Z = \tau^Z(z), W = \tau^W(w) \;\; \forall z, w.$$

\begin{theorem}[Soundness Causal Individual Fairness] \label{thm:causalIF}
Let $\mathcal{M}$ be an SCM compatible with the SFM. Let $\tau^Y$ be the optimal transport map obtained when applying Causal IF. Define an additional mechanism of the SCM $\mathcal{M}$ such that
\begin{align}
    \widetilde{Y} \gets \tau^Y(Y; X, Z, W).
\end{align}
For the transformed outcome $\widetilde{Y}$, we can then claim:
\begin{align} \label{eq:x-spec-cond-causal-if}
\text{if } Z \notin \text{BN-set} &\implies x\text{-SE}_{x_1, x_0}(\widetilde{y}) = 0. \\
\text{if } W \notin \text{BN-set} &\implies x\text{-IE}^{\text{sym}}_{x}(\widetilde{y} \mid x_0) =  0.
\end{align}
Furthermore, the transformed outcome $\widetilde{Y}$ also satisfies
\begin{align}
    x\text{-DE}^{\text{sym}}_{x}(\widetilde{y} \mid x_0) = 0.
\end{align}
\end{theorem}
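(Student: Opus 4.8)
The plan is to reduce the three claimed identities to the vanishing of the identification expressions collected in Prop.~\ref{prop:fair-pred-causal}, and then to verify each of those expressions vanishes \emph{termwise} using the defining pushforward property of the optimal transport maps produced by Causal IF (Algorithm~\ref{algo:causal-if}). Concretely, I would first record that the transported triple $(\widetilde{Z}, \widetilde{W}, \widetilde{Y})$ obtained by sequentially applying $\tau^Z, \tau^W, \tau^Y$ inherits the edge structure of the SFM: each transported variable is a (possibly stochastic) function of the original variable, the attribute $X$, and the already-transported parents, so $\widetilde{Y}$ still listens only to $X$ and the covariate groups $\widetilde{Z}, \widetilde{W}$, with no new bidirected edges into $W$ or $Y$. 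This is what licenses the use of the identification formulas of Prop.~\ref{prop:fair-pred-causal} (equivalently Thm.~\ref{thm:sfm}) with $\widetilde{Y}$ in the role of the outcome and $\widetilde{Z}, \widetilde{W}$ in the role of the covariates.

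The heart of the argument is the following three distributional identities, which I would prove by induction along the topological order $Z \prec W \prec Y$. Because $\tau^{V'}$ is by construction the map transporting $P(V' \mid x_0, \pa(V'))$ onto $P(V' \mid x_1, \cdot)$ evaluated at the already-transported parents, its pushforward equalizes the $x_0$- and $x_1$-conditionals of the transported variable; since the $x_1$ group is left unchanged this yields
\begin{align}
  Z \notin \text{BN-set} &\implies P(\widetilde{z} \mid x_0) = P(\widetilde{z} \mid x_1), \\
  W \notin \text{BN-set} &\implies P(\widetilde{w} \mid x_0, \widetilde{z}) = P(\widetilde{w} \mid x_1, \widetilde{z}), \\
  \text{always} &: \;\; P(\widetilde{y} \mid x_0, \widetilde{z}, \widetilde{w}) = P(\widetilde{y} \mid x_1, \widetilde{z}, \widetilde{w}).
\end{align}
The last identity holds \emph{unconditionally on the BN-set}, since $Y$ is never a business-necessity variable and is therefore always transported to match the $x_1$ outcome mechanism at the transported parent values; the change-of-variables needed to pass from original to transported parents is exactly supplied by the earlier steps of the induction.

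With these identities in hand, I would substitute into the five identification expressions of Prop.~\ref{prop:fair-pred-causal} written for $\widetilde{Y}$. The two direct-effect expressions contain the factor $P(\widetilde{y}\mid x_1,\widetilde{z},\widetilde{w}) - P(\widetilde{y}\mid x_0,\widetilde{z},\widetilde{w})$, which vanishes termwise by the third identity, so $x\text{-DE}^{\text{sym}}_{x}(\widetilde{y}\mid x_0)=0$ always. The two indirect-effect expressions contain the factor $P(\widetilde{w}\mid x_1,\widetilde{z}) - P(\widetilde{w}\mid x_0,\widetilde{z})$, which vanishes termwise by the second identity whenever $W \notin \text{BN-set}$, giving $x\text{-IE}^{\text{sym}}_{x}(\widetilde{y}\mid x_0)=0$. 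The spurious expression contains $P(\widetilde{z}\mid x_1) - P(\widetilde{z}\mid x_0)$, which vanishes by the first identity whenever $Z \notin \text{BN-set}$, giving $x\text{-SE}_{x_1,x_0}(\widetilde{y})=0$. Invoking Prop.~\ref{prop:fair-pred-causal} then converts the vanishing identification expressions into the vanishing of the measures themselves.

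The main obstacle I anticipate is the careful bookkeeping of the \emph{sequential} (nested) transport in establishing the three conditional identities, rather than any single cancellation. Each map $\tau^{V'}$ transports the child's conditional at the \emph{original} parent value onto the target conditional at the \emph{transported} parent value, so the inductive step requires a change-of-variables that simultaneously moves the parent coordinate $(z,w) \mapsto (\tau^Z(z), \tau^W(w))$ and the conditioning event; one must check that these compositions are consistent and that the resulting object is still a legitimate conditional distribution. A secondary, more conceptual point to nail down is the SFM-compatibility of the transformed system: one must verify that reading the transport maps as new structural mechanisms does not introduce confounding between $X$ and $\widetilde{W}$ or $\widetilde{Y}$ that would break the identification formulas, and that the within-group transport used for business-necessity variables indeed leaves their permitted contribution to the effect intact. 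The remark following the theorem about robustness to bidirected edges $X \bidir Y$, $Z \bidir Y$, $W \bidir Y$ suggests these identification formulas are stable, which I would exploit.
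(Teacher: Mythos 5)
Your proposal follows essentially the same route as the paper's proof: the paper likewise realizes the transport maps as new mechanisms of an extended SCM (implicit maps $\pi^Z, \pi^W, \pi^Y$ on the exogenous variables, defined as solutions to constrained optimization problems), observes that marginalizing the exogenous variables leaves the SFM structure intact over $(X, \widetilde{Z}, \widetilde{W}, \widetilde{Y})$, and then kills the SFM identification expressions termwise via exactly your three conditional-distribution identities, which hold by construction because each transport step carries the distribution-matching constraint evaluated at the already-transported parent values --- precisely the sequential bookkeeping you anticipated as the main obstacle. The only cosmetic differences are that the paper proves the case BN-set $= \emptyset$ and declares the other business-necessity sets analogous, whereas you state the identities conditionally on BN-set membership, and your closing appeal to the bidirected-edge robustness remark is unnecessary (that remark concerns the in-processing result, Thm.~\ref{thm:fp-inproc}, not Causal IF).
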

The full proof of the theorem can be found in Appendix~\ref{appendix:causalIF}. After showing that the Causal IF procedure provides certain guarantees for the causal measures of fairness, we go back to Ex.~\ref{ex:ot-fails} to see exactly why the method of joint optimal transport fails to produce a causally meaningful predictor: 
\begin{remark}[Why joint Optimal Transport fails]
The first term in the indirect effect in Ex.~\ref{ex:ot-fails} after the joint transport map was applied was expanded as:
\begin{align}
    P(\widetilde{y}_{x_0, \widetilde{W}_{x_1}}) &= \sum_w P(\widetilde{y}_{x_0, w}, \widetilde{W}_{x_1} = w)
\end{align}
Typically, whenever such an effect is identifiable, one would expect the independence relation 
\begin{align}
    \widetilde{Y}_{x_0, w} \ci \widetilde{W}_{x_1}
\end{align}
to hold. However, the joint optimal transport map $\tau(w, y)$, which determines the value of $\widetilde{W}_{x_1}$, also depends on the exogenous variable $U_Y$. For this reason, $\widetilde{W}_{x_1}$ is also a function of $U_Y$, but so is $\widetilde{Y}_{x_0, w}$. In this example, the joint optimal transport step introduced spurious shared information between exogenous variables $U_W$ and $U_Y$, which resulted in
\begin{align}
    P(\widetilde{y}_{x_0, w}, \widetilde{W}_{x_1} = w) \neq P(\widetilde{y}_{x_0, w})P(\widetilde{W}_{x_1} = w),
\end{align}
which disables us from providing guarantees that the indirect effect vanishes. The Causal IF transport method, on the other hand, circumvents this problem and guarantees that $\widetilde{Y}_{x_0, w} \ci \widetilde{W}_{x_1}$.
\end{remark}

\section{Disparate Impact and Business Necessity} \label{DIBN}
In this section, we generalize the analysis introduced earlier, including the Fairness Cookbook (Alg.~\ref{algo:cookbook}), to consider more refined settings described by an arbitrary causal diagram. The main motivation for doing so comes from the observation that when analyzing disparate impact, quantities such as Ctf-DE$_{x_0,x_1}(y\mid x_0)$, Ctf-IE$_{x_0,x_1}(y\mid x_0)$, and Ctf-SE$_{x_0,x_1}(y)$ are insufficient to account for certain \textit{business necessity} requirements. For concreteness, consider the following example. 

\begin{example}[COMPAS continued]
The courts at Broward County, Florida, were using machine learning to predict whether individuals released on parole are at high risk of re-offending within 2 years. The algorithm is based on the demographic information $Z$ ($Z_1$ for gender, $Z_2$ for age), race $X$ ($x_0$ denoting White, $x_1$ Non-White), juvenile offense counts $J$, prior offense count $P$, and degree of charge $D$.

A causal analysis using the Fairness Cookbook revealed that:
\begin{align}
  \text{Ctf-IE}_{x_1, x_0}(y\mid x_1) &= -5.3\% \pm 0.4\%,\\
  \text{Ctf-SE}_{x_1, x_0}(y) &= -4.3\% \pm 0.9\%,
\end{align}
potentially indicating presence of disparate impact. Based on this information, a legal team of ProPublica filed a lawsuit to the district court, claiming discrimination w.r.t. the Non-White subpopulation based on the doctrine of disparate impact. After the court hearing, the judge rules that using the attributes age ($Z_1$), prior count ($P$), and charge degree ($D$) is not discriminatory, but using the attributes juvenile count ($J$) and gender ($Z_2$) is discriminatory. Data scientists at ProPublica need to consider how to proceed in the light of this new requirement for discounting the allowable attributes in the quantiative analysis. 
\end{example}
The difficulty in this example is that the quantity Ctf-SE$_{x_1, x_0}(y)$ measures the spurious discrimination between the attribute $X$ and outcome $Y$ as generated by \textit{both confounders} $Z_1$ and $Z_2$. Since using the confounder $Z_1$ is not considered discriminatory, but using the confounder $Z_2$ is,  the quantity Ctf-SE$_{x_1, x_0}(y)$ needs to be refined such that the spurious variations based on the different confounders are disentangled. A similar challenge is presented while computing the Ctf-IE$_{x_0, x_1}(y\mid x_0)$ measure. In fact, a more refined analysis is needed to disentangle the indirect and spurious variations that comes from the confounding set $Z$ or mediating set $W$ such that they can be explained separately. 

\subsection{Refining spurious discrimination}

\subsubsection{Markovian case}

\subsubsection{Semi-Markovian models}

\subsubsection{Identification of set-specific spurious effects}



\subsection{Refining indirect effects}
\subsubsection{Identification of set-specific indirect effects}

\subsection{Extended Fairness Cookbook}
\subsection{Extended Fairness Map}

\section{Conclusions} 
Modern automated decision-making systems based on AI are fueled by data, which encodes many complex historical processes and past, potentially discriminatory practices. Such data, imprinted with undesired biases, cannot by itself be used and expected to produce fair systems, regardless of the level of statistical sophistication of the methods used or the amount of data available. In light of this limitation in which more data or clever methods are not the solution, the AI designer is left to search for a new notion of what a fair reality should look like. By and large, the literature on fair machine learning attempts to address this question by formulating (and then optimizing) statistical notions about how fairness should be measured. Still, as many of the examples in this manuscript demonstrated, statistical notions fall short of providing a satisfactory answer for what a fair reality should entail. Using decision systems that arise when only considering statistical notions of fairness may be causally meaningless, or even have unintended and possibly catastrophic  consequences. 

We combined in this manuscript two ingredients to address this challenge, (i) the language of causality and (ii) legal doctrines of discrimination so as to provide a sound basis for imagining what a fair reality should look like and represent society’s norms and expectations. This formalization of the fairness problem will allow the communication between the key stakeholders involved in developing such systems in practice, including computer scientists, statisticians, and data scientists on the one hand and social, legal, and ethical experts on the other. A key observation is that mapping social, ethical, and legal norms onto statistical measures is a challenging task. A formulation we propose explicitly in the form of the Fundamental Problem of Causal Fairness Analysis is to map such social norms onto the underlying causal mechanisms and causal measures of fairness associated with these particular mechanisms. We believe such an approach can help data scientists to be more transparent when measuring discrimination and can also help social scientists to ground their principles and ideas in a formal mathematical language that is amenable to implementation. 

The final important distinction introduced in this manuscript is between the different fairness tasks, namely (i) bias detection and quantification, (ii) fair prediction, and (iii) fair decision-making. The first task helps us to understand how much (and if any) bias exists in our data. The task of fair prediction allows us to correct for (parts or entirety) of this bias and envisage a more fair world in which such bias is removed. We leave as future work the precise formulation of the third task based on the principles developed here. Achieving fairness in the real world requires interventions, such as affirmative action. However, such interventions can have various complex consequences and implications, and interfacing the principles introduced in this manuscript with key ideas in economics and econometrics is an essential next step in designing fair systems.

\section{Acknowledgements} 
We thank Kai-Zhan Lee for the feedback and help in improving this manuscript. 
This work was done in part while Drago Plecko was visiting the Causal AI lab at Columbia University. 
This research was supported in part by the NSF, ONR, AFOSR, DoE, Amazon, JP Morgan, and The Alfred P. Sloan Foundation. 

\newpage
\appendix

\section{Proofs}
In this section, we provide the proofs of the main theorems presented in the manuscript. In particular, we give the proof for the Fairness Map theorem (Thm.~\ref{thm:map}), soundness of the SFM theorem (Thm.~\ref{thm:sfm}), the Fair Prediction theorem (Thm.~\ref{thm:fpt}), and the soundness of the Causal Individual Fairness procedure (Thm.~\ref{thm:causalIF}).

\subsection{Proof of Thm.~\ref{thm:map}} \label{appendix:map}
The proof of Thm.~\ref{thm:map} is organized as follows. The full list of implications contained in the Fairness Map in Fig.~\ref{fig:map} is given in in Tab.~\ref{table:map-imp}. For each implication, we indicate the lemma in which the implication proof is given.

\begin{table}
    \renewcommand{\arraystretch}{1.3}
	\begin{tabular}{|M{0.5cm}|M{10cm}|M{1.5cm}|} 
        \hline
		& Implication & Proof \\[1mm] 	\hline
		\multirow{4}{*}{\rotatebox{90}{\small power\;\;}} & Unit-TE $\implies$ $v'$-TE $\implies$ $z$-TE $\implies$ ETT $\implies$ TE & Lem.~\ref{lem:causal-power} \\[1mm] 	\cline{2-3}
		&Unit-DE $\implies$ $v'$-DE $\implies$ $z$-DE $\implies$ Ctf-DE $\implies$ NDE & Lem.~\ref{lem:causal-power} \\[1mm] 	\cline{2-3}
		&Unit-IE $\implies$ $v'$-IE $\implies$ $z$-IE $\implies$ Ctf-IE $\implies$ NIE & Lem.~\ref{lem:causal-power} \\[1mm] 	\cline{2-3}
		&Exp-SE $\iff$ Ctf-SE & Lem.~\ref{lem:spurious-power}\\[1mm] \hline
		\multirow{3}{*}{\rotatebox{90}{\small admissibility\,}}& S-SE $\implies$ Ctf-SE & Lem.~\ref{lem:spurious-adm}\\[1mm] 	\cline{2-3}
		&S-DE $\implies$ unit-DE & Lem.~\ref{lem:direct-adm}\\[1mm] \cline{2-3}
		&S-IE $\implies$ unit-IE & Lem.~\ref{lem:indirect-adm}\\[1mm] \hline
		\multirow{7}{*}{\rotatebox{90}{\small decomposability\;\;\;\;}} &NDE $\wedge$ NIE $\implies$ TE & Lem.~\ref{lem:ext-med} \\[1mm] 	\cline{2-3}
		&Ctf-DE  $\wedge$ Ctf-IE $\implies$ ETT & Lem.~\ref{lem:ext-med} \\[1mm] 	\cline{2-3}
		&$z$-DE $\wedge$ $z$-IE $\implies$ $z$-TE & Lem.~\ref{lem:ext-med} \\[1mm] 	\cline{2-3}
		&$v'$-DE  $\wedge$ $v'$-IE $\implies$ $v'$-TE & Lem.~\ref{lem:ext-med} \\[1mm] 	\cline{2-3}
		&unit-DE $\wedge$ unit-IE $\implies$ unit-TE & Lem.~\ref{lem:ext-med} \\[1mm] 	\cline{2-3}
		&TE $\wedge$ Exp-SE $\implies$ TV & Lem.~\ref{lem:tv-decompositions} \\[1mm] 	\cline{2-3}
		&ETT $\wedge$ Ctf-SE $\implies$ TV & Lem.~\ref{lem:tv-decompositions} \\[1mm] 	\hline
	\end{tabular}
	\caption{List of implications in the Fairness Map in Fig.~\ref{fig:map}.}
	\label{table:map-imp}
\end{table}

\begin{lemma}[Power relations of causal effects] \label{lem:causal-power}
    The total, direct, and indirect effects admit the following relations of power:
    \begin{align}
       \text{unit-TE}_{x_0, x_1}(y(u)) = 0 \;\forall u &\implies v'\text{-TE}_{x_0, x_1}(y\mid v') = 0 \;\forall v' \implies z\text{-TE}_{x_0, x_1}(y \mid z) = 0 \;\forall z\\ &\implies \text{ETT}_{x_0, x_1}(y \mid x) = 0 \;\forall x \implies \text{TE}_{x_0, x_1}(y) = 0, \\
       \text{unit-DE}_{x_0, x_1}(y(u)) = 0 \;\forall u &\implies v'\text{-DE}_{x_0, x_1}(y\mid v') = 0 \;\forall v' \implies z\text{-DE}_{x_0, x_1}(y\mid z) = 0 \;\forall z\\ &\implies \text{Ctf-DE}_{x_0, x_1}(y \mid x) = 0 \;\forall x \implies \text{NDE}_{x_0, x_1}(y) = 0, \\
       \text{unit-IE}_{x_0, x_1}(y(u)) = 0 \;\forall u &\implies v'\text{-IE}_{x_0, x_1}(y\mid v') = 0 \;\forall v' \implies z\text{-IE}_{x_0, x_1}(y\mid z) = 0 \;\forall z\\ &\implies \text{Ctf-IE}_{x_0, x_1}(y \mid x) = 0 \;\forall x \implies \text{NIE}_{x_0, x_1}(y) = 0.
    \end{align}
\end{lemma}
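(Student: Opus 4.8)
The plan is to exploit the fact that, within each of the three families (total, direct, indirect), every member admits the \emph{same} unit-level integrand in its structural basis expansion and differs only in the posterior weight $P(u\mid E)$. Writing $g^{\mathrm{TE}}(u)=y_{x_1}(u)-y_{x_0}(u)$, $g^{\mathrm{DE}}(u)=y_{x_1,W_{x_0}}(u)-y_{x_0}(u)$, and $g^{\mathrm{IE}}(u)=y_{x_1,W_{x_0}}(u)-y_{x_1}(u)$, the expansions in Eqs.~\ref{eq:tedown}, \ref{eq:ndedown}, \ref{eq:niedown}, Eqs.~\ref{eq:ettdown}--\ref{eq:ctfiedown}, Eqs.~\ref{eq:zte}--\ref{eq:zie}, together with the $v'$- and $u$-level ones, all take the common form $\mu_E=\sum_u g(u)\,P(u\mid E)=\ex[g(U)\mid E]$ for a fixed $g\in\{g^{\mathrm{TE}},g^{\mathrm{DE}},g^{\mathrm{IE}}\}$. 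It therefore suffices to prove one abstract statement: for any unit-level function $g$, the conditional means $\ex[g(U)\mid E]$ satisfy the claimed chain as $E$ ranges over $\{\delta_u,\,v',\,z,\,x,\,\emptyset\}$. The three displayed lines of the lemma then follow by instantiating $g$.

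I would establish the chain link by link. The links $\delta_u\to v'\to z$ and $x\to\emptyset$ are pure applications of the tower property (law of iterated expectations) combined with the nesting of conditioning information. Since $V'$ contains $Z$ (Thm.~\ref{thm:fpcfa-1st-sol4}), $Z$ is $V'$-measurable, so $\ex[g(U)\mid Z=z]=\ex\big[\ex[g(U)\mid V']\,\big|\,Z=z\big]=\sum_{v'}v'\text{-measure}\cdot P(v'\mid z)$ is a convex combination of the $v'$-specific measures; hence their simultaneous vanishing forces the $z$-specific measure to vanish. The same argument with $V'$ replaced by the full unit $U$ gives the step $u\text{-measure}=0\ \forall u\Rightarrow v'\text{-measure}=0\ \forall v'$ (indeed $u$-measure equals $g(u)$, so its vanishing makes every weighted average vanish), and with $Z$ replaced by $X$ it gives $\mathrm{ETT}=0\ \forall x\Rightarrow\mathrm{TE}=0$ through $\ex[g(U)]=\sum_x\ex[g(U)\mid X=x]P(x)$.

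The single genuinely non-trivial link is $z\text{-measure}\to\mathrm{ETT}$, because $X$ is \emph{not} a coarsening of $Z$: they are distinct variables, and in general $X$ can carry information about $g(U)$ beyond that contained in $Z$ (a random-SCM construction violating the SFM shows $\ex[g(U)\mid Z]\equiv 0$ need not imply $\ex[g(U)\mid X]\equiv 0$). Here I would invoke the assumptions encoded in the SFM, exactly as in Thm.~\ref{thm:fpcfa-1st-sol3}: under the SFM each potential response appearing in $g$ is conditionally independent of $X$ given $Z$, so $\ex[Y_{c}\mid X=x,Z=z]=\ex[Y_{c}\mid Z=z]$ for every counterfactual clause $c$ involved. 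This yields the key equality $\ex[g(U)\mid X,Z]=\ex[g(U)\mid Z]$ (equivalently, the $(x,z)$-specific measure coincides with the $z$-specific one), and the tower property then gives
\begin{align}
    \mathrm{ETT}(y\mid x)=\ex[g(U)\mid X=x]=\sum_z\ex[g(U)\mid Z=z]\,P(z\mid x),
\end{align}
a weighted sum of the $z$-specific measures, so $z\text{-measure}=0\ \forall z$ forces $\mathrm{ETT}=0\ \forall x$.

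Finally, I would note that the three families are handled uniformly by substituting $g^{\mathrm{TE}}$, $g^{\mathrm{DE}}$, $g^{\mathrm{IE}}$ into the abstract chain; the only family-specific input is the conditional ignorability under the SFM, which is standard for the factual terms $y_{x_0},y_{x_1}$ and must hold in the stronger, cross-world form $Y_{x_1,W_{x_0}}\ci X\mid Z$ for the nested term driving the direct and indirect integrands. I expect the justification of precisely this cross-world conditional independence from the SFM structure to be the main obstacle; every other step reduces to the law of iterated expectations.
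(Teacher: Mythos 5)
Your proposal is correct and takes essentially the same route as the paper's proof: the identical link-by-link chain (unit $\to v' \to z \to x \to$ general) driven by the law of total probability, with the SFM ignorability $Y_x \ci X \mid Z$ doing the work at the $z$-to-$x$ step, where your abstraction into a single unit-level integrand $g$ merely packages uniformly the three families that the paper dispatches with ``the proof for direct and indirect is analogous.'' Your explicit flagging of the cross-world independence $Y_{x_1, W_{x_0}} \ci X \mid Z$ is precisely what that ``analogous'' leaves implicit, and it is licensed by the counterfactual-graph derivation in the proof of Thm.~\ref{thm:sfm} (which gives $Y_{x_1,w} \ci (W_{x_0}, X) \mid Z$), so no gap remains.
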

\begin{proof}
    We prove the statement for total effects (the proof for direct and indirect is analogous). We start by showing that ETT is more powerful than TE.
	\begin{align*}
		\text{TE}_{x_0,x_1}(y) =& P(y_{x_1}) - P(y_{x_0}) \\
		=& \sum_{x} \big[P(y_{x_1} \mid x) - P(y_{x_0} \mid x)\big] P(x)\\
		=& \sum_{x} \text{ETT}_{x_0, x_1}(y \mid x) P(x).
	\end{align*}
	Therefore, if $\text{ETT}_{x_0, x_1}(y \mid x) = 0 \;\forall x$ then $\text{TE}_{x_0, x_1}(y) = 0$. Next, we can write
	\begin{align*}
	    \text{ETT}_{x_0, x_1}(y \mid x) &= P(y_{x_1} \mid x) - P(y_{x_0} \mid x) \\
	                                    &= \sum_z \big[P(y_{x_1} \mid x, z) - P(y_{x_0} \mid x, z)\big]P(z \mid x)\\
	                                    &= \sum_z \big[P(y_{x_1} \mid z) - P(y_{x_0} \mid z)\big]P(z \mid x) && Y_x \ci X \mid Z \text{ in SFM}\\
	                                    &= \sum_z z\text{-TE}_{x_0, x_1}(y\mid z)P(z \mid x).
	\end{align*}
	Therefore, if $z\text{-TE}_{x_0, x_1}(y\mid z) = 0 \;\forall z$ then $\text{ETT}_{x_0, x_1}(y\mid x) = 0\;\forall x$. Next, for a set $V' \subseteq V$ such that $Z \subseteq V'$, we can write
	\begin{align*}
	    z\text{-TE}_{x_0, x_1}(y) &= P(y_{x_1} \mid z) - P(y_{x_0} \mid z) \\
	                              &= \sum_{v'\setminus z} P(y_{x_1} \mid z, v'\setminus z) - P(y_{x_0} \mid z, v'\setminus z) P(v'\setminus z \mid z) \\
	                              &= \sum_{v'\setminus z} v'\text{-TE}_{x_0, x_1}(y\mid v')P(v'\setminus z \mid z).
	\end{align*}
	Therefore, if $v'\text{-TE}_{x_0, x_1}(y\mid v') = 0 \;\forall v'$ then $z\text{-TE}_{x_0, x_1}(y\mid z) = 0 \;\forall z$. Next, notice that
	\begin{align*}
	    v'\text{-TE}_{x_0, x_1}(y) &= P(y_{x_1} \mid v') - P(y_{x_0} \mid v') \\
	                               &= \sum_u  \big[ y_{x_1}(u) - y_{x_0}(u) \big]P(u \mid v') \\
	                               &= \sum_u  \text{unit-TE}_{x_0, x_1}(y(u))P(u \mid v').
	\end{align*}
	Therefore, if $\text{unit-TE}_{x_0, x_1}(y(u)) = 0 \;\forall u$ then $v'\text{-TE}_{x_0, x_1}(y\mid v') = 0 \;\forall v'$.
\end{proof}

\begin{lemma}[Power relations of spurious effects] \label{lem:spurious-power}
    The criteria based on Ctf-SE and Exp-SE are equivalent. Formally, 
    \begin{align}
        \text{Exp-SE}_{x}(y) = 0\; \forall x \iff \text{Ctf-SE}_{x, x'}(y) = 0\; \forall x \neq x'.
    \end{align}
\end{lemma}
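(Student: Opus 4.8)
The plan is to reduce both directions of the equivalence to a single algebraic identity relating the experimental and counterfactual spurious effects, and then exploit the binary assumption on $X$ to decouple the resulting summands.

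First I would expand $\text{Exp-SE}_x(y) = P(y_x) - P(y \mid x)$ using marginalization of the counterfactual over the attribute together with the consistency axiom. Marginalizing $y_x$ over $X$ gives $P(y_x) = \sum_{x'} P(y_x \mid x') P(x')$, while consistency ($Y = Y_x$ whenever $X = x$) yields $P(y \mid x) = P(y_x \mid x)$. Writing $P(y_x \mid x) = \sum_{x'} P(y_x \mid x) P(x')$ and subtracting produces the key identity
\begin{align}
\text{Exp-SE}_x(y) = \sum_{x'} \big[ P(y_x \mid x') - P(y_x \mid x) \big] P(x') = \sum_{x'} \text{Ctf-SE}_{x, x'}(y)\, P(x'),
\end{align}
where $\text{Ctf-SE}_{x,x'}(y) = P(y_x \mid x') - P(y_x \mid x)$ and the $x'=x$ term vanishes identically.

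Next I would specialize to the binary attribute $X \in \{x_0, x_1\}$ assumed throughout the manuscript. The identity then collapses to a single nonzero summand in each case,
\begin{align}
\text{Exp-SE}_{x_0}(y) = \text{Ctf-SE}_{x_0, x_1}(y)\, P(x_1), \qquad \text{Exp-SE}_{x_1}(y) = \text{Ctf-SE}_{x_1, x_0}(y)\, P(x_0).
\end{align}
The backward direction ($\Leftarrow$) is then immediate: if every $\text{Ctf-SE}_{x,x'}(y)$ vanishes for $x \neq x'$, each right-hand side is zero, so $\text{Exp-SE}_x(y) = 0$ for both values of $x$. For the forward direction ($\Rightarrow$), invoking positivity of the attribute distribution ($P(x_0), P(x_1) > 0$) lets me divide out the factors $P(x_1)$ and $P(x_0)$, so that $\text{Exp-SE}_x(y) = 0$ for all $x$ forces both $\text{Ctf-SE}_{x_0, x_1}(y) = 0$ and $\text{Ctf-SE}_{x_1, x_0}(y) = 0$.

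The main subtlety — and the reason the binary assumption is not merely cosmetic — lies in the forward direction: for a non-binary $X$ the identity only delivers one linear equation $\sum_{x'} \text{Ctf-SE}_{x,x'}(y) P(x') = 0$ per value $x$, which does not force the individual counterfactual spurious effects to vanish, since cancellations across subgroups (exactly the aggregation phenomenon identified earlier for the NDE) are possible. Under the binary $X$ maintained here, each such equation has a single free term, so the equivalence is clean; I would flag the positivity of $P(x)$ explicitly as the only additional standing assumption required.
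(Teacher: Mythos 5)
Your proof is correct and follows essentially the same route as the paper's: both expand $P(y_x)$ by the law of total probability over $X$, apply consistency to rewrite $P(y \mid x) = P(y_x \mid x)$, and collapse the binary case to $\text{Exp-SE}_{x}(y) = \text{Ctf-SE}_{x, x'}(y)\,P(x')$, with the paper invoking exactly the same positivity assumption $P(x') > 0$ to close the forward direction. Your added remark that the forward implication can fail for non-binary $X$ due to cancellation across the summands is a sensible observation but goes beyond what the lemma, stated under the binary-attribute convention, requires.
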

\begin{proof}
    \begin{align*}
		\text{Exp-SE}_x(y) &= P(y \mid x) - P(y_x) \\
		                   &= P(y \mid x) - P(y_x\mid x)P(x) - P(y_x\mid x')P(x') \\
		                   &= P(y \mid x)[1 - P(x)] - P(y_x\mid x')P(x') \\
		                   &= P(y \mid x)P(x') - P(y_x\mid x')P(x') \\
		                   &= -P(x')\text{Ctf-SE}_{x', x}(y).
	\end{align*}
	Assuming $P(x') > 0$, the claim follows. 
\end{proof}

\begin{lemma}[Admissibility w.r.t. structural direct] \label{lem:direct-adm}
    The structural direct effect criterion ($X \notin \pa(Y)$) implies the absence of unit-level direct effect. Formally:
    \begin{align}
        \text{S-DE} \implies \text{unit-DE}_{x_0, x_1}(y(u)) = 0 \;\forall u.
    \end{align}
\end{lemma}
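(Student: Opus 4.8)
The plan is to unfold the two potential responses appearing in $\text{unit-DE}_{x_0,x_1}(y(u)) = y_{x_1, W_{x_0}}(u) - y_{x_0}(u)$ via the definition of potential response (Def.~\ref{def:potentialresponse}), and to argue that, under S-DE-fair, the mechanism $f_Y$ is fed identical inputs in the two counterfactual worlds, so that the two outcomes coincide for every unit $u$. This immediately yields $\text{unit-DE}_{x_0, x_1}(y(u)) = 0$ for all $u$.

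First I would record what S-DE-fair buys us: by Def.~\ref{def:str-fair}, $\text{Str-DE}_X(Y) = 0$ means $X \notin \pa(Y)$, i.e.\ $X$ does not appear as a functional argument of $f_Y$. Hence $Y = f_Y(\pa(Y), U_Y)$ with $\pa(Y) \subseteq \{Z, W\}$ in the SFM, and $f_Y$ is literally insensitive to the value assigned to $X$ along its own mechanism.

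Next I would evaluate the two terms separately. For $y_{x_0}(u)$ I solve the submodel $\mathcal{M}_{x_0}$: this fixes $X = x_0$, lets $W$ attain its natural value $W_{x_0}(u)$, and lets the confounders attain $Z(u)$; since $Z$ is a non-descendant of $X$ in the SFM, $Z_{x_0}(u) = Z(u)$. Thus $y_{x_0}(u) = f_Y\big(Z(u), W_{x_0}(u), U_Y(u)\big)$, where the dependence on $x_0$ is dropped because $X \notin \pa(Y)$. For the nested term $y_{x_1, W_{x_0}}(u)$ I solve the submodel that simultaneously sets $X = x_1$ and $W = W_{x_0}(u)$: here $W$ is held at $W_{x_0}(u)$ by fiat, $Z$ again equals $Z(u)$, and $f_Y$ ignores $X = x_1$, giving $y_{x_1, W_{x_0}}(u) = f_Y\big(Z(u), W_{x_0}(u), U_Y(u)\big)$. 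The two expressions are the same evaluation of $f_Y$ on the same arguments, so their difference is $0$ for every $u$.

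The main obstacle is the bookkeeping around the parents of $Y$ \emph{other} than the mediators: I must verify that every argument of $f_Y$ takes the same value in both worlds. For $W$ this is immediate, since both worlds pin $W$ to $W_{x_0}(u)$. The only delicate point is the confounder set $Z$, where I invoke the SFM assumption that $Z$ is not causally influenced by $X$, so that intervening on $X$ (to either $x_0$ or $x_1$) leaves $Z(u)$ intact in both submodels. Once this is established, equality of inputs to $f_Y$ — and hence of outputs — is automatic, completing the argument.
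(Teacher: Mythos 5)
Your proposal is correct and follows essentially the same route as the paper's proof: both expand $y_{x_1, W_{x_0}}(u)$ and $y_{x_0}(u)$ as evaluations of $f_Y$ on the arguments $\big(x, W_{x_0}(u), Z(u), u_Y\big)$ and drop the $X$-argument because $X \notin \pa(Y)$, forcing the two terms to coincide for every unit $u$. Your extra care in verifying that $Z(u)$ is invariant under interventions on $X$ (via the SFM) is a point the paper leaves implicit, but it is the same argument.
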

\begin{proof}
    Suppose that $X \notin \pa(Y)$. Note that:
    \begin{align*}
        \text{unit-DE}_{x_0, x_1}(y(u)) &= y_{x_1, W_{x_0}}(u) - y_{x_0}(u) \\
                                        &= f_Y(x_1, W_{x_0}(u), Z(u), u_Y) -  f_Y(x_0, W_{x_0}(u), Z(u), u_Y) \\
                                        &= f_Y(W_{x_0}(u), Z(u), u_Y) -  f_Y(W_{x_0}(u), Z(u), u_Y) && X \notin \pa(Y)\\
                                        &= 0.
    \end{align*}
\end{proof}

\begin{lemma}[Admissibility w.r.t. structural indirect] \label{lem:indirect-adm}
    The structural indirect effect criterion ($\de(X) \notin \pa(Y)$) implies the absence of unit-level indirect effect. Formally:
    \begin{align}
        \text{S-IE} \implies \text{unit-IE}_{x_1, x_0}(y(u)) = 0 \;\forall u.
    \end{align}
\end{lemma}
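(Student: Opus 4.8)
The plan is to mirror the argument used for the structural direct criterion in Lem.~\ref{lem:direct-adm}, exploiting the fact that the unit-level indirect effect is a nested counterfactual whose two terms differ only in the value assigned to the mediators. First I would rewrite the target quantity using the consistency axiom, namely $y_{x_1}(u) = y_{x_1, W_{x_1}}(u)$, so that
\begin{align*}
    \text{unit-IE}_{x_1, x_0}(y(u)) = y_{x_1, W_{x_0}}(u) - y_{x_1, W_{x_1}}(u).
\end{align*}
Both terms now share the same direct argument $x_1$ for $X$ in $f_Y$ and differ only in whether the mediators are evaluated at $W_{x_0}(u)$ or $W_{x_1}(u)$.

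Next I would unpack the hypothesis Str-IE-fair, i.e. $X \notin \an(\pa(Y))$. The key observation is that this is equivalent to the statement that no parent of $Y$ other than $X$ itself is a (proper) descendant of $X$: indeed, if some $V_i \in \pa(Y)$ with $V_i \neq X$ satisfied $V_i \in \de(X)$, then $X \in \an(V_i) \subseteq \an(\pa(Y))$, contradicting the hypothesis. In particular, every mediator in $W \cap \pa(Y)$ lies outside $\de(X)$. I would then invoke the elementary fact that intervening on $X$ can only alter the value of variables that are descendants of $X$, so that for each $V_i \in \pa(Y) \cap W$ we have $(V_i)_{x_0}(u) = (V_i)_{x_1}(u) = V_i(u)$.

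Writing $Y = f_Y(\pa(Y), u_Y)$ and substituting, the components of $W_{x_0}(u)$ and $W_{x_1}(u)$ that actually feed into $f_Y$ then coincide; the confounders $Z$ are unaffected by $X$ under the SFM and contribute their natural value $Z(u)$ in both terms, while $X$'s own direct argument is fixed at $x_1$ in both. Hence the two invocations of $f_Y$ receive identical arguments, giving $y_{x_1, W_{x_0}}(u) = y_{x_1, W_{x_1}}(u)$ and therefore $\text{unit-IE}_{x_1, x_0}(y(u)) = 0$ for every $u$.

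The main subtlety — rather than a genuine obstacle — is the bookkeeping of which mediators $Y$ actually listens to: the clause $W_{x_0}$ sets the \emph{entire} vector $W$ to $W_{x_0}(u)$, but $Y$ depends only on $W \cap \pa(Y)$, so I must establish the equality only on that sub-vector and confirm that the remaining, non-parent mediators never reach the output of $f_Y$. Care is also needed to keep the argument agnostic to the internal orientation of edges within $W$, which is precisely what working under the SFM licenses.
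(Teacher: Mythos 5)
Your proposal is correct and follows essentially the same route as the paper's proof: both arguments reduce $\text{unit-IE}_{x_1, x_0}(y(u))$ to two evaluations of $f_Y$ with $X$ fixed at $x_1$, use the hypothesis to conclude that the only mediators $Y$ listens to lie outside $\de(X)$, and then invoke the fact that interventions on $X$ leave non-descendants at their natural values, so the two arguments of $f_Y$ coincide. The paper phrases this by partitioning $W$ into $W_{de} = W \cap \de(X)$ and its complement rather than via $\pa(Y) \cap W$, and leaves the consistency rewrite $y_{x_1}(u) = y_{x_1, W_{x_1}}(u)$ implicit, but these are only bookkeeping differences.
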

\begin{proof}
    Suppose that $\de(X) \notin \pa(Y)$. Let $W_{de} \subseteq W$ be the subset of mediators $W$ which are in $\de(X)$, and let $W^C_{de}$ be its complement in $W$. Then, by assumption, $W_C \notin \pa(Y)$. We can write:
    \begin{align*}
        \text{unit-IE}_{x_1, x_0}(y(u)) &= y_{x_1, W_{x_0}}(u) - y_{x_1}(u) \\
                                        &= f_Y(x_1, (W^C_{de})_{x_0}(u), Z(u), u_Y) -  f_Y(x_1, (W^C_{de})_{x_1}(u), Z(u), u_Y) \\
                                        &= f_Y(x_1, W^C_{de}(u), Z(u), u_Y) -  f_Y(x_1, W^C_{de}(u), Z(u), u_Y) && W^C_{de} \notin \de(X)\\
                                        &= 0.
    \end{align*}
\end{proof}

\begin{lemma}[Admissibility w.r.t. structural spurious] \label{lem:spurious-adm}
    The structural spurious effect criterion ($U_X \cap \an(Y) = \emptyset \wedge \an(X) \cap \an(Y) = \emptyset$) implies counterfactual spurious effect is $0$. Formally:
    \begin{align}
        \text{S-SE} \implies \text{Ctf-SE}_{x_0, x_1}(y) = 0 \;\forall u.
    \end{align}
\end{lemma}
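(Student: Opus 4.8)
The plan is to reduce the claim to a single counterfactual independence, namely $Y_{x_0}\ci X$, and then read the conclusion directly off the definition of the counterfactual spurious effect. Recall that through its structural basis expansion (Eq.~\ref{eq:ctfsedown}) the measure in question can be written as
\[
\text{Ctf-SE}_{x_0, x_1}(y) = P(y_{x_0}\mid x_1) - P(y_{x_0}\mid x_0) = \sum_u y_{x_0}(u)\big[P(u\mid x_1) - P(u\mid x_0)\big].
\]
Hence it suffices to establish that the potential response $Y_{x_0}$ is independent of the factual attribute $X$: once $Y_{x_0}\ci X$ holds we get $P(y_{x_0}\mid x_1) = P(y_{x_0}) = P(y_{x_0}\mid x_0)$, and the difference vanishes.

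The core step is therefore to derive $Y_{x_0}\ci X$ from the hypothesis that the structural spurious criterion is inactive, i.e. from $U_X\cap\an(Y)=\emptyset$ and $\an(X)\cap\an(Y)=\emptyset$. I would argue this structurally. In the submodel $\mathcal{M}_{x_0}$ the attribute $X$ is severed from its mechanism, so $Y_{x_0}(u)$ is a deterministic function only of those exogenous variables that are ancestral to $Y$ in the mutilated diagram $\mathcal{G}_{x_0}$; call this set $\mathbf{U}_{Y_{x_0}}$. In the factual model, $X$ is a deterministic function of the exogenous variables ancestral to it, including $U_X$; call this set $\mathbf{U}_{X}$. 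The two structural-spurious clauses are exactly what is needed to force $\mathbf{U}_{Y_{x_0}}\cap\mathbf{U}_{X}=\emptyset$: the clause $\an(X)\cap\an(Y)=\emptyset$ forbids an endogenous variable that is a common ancestor of $X$ and $Y$ (which would otherwise route a shared exogenous source into both), while $U_X\cap\an(Y)=\emptyset$ forbids a shared or correlated exogenous source (a bidirected edge) linking $X$ directly to the ancestry of $Y$. Since in the canonical semi-Markovian representation distinct exogenous variables are mutually independent, disjointness of $\mathbf{U}_{Y_{x_0}}$ and $\mathbf{U}_{X}$ yields $Y_{x_0}\ci X$. Equivalently, and perhaps more cleanly, one can observe that the hypothesis is precisely the statement that no unblocked back-door path from $X$ to $Y$ exists, so that $X$ and $Y$ are d-separated in the graph $\mathcal{G}_{\underline{X}}$ obtained by deleting the edges emanating from $X$; the standard translation of such a d-separation into counterfactual independence then gives $Y_{x_0}\ci X$.

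I expect the main obstacle to lie in this middle step, specifically in the careful bookkeeping of which exogenous variables $Y_{x_0}$ may depend on. The delicate points are (i) handling shared exogenous variables (the semi-Markovian confounders encoded as bidirected edges), where I must verify that the two ancestral conditions jointly exclude any shared source reaching both $X$ and $Y$, and (ii) ruling out the degenerate configuration in which $Y$ is itself an ancestor of $X$; this is handled either by acyclicity together with the intended convention that $X$ is not a descendant of $Y$, or is absorbed into the d-separation characterization, since such a path constitutes an unblocked back-door path and is therefore already precluded by the hypothesis. Once $Y_{x_0}\ci X$ is in hand, the remainder is the one-line computation indicated above, which completes the proof.
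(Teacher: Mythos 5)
Your proposal is correct and takes essentially the same route as the paper: both arguments reduce the claim to the counterfactual independence $Y_{x_0} \ci X$, obtained because S-SE rules out any open backdoor path between $X$ and $Y$ (your d-separation observation in $\mathcal{G}_{\underline{X}}$ is exactly the paper's step), after which the two conditional terms coincide and the contrast vanishes. The only cosmetic differences are that the paper writes the second term as $P(y \mid x_0)$ and invokes the second rule of do-calculus (Action/Observation Exchange) to equate it with $P(y_{x_0})$, whereas you keep the counterfactual form $P(y_{x_0} \mid x_0)$ and apply the independence to both terms directly, and your exogenous-variable bookkeeping spells out in detail the backdoor-to-independence implication that the paper asserts in a single line.
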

\begin{proof}
    Note that S-SE implies there is no open backdoor path between $X$ and $Y$. As a consequence, we know that
    \begin{align*}
        Y_{x} \ci X.
    \end{align*}
    Furthermore, the absence of backdoor paths also implies we can use the 2nd rule of do-calculus (Action/Observation Exchange). Therefore, we can write:
    \begin{align*}
        \text{Ctf-SE}_{x_0, x_1}(y) &= P(y_{x_0} \mid x_1) - P(y \mid x_0) \\
                                    &= P(y_{x_0}) - P(y \mid {x_0}) && \text{since } Y_{x} \ci X \\
                                    &= P(y_{x_0}) - P(y_{x_0}) &&\text{Action/Observation Exchange}\\
                                    &= 0.
    \end{align*}
\end{proof}

\begin{lemma}[Extended Mediation Formula] \label{lem:ext-med}
    The total effect can be decomposed into its direct and indirect contributions on every level of the population axes in the explainability plane. Formally, we write:
    \begin{align}
    \text{TE}_{x_0, x_1}(y) &= \text{NDE}_{x_0, x_1}(y) - \text{NIE}_{x_1, x_0}(y) \\
    \text{ETT}_{x_0, x_1}(y \mid x) &= \text{Ctf-DE}_{x_0, x_1}(y \mid x) - \text{Ctf-IE}_{x_1, x_0}(y \mid x) \\
    z\text{-TE}_{x_0, x_1}(y \mid z) &= z\text{-DE}_{x_0, x_1}(y \mid z) - z\text{-IE}_{x_1, x_0}(y \mid z)\\
    v'\text{-TE}_{x_0, x_1}(y \mid v') &= v'\text{-DE}_{x_0, x_1}(y \mid v') - v'\text{-IE}_{x_1, x_0}(y \mid v')\\
    \text{unit-TE}_{x_0, x_1}(y(u)) &= \text{unit-DE}_{x_0, x_1}(y(u)) - \text{unit-IE}_{x_1, x_0}(y(u)).
\end{align}
\end{lemma}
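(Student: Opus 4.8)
The plan is to reduce all five identities to a single pointwise algebraic cancellation at the unit level, and then obtain the four population-level statements by integrating against the appropriate posterior. The key structural fact, supplied by the structural basis expansion of Thm.~\ref{thm:contrasts} (see Eqs.~\ref{eq:tedown}--\ref{eq:niedown} for the general level, and Eqs.~\ref{eq:ettdown}--\ref{eq:ctfiedown}, \ref{eq:zte}--\ref{eq:zie} for the conditional levels), is that at a fixed level of the population axis the three measures involved are all expectations of unit-level quantities against one and the same posterior weighting $P(u \mid E)$, where $E = \emptyset, x, z, v'$ respectively. Hence, once the identity holds for every unit $u$, linearity of the sum immediately propagates it to every level.

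First I would prove the unit-level case, which is pure algebra. Writing out the definitions,
\begin{align*}
\text{unit-DE}_{x_0, x_1}(y(u)) - \text{unit-IE}_{x_1, x_0}(y(u)) &= \big[y_{x_1, W_{x_0}}(u) - y_{x_0}(u)\big] - \big[y_{x_1, W_{x_0}}(u) - y_{x_1}(u)\big] \\
&= y_{x_1}(u) - y_{x_0}(u) = \text{unit-TE}_{x_0, x_1}(y(u)),
\end{align*}
where the nested counterfactual term $y_{x_1, W_{x_0}}(u)$ enters with opposite signs in the direct and indirect contributions and cancels. This telescoping is the entire content of the lemma; no causal assumptions are needed for it, since a fixed unit $u$ determines all potential responses deterministically.

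Second, I would lift this to each population level. Fixing $E \in \{\emptyset, x, z, v'\}$, multiplying the unit-level identity by $P(u \mid E)$ and summing over $u$ gives
\begin{align*}
\sum_u \text{unit-DE}_{x_0, x_1}(y(u)) P(u\mid E) - \sum_u \text{unit-IE}_{x_1, x_0}(y(u)) P(u\mid E) = \sum_u \text{unit-TE}_{x_0, x_1}(y(u)) P(u\mid E).
\end{align*}
By the structural basis expansions cited above, the three sums equal the $E$-specific direct, indirect, and total effects respectively (for $E = \emptyset$ they are NDE, NIE, TE; for $E = x$ they are Ctf-DE, Ctf-IE, ETT; and analogously for $z$ and $v'$), which yields the corresponding line of the lemma. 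Repeating the argument for $E = \emptyset, x, z, v'$ produces the four remaining identities.

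The only point requiring care --- and what I regard as the main obstacle, though a mild one --- is verifying that for each level the three measures genuinely share the identical weighting $P(u\mid E)$; this is exactly what the structural basis expansions of the preceding sections assert, so the verification amounts to matching each measure to its expansion rather than producing any new derivation. In particular, the decomposition holds level by level precisely because conditioning on $E$ affects only the posterior and not the unit-level contrast $y_{C_1}(u) - y_{C_0}(u)$, so the pointwise cancellation survives averaging against $P(u \mid E)$.
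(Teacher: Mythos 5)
Your proposal is correct and is in essence the same argument as the paper's: both rest on the single telescoping step of inserting the nested counterfactual term $y_{x_1, W_{x_0}}$ with opposite signs, the paper performing it directly on the counterfactual probabilities, writing $P(y_{x_1} \mid E) - P(y_{x_0} \mid E) = \big[P(y_{x_1} \mid E) - P(y_{x_1, W_{x_0}} \mid E)\big] + \big[P(y_{x_1, W_{x_0}} \mid E) - P(y_{x_0} \mid E)\big]$ for a generic event $E$ and then instantiating $E \in \{\emptyset, x, z, v', u\}$. Your unit-level cancellation followed by averaging against $P(u \mid E)$ is exactly the pointwise form of that same identity, so the two proofs coincide up to the order in which the cancellation and the expectation are taken.
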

\begin{proof}
    The proof follows from the structural basis expansion from Eq.~\eqref{eq:down}. In particular, note that
    \begin{align}
        E\text{-TE}_{x_1, x_0}(y \mid E) &= P(y_{x_1} \mid E) - P(y_{x_0} \mid E) \\
        &= P(y_{x_1} \mid E) - P(y_{x_1, W_{x_0}} \mid E) + P(y_{x_1, W_{x_0}} \mid E)  - P(y_{x_0} \mid E) \\
        &= -E\text{-IE}_{x_1, x_0}(y \mid E) + E\text{-DE}_{x_1, x_0}(y \mid E).
    \end{align}
    By using different events $E$ the claim follows.
\end{proof}

\begin{lemma}[TV Decompositions] \label{lem:tv-decompositions}
    The total variation (TV) measure admits the following two decompositions
    \begin{align}
        \text{TV}_{x_0, x_1}(y) &= \text{Exp-SE}_{x_1}(y) + \text{TE}_{x_0, x_1}(y) - \text{Exp-SE}_{x_0}(y)\\
		                        &= \text{ETT}_{x_0, x_1}(y \mid x_0) - \text{Ctf-SE}_{x_1, x_0}.
    \end{align}
\end{lemma}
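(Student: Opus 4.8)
The plan is to prove each of the two displayed identities by unfolding the constituent measures into interventional, observational, and conditional-counterfactual probabilities of $Y$, and then checking that the right-hand side collapses to $P(y \mid x_1) - P(y \mid x_0) = \text{TV}_{x_0,x_1}(y)$ after cancellation. Both identities are telescoping sums, so the work is almost entirely bookkeeping; the only genuine ingredients are Def.~\ref{def:teexpse}, Def.~\ref{def:xspecific}, and the consistency axiom.

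For the first identity I would observe that it is a rearrangement of Lemma~\ref{lemma:te+expse} (TV decomposition I), already established. Substituting $\text{TE}_{x_0,x_1}(y) = P(y_{x_1}) - P(y_{x_0})$ and $\text{Exp-SE}_{x}(y) = P(y_x) - P(y \mid x)$, the two unconditional interventional terms $P(y_{x_0})$ and $P(y_{x_1})$ cancel in pairs, leaving exactly $P(y \mid x_1) - P(y \mid x_0)$. No appeal to the SFM is needed, only the existence of the potential responses. Here I would take care with the signs: the cancellation succeeds with the Exp-SE baselines arranged as in Lemma~\ref{lemma:te+expse}, i.e. $+\,\text{Exp-SE}_{x_0} - \text{Exp-SE}_{x_1}$, so I would verify that the baseline levels appearing in the first display match that arrangement before declaring it proved.

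For the second identity I would proceed directly. Writing out the two $x$-specific (conditional-counterfactual) quantities from Def.~\ref{def:xspecific},
\begin{align}
    \text{ETT}_{x_0, x_1}(y \mid x_0) &= P(y_{x_1} \mid x_0) - P(y_{x_0} \mid x_0), \\
    \text{Ctf-SE}_{x_1, x_0}(y) &= P(y_{x_1} \mid x_0) - P(y_{x_1} \mid x_1),
\end{align}
the common term $P(y_{x_1} \mid x_0)$ cancels upon subtraction, leaving $P(y_{x_1} \mid x_1) - P(y_{x_0} \mid x_0)$. The last move, and the only non-mechanical step, is to invoke the consistency axiom $Y_x = Y$ whenever $X = x$, so that $P(y_{x_1} \mid x_1) = P(y \mid x_1)$ and $P(y_{x_0} \mid x_0) = P(y \mid x_0)$; this yields $\text{TV}_{x_0,x_1}(y)$ as required.

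I expect no serious obstacle, since both identities reduce to cancellations of shared probability terms. The only points demanding attention are (i) the consistency step that converts the conditional counterfactuals $P(y_x \mid x)$ into the purely observational conditionals $P(y \mid x)$, and (ii) matching the index conventions of $\text{Ctf-SE}_{x_1,x_0}$ against the $y_{x_1}$ term in $\text{ETT}_{x_0,x_1}(y\mid x_0)$, since a transposed baseline is the most likely source of a sign slip. Everything else is substitution of definitions.
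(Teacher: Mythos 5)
Your proof is correct and takes essentially the same route as the paper: the appendix proof of Lem.~\ref{lem:tv-decompositions} establishes both identities by the same insert-and-cancel telescoping of $P(y_{x_1})$, $P(y_{x_0})$ (first display) and $P(y_{x_1}\mid x_0)$ (second display), with the consistency axiom converting $P(y_x \mid x)$ into $P(y\mid x)$ exactly where you invoke it. Your sign caution on the first display is vindicated, and is the one point where your check would bite: under the main-text Def.~\ref{def:teexpse}, where $\text{Exp-SE}_x(y) = P(y_x) - P(y\mid x)$, the printed arrangement $\text{Exp-SE}_{x_1}(y) + \text{TE}_{x_0,x_1}(y) - \text{Exp-SE}_{x_0}(y)$ does \emph{not} cancel to TV (it evaluates to $2\,\text{TE}_{x_0,x_1}(y) - \text{TV}_{x_0,x_1}(y)$) and is the transpose of Lem.~\ref{lemma:te+expse}; the appendix proof silently works with the flipped convention $\text{Exp-SE}_x(y) = P(y\mid x) - P(y_x)$ (the same one appearing in the proof of Lem.~\ref{lem:spurious-power}), under which the display as printed is exact. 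So carrying out your verification under Def.~\ref{def:teexpse} would lead you to restate the first identity as $\text{TV}_{x_0,x_1}(y) = \text{TE}_{x_0,x_1}(y) + \text{Exp-SE}_{x_0}(y) - \text{Exp-SE}_{x_1}(y)$, matching Lem.~\ref{lemma:te+expse}; your second identity, including the index bookkeeping for $\text{Ctf-SE}_{x_1,x_0}(y)$, agrees with the paper verbatim.
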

\begin{proof}
    We write
	\begin{align*}
		\text{TV}_{x_0, x_1}(y) &= P(y \mid x_1) - P(y \mid x_0) \\
		&= P(y\mid x_1) - P(y_{x_1}) + P(y_{x_1}) - P(y_{x_0}) + P(y_{x_0}) - P(y \mid x_0) \\
		&= \text{Exp-SE}_{x_1}(y) + \text{TE}_{x_0, x_1}(y) - \text{Exp-SE}_{x_0}(y).
	\end{align*}
	Alternatively, we can write
	\begin{align*}
		\text{TV}_{x_0, x_1}(y) &= P(y \mid x_1) - P(y \mid x_0) \\
		&= P(y\mid x_1) - P(y_{x_1} \mid x_0) + P(y_{x_1} \mid x_0)  - P(y \mid x_0) \\
		&= \text{ETT}_{x_0, x_1}(y \mid x_0) - \text{Ctf-SE}_{x_1, x_0}(y),
	\end{align*}
	which completes the proof.
\end{proof}

\subsection{Soundness of the SFM: Proof of Theorem \ref{thm:sfm}} \label{appendix:sfm}
\begin{proof}
    The proof consists of two parts. In the first part, we show that the quantities where the event $E$ is either of $\emptyset, \lbrace x \rbrace, \lbrace z \rbrace$ (corresponding to the first three rows of the fairness map) are identifiable under the assumptions of the Standard Fairness Model. We in particular show that TE$_{x_0, x_1}(y)$, Exp-SE$_{x}(y)$, TE$_{x_0, x_1}(y \mid z)$, ETT$_{x_0, x_1}(y \mid x)$ and Ctf-DE$_{x_0, x_1}(y \mid x)$ are identifiable (it follows from very similar arguments that all other quantities are also identifiable. From this, it follows that for any graph $\mathcal{G}$ compatible with $\mathcal{G}_{\text{SFM}}$, the quantities of interest are (i) identifiable; (ii) their identification expression is the same. This in turn shows that using $\mathcal{G}_{\text{SFM}}$ instead of the full $\mathcal{G}$ does not hurt identifiability of these quantities.
    In the second part of the proof, we show that any contrast defined by an event $E$ which contains either $W = w$ or $Y = y$ is not identifiable under some very mild conditions (namely the existence of a path $X \rightarrow W_{i_1} \rightarrow ... \rightarrow W_{i_k} \rightarrow Y$). This part of the proof, complementary to the first part, shows that for contrasts with event $E$ containing post-treatment observations, even having the full graph $\mathcal{G}$ would not make the expression identifiable.
    All of the proofs here need to be derived from first principles, since the graph $\mathcal{G}_{\text{SFM}}$ contains ``groups" of variables $Z$ and $W$, making the standard identification machinery \citep{pearl:2k} not directly applicable.
    
    \noindent Part I: Note that for identifying TE$_{x_0, x_1}(y)$ we need to identify $P(y_x)$. We can write
    \begin{align*}
        P(y_x) &= P(y \mid do(x)) \\
               &= \sum_z P(y \mid do(x), z) P(z \mid do(x)) && \text{Law of Total Probability} \\
               &= \sum_z P(y \mid x, z) P(z) && (Y \ci X \mid Z)_{_{\mathcal{G}_{\underline{X}}}}, (X \ci Z)_{\mathcal{G}_{\overline{X}}}
    \end{align*}
    from which it follows that TE$_{x_0, x_1}(y) = \sum_z [P(y \mid x_1, z) - P(y \mid x_0, z)] P(z)$. Note that the identifiability of TE$_{x_0, x_1}(y \mid z)$ also follows from the above derivation, namely TE$_{x_0, x_1}(y \mid z) = \sum_z [P(y \mid x_1, z) - P(y \mid x_0, z)]$, and so does Exp-SE$_x(y) = \sum_z P(y \mid x, z)[P(z) - P(z \mid x)]$. We are now left with showing that ETT$_{x_0, x_1}(y \mid x)$ and Ctf-DE$_{x_0, x_1}(y \mid x)$ are also identifiable. These are Layer 3, counterfactual quantities and therefore rules of do-calculus will not suffice. To be able to use independence statements of counterfactual variables, we will make use of the \textbf{make-cg} algorithm of \cite{shpitser2012counterfactuals} for construction of counterfactual graphs, which extends the twin-network approach of \cite{balke1994counterfactual}. Therefore, when using for an expression of the form $Y_x = y, X = x'$, we obtain the following counterfactual graph
    \begin{center}
		\begin{tikzpicture}
	 [>=stealth, rv/.style={thick}, rvc/.style={triangle, draw, thick, minimum size=9mm}, node distance=18mm]
	 \pgfsetarrows{latex-latex};
	 \begin{scope}
		\node[rv] (0) at (0,1) {$Z$};
		\node[rv] (1) at (-1.5,0) {$X$};
		\node[rv] (2) at (0,-1) {$W_x$};
		\node[rv] (3) at (1.5,0) {$Y_x$};
		\path[<->, dashed] (0) edge[bend right = 10] (1);
		\draw[->] (0) -- (3);
		\draw[->] (0) -- (2);
		\draw[->] (2) -- (3);
	 \end{scope}
	 \end{tikzpicture}
	\end{center}
    from which we can see that $Y_x \ci X \mid Z$. Therefore, 
    \begin{align*}
        \text{ETT}_{x_0, x_1}(y) &= P(y_{x_1} \mid x) - P(y_{x_0} \mid x) \\
                                 &= \sum_z [P(y_{x_1} \mid x, z) - P(y_{x_0} \mid x, z)] P(z \mid x) && \text{Law of Total Probability}\\
                                 &= \sum_z [P(y \mid x_1, z) - P(y \mid {x_0}, z)] P(z \mid x) && Y_x \ci X \mid Z.
    \end{align*}
    Finally, for identifying Ctf-DE$_{x_0, x_1}(y \mid x)$ we use make-cg applied to $\mathcal{G}_{\text{SFM}}$ and $y_{x_1, w}, w_{x_0}, x, z$ to obtain
    \begin{center}
		\begin{tikzpicture}
	 [>=stealth, rv/.style={thick}, rvc/.style={triangle, draw, thick, minimum size=9mm}, node distance=18mm]
	 \pgfsetarrows{latex-latex};
	 \begin{scope}
		\node[rv] (0) at (0,1) {$Z$};
		\node[rv] (1) at (-1.5,0) {$X$};
		\node[rv] (2) at (0,-1) {$W_{x_0}$};
		\node[rv] (3) at (1.5,0) {$Y_{x_1, w}$};
		\path[<->, dashed] (0) edge[bend right = 10] (1);
		\draw[->] (0) -- (3);
		\draw[->] (0) -- (2);
	 \end{scope}
	 \end{tikzpicture}
	\end{center}
	from which we can say that $Y_{x_1, w} \ci (W_{x_0}, X) \mid Z$. Therefore, we can write
	\begin{align*}
	    \text{Ctf-DE}_{x_0, x_1}(y \mid x) &= P(y_{x_1, W_{x_0}} \mid x) - P(y_{x_0, W_{x_0}} \mid x) \\ 
	                                       &= \sum_z [P(y_{x_1, W_{x_0}} \mid x, z) - P(y_{x_0, W_{x_0}} \mid x, z)] P(z \mid x) && \text{Law of Total Probability} \\
	                                       &= \sum_{z, w} [P(y_{x_1, w}, w_{x_0} \mid x, z) - P(y_{x_0, w}, w_{x_0} \mid x, z)]P(z \mid x) && \text{Counterfactual unnesting} \\
	                                       &= \sum_{z, w} [P(y_{x_1, w} \mid x, z) - P(y_{x_0, w} \mid x, z)]P(w_{x_0} \mid z)P(z \mid x) && Y_{x_1, w} \ci W_{x_0} \mid Z \\
	                                       &= \sum_{z, w} [P(y \mid x_1, z, w) - P(y_{x_0} \mid x_0, z, w)]P(w \mid x_0, z)P(z \mid x) && Y_{x} \ci X \mid Z, W_{x_0} \ci X \mid Z.
	\end{align*}
	
	\noindent Part II: We next need to show that any contrast with either $W = w$ or $Y = y$ in the event $E$ is not identifiable, even if using the full graph $\mathcal{G}$. We show this for the quantity $P(y_{x_1} \mid x_0, w)$, since other similar quantities work analogously. Assume for simplicity that (i) variable $Z = \emptyset$; (ii) there are no bidirected edges between the $W$ variables. The latter assumption clearly makes the identifiability task easier, since adding bidirected edges can never help identification of quantities. Before we continue, we give an example of a graph $\mathcal{G}$ compatible with $\mathcal{G}_{\text{SFM}}$ for which $P(y_{x_1} \mid x_0, w)$ is identifiable. Consider the graph
	\begin{center}
		\begin{tikzpicture}
	 [>=stealth, rv/.style={thick}, rvc/.style={triangle, draw, thick, minimum size=9mm}, node distance=18mm]
	 \pgfsetarrows{latex-latex};
	 \begin{scope}
		\node[rv] (1) at (-1.5,0) {$X$};
		\node[rv] (2) at (-0.75,-1.5) {$W_{1}$};
		\node[rv] (3) at (0.75,-1.5) {$W_{2}$};
		\node[rv] (4) at (1.5,0) {$Y$};
		\draw[->] (1) -- (4);
		\draw[->] (1) -- (2);
		\draw[->] (3) -- (4);
	 \end{scope}
	 \end{tikzpicture}
	\end{center}
	and notice that
	\begin{align*}
	    P(y_{x_1} \mid x_0, w_1, w_2) &= P(y_{x_1} \mid x_0, w_2) && Y_{x_1} \ci W_1 \\
	                                  &= P(y_{x_1} \mid w_2) && Y_{x_1} \ci X \\
	                                  &= P(y \mid x_1, w_2) && (Y \ci X)_{\mathcal{G}_{\underline{X}}}.
	\end{align*}
	However, this example is somewhat pathological, since there is no indirect path between $X$ and $Y$ mediated by $W$. In this case, considering the set $W$ is arguably not relevant. Therefore, assume instead that a path $X \rightarrow W_{i_1} \rightarrow ... \rightarrow W_{i_k} \rightarrow Y$ exists. Then, when applying make-cg to $\mathcal{G}$ and $y_{x_1}, x_0, w$ the resulting counterfactual graph will contain
	\begin{center}
		\begin{tikzpicture}
	 [>=stealth, rv/.style={thick}, rvc/.style={triangle, draw, thick, minimum size=9mm}, node distance=18mm]
	 \pgfsetarrows{latex-latex};
	 \begin{scope}
		\node[rv] (x) at (-2,0) {$X$};
		\node[rv] (w1) at (0,0) {$W_{i_1}$};
		\node[rv] (w2) at (2,0) {$W_{i_2}$};
		\node (dots) at (4, 0) {\dots};
		\node[rv] (wk) at (6,0) {$W_{i_k}$};

		\node[rv] (w1c) at (0,-2) {${W_{i_1}}_{x_1}$};
		\node[rv] (w2c) at (2,-2) {${W_{i_2}}_{x_1}$};
		\node (dotsc) at (4, -2) {\dots};
		\node[rv] (wkc) at (6,-2) {${W_{i_k}}_{x_1}$};
		\node[rv] (yc) at (8,-2) {$Y_{x_1}$};
		
		\draw[->] (x) -- (w1);
		\draw[->] (w1) -- (w2);
		\draw[->] (w2) -- (dots);
		\draw[->] (dots) -- (wk);
		
		\draw[->] (w1c) -- (w2c);
		\draw[->] (w2c) -- (dotsc);
		\draw[->] (dotsc) -- (wkc);
		\draw[->] (wkc) -- (yc);
		
		\path[<->] (w1) edge[dashed] (w1c);
		\path[<->] (w2) edge[dashed] (w2c);
		\path[<->] (wk) edge[dashed] (wkc);
		
	 \end{scope}
	 \end{tikzpicture}
	\end{center}
	as a subgraph and therefore when applying the ID$^*$ algorithm of \cite{shpitser2012counterfactuals}, we will encounter a C-component $\lbrace W_i, {W_i}_{x_1} \rbrace$ which will result in non-identifiability of the overall expression. Therefore, even having access to the full $\mathcal{G}$ will not help us identify contrasts that include observations of post-treatment variables, completing the proof.
\end{proof}

\subsection{Proof of Theorem \ref{thm:fpt}} \label{appendix:fpt}

\begin{proof}
Considering the following SFM
\begin{center}
		\begin{tikzpicture}
	 [>=stealth, rv/.style={thick}, rvc/.style={triangle, draw, thick, minimum size=7mm}, node distance=18mm]
	 \pgfsetarrows{latex-latex};
	 \begin{scope}
	    \node[rv] (u) at (-1.5, 1.5) {$U$};
		\node[rv] (0) at (0,1.5) {$Z$};
		\node[rv] (1) at (-1.5,0) {$X$};
		\node[rv] (2) at (0,-1.5) {$W$};
		\node[rv] (3) at (1.5,0) {$Y$};
		\draw[->] (1) -- (2);
		\draw[->] (0) -- (3);
		\path[->] (1) edge[bend left = 0] (3);
		\path[->] (u) edge[bend left = 0, dashed] (0);
		\path[->] (u) edge[bend left = 0, dashed] (1);
		\draw[->] (2) -- (3);
		\draw[->] (0) -- (2);
	 \end{scope}
	 \end{tikzpicture}
	\end{center}
we can write the linear structural equation model as follows:
\begin{align}
    U &\gets N(0, 1) \\
    X &\gets \text{Bernoulli}(\text{expit}(U)) \\
    Z &\gets a_{UZ}U + a_{ZZ}Z \epsilon_Z \\
    W &\gets a_{XW}X + a_{ZW}Z + a_{WW}W + \epsilon_W \\
    Y &\gets a_{XY}X + a_{ZY}Z + a_{WY}W + \epsilon_Y
\end{align}
where matrices $a_{ZZ}, a_{WW}$ are upper diagonal, making the above SCM valid, in the sense that no variable is a functional argument of itself. For simplicity, we assume $\epsilon_Z \sim N(0, I_{n_Z})$, $\epsilon_W \sim N(0, I_{n_W})$ and $\epsilon_Y \sim N(0, 1)$. The coefficients $a$ of the above model are assumed to be drawn uniformly from $[-1, 1]^{|E|}$, where $|E|$ is the number of edges with a linear coefficient.

\noindent By expanding out, the outcome $Y$ can be written $$ Y = \sum_{V_i \in X, Z, W} a_{V_iY}V_i + \epsilon_Y,$$ and the linear predictor of $Y$, labeled $f$ can be written as $$ f(X, Z, W) = \sum_{V_i \in X, Z, W} \tilde{a}_{V_iY}V_i.$$ The objective of the optimization can then be written as
\begin{align*}
        \ex[Y - f(X, Z, W)]^2 &= \ex\big[\sum_{V_i \in X, Z, W} a_{V_iY}-\widetilde{a}_{V_iY})V_i + \epsilon_Y\big]^2\\
        &=\ex[\epsilon_Y^2] + \ex\big[\sum_{V_i, V_j \in X, Z, W} (a_{V_iY}-\widetilde{a}_{V_iY})(a_{V_jY}-\widetilde{a}_{V_jY})V_iV_j\big] \\
        &= 1 + (a_{VY} - \widetilde{a}_{VY})^T \ex[VV^T](a_{VY} - \widetilde{a}_{VY}),
\end{align*}
when written as a quadratic form with the characteristic matrix $\ex[VV^T]$. Here, (with slight abuse of notation) the set $V$ includes $X, Z, W$. Further, the constraint TV$_{x_0, x_1}(f) = 0$ is in fact a linear constraint on the coefficients $\widetilde{a}_{VY}$, since we have that 
\begin{align*}
    TV_{x_0, x_1}(f) = (\ex[V \mid x_1] - \ex[V \mid x_0])^T \widetilde{a}_{VY}.
\end{align*}
We write 
\begin{align}
    c &= \ex[V \mid x_1] - \ex[V \mid x_0], \label{eq:coef-c} \\
    \Sigma &= \ex[VV^T] \label{eq:coef-sig}
\end{align}
and note that our optimization problem can be written as
\begin{alignat}{2}
&\argmin_{\widetilde{a}_{VY}}        &\qquad& (a_{VY} - \widetilde{a}_{VY})^T \Sigma (a_{VY} - \widetilde{a}_{VY})\\
&\text{subject to} &      & c^T\widetilde{a}_{VY} = 0.
\end{alignat}
The objective is a quadratic form centered at $a_{VY}$. Geometrically, the solution to the optimization problem is the meeting point of an ellipsoid centered at $a_{VY}$ with the characteristic matrix $\Sigma$ and the hyperplane through the origin with the normal vector $c$. The solution is given explicitly as
\begin{align*}
    \widehat{a}_{VY} = a_{VY} - \frac{c^Ta_{VY}\Sigma^{-1}c}{c^T\Sigma^{-1}c}.
\end{align*}
We next analyze the constraints $$ \text{Ctf-DE}_{x_0, x_1}(\widehat{f}_{\text{fair}} \mid x_0) =  \text{Ctf-IE}_{x_0, x_1}(\widehat{f}_{\text{fair}} \mid x_0) = \text{Ctf-SE}_{x_0, x_1}(\widehat{f}_{\text{fair}}) = 0.$$
The first constraint $\text{Ctf-DE}_{x_0, x_1}(\widehat{f}_{\text{fair}} \mid x_0)$ can be simply written as $\widehat{a}_{XY}(x_1 - x_0) = 0$, and since $x_1 - x_0 = 0$, the constraint can be written as $c_1^T\widehat{a}_{VY} = 0$ where $c_1 = (1 \; 0 \; \dots \; 0)^T$. Similarly, but more involved, the Ctf-IE constraint can be written as $c_2^T\widehat{a}_{VY} = 0$ where entries of $c_2$ corresponding to $W_i$ variables are $$ \ex[W_i \mid x_1] - \ex[{W_i}_{x_0} \mid x_1], $$ and $0$ everywhere else. Finally, the Ctf-SE constraint can be written as $c_3^T\widehat{a}_{VY} = 0$ where entries of $c_3$ corresponding to $W_i$ variables are $$ \ex[{W_i}_{x_0} \mid x_1] - \ex[{W_i} \mid x_0], $$
and the entries corresponding to $Z_i$ variables $$ \ex[Z_i \mid x_1] - \ex[Z_i \mid x_0].$$ Notice also that $c_1 + c_2 + c_3 = c$. We note that 
\begin{align*}
    \ex[Z \mid x_1] - \ex[Z \mid x_0] = (I- a_{ZZ})^{-1}a_{UZ} \delta_u^{01}
\end{align*}
where $\delta_u^{01} = \ex[U \mid x_1] - \ex[U \mid x_0]$ is a constant. Similarly,
\begin{align*}
    \ex[{W}_{x_0} \mid x_1] - \ex[{W} \mid x_0] = (I-a_{WW})^{-1}a_{ZW}(I- a_{ZZ})^{-1}a_{UZ} \delta_u^{01}.
\end{align*}
Furthermore, for the indirect effect, we have that
\begin{align*}
    \ex[W_i \mid x_1] - \ex[{W_i}_{x_0} \mid x_1] = (x_1-x_0) \sum_{\text{paths } X \rightarrow W_i} \prod_{\text{edges } V_k\rightarrow V_l} a_{V_kV_l}.
\end{align*}
Therefore, we can now see how the three constraints can be expressed in terms of the structural coefficients in $a$. What remains is understanding the entries of the $\Sigma$ matrix. Note that $\ex[V_iV_j]$ can be computed by considering all \textit{treks} from $V_i$ to $V_j$. A trek is a path that first goes backwards from $V_i$ until a certain node, and the forwards to $V_j$. The slight complication comes from the treks with the turning point at $U$ that pass through $X$, as the SCM is not linear at $X$. Nonetheless, in this case the contribution to the covariance of $V_i$ and $V_j$ equals the product of the coefficients on the trek multiplied by $\ex[XU]$. Therefore, we note that
\begin{align*}
    \ex[V_iV_j] = \sum_{\substack{\text{treks }T_s\\ \text{from } V_i \text{ to } V_j}} \lambda (T_s) \prod_{\substack{\text{edges } V_k\rightarrow V_l\\ \in T_s}} a_{V_kV_l}
\end{align*}
where the weighing factor $\lambda (T_s)$ is either 1 or $\ex[XU]$ depending on the trek $T_s$.
To conclude the argument, notice the following. The entries of the $\Sigma$ matrix are polynomial functions of the structural coefficients $a$. The same also therefore holds for $\Sigma^{-1}$. Furthermore, the coefficient $c$ is also a polynomial function of coefficients in $a$. Therefore, the condition $c_1^T \widehat{a}_{VY} = 0$ can be written as 
\begin{equation} \label{eq:surface}
    c_1^T (a_{VY} - \frac{c^Ta_{VY}\Sigma^{-1}c}{c^T\Sigma^{-1}c}) = 0,
\end{equation}
where the left hand side is a polynomial expression in the coefficients of $a$. Therefore, the above expression defines an algebraic hypersurface. Any such hypersurface has measure 0 in the space $[-1, 1]^{|E|}$, proving that the set of $0$-TV-compliant SCMs is in fact of measure $0$. Intuitively, the result is saying that the meeting point of an ellipsoid centered at $a_{VY}$ with the characteristic matrix $\Sigma$ and the hyperplane through the origin with the normal vector $c$ with measure 0 also lies on a random hyperplane defined by the normal vector $c_1$ and passing through the origin. 

To extend the result for an $\epsilon > 0$, we proceed as follows. Let $\mathcal{H}(\epsilon)$ be the set of $\epsilon$-TV-compliant SCMs. Let $\mathcal{H}^{DE}(\epsilon)$ be the set of SCMs for which the direct effect is bounded by $\epsilon$ for the $\widehat{f}$. Let $\mathcal{H}^{IE}(\epsilon)$, $\mathcal{H}^{SE}(\epsilon)$ be defined analogously for the indirect and spurious effects.
We then analyze the degrees of the terms appearing in Eq.~\ref{eq:surface}, which defines the surface $\mathcal{H}^{DE}(0)$. In particular, notice that
\begin{align}
    deg (c_1^T (a_{VY} - \frac{c^Ta_{VY}\Sigma^{-1}c}{c^T\Sigma^{-1}c})) &\leq deg (c_1) + deg(a_{VY}) + deg(\frac{c^Ta_{VY}\Sigma^{-1}c}{c^T\Sigma^{-1}c})
\end{align}
and also that
\begin{align}
    deg(\frac{c^Ta_{VY}\Sigma^{-1}c}{c^T\Sigma^{-1}c}) &\leq  deg(c^Ta_{VY}\Sigma^{-1}c) + deg({c^T\Sigma^{-1}c}) \\
                                                       &\leq 2deg(c) + deg(a_{VY}) + deg(\Sigma^{-1}) + 2deg(c) + deg(\Sigma^{-1}).
\end{align}
Now, one can observe the following bounds, where $p = |V|$:
\begin{align}
    deg(c) &\leq p \text{ from Eq.~\ref{eq:coef-c},} \\
    deg(a_{VY}) &= 1 \text{ by definition,} \\
    deg(\Sigma^{-1}) &\leq p^2 * \max deg(\Sigma_{ij}) = p^4 \text{ from Eq.~\ref{eq:coef-sig}}.
\end{align}
from which it follows that the degree of the surface of $0$-TV-compliant SCMs, labeled $\mathcal{H}(0)$, is bounded by $2+4p+2p^2$. Therefore, by an application of the Lojasiewicz's inequality \citep{ji1992global}, there exist constants $k_1, k_2$ such that:
\begin{align}
    \text{vol}(\mathcal{H}^{DE}(\epsilon)) &= \text{vol}\lbrace a \in [-1, 1]^{|E|} \mid | c_1^T (a_{VY} - \frac{c^Ta_{VY}\Sigma^{-1}c}{c^T\Sigma^{-1}c}) | \leq \epsilon \rbrace \\
    &= \text{vol}\lbrace  a \in [-1, 1]^{|E|} \mid d(a, \mathcal{H}^{DE}(0)) \leq k_1\epsilon^{k_2}  \rbrace \label{eq:hepsvol}, 
\end{align}
where the volume in Eq.~\ref{eq:hepsvol} can be bounded above by an application of the Crofton's inequality \citep[p. 45]{adler2007random}, to obtain that
\begin{align}
    \text{vol}(\mathcal{H}^{DE}(\epsilon)) \leq k_1 \epsilon^{k_2} 2^{|E|/2} C(|E|, deg(\mathcal{H}^{DE}(0))) deg(\mathcal{H}^{DE}(0)),
\end{align}
where $C(|E|)$ is a constant coming from the Crofton's inequality. Finally, we can write that for a random $M$ sampled from $\mathcal{S}^{linear}_{n_Z, n_W}$ we have that
\begin{align}
    \pr(M \in \mathcal{H}^{DE}(\epsilon)) = \frac{\text{vol}(\mathcal{H}^{DE}(\epsilon))}{2^{|E|}}. 
\end{align}
By noting that $|E| = p(p+1)$ and setting
\begin{align} \label{eq:eps}
    \epsilon = \Big( \frac{2^{p^2/4}}{4C(|E|)[2+4p+2p^4]k_1} \Big) ^ {1/k_2}
\end{align}
we obtain that $\pr(M \in \mathcal{H}^{DE}(\epsilon)) \leq \frac{1}{4}$. Since we know that

\begin{align}
    \mathcal{H}(\epsilon) = \mathcal{H}^{DE}(\epsilon) \cap \mathcal{H}^{IE}(\epsilon) \cap \mathcal{H}^{SE}(\epsilon)
    &\implies \pr(M \in \mathcal{H}^{}(\epsilon)) \leq \pr(M \in \mathcal{H}^{DE}(\epsilon)). \\
    &\implies \pr(M \in \mathcal{H}^{}(\epsilon)) \leq \frac{1}{4},
\end{align}
for such an $\epsilon$. Intuitively, any SCM in $\mathcal{H}(\epsilon)$ must also be in $\mathcal{H}^{DE}(\epsilon)$. Any SCM in $\mathcal{H}^{DE}(\epsilon)$ must be close to $\mathcal{H}^{DE}(0)$. The maximal deviation of an SCM in $\mathcal{H}^{DE}(\epsilon)$ from $\mathcal{H}^{DE}(0)$ can be bounded by the Lojasiewicz's inequality, whereas the surface area of $\mathcal{H}^{DE}(0)$ can be bounded above by Crofton's inequality. Putting together, we get a bound on the measure of $\epsilon$-TV-compliant SCMs. 
\end{proof}

The behaviour of the $\epsilon$ term given in Eq.~\ref{eq:eps} cannot be theoretically analyzed further, since the constants arising from the Lojasiewicz's inequality are dimension dependent. To this end, for $n_Z = n_W = 5$ we empirically estimate 
\begin{equation}
    \pr(M \in \mathcal{H}^{DE}(\epsilon))
\end{equation}
for a range of $\epsilon$ values, and obtain the plot in Fig.~\ref{fig:fpt-empirical}.
\begin{figure}
	\centering
	\includegraphics[height=60mm,angle=0]{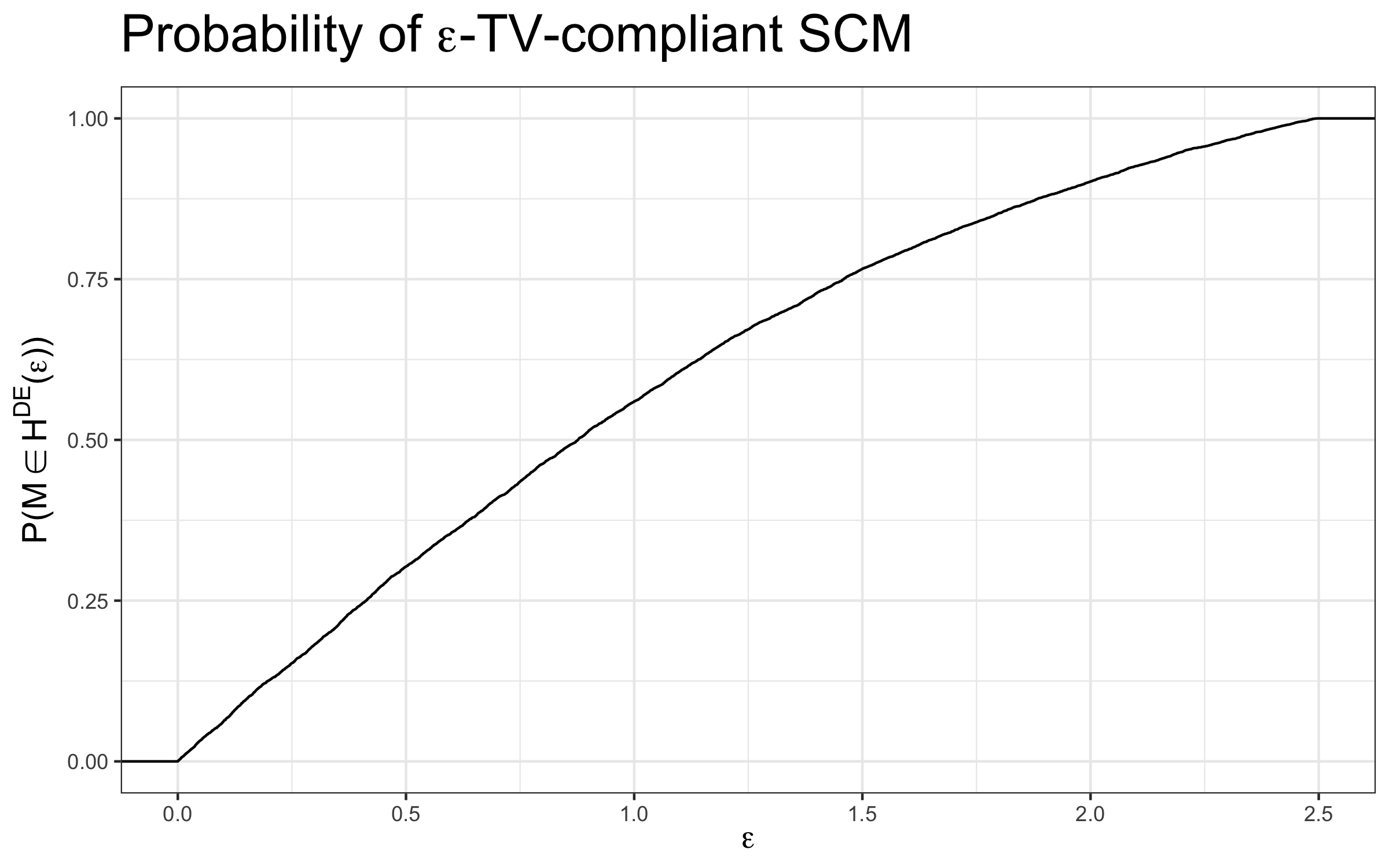}
	\caption{Estimating empirically the probability that a random SCM in $\mathcal{S}^{linear}_{n_Z, n_W}$, for $n_Z = n_W = 5$, has a direct effect smaller than $\epsilon$ after ensuring that TV equals $0$.}
	\label{fig:fpt-empirical}
\end{figure}
\subsection{Proof of Thm.~\ref{thm:causalIF}} \label{appendix:causalIF}

\begin{proof}
We prove the result for the case BN-set$= \emptyset$ (the other cases of BN-sets follow analogously), in the population level case. 
Based on the standard fairness model, we are starting with an SCM $\mathcal{M}$ given by:
\begin{align}
    X &\gets f_X(u_x, u_z) \\
    Z &\gets f_Z(u_x, u_z) \\
    W &\gets f_W(X, Z, u_w) \\
    Y &\gets f_Y(X, Z, W, u_y).
\end{align}
The noise variables $u_x, u_z$ are not independent, but the variables $u_w, u_y$ are mutually independent, and also independent from $u_x, u_z$. 

We now explain how the sequential optimal transport steps extend the original SCM $\mathcal{M}$ (to which we do not have access). Firstly, the conditional distribution $Z \mid X = x_1$ is transported onto $Z \mid X = x_0$. Write $\tau^Z$ for the transport map. On the level of the SCM, this corresponds to extending the equations by an additional mechanism
\begin{align}
    \widetilde{Z} \gets \begin{cases}
                            f_Z(u_x, u_z) &\text{ if } f_X(u_x, u_z) = x_0 \\
                            f_Z(\pi^Z(u_x, u_z)) &\text{ if } f_X(u_x, u_z) = x_1
                        \end{cases}.
\end{align}
Here, there is an implicit (possibly stochastic) mapping $\pi^Z$ that we cannot observe. For simplicity, we assume that the variable $Z$ is continuous and that $\pi^Z$ is deterministic. We can give an optimization problem to which $\pi^Z$ is the solution, namely:
\begin{equation}
    \begin{aligned}
    \pi^Z := \argmin_{\pi} \quad & \int_{\mathcal{U}_X \times \mathcal{U_Z}} \| f_Z(\pi(u_z, u_x)) - f_Z(u_z, u_x) \|^2 du_{xz}^{X = x_1} \\
\textrm{s.t.  \;\;} \quad & \underset{u_x,u_z \sim U_X,U_Z \mid X = x_1}{f_Z(\pi(u_z, u_x))} \overset{d}{=} \underset{u_x,u_z \sim U_X,U_Z \mid X = x_0}{f_Z(u_z, u_x)}.
\end{aligned}
\end{equation}
The measure $du_{xz}^{X = x_1}$ in the objective is the probability measure associated with the distribution $P(u_x, u_z \mid X = x_1)$. The constraint ensures that after the transport, $\widetilde{Z} \mid X = x_1$ is equal in distribution to  $\widetilde{Z} \mid X = x_0$. 
In the second step of the procedure, we are transporting the distribution of $W$. This results in adding the mechanism:
\begin{align}
    \widetilde{W} \gets \begin{cases}
                            f_W(x_0, \widetilde{Z}, u_w) &\text{ if } X = x_0 \\
                            f_W(x_0, \widetilde{Z}, \pi^W(u_w)) &\text{ if } X = x_1
                        \end{cases}.
\end{align}
Similarly as before, $\pi^W$ is a possibly stochastic mapping solving the following optimization problem:

\begin{equation}
    \begin{aligned}
    \pi^W := \argmin_{\pi} \quad & \int_{\mathcal{U}_W} \| f_W(x_0, \widetilde{z}, \pi(u_w)) -  f_W(x_1, \widetilde{z}, u_w)\|^2 du_{w} \\
\textrm{s.t.  \;\;} \quad & f_W(x_0, \widetilde{z}, \pi(u_w)) \overset{d}{=} f_W(x_0, \widetilde{z}, u_w).
\end{aligned}
\end{equation}
The above optimization problem is thought of being solved separately for each value of $\widetilde{Z} = \widetilde{z}$. Finally, in the last step, we are constructing the additional mechanism:
\begin{align}
    \widetilde{Y} \gets \begin{cases}
                            f_Y(x_0, \widetilde{Z}, \widetilde{W}, u_y) &\text{ if } X = x_0 \\
                            f_Y(x_0, \widetilde{Z}, \widetilde{W}, \pi^Y(u_y)) &\text{ if } X = x_1
                        \end{cases}
\end{align}
Again, the implicit mapping $\pi^Y$ is constructed so that it is the solution to
\begin{equation}
    \begin{aligned}
    \pi^Y := \argmin_{\pi} \quad & \int_{\mathcal{U}_Y} \| f_Y(x_0, \widetilde{z}, \widetilde{w}, \pi(u_y)) -  f_y(x_1, \widetilde{z},  \widetilde{w}, u_y)\|^2 du_{y} \\
\textrm{s.t.  \;\;} \quad & f_Y(x_0, \widetilde{z}, \widetilde{w}, \pi(u_y)) \overset{d}{=} f_Y(x_0, \widetilde{z}, \widetilde{w}, u_y).
\end{aligned}
\end{equation}
where the problem is solved separately for each fixed choice of parents $\widetilde{Z} = \widetilde{z}$, $\widetilde{W} = \widetilde{w}$.

After constructing the additional mechanisms $\widetilde{Z}, \widetilde{W},$ and $\widetilde{Y}$, we draw the explicit causal diagram corresponding to the new variables, which includes the unobservables $U_X, U_Z, U_W,$ and $U_Y$ (marked in red), given as follows:
	\begin{center}
		\begin{tikzpicture}
	 [>=stealth, rv/.style={thick}, rvc/.style={triangle, draw, thick, minimum size=9mm}, node distance=18mm]
	 \pgfsetarrows{latex-latex};
	 \begin{scope}
		\node[rv] (x) at (-1.5,0) {$X$};
		\node[rv] (z) at (0,1.5) {$\widetilde{Z}$};
		\node[rv] (w) at (0,-1.5) {$\widetilde{W}$};
		\node[rv] (y) at (1.5,0) {$\widetilde{Y}$};
		
		\node[rv, red] (ux) at (-1.5,1) {$U_X$};
		\node[rv, red] (uz) at (0,2.5) {$U_Z$};
		\node[rv, red] (uw) at (-1,-1.5) {$U_W$};
		\node[rv, red] (uy) at (1.5,1) {$U_Y$};
		
		\draw[->] (ux) -- (x);
		\draw[->] (ux) -- (z);
		\draw[->] (uz) -- (z);
		\draw[->] (uz) -- (x);
		\draw[->] (uw) -- (w);
		\draw[->] (uy) -- (y);
		\draw[->] (x) -- (w);
		\draw[->] (x) -- (y);
		\draw[->] (z) -- (y);
		\draw[->] (z) -- (w);
		\draw[->] (w) -- (y);
		
	 \end{scope}
	 \end{tikzpicture}.
	\end{center}
Note that by marginalizing out the unobserved variables $U_X, U_Z, U_W, U_Y$, we obtain the new causal diagram, which is given by the standard fairness model over the variables $X, \widetilde{Z}, \widetilde{W}, \widetilde{Y}$. Therefore, it follows that the identification expressions for the spurious, indirect, and direct effects are known, and given by:
\begin{align}
x\text{-DE}_{x_0, x_1}(\widetilde{y} \mid x_0) &= \sum_{\widetilde{z}, \widetilde{w}} [P(\widetilde{y} \mid x_1, \widetilde{z}, \widetilde{w}) - P(\widetilde{y}\mid x_0, \widetilde{z}, \widetilde{w})]P(\widetilde{w} \mid x_0, \widetilde{z})P(\widetilde{z} \mid x) \label{eq:xde-tilde} \\
x\text{-IE}_{x_0, x_1}(\widetilde{y} \mid x_0) &= \sum_{\widetilde{z}, \widetilde{w}} P(\widetilde{y}\mid x_0, \widetilde{z}, \widetilde{w})[P(\widetilde{w} \mid x_1, \widetilde{z}) - P(\widetilde{w} \mid x_0, \widetilde{z})]P(\widetilde{z} \mid x) \\
x\text{-SE}_{x_0, x_1}(\widetilde{y}) &= \sum_{\widetilde{z}} P(\widetilde{y} \mid x_0, \widetilde{z})[P(\widetilde{z} \mid x_0) - P(\widetilde{z} \mid x_1)]. \label{eq:xse-tilde}
\end{align}
To finish the proof, notice that by construction (the matching of distributions via optimal transport), we have that
\begin{align}
    P(\widetilde{y} \mid x_1, \widetilde{z}, \widetilde{w}) &= P(\widetilde{y}\mid x_0, \widetilde{z}, \widetilde{w}) \\
    P(\widetilde{w} \mid x_1, \widetilde{z}) &= P(\widetilde{w} \mid x_0, \widetilde{z}) \\
    P(\widetilde{z} \mid x_0) &= P(\widetilde{z} \mid x_1),
\end{align}
implying that all three effects in Eq.~\ref{eq:xde-tilde}-\ref{eq:xse-tilde} are equal to 0. 
\end{proof}

\section{Practical aspects of fairness measures} \label{appendix:IDEst}
\subsection{Identification of measures} \label{Identification}
The structure of the measures used in Causal Fairness Analysis was given by the Fairness Map from Thm.~\ref{thm:map} (see also Fig.~\ref{fig:map}). Moreover, in Theorem \ref{thm:sfm} in Appendix \ref{appendix:sfm} we have shown that many of the measures in the map are identifiable from observational data in the standard fairness model (SFM) and we provided explicit expressions for their identification.

	\begin{table}\centering
	    \setstretch{1.2}
		\begin{tabular}{|M{0.3cm}|M{3.2cm}|M{10cm}|}
            \hline
			& Measure & ID expression \\[1mm] 	\hline
			\multirow{4}{*}{\rotatebox{90}{general}} & TE$_{x_0, x_1}(y)$ & $\sum_z [P(y \mid x_1, z) - P(y \mid x_0, z)]P(z)$ \\\cline{2-3}
			& Exp-SE$_{x}(y)$ & $\sum_z P(y \mid x, z)[P(z) - P(z \mid x)]$ \\\cline{2-3}
			& NDE$_{x_0, x_1}(y)$ & $\sum_{z, w} [P(y \mid x_1, z, w) - P(y\mid x_0, z, w)]P(w \mid x_0, z)P(z)$  \\\cline{2-3}
			& NIE$_{x_0, x_1}(y)$ &  $\sum_{z, w} P(y\mid x_0, z, w)[P(w \mid x_1, z) - P(w \mid x_0, z)]P(z)$  \\\cline{2-3}
            \hline
            \multirow{4}{*}{\rotatebox{90}{$x$-specific}} & ETT$_{x_0, x_1}(y \mid x)$ & $\sum_z [P(y \mid x_1, z) - P(y \mid x_0, z)]P(z \mid x)$ \\\cline{2-3}
			& Ctf-SE$_{x_0, x_1}(y)$ & $\sum_z P(y \mid x_0, z)[P(z \mid x_0) - P(z \mid x_1)]$ \\\cline{2-3}
			& Ctf-DE$_{x_0, x_1}(y\mid x)$ & $\sum_{z, w} [P(y \mid x_1, z, w) - P(y\mid x_0, z, w)]P(w \mid x_0, z)P(z \mid x)$  \\\cline{2-3}
			& Ctf-IE$_{x_0, x_1}(y\mid x)$ & $\sum_{z, w} P(y\mid x_0, z, w)[P(w \mid x_1, z) - P(w \mid x_0, z)]P(z \mid x)$  \\\hline
			\multirow{3}{*}{\rotatebox{90}{$z$-specific}}& $z$-TE$_{x_0, x_1}(y\mid x)$ & $P(y \mid x_1, z) - P(y \mid x_0, z)$  \\\cline{2-3}
			&$z$-DE$_{x_0, x_1}(y\mid x)$ & $\sum_{w} [P(y \mid x_1, z, w) - P(y\mid x_0, z, w)]P(w \mid x_0, z)$  \\\cline{2-3}
			& $z$-IE$_{x_0, x_1}(y\mid x)$ & $\sum_{w} P(y\mid x_0, z, w)[P(w \mid x_1, z) - P(w \mid x_0, z)]$  \\\cline{2-3}
			\hline
		\end{tabular}
		\caption{Population level and $x$-specific causal measures of fairness in the TV-family, and their identification expressions under the standard fairness model $\mathcal{G}_{SFM}$.}
		\label{table:IDexpressions}
	\end{table}

The natural question is whether these measures remain identifiable when some assumptions of the SFM are relaxed. To answer this question, we consider what happens to identifiability of different measures when we add bidirected edges to the $\mathcal{G}_{\text{SFM}}$.

\subsubsection{Identification under Extended Fairness Model} \label{extended}
There are five possible bidirected edges that could be added to the $\mathcal{G}_{\text{SFM}}$ (since the bidirected edge $X \bidir Z$ is assumed to be present already). The other five possibilities include the $W \bidir Y$ (mediator-outcome), $Z \bidir Y$ (confounder-outcome), $X \bidir W$ (attribute-mediator), $Z \bidir W$ (confounder-mediator) and $X \bidir Y$ (attribute-outcome). We analyze these cases in the respective order.

\paragraph{Bidirected edge $Z \bidir Y$.} Consider the case of confounder-outcome confounding, represented by the $Z \bidir Y$ edge. An example of such a model is given in the LHS of Table \ref{table:zybidir}. In this case, without expanding the $Z$ set, none of the fairness measures are identifiable (due to the set $Z$ not satisfying the back-door criterion with respect to variables $X$ and $Y$). However, this does not necessarily mean there is no hope for identifying our fairness measures. What we do next is refine the $Z$ set, in the hope that the additional assumptions obtained in this process might help us identify our quantities of interest. In some sense, the assumptions encoded in the clustered diagram are not sufficient for identification. It might turn out, however, that by spelling out all the variables in the cluster, some additional assumptions might help with identification.  Consider the example on the RHS of Table \ref{table:zybidir}, where the full causal graph is given, after refining the previously clustered $Z$ set. Interestingly, in this case the set $\lbrace Z_1, Z_2 \rbrace$ can be shown as back-door admissible for the effect of $X$ on $Y$. Furthermore, the identification expression for all the quantities remains the same as in the standard fairness model, given by the expressions in Table \ref{table:IDexpressions}.
\begin{table}\centering
	\caption{An example of the extended fairness model with a bidirected $Z \bidir Y$ edge (left side), in which refining the set of variables $Z$ yields a graph (right side) in which all fairness measures are identifiable.}\medskip\small
	\begin{tabular}{M{5.5cm}|M{4.1cm}}
		cluster model & refined model \\[1mm]\hline
     	\begin{tikzpicture}
 [>=stealth, rv/.style={thick}, rvc/.style={triangle, draw, thick, minimum size=7mm}, node distance=18mm]
 \pgfsetarrows{latex-latex};
 \begin{scope}
	\node[rv] (0) at (0,1.5) {$Z$};
	\node[rv] (1) at (-1.5,0) {$X$};
	\node[rv] (3) at (1.5,0) {$Y$};
	\draw[->] (0) -- (3);
	\path[->] (1) edge[bend left = 0] (3);
	\path[<->] (1) edge[bend left = 30, dashed] (0);
	\path[<->] (0) edge[bend left = 30, dashed] (3);
 \end{scope}
 \end{tikzpicture}
     & 
    	\begin{tikzpicture}
 [>=stealth, rv/.style={thick}, rvc/.style={triangle, draw, thick, minimum size=7mm}, node distance=18mm]
 \pgfsetarrows{latex-latex};
 \begin{scope}
	\node[rv] (0) at (-1,1.5) {$Z_1$};
	\node[rv] (1) at (1,1.5) {$Z_2$};
    \node[rv] (2) at (-1.5,0) {$X$};
	\node[rv] (3) at (1.5,0) {$Y$};
	\draw[->] (0) -- (3);
	\draw[->] (1) -- (3);
	\path[<->] (1) edge[bend left = 30, dashed] (3);
	\path[<->] (0) edge[bend right = 30, dashed] (2);
	\draw[->] (2) -- (3);
 \end{scope}
 \end{tikzpicture}
 \end{tabular}
	\label{table:zybidir}
\end{table}

\paragraph{Bidirected edge $W \bidir Y$.} Next consider the case where there is a bidirected edge between the group of variables $W$ and the outcome $Y$. Firstly, we note that the identification of causal (TE/ETT) and spurious measures (Exp-SE/Ctf-SE) is unaffected by the $W \bidir Y$ edge, and that these quantities are identified by the same expressions as in Table \ref{table:IDexpressions}. The quantities measuring direct and indirect effects are not identifiable, at least not without further refining the $W$ set. Consider the example given in Table \ref{table:wybidir}. 
\begin{table}\centering
	\caption{An example of the extended fairness model with a bidirected $W \bidir Y$ edge (left side), in which refining the set of variables $W$ yields a graph (right side) in which all fairness measures are identifiable.}\medskip\small
	\begin{tabular}{M{5.5cm}|M{4.1cm}}
		cluster model & refined model \\[1mm]\hline
			\begin{tikzpicture}
 [>=stealth, rv/.style={thick}, rvc/.style={triangle, draw, thick, minimum size=7mm}, node distance=18mm]
 \pgfsetarrows{latex-latex};
 \begin{scope}
	\node[rv] (0) at (0,-1.5) {$W$};
	\node[rv] (1) at (-1.5,0) {$X$};
	\node[rv] (3) at (1.5,0) {$Y$};
	\draw[->] (0) -- (3);
	\draw[->] (1) -- (0);
	\path[->] (1) edge[bend left = 0] (3);
	\path[<->] (0) edge[bend left = -30, dashed] (3);
 \end{scope}
 \end{tikzpicture} & 
    	\begin{tikzpicture}
 [>=stealth, rv/.style={thick}, rvc/.style={triangle, draw, thick, minimum size=7mm}, node distance=18mm]
 \pgfsetarrows{latex-latex};
 \begin{scope}
    \node[rv] (x) at (-1.5,0) {$X$};
	\node[rv] (y) at (1.5,0) {$Y$};
	\node[rv] (w1) at (-1,-1.5) {$W_1$};
	\node[rv] (w2) at (1,-1.5) {$W_2$};
	\draw[->] (x) -- (w1);
	\draw[->] (x) -- (y);
	\draw[->] (w1) -- (w2);
	\draw[->] (w2) -- (y);
	\path[<->] (y) edge[bend right = 0, dashed] (w1);
 \end{scope}
 \end{tikzpicture}
 \end{tabular}
	\label{table:wybidir}
\end{table}
In the LHS of the table we have a model in which $W$ is clustered and NDE or NIE quantities are not identifiable. On the RHS, after expanding the previously clustered $W$ set, the natural direct (and indirect) effects can be identified, by the virtue of the \textit{front-door criterion} \citep{pearl:2k}. However, note that in this case, the identification expression for the natural direct effect \textit{is different from the identification expression for the natural direct effect in the standard fairness model.} Whenever front-door identification is used, we expect the expression to change, compared to the baseline SFM case.

\paragraph{Bidirected edge $X \bidir W$.} The case of the $X \bidir W$ edge is similar to that of $W \bidir Y$, yet slightly different. None of the measures discussed are identifiable in this case, before refining the $W$ set. However, similarly as in the $W \bidir Y$ example in Table \ref{table:wybidir}, when refining the $W$ set, we might find that in fact the effect of $X$ on $Y$ is identifiable via the front-door. Again, the identification expression in this case will change. For the sake of brevity we skip an explicit example. 

\paragraph{Bidirected edge $Z \bidir W$.} In the case of the $Z \bidir W$ edge, none of the measures are identifiable. However, refining the $Z$ and $W$ sets might help. To see an example, consider the following graph
\begin{center}
        	\begin{tikzpicture}
 [>=stealth, rv/.style={thick}, rvc/.style={triangle, draw, thick, minimum size=7mm}, node distance=18mm]
 \pgfsetarrows{latex-latex};
 \begin{scope}
    \node[rv] (z1) at (0,1) {$Z_1$};
	\node[rv] (w1) at (-1,-1) {$W_1$};
	\node[rv] (w2) at (1,-1) {$W_2$};
    \node[rv] (x) at (-2,0) {$X$};
	\node[rv] (y) at (2,0) {$Y$};
	\draw[->] (x) -- (w1);
	\draw[->] (x) -- (y);
	\draw[->] (w1) -- (w2);
	\draw[->] (w2) -- (y);
	\draw[->] (z1) -- (y);
	\draw[->] (z1) -- (w1);
	\draw[->] (z1) -- (w2);
	\path[<->] (x) edge[bend left = 20, dashed] (z1);
	\path[<->] (z1) edge[bend right = 30, dashed] (w2);
 \end{scope}
 \end{tikzpicture}.
\end{center}
In this case, all of the measures of fairness in Table \ref{table:IDexpressions} are identifiable, but again with different expressions that those presented in the table.

\paragraph{Bidirected edge $X \bidir Y$.} 
The attribute-outcome confounding represented by the $X \bidir Y$ edge is the most difficult case. When this edge is present, none of the fairness quantities can be identified. The reason why this case is hard is that the $X \bidir Y$ introduces a bidirected edge between $X$ and its child $Y$. This causes the effect of $X$ on $Y$ to be non-identifiable \citep{tian2002general}. For more general identification strategies for when a combination of observational and experimental data is available, refer to  \citep{lee2019general} and \citep{correa2021counterfactual}, and for partial identification ones, see  \citep{zhang2022partialctf}. 

The summary of the discussion of the five cases of bidirected edges in the extended fairness model, and what can be done under their presence, is given in Table \ref{table:id}. 
\begin{table}\centering
	\caption{Identification of causal fairness measures under latent confounding.}\medskip\small
	\begin{tabular}{|M{3cm}|M{2.4cm}|M{2.4cm}|M{2.4cm}|M{2.4cm}|}
		\hline
		 	 \begin{tikzpicture}
		 [>=stealth, rv/.style={circle, draw, thick, minimum size=6mm}, rvc/.style={triangle, draw, thick, minimum size=7mm}, node distance=18mm]
		 \pgfsetarrows{latex-latex};
		 \begin{scope}
			 	\node[rv] (0) at (1,0) {$Y$};
				\node[rv] (1) at (-1,0) {$W$};
				\path[<->] (0) edge[bend right = 30, dashed] (1);
			\end{scope}
	\end{tikzpicture} & \ding{52} & \ding{52} & Refine $W$ & Refine $W$\\\hline
	\begin{tikzpicture}
	 	[>=stealth, rv/.style={circle, draw, thick, minimum size=6mm}, rvc/.style={triangle, draw, thick, minimum size=7mm}, node distance=18mm]
		\pgfsetarrows{latex-latex};
		\begin{scope}
			\node[rv] (0) at (1,0) {$Y$};
			\node[rv] (1) at (-1,0) {$Z$};
			\path[<->] (0) edge[bend right = 30, dashed] (1);
		\end{scope}
	\end{tikzpicture} & Refine $Z$ & Refine $Z$ & Refine $Z$ & Refine $Z$\\\hline
		\begin{tikzpicture}
	 		[>=stealth, rv/.style={circle, draw, thick, minimum size=6mm}, rvc/.style={triangle, draw, thick, minimum size=7mm}, node distance=18mm]
	 		\pgfsetarrows{latex-latex};
	 		\begin{scope}
	 			\node[rv] (0) at (1,0) {$W$};
	 			\node[rv] (1) at (-1,0) {$X$};
	 			\path[<->] (0) edge[bend right = 30, dashed] (1);
	 		\end{scope}
	 \end{tikzpicture} & Refine $W$ & Refine $W$ & Refine $W$ & Refine $W$\\\hline
	\begin{tikzpicture}
 		[>=stealth, rv/.style={circle, draw, thick, minimum size=6mm}, rvc/.style={triangle, draw, thick, minimum size=7mm}, node distance=18mm]
 		\pgfsetarrows{latex-latex};
 		\begin{scope}
 			\node[rv] (0) at (1,0) {$W$};
 			\node[rv] (1) at (-1,0) {$Z$};
 			\path[<->] (0) edge[bend right = 30, dashed] (1);
 \end{scope}
 \end{tikzpicture} & Refine $Z, W$ & Refine $Z, W$ & Refine $Z, W$ & Refine $Z, W$\\\hline
		 \begin{tikzpicture}
	  [>=stealth, rv/.style={circle, draw, thick, minimum size=6mm}, rvc/.style={triangle, draw, thick, minimum size=7mm}, node distance=18mm]
	  \pgfsetarrows{latex-latex};
	  \begin{scope}
	 	\node[rv] (0) at (1,0) {$Y$};
	 	\node[rv] (1) at (-1,0) {$X$};
	 	\path[<->] (0) edge[bend right = 30, dashed] (1);
	  \end{scope}
	  \end{tikzpicture} & \ding{56} & \ding{56} & \ding{56} & \ding{56} \\\hline
	\end{tabular}
	\label{table:id}
\end{table}
Identification checks and suggestions about when to refine the $Z$- or $W$-sets are included in the \texttt{faircause} R-package. We end with an example that fits the extended fairness model with all bidirected edges apart from the $X \bidir Y$, but in which case all the fairness measures in Table \ref{table:IDexpressions} (albeit not with the same expression as in the table), showing that refining $Z$ and $W$ sets sometimes might help:
\begin{center}
	\begin{tikzpicture}
 [>=stealth, rv/.style={thick}, rvc/.style={triangle, draw, thick, minimum size=7mm}, node distance=18mm]
 \pgfsetarrows{latex-latex};
 \begin{scope}
 	\node[rv] (z1) at (-1,1.5) {$Z_1$};
	\node[rv] (z2) at (1,1.5) {$Z_2$};
    \node[rv] (x) at (-3,0) {$X$};
	\node[rv] (y) at (3,0) {$Y$};
	\node[rv] (w1) at (-1,-1.5) {$W_1$};
	\node[rv] (w2) at (1,-1.5) {$W_2$};
	
	\draw[->] (z1) -- (y);
	\draw[->] (z1) -- (w1);
	\draw[->] (z1) -- (w2);
	\draw[->] (z2) -- (y);
	\draw[->] (z2) -- (w1);
	\draw[->] (z2) -- (w2);
	\draw[->] (x) -- (w1);
	\draw[->] (x) -- (y);
	\draw[->] (w1) -- (w2);
	\draw[->] (w2) -- (y);
	
	\path[<->] (x) edge[bend left = -60, dashed] (w2);
	\path[<->] (y) edge[bend right = -60, dashed] (w1);
	\path[<->] (z1) edge[bend right = 20, dashed] (x);
	\path[<->] (z2) edge[bend left = 20, dashed] (y);
	\path[<->] (w1) edge[bend right = 15, dashed] (z2);
	\path[<->] (w2) edge[bend right = -15, dashed] (z1);
	
 \end{scope}
 \end{tikzpicture}.
\end{center}
\subsection{Estimation of measures} \label{Estimation}
Suppose we found that a target causal measure of fairness is identifiable from observational data (after possibly refining the SFM). The next question is then how to estimate the causal measure in practice. There is a large body of literature on the estimation of causal quantities, based on which our own implementation is built. We focus on describing how to estimate $\ex(y_x)$ and $\ex(y_{x_1, W_{x_0}})$. Most fairness measures can then be derived from taking (conditional) differences of these two estimands.

\subsubsection{Doubly Robust Estimation}
In the SFM, a standard way of computing the quantity $\ex(y_x)$ would be using inverse propensity weighting. The mediator $W$ can be marginalized out and the estimator
\begin{equation}\label{eq:ipw}
	\frac{1}{n} \sum_{i=1}^{n} \frac{\mathbb{1}(X_i = x)Y_i}{\widehat{p}(X_i \mid Z_i)},
\end{equation}
where $\widehat{p}(X_i \mid Z_i)$ is the estimate of the conditional probability $\pr(X_i = x \mid Z_i)$, can be used. The additional assumption necessary for such an approach is the positivity assumption:
\begin{definition}[Positivity assumption]
	The positivity assumption holds if $\;\forall \; x,z$, $\pr(X = x \mid Z = z)$ is bounded away from $0$, that is $$\delta < \pr(X = x \mid Z = z) < 1-\delta,$$ for some $\delta > 0$.
\end{definition}
Such an assumption is needed for the estimation of causal quantities we discuss (together with the assumptions encoded in the SFM that are used for identification).

\noindent However, more powerful estimation techniques have been developed and applied very broadly. In particular, \textit{doubly robust} estimators have been proposed for the estimation of causal quantities \citep{heckman1998matching, bang2005doubly}. In context of the estimator in Equation \eqref{eq:ipw}, a doubly robust estimator would be
\begin{equation}\label{eq:dripw}
	\frac{1}{n} \sum_{i=1}^{n} \frac{\mathbb{1}(X_i = x)(Y_i - \widehat{\mu}(Y_i \mid Z_i, X_i))}{\widehat{p}(X_i \mid Z_i)} + \widehat{\mu}(Y_i \mid Z_i, X_i),
\end{equation}
where $\widehat{\mu}$ denotes the estimator of the conditional mean $\ex[Y \mid Z = z, X = x]$. In fact, only one of the two estimators $\widehat{\mu}(Y_i \mid Z_i, X_i)$ and $\widehat{p}(X_i \mid Z_i)$ needs to be consistent, for the entire estimator in Equation \eqref{eq:dripw} to be consistent. Such robustness to model misspecification is a rather desirable property.



Estimating $\ex(y_{x_1, W_{x_0}})$ in a doubly robust fashion is somewhat more involved. This problem has been studied under the rubric of causal mediation analysis \citep{robins1992identifiability, pearl:01, robins2003semantics}. \cite{tchetgen2012semiparametric} proposed a doubly robust estimator of the expected potential outcome $\ex[Y_{x_1, W_{x_0}}]$ defined via: 
\begin{align} \label{eq:mediated}
	\phi_{x_0, x_1}(X, W, Z) = &\frac{\mathbb{1}(X = x_1) f(W \mid x_0, Z)}{p_{x_1}(Z)f(W \mid x_1, Z)}[Y - \mu(x_1, W, Z)]\nonumber\\
	&+\frac{\mathbb{1}(X = x_0)}{p_{x_0}(Z)}\big[ \mu(x_1, W, Z) - \int_{\mathcal{W}} \mu(x_1, w, Z)f(w \mid x_0, Z) \;dw\big]\\
	&+ \int_{\mathcal{W}} \mu(x_1, w, Z)f(w \mid x_0, Z)\;dw. \nonumber
\end{align}
The estimator is given by $\frac{1}{n} \sum_{i=1}^n \widehat{\phi}_{x_0, x_1}(X_i, W_i, Z_i)$, where in $\widehat{\phi}$ the quantities $p_x(Z)$, $\mu(X, W, Z)$ and $f(W \mid X, Z)$ are replaced by respective estimates. Such an estimator is multiply robust (one of the three models can be misspecified). However, the estimator also requires the estimation of the conditional density $f(W \mid X, Z)$. In case of continuous or high-dimensional $W$, estimating the conditional density could be very hard and the estimator could therefore suffer in performance. We revisit the estimation of $\ex[y_{x_1, W_{x_0}}]$ shortly.

\subsubsection{Double Machine Learning}
Doubly (and multiply) robust estimation allows for model misspecification of one of the models, while retaining consistency of the estimator. However, we have not discussed the convergence rates of these estimators yet.
In some cases fast, $O(n^{-\frac{1}{2}})$ rates are attainable for doubly robust estimators, under certain conditions. For example, one such condition is that $p_{x}(Z), \mu(X, W, Z)$ and their estimates belong to the Donsker class of functions \citep{benkeser2017doubly}. For a review, refer to \citep{kennedy2016semiparametric}. However, modern ML methods do not belong to the Donsker class.

In a recent advance, \cite{chernozhukov2018double} showed that the Donsker class condition can, in many cases (including modern ML methods), be relaxed by using a cross-fitting approach. This method was named \textit{double machine learning} (DML). For estimating $\ex[Y_{x}]$ we make use of the estimator in Equation \eqref{eq:dripw} and proceed as follows:

\begin{enumerate}
	\item Split the data $\mathcal{D}$ into $K$ disjoint folds $\mathcal{D}_1$, $\mathcal{D}_2$, ..., $\mathcal{D}_K$,
	\item \label{step:trestimators} Using the complement of fold $\mathcal{D}_k$ (labeled $\mathcal{D}^C_k$) compute the estimates $\widehat{p}^{-(k)}_x(Z)$, $\widehat{\mu}^{-(k)}(X, Z)$ of $P(X = x \mid Z = z)$ and $\ex[Y \mid Z = z]$,
	\item Compute  
	    \begin{equation} \label{eq:dripwcrossfit}
	        \frac{\mathbb{1}(X_i = x)(Y_i - \widehat{\mu}(Y_i \mid Z_i, X_i))}{\widehat{p}(X_i \mid Z_i)} + \widehat{\mu}(Y_i \mid Z_i, X_i),
	    \end{equation}
	for each observation $(X_i, Z_i, Y_i)$ in $\mathcal{D}_k$ by plugging in estimators $\widehat{p}^{-(k)}_x(Z)$,  $\widehat{\mu}^{-(k)}(X, Z)$ obtained on the complement $\mathcal{D}^C_k$,
	\item Taking the mean of the terms in Equation \eqref{eq:dripwcrossfit} across all observations.
\end{enumerate}

\noindent For estimating $\ex[y_{x_1, W_{x_0}}]$ we follow the approach of \cite{farbmacher2020causal}. The authors propose a slightly different estimator than that based on Equation \eqref{eq:mediated}, where they replace $\phi_{x_0, x_1}(X, W, Z)$ by
\begin{align} \label{eq:mediateddml}
	\psi_{x_0, x_1}(X, W, Z) = &\frac{\mathbb{1}(X = x_1) p_{x_0}(Z, W)}{p_{x_1}(Z, W)p_{x_0}(Z)}[Y - \mu(x_1, W, Z)]\nonumber\\
	&+\frac{\mathbb{1}(X = x_0)}{p_{x_0}(Z)}\big[ \mu(x_1, W, Z) - \ex[\mu(x_1, W, Z) \mid X = x_0, Z]\big]\\
	&+ \ex[\mu(x_1, W, Z) \mid X = x_0, Z], \nonumber
\end{align}
which avoids the computation of densities in a possibly high-dimensional case. The terms $\psi_{x_0, x_1}(X, W, Z)$ are estimated in a cross-fitting procedure as described above, with the slight extension that in Step \ref{step:trestimators} we need to further split the complement $\mathcal{D}^C_k$ into two parts, to estimate the conditional mean $\mu(X, W, Z)$ and the nested conditional mean $\ex[\mu(x_1, W, Z) \mid X = x_0, Z]\big]$ on disjoint subsets of the data. This approach is used in the \texttt{faircause} R-package.
\vskip 0.2in
\bibliography{CFA}

\end{document}